\documentclass[11pt]{article}

\usepackage[ruled,vlined]{algorithm2e}
\usepackage{amssymb,amsmath,amsfonts,amsthm,eurosym,geometry,ulem,graphicx,caption,color,setspace,comment,footmisc,caption,natbib,pdflscape,array,hyperref,bm,graphicxbox,xcolor,cleveref,siunitx,indentfirst,booktabs,subcaption,authblk}
\usepackage[title]{appendix}
\usepackage{tikz}
\usepackage{listofitems} 
\usepackage{enumitem}
\usepackage{adjustbox}

\usetikzlibrary{arrows.meta}

\tikzstyle{mynode}=[thick,draw=blue,fill=blue!20,circle,minimum size=22]
\usetikzlibrary{positioning}
\usepackage{pgffor}
\usetikzlibrary{decorations.pathreplacing}
\usetikzlibrary{fit}
\usepackage{makecell}

\usepackage{rotating}
\usepackage{tablefootnote}
\usepackage{todonotes}
\usepackage{float}
\usetikzlibrary{calc}

\usepackage{multirow}
\usepackage{bigdelim}

\usepackage{threeparttable}
\usepackage{bbm}

\def\sym#1{\ifmmode^{#1}\else\(^{#1}\)\fi}

\colorlet{mylightred}{red!95!black!30}
\colorlet{mylightblue}{blue!95!black!30}
\colorlet{mylightgreen}{green!95!black!30}

\newtheorem{theorem}{Theorem}
\newtheorem{corollary}[theorem]{Corollary}
\newtheorem{proposition}{Proposition}
\newtheorem{assumption}{Assumption}
\newtheorem{lemma}{Lemma}

\newcolumntype{L}[1]{>{\raggedright\let\newline\\arraybackslash\hspace{0pt}}m{#1}}
\newcolumntype{C}[1]{>{\centering\let\newline\\arraybackslash\hspace{0pt}}m{#1}}
\newcolumntype{R}[1]{>{\raggedleft\let\newline\\arraybackslash\hspace{0pt}}m{#1}}

\begin{document}

\begin{titlepage}
\title{\textbf{Bridging Structured Knowledge and Data:\\\vspace{0.1in}A Unified Framework with Finance Applications\thanks{We thank the seminar and conference participants at the NUS DAO-ISEM-IORA Seminar Series.}\\\vspace{0.2in}}}
\bigskip

\author{
        Yi Cao\thanks{Department of Financial and Actuarial Mathematics, Xi'an Jiaotong-Liverpool University,
Suzhou, Jiangsu 215123, PR China. Email: Yi.Cao@xjtlu.edu.cn}\quad\quad
        Zexun Chen\thanks{Management Science and Business Economics Group, Business School, University of
Edinburgh, 29 Buccleuch Place, Edinburgh EH8 9JS, UK. Email: Zexun.Chen@ed.ac.uk}\quad\quad
        Lin William Cong\thanks{Nanyang Technological University, ABFER, CEPR, \& NBER, 91 Nanyang Avenue, Wee Cho Yaw Plaza, Singapore 639956. Email: Will.Cong@ntu.edu.sg}\quad\quad
        Heqing Shi\thanks{Management Science and Business Economics Group, Business School, University of Edinburgh, 29 Buccleuch Place, Edinburgh EH8 9JS, UK. Email: Heqing.Shi@ed.ac.uk}\quad\quad
        }

\date{%
    First draft: Dec 2025; current draft: Mar 2026.
}
\maketitle

\begin{abstract}
\noindent

We develop Structured-Knowledge-Informed Neural Networks (SKINNs), a unified estimation framework that embeds theoretical, simulated, previously learned, or cross-domain insights as differentiable constraints within flexible neural function approximation. SKINNs jointly estimate neural network parameters and economically meaningful structural parameters in a single optimization problem, enforcing theoretical consistency not only on observed data but over a broader input domain through collocation, and therefore nesting approaches such as functional GMM, Bayesian updating, transfer learning, PINNs, and surrogate modeling. SKINNs define a class of M-estimators that are consistent and asymptotically normal with $\sqrt{N}$ convergence, sandwich covariance, and recovery of pseudo-true parameters under misspecification. We establish identification of structural parameters under joint flexibility, derive generalization and target-risk bounds under distributional shift in a convex proxy, and provide a restricted-optimal characterization of the weighting parameter that governs the bias–variance tradeoff. In an illustrative financial application to option pricing, SKINNs improve out-of-sample valuation and hedging performance, particularly at longer horizons and during high-volatility regimes, while recovering economically interpretable structural parameters with improved stability relative to conventional calibration. More broadly, SKINNs provide a general econometric framework for combining model-based reasoning with high-dimensional, data-driven estimation.\\


\noindent\textbf{Keywords:} AI, Bayesian Statistics, Deep Learning, Derivative Pricing, Econometrics, Theory.
\\
\noindent\textbf{JEL Codes:} C45, G13\\
\end{abstract}

\bigskip
\setcounter{page}{0}
\thispagestyle{empty}
\end{titlepage}

\setstretch{1.4}

\section{Introduction}

The last decade has witnessed a paradigm shift towards data-driven modelling across scientific and industrial domains, thanks to the rapid advancements in artificial intelligence (AI). From Large Language Models (LLMs) exhibiting human-like fluency to deep learning algorithms achieving expert-level medical diagnoses, the success of these empirical methods has fostered a compelling narrative: sufficient data and computation allow learning complex patterns directly from observations, bypassing or even rendering obsolete theories, traditional econometric/reduced-form models, and knowledge-based conceptual frameworks, as is said by \cite{anderson2008end}:
\begin{center}
    ---\textit{``This is a world where massive amounts of data and applied mathematics replace every other tool that might be brought to bear. [...] With enough data, the numbers speak for themselves''.}
    
\end{center}
Yet many researchers, especially economists, would resist such a notion: data-driven models are vulnerable to noise and spurious temporal correlations \citep{harvey2016and}, a fragility particularly acute in settings characterized by low signal-to-noise ratios, such as finance \citep[e.g.,][]{gu2020empirical}. The lack of transparent mechanisms, model interpretability, and domain intuition further frustrates social scientists, not to mention the high computational costs associated with large models and AI algorithms.

Traditional theories and econometric approaches indeed help researchers understand a wide range of underlying principles, causal relationships, and economic mechanisms. But their over-simplified assumptions (e.g., market efficiency, perfect rationality, free of arbitrage, etc.), rigid model structures, and over-reliance on human expertise fail to capture the full complexity of real-world phenomena, or the high-dimensionality, non-stationarity, and nonlinearity of modern datasets \citep{cochrane2011presidential, cong2019alphaportfolio}, leading to significant performance degradation and under-adoption by practitioners. This dichotomy, between principled but idealized theory and flexible but fragile empiricism, underscores a critical gap and pressing need in current methodologies by both science and social science inquiries.

To this end, we propose Structured-Knowledge-Informed Neural Networks (SKINNs), a framework that integrates flexible neural function approximation with structured domain knowledge (generically referred to as ``theory''). SKINNs jointly estimate the parameters of a deep neural network (NN) and a structured theoretical module within a single optimization problem, allowing each component to mitigate the other's misspecification rather than fixing theory parameters ex ante or estimating them sequentially. Estimation proceeds via a composite loss that balances data fidelity and structural consistency, with theoretical restrictions enforced not only at observed data points but over a broader input domain through collocation, thereby preserving economic structure under data sparsity and distribution shift. The framework accommodates high- and potentially infinite-dimensional latent objects (e.g., probability distributions or machine-learned state spaces) for which conventional GMM, Bayesian, or transfer-learning approaches become computationally infeasible. We show that SKINNs define a class of M-estimators: under standard regularity conditions, the joint estimator of neural and structural parameters is consistent and asymptotically normal with $\sqrt{N}$ convergence and sandwich-form covariance; under misspecification, the structural parameters converge to pseudo-true values. Consequently, the learned structural parameters retain economic interpretation and support formal statistical inference, bridging modern machine learning and classical econometrics.

Beyond classical consistency and asymptotic normality, we establish identification of the structural parameters under joint flexibility of the neural component, clarifying the conditions under which economically meaningful latent structural parameters are uniquely determined even when the function class is highly expressive. We further derive generalization and target-risk bounds under distributional shift in a convex proxy, providing formal support for the role of structured regularization and collocation in improving robustness. Within the GMM interpretation, we characterize the weighting parameter as a restricted-optimal choice that balances bias and variance, and show that under orthogonal moment conditions, the estimator-side of SKINNs for structural parameters attains the optimal asymptotic variance within the given moment model.


To illustrate the empirical relevance of the framework, we apply SKINNs to option pricing, a setting in which theoretical structures are well-developed yet frequently misspecified, and purely data-driven methods often struggle under distributional shifts. Using over two decades of S\&P 500 index option data, we show that SKINNs improve out-of-sample pricing accuracy and Delta-hedging performance relative to both flexible NNs and classical structural models, particularly at longer prediction horizons and during high-volatility regimes. The gains are modest when market conditions are stable, but become economically meaningful when volatility rises. Moreover, the latent parameters embedded in the structured-knowledge component of SKINNs evolve in economically coherent ways and exhibit improved numerical stability relative to conventional calibration. These findings suggest that jointly integrating structured knowledge with flexible function approximation can enhance generalization while preserving economically interpretable latent structure.

\paragraph{Methodological innovations and contributions.} SKINNs introduce three methodological innovations that distinguish them from existing hybrid modeling approaches. First, they jointly estimate the parameters of both the data-driven model and the structured theoretical model within a single optimization problem. Unlike approaches that require separate calibration of structural parameters \citep[e.g., the forward-problem PINNs in][]{raissi2019physics} or sequential pre-training and fine-tuning as in conventional transfer learning \citep[][]{chen2023teaching}, SKINNs optimize the NN parameters and the structured-knowledge parameters simultaneously through a unified objective. The composite loss balances empirical fit with theoretical consistency, and gradients are computed with respect to both parameter sets and updated concurrently via backpropagation. This co-adaptation allows the neural component to capture rich empirical patterns while remaining disciplined by theory, and permits the structural parameters to adjust in light of the data, avoiding both static calibration and purely empirical black-box estimation.

Second, SKINNs accommodate a broad spectrum of structured-knowledge representations. The theoretical module may consist of closed-form analytical solutions (e.g., Black–Scholes-Merton), surrogate neural approximations to computationally intensive structural models, abstract constraints defining admissible functions, or even machine-learned knowledge by generative models. The only requirement is differentiability with respect to inputs and learnable parameters, ensuring compatibility with gradient-based optimization. This flexibility enables SKINNs to operate across domains with varying degrees of theoretical structure, from settings with well-established parametric models to those where theory is expressed through equilibrium restrictions, simulation procedures, or distributional constraints. In the most general case, ``theory'' can be learned by a machine, e.g., via a (variational) autoencoder.

Third, SKINNs are designed for structural parameter recovery as well as prediction. While most machine learning models focus exclusively on predictive accuracy, SKINNs simultaneously estimate economically interpretable parameters—such as implied volatilities, risk-aversion coefficients, or risk-neutral distributions—that characterize underlying mechanisms. These parameters are learned jointly with the flexible function approximators and inherit formal inferential properties from the M-estimation framework. As a result, SKINNs transform flexible approximation into structurally interpretable estimation, enabling statistical inference on the interested structural parameters rather than merely improving the predictive power.

SKINNs are closely related to the generalized method of moments \citep[GMM;][]{hansen1982large}, Bayesian posterior estimation, transfer learning \citep[][]{pratt1992discriminability, chen2023teaching, bai2024teach}, and physics-informed NNs \citep[PINNs;][]{raissi2017physics,raissi_physics_2017}, but differ in important respects. Like GMM, SKINNs combine empirical fit with theory-implied restrictions; however, SKINNs jointly learn both the structural parameters and a flexible function, and enforce theoretical consistency not only at observed data points but over a broader input domain through collocation. From a Bayesian perspective, SKINNs resemble maximum a posteriori estimation with theory-based regularization, but unlike standard Bayesian procedures that impose a fixed ex ante prior distribution over structural parameters, SKINNs determine both the structural parameters and the predictive function jointly through a single optimization problem. The regularization induced by theory is therefore endogenous to the data-fitting objective rather than anchored to a fixed prior specification. Unlike PINNs, which impose differential equations through high-order derivatives and often suffer from gradient pathologies, SKINNs enforce consistency with model solutions through various formats of representations—parametric, semi-parametric, or non-parametric—allowing scalable estimation even in the presence of unobservable state variables and high-dimensional latent structures.

\paragraph{Applications and empirical findings.} We illustrate the framework in option pricing, a canonical setting in which (i) strong structural restrictions are available but routinely misspecified, (ii) purely data-driven methods can fit well in-sample yet degrade under distribution shifts, and (iii) ``theory'' naturally comes in multiple differentiable forms—from closed-form solutions to simulation-based models and non-parametric no-arbitrage restrictions. This makes option pricing a demanding laboratory for evaluating whether SKINNs can improve generalization while preserving the interpretability of latent economic objects.

Empirically, we instantiate SKINNs with three classes of structured-knowledge modules. In the parametric class, we consider the Black–Scholes-Merton (BSM, one latent parameter), the ad-hoc Black–Scholes-Merton (ABSM) specification with strike–maturity dependent volatility (six parameters), Heston's stochastic volatility (HSV, five parameters) model, its extension with jumps (HSVJ, nine parameters), and a deliberately high-dimensional SABR-type specification (hundreds of parameters). In the semi-parametric class, we approximate computationally intensive models by differentiable deep surrogates as in \citet{chen2021deep}, including HSV and a non-affine stochastic volatility (NASV, six parameters) model. In the non-parametric class, we implement a martingale option pricing approach (MOPA) with a high-dimensional risk-neutral distribution (2{,}000 parameters) and an autoencoder-based representation learned by machine (arbitrarily dimensional, but we consider two- and fifty-parameter cases). We benchmark against plain-vanilla NNs (with and without boundary conditions), shape-constrained NNs enforcing monotonicity/convexity, forward- and inverse-formulation PINNs \citep{raissi2017physics, raissi_physics_2017}, and transfer learning NNs \citep{pratt1992discriminability, chen2023teaching, bai2024teach}. These alternatives span purely data-driven methods and existing theory-hybrid designs, but either impose only static model-free constraints or become fragile when the theoretical structure is high-dimensional or involves unobservable state variables.

Our empirical analysis uses a comprehensive dataset of S\&P 500 index options spanning over two decades, on which SKINNs exhibit superior performance across two critical dimensions: out-of-sample option pricing accuracy and Delta-hedging effectiveness. We use daily transaction quotes of S\&P 500 index options from OptionMetrics, covering the period from January 4, 1996, to December 31, 2022. This extensive 27-year timespan encompasses several major economic crises, including the dot-com bubble, the 2007-2009 global financial crisis, and the COVID-19 pandemic, providing a rigorous stress test for all models under varying market conditions. We utilize raw option data that reflects actual market conditions, including missing values and erroneous prices that challenge NN models. We focus on call options with maturities between 7 and 365 calendar days to ensure liquidity and information content. Using a forward rolling-window approach, we train models on three-month panels of option data and evaluate their out-of-sample performance over two consecutive months: a shorter prediction horizon (one month ahead) and a longer prediction horizon (two months ahead). This rolling procedure generates 317 training and testing periods, with the longer horizon presenting significantly greater challenges due to potential shifts in the data-generating process (DGP) relative to the training sample.

First, we evaluate all models across the 317 rolling periods using the \cite{diebold2002comparing} test to assess the statistical significance of performance differences. For shorter horizons, NN demonstrates competitive performance, statistically outperforming models with gradient pathology issues as well as structural models. This suggests that data-driven learning effectively captures latent option price patterns when the testing data closely resembles the training sample. However, only SKINNs incorporating sophisticated structured knowledge achieve statistically significant improvements over NN, indicating that marginal gains require more advanced knowledge representations when patterns are familiar. For longer prediction horizons, where the DGP may diverge substantially from the training sample, the limitations of pure data-driven approaches become apparent. The NN model underperforms most structural models, as previously learned patterns become less effective. In contrast, almost all SKINN variants significantly outperform both NN and NN with boundary conditions at the 1\% significance level, demonstrating substantially enhanced generalizability through the incorporation of structured knowledge. This improvement is particularly noteworthy given that longer-horizon predictions pose considerably greater challenges due to potential shifts in market dynamics across different economic regimes encountered in our dataset. Overall, SKINNs reduce out-of-sample pricing errors (measured in terms of the root mean square error, RMSE) by roughly 10–15\% relative to leading NN benchmarks and by substantially more in high-volatility regimes.

Furthermore, we assess the Delta-hedging capability of the models by constructing hedged portfolios and measuring next-day hedging errors across all prediction periods. The NN model becomes less accurate in this exercise. In both shorter and longer horizons, NN and NN with boundary conditions are significantly outperformed by all SKINN variants. They only manage to outperform those gradient-challenged NNs, and, interestingly, some sophisticated structural models (e.g., HSV, HSVJ). This suggests that the static model-free constraints also fail to enhance the estimation of the Delta ratios. Additionally, sophisticated structural models tend to underperform simpler ones, such as the BSM model, for the Delta-hedging purpose. This is also reflected in SKINNs, as while all SKINN variants consistently outperform the benchmark NNs, simpler structured-knowledge representations generally deliver superior hedging performance. These consistent improvements across both pricing and hedging tasks, across different prediction horizons, and through multiple economic crisis periods, underscore the practical value and robustness of incorporating structured knowledge into NN architectures for financial derivative applications.

We next investigate the economic mechanisms underlying SKINNs' outperformance, focusing on two related properties: (i) the countercyclical performance during volatile market conditions, and (ii) the economic interpretability of the SKINN-learned latent structural parameters embedded in structured-knowledge representations.

Our predictive regressions indicate that the relative pricing accuracy of SKINNs improves when market volatility increases. Across the 317 longer prediction periods, a one-unit increase in the average VIX is associated with reductions of approximately 0.09–0.10 in RMSE for SKINNs, compared to roughly 0.03 for boundary-constrained NNs and statistically insignificant effects for classical structural models. The differential becomes more pronounced in high-volatility regimes: during periods above the 80th percentile of average daily VIX, SKINNs exhibit statistically significant RMSE reductions of 0.14–0.15 at the 5\% level, whereas benchmark models do not display comparable improvements. In contrast, during low-volatility periods, when option price surfaces are relatively stable and patterns are easier to learn, performance differences across models are modest. Taken together, these results suggest that the benefits of incorporating structured knowledge are most apparent when market conditions are unstable and distributional shifts are more likely. In the highest-volatility quintile, SKINNs achieve pricing-error reductions that are approximately three to four times larger than those of boundary-constrained NNs, while also delivering significantly improved Delta-hedging effectiveness.

We further examine the latent structural parameters learned by SKINNs. Unlike transfer-learning approaches that require separate calibrations, SKINNs estimate network parameters and structured-knowledge parameters jointly within a unified optimization. In the one-dimensional BSM case, the SKINN-learned volatility closely tracks conventional implied volatility estimates. In higher-dimensional settings, the learned state variables display smoother time-series evolution and align with identifiable economic regimes, often exhibiting greater numerical stability than parameters obtained via stand-alone calibration. Even in the non-parametric martingale option pricing implementation, where the structured-knowledge component contains 2{,}000 probability parameters, the implied risk-neutral density evolves in economically interpretable ways across market episodes. These patterns indicate that the latent parameters are not merely auxiliary regularization devices, but capture economically meaningful information consistent with market conditions.

Beyond option pricing, the SKINNs framework applies naturally to a broad class of economic estimation problems. In production function estimation, structured knowledge can encode technological restrictions—such as monotonicity, returns to scale, or equilibrium implications of firm optimization—while the NN flexibly captures heterogeneous productivity dynamics. In demand estimation, SKINNs can combine flexible demand systems with utility-based or revealed-preference restrictions, allowing simultaneous recovery of demand functions and economically interpretable parameters. In dynamic discrete choice models, Bellman or Euler conditions can be embedded, enabling joint estimation of value functions and structural parameters without repeatedly solving the dynamic program. In macroeconomic applications, equilibrium conditions from DSGE models—such as Euler equations, resource constraints, or policy rules—can serve as structured knowledge while neural components approximate high-dimensional policy functions or shock processes. For data-driven asset pricing models, SKINNs can guide them with mean-variance utilities, which are traditionally solved by costly bi-level portfolio optimizations. In each case, SKINNs provide a unified econometric framework for jointly estimating flexible functional relationships and economically meaningful latent structure under theory-imposed discipline.

The remainder of this paper is organized as follows: Section \ref{sec_methodology} introduces the SKINNs framework, proves its statistical properties, and discusses connections to well-established econometric models and existing approaches that integrate data-driven learning with domain knowledge. Section \ref{sec_models} provides various specifications of option pricing structured-knowledge representation for SKINNs. Section \ref{sec_empirical} and \ref{sec_interpret} present the empirical findings, comparing the predictive performance of SKINNs against alternative methods and extracting economic insights from the learned structural parameters. Section \ref{sec_skinns_ap} demonstrates SKINN asset pricing models, which integrate the utility-based portfolio optimization to improve the learned asset pricing relations. Section \ref{sec_conclude} concludes.

\section{The SKINNs Framework}\label{sec_methodology}
The SKINNs framework is designed to synergize the predictive power of data-driven models with the explanatory rigor of theories, where the term ``theories'' should be broadly interpreted as reduced-form representations of domain knowledge. 
Purely data-driven models struggle with noisy, non-stationary environments where they are prone to overfitting and distributional shifts, while traditional theory-based approaches often rely on rigid, oversimplified assumptions and ad-hoc calibration processes that fall short of capturing complex market dynamics.

SKINNs aim to bridge this gap using a principled mechanism for embedding a theoretical model's structure directly into the learning objective of an NN. This is achieved through a composite loss function that allows the joint estimation of the NN's parameters and the latent structural parameters of the embedded theoretical model. By doing so, SKINNs regularize the learning process, guiding the NN towards solutions that are not only empirically accurate but also theoretically plausible. This approach generalizes and addresses key limitations of existing hybrid methods, such as PINNs \citep{raissi2017physics, raissi_physics_2017}---which suffer from spectral biases, gradient pathologies, and unobservable differentiation variables \citep[see, e.g.,][]{wang_understanding_2020, wang_when_2022}---, and transfer learning \citep[e.g.,][]{pratt1992discriminability, chen2023teaching, bai2024teach}, which lacks a mechanism for dynamic latent parameter discovery in its two-stage design, and can be vulnerable to catastrophic forgetting.

Specifically, the architecture of SKINNs is composed of two core components: a data-driven function approximator and a structured-knowledge representation borrowed from theories. These components are trained in concert to reconcile empirical observations with theoretical principles.

\subsection{The Data-Driven Component}
Let the primary learning model be a deep NN, $f(\mathbf{X};\mathbf{\theta}):\mathbb{R}^{d}\mapsto\mathbb{R}$, which maps a set of input features $\mathbf{X} \in \mathbb{R}^d$ to some target outputs $\mathbf{y} \in \mathbb{R}$. 
The vector $\mathbf{\theta}$ represents the full set of trainable NN parameters, i.e., the biases and weights. For instance, a standard multi-layer fully-connected feed-forward NN is defined by a recursive formulation:
\begin{equation}
    f^{(l)} = h\left(\mathbf{b}^{(l-1)} + \mathbf{W}^{(l-1)}f^{(l-1)}\right), \quad l=1, \dots, L, 
\end{equation}
where $f^{(0)}$ is the input layer; $f^{(l)}$ denotes each of the $L$ hidden layers; $f^{(L+1)} = \mathbf{b}^{(L)} + \mathbf{W}^{(L)}f^{(L)}$ is the final output layer; $h(\cdot)$ is a non-linear activation function, e.g., ReLU where $h(x)=max(x, 0)$; and $\mathbf{\theta} = \{\mathbf{b}^{(l)}, \mathbf{W}^{(l)}\}_{l=1}^L$ contains all the trainable NN parameters.

The NN parameters, $\mathbf{\theta}$, are traditionally optimized by minimizing solely a data-centric loss function, $\mathcal{L}_{\text{Data}}$, with respect to a set of observations $\mathcal{D}_{\text{Obs}} = \{(\mathbf{X}^{[i]}_{\text{Obs}}, \mathbf{y}^{[i]}_{\text{Obs}})\}_{i=1}^{N_{\text{Obs}}}$. A common choice is the mean squared error, given the training samples indexed by $1,\cdots,N_{\text{Obs}}$ from the observations:
\begin{equation}\label{eq_data_centric_loss}
    \mathcal{L}_{\text{Data}}(\mathbf{\theta}; \mathcal{D}_{\text{Obs}}) = \frac{1}{N_{\text{Obs}}} \sum_{i=1}^{N_{\text{Obs}}} \left( f_{\mathbf{\theta}}(\mathbf{X}^{[i]}_{\text{Obs}}) - \mathbf{y}^{[i]}_{\text{Obs}} \right)^2.
\end{equation}
This loss function, which serves as the data-driven component, anchors the model to the empirical evidence provided by the data. However, with the data-driven component alone, the model will reduce to a plain-vanilla NN, with all the drawbacks of being a black box, prone to overfitting, etc.

Yet the limitations are inherent, plain-vanilla NNs are still a powerful model to relax the rigid assumptions and fixed specifications in many disciplines, particularly economics. NNs can also incorporate higher-dimensional conditioning variables than traditional econometric models, and hence alleviate the omitted variable problem \citep[see, e.g.,][]{kelly2019characteristics,gu2019autoencoder,cong2019alphaportfolio,chen2023teaching,feng2024deep}.


\subsection{The Structured-Knowledge Component}\label{structured-knowledge component}

To better tame the over-parameterized NNs, as the size of $\theta$ is usually much larger than the number of observations $N_{\text{Obs}}$, the second component in SKINNs is a formal representation of structured knowledge, denoted by a function $g(\mathbf{X}^{\text{SK}}; {\mathbf{\phi}}): \mathbb{R}^{d_{\text{SK}}+d_{\mathbf{\phi}}} \to \mathbb{R}$, where $\mathbf{X}^{\text{SK}}$ is the input features that are observable; $\mathbf{\phi}$ is the latent structured-knowledge parameters; and $d_{\text{SK}}, d_{\phi}$ are their respective dimensions. Typically, $\mathbf{X}^{\text{SK}}$ is a subset of $\mathbf{X}$ (i.e., $d_{\text{SK}}\leq d$). $\mathbf{X}^{\text{SK}}$ therefore preserves a model-driven parsimonious structure, while $\mathbf{X}\in\mathbb{R}^{d}$, the inputs of the NN component, can incorporate higher-dimensional empirical features that are possibly omitted. The structured-knowledge function $g_{\mathbf{\phi}}$ encapsulates the structure of theories in a certain domain, mapping the theoretically-relevant inputs $\mathbf{X}^{\text{SK}} \subseteq \mathbf{X}$ to a model-driven estimate. SKINNs embed the structured knowledge by treating the vector $\mathbf{\phi}$ from the theoretical model $g_{\mathbf{\phi}}$ as a set of learnable parameters additional to the NN parameters $\mathbf{\theta}$.

Such a joint learning mechanism by SKINNs facilitates a co-evolutionary process of both components, which offers improved model flexibility and efficiency over many two-stage hybrid approaches \citep[e.g.,][]{chen2023teaching, zhang2023two}, including the ``Smart Predict-then-Optimize'' (SPO) framework and the differentiable optimization layers for bi-level problems involving upstream prediction model estimations and downstream optimizations \citep[see,][]{amos2017optnet, agrawal2019differentiable, elmachtoub2022smart, wang2026machine}. The two sets of learnable parameters—one includes the nuisance NN parameters and the other includes the interpretable latent structural parameters—make SKINNs also semi-parametric-model alike. However, distinct from the classic semi-parametric econometric models, where both $g_{\phi}$ and $f_{\theta}$ contribute to the estimates as seen in the equation (1.1) in \cite{Chernozhukov2018}, $g_{\phi}$ is rather employed in SKINNs to regularize the training of $f_{\theta}$ who ultimately produces estimates. More importantly, the two parameter sets are simultaneously identified by SKINNs instead of sequentially, as classic semi-parametric models do. Hence, the data-driven component compensates for the structured-knowledge component in the same learning process. 
This naturally mitigates the overfitting impact that the over-parameterized $f_{\theta}$ can have on the estimation of the structured-knowledge parameters $\phi$, an issue that is resolved by double machine learning \citep[see,][]{Chernozhukov2018}.

The structured-knowledge function \( g_\mathbf{\phi} \) admits diverse formats, as long as it remains first-order differentiable with respect to both its inputs \( \mathbf{X}^{\text{SK}} \) and the learnable latent parameters \( \mathbf{\phi} \). 
One advantage of this is that the latent parameters $\phi$ can be scaled to high- and potentially infinite-dimension, thanks to the first-order optimization methods that are suitable for large-scale problems associated with NNs. These flexibilities allow SKINNs to embed a wide range of theoretical, empirical, and machine-learned structures seamlessly into the learning process, as well as to learn the high-dimensional parameters in theoretical models that were previously hard to estimate. We categorize the formats of structured knowledge into three main types: parametric representations, semi-parametric representations, and non-parametric representations.

\subsubsection{Parametric Structured-Knowledge Representations}
Parametric structured-knowledge representations (PSKRs) are the most direct methods of embedding established theories, as the prevalence of many established theories stems from their well-posed parametric equations. For PSKRs, the number of latent parameters and the functional form of the governing equation are fully specified. Therefore, for this format, we are allowed to conveniently define a structured-knowledge function $g_{\mathbf{\phi}}$ by a closed-form or semi-closed-form mathematical expression derived directly from theories. 
A canonical example from finance is the BSM option pricing model, where the theoretically-relevant inputs $\mathbf{X}^{\text{SK}}$ would consist of the observable characteristics such as the underlying asset price ($S$), strike price ($K$), risk-free rate ($r$), and maturity time ($T$); the latent parameter vector $\mathbf{\phi}$ would contain the unobservable parameter, namely the asset return volatility, $\sigma$. The function $g_\mathbf{\phi}$ is thus explicitly defined as: $g_{\mathbf{\phi}}(\mathbf{X}^{\text{SK}}) \equiv \text{BSM}(\mathbf{X}^{\text{SK}}; \mathbf{\phi}=\{\sigma\})$.

In the cases where the theory-driven models admit only semi-closed-form solutions, for example the HSV model, the function $g_{\mathbf{\phi}}(\mathbf{X}^{\text{SK}})\equiv\text{HSV}(\mathbf{X}^{\text{SK}}, \mathbf{u};\mathbf{\phi})$ requires numerical methods such as the fast Fourier transform (FFT) and the Fourier-Cosine series expansions (COS), with $\mathbf{u}$ denotes a vector of algorithmic coefficients, e.g., the grid points in the frequency domain to evaluate the FFT, and the upper or lower limit of an integral, that are determined by users. Figure (\ref{fig:param_format_repre}) visualizes such PSKRs.


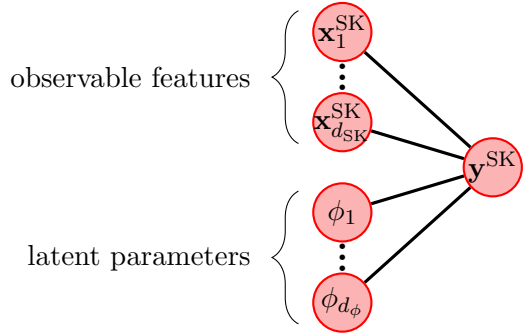
\begin{figure}[h]
\centering

\begin{tikzpicture}[x=2.0cm,y=1.2cm]
  \readlist\Nnod{4,1} 


  \foreachitem \N \in \Nnod{ 
    \foreach \i [evaluate={\x=\Ncnt; \y=\N/2-\i+0.5; \prev=int(\Ncnt-1);}] in {1,...,\N}{ 
      \node[mynode, color=mylightred, draw=red] (N\Ncnt-\i) at (\x,\y) {};
      \ifnum\Ncnt>1 
        \foreach \j in {1,...,\Nnod[\prev]}{ 
          \draw[very thick] (N\prev-\j) -- (N\Ncnt-\i); 
        }
      \fi 
    }
  }


  \node at ([yshift=-14pt] N1-1) {$\Large\bm{\vdots}$};
  \node at ([yshift=-14pt] N1-3) {$\Large\bm{\vdots}$};

  \node at (N1-1) {$\mathbf{x}^{\text{SK}}_{1}$};
  \node at (N1-2) {$\mathbf{x}^{\text{SK}}_{d_{\text{SK}}}$};
  \node at (N1-3) {$\mathbf{\phi}_{1}$};
  \node at (N1-4) {$\mathbf{\phi}_{d_{\mathbf{\phi}}}$};

  \node at (N2-1) {$\mathbf{y}^{\text{SK}}$};

  \draw[decorate,decoration={brace,amplitude=10pt,mirror}]
    ([xshift=-0.3cm] N1-1.north west) -- ([xshift=-0.3cm] N1-2.south west) 
    node[midway, left=0.5cm, anchor=east, align=left] {observable features};

  \draw[decorate,decoration={brace,amplitude=10pt,mirror}]
    ([xshift=-0.3cm] N1-3.north west) -- ([xshift=-0.3cm] N1-4.south west) 
    node[midway, left=0.5cm, anchor=east, align=left] {latent parameters};

\end{tikzpicture}

\caption{PSKR by a (semi-)parametric equation. $\mathbf{x}^{\text{SK}}_{1}\cdots\mathbf{x}^{\text{SK}}_{d_{\text{SK}}}$ denote each of the observable features, and $\mathbf{\phi}_{1}\cdots\mathbf{\phi}_{d_{\mathbf{\phi}}}$ denote each of the learnable latent parameters, in the PSKR.}
\label{fig:param_format_repre}
\end{figure}

\subsubsection{Semi-Parametric Structured-Knowledge Representations}

In more complex systems, closed- or semi-closed-form solutions are intractable. Theories may be expressed through mechanistic simulators (e.g., agent-based models), complex stochastic differential equations (SDEs), or high-dimensional partial differential equations (PDEs), which can only be solved via iterated computationally expensive numerical procedures (e.g., Monte-Carlo simulation, finite-difference).

SKINNs solve this through building a pre-trained deep surrogate neural network (DSNN) that learns the mapping $\{\mathbf{X}^{\text{SK}}, \mathbf{\phi}\}\mapsto\mathbf{y}^{\text{SK}}$ that is unavailable from theory. In this scenario, theory-driven models still take the observable features $\mathbf{X}^{\text{SK}}$ and the fixed number of latent parameters $\phi$ as the inputs, but the output $\mathbf{y}^{\text{SK}}$ is solved entirely by numerical procedures, and learned later by an NN. In other words, a DSNN is employed to approximate the unknown function, $g_{\mathbf{\phi}}(\mathbf{X}^{\text{SK}}):=\{\mathbf{X}^{\text{SK}}, \mathbf{\phi}\}\mapsto\mathbf{y}^{\text{SK}}$, thanks to the universal approximation capability \citep[see,][]{cybenko1989approximation, hornik1989multilayer,hanin2019universal}.

In engineering, physics, biomedicine, and finance, complex systems are often prescribed as multivariate SDEs. When the SDE system is in the affine class, there is a standard treatment allowing the system to be solved with a (semi-)closed-form solution, and therefore, we are able to construct a PSKR as for the HSV case \citep[see,][]{duffie2000transform, freire2023autoencoder}. However, in many unfavoured settings, the SDE system is non-affine, and we are only able to approximate the solutions with, e.g., Monte-Carlo simulation. For example, in finance, asset prices can be prescribed by three-factor jump diffusion SDEs, which can be non-affine. See details in Section (2), Equation (1)-(3) from \cite{kaeck2012volatility}.

To train a DSNN, one is required to randomly sample a large number of input instances for a target complex system, $\left(\mathbf{x}_{\text{SK}}^{[i,1]},\cdots,\mathbf{x}_{\text{SK}}^{[i,d_{\text{SK}}]},\mathbf{\phi}^{[i,1]},\cdots,\mathbf{\phi}^{[i,d_{\mathbf{\phi}}]}\right), 1\leq i\leq N_{\text{SR}}$, from a multivariate uniform distribution, and then query the corresponding numerical solutions, $\mathbf{y}_{\text{SK}}^{[i]}$. 
The simulated data space $\mathcal{D}_{\text{SR}}=\{[\mathbf{X}_{\text{SK}}^{[i]},\mathbf{\phi}^{[i]}], \mathbf{y}_{\text{SK}}^{[i]}\}_{i=1}^{N_{\text{SR}}}$ is used for training a DSNN offline, with the objective of minimizing an approximation loss, as much as possible, without the concern of overfitting:
\begin{equation}
    \mathcal{L}_{\text{SR}}(\theta;\mathcal{D}_{\text{SR}}) = \sum_{i=1}^{N_{\text{SR}}} \left(f^{\text{SR}}_{\mathbf{\theta}}\left(\mathbf{X}_{\text{SK}}^{[i]}, \mathbf{\phi}^{[i]}\right)-\mathbf{y}_{\text{SK}}^{[i]}\right)^{2} \Big/ N_{\text{SR}}.
\end{equation}
Once the desired approximation accuracy is achieved during the offline pre-training, we successfully create a NN surrogate $f^{\text{SR}}_{\mathbf{\theta}}(\mathbf{X}^{\text{SK}},\phi)$ of the target system that can be used repeatedly, with little computational burden. 
 
We then freeze some of the DSNN weights and biases, but keep the latent parameters $\phi$ in the DSNN inputs as the learnable parameters when integrating it into a SKINN. In this case, the structured-knowledge representation function $g_{\mathbf{\phi}}$ becomes an another deep NN, i.e., $g_{\phi}(\mathbf{X}^{\text{SK}})\equiv f^{\text{SR}}_{\theta}(\mathbf{X}^{\text{SK}}, \phi)$. We visualize such semi-parametric structured-knowledge representations (SPSKRs) in Figure (\ref{fig:semi_param_format_repre}). 

\begin{figure}[htbp]
\centering
\begin{tikzpicture}[x=2.0cm,y=1.2cm]
  \readlist\Nnod{4,5,5,1} 

  \foreachitem \N \in \Nnod{
    \foreach \i [evaluate={\x=\Ncnt; \y=\N/2-\i+0.5; \prev=int(\Ncnt-1);}] in {1,...,\N}{
      \node[mynode, color=mylightred, draw=red] (N\Ncnt-\i) at (\x,\y) {};
      
      \ifnum\Ncnt=2 
        \foreach \j in {1,...,\Nnod[\prev]}{ \draw[very thick] (N\prev-\j) -- (N\Ncnt-\i); }
      \fi
      \ifnum\Ncnt=4 
        \foreach \j in {1,...,\Nnod[\prev]}{ \draw[very thick] (N\prev-\j) -- (N\Ncnt-\i); }
      \fi

      \ifnum\Ncnt=3
        \foreach \j in {1,...,\Nnod[\prev]}{
           \draw[very thick] (N\prev-\j) -- ($(N\prev-\j)!0.4!(N\Ncnt-\i)$); 
           \draw[very thick] ($(N\prev-\j)!0.6!(N\Ncnt-\i)$) -- (N\Ncnt-\i); 
        }
      \fi
    }
  }

  \node at (2.53, 0.5) {$\Large\bm{\cdots}$};
  \node at (2.53, 0) {$\Large\bm{\cdots}$};
  \node at (2.53, -0.5) {$\Large\bm{\cdots}$};

  \node at ([yshift=-14pt] N1-1) {$\Large\bm{\vdots}$};
  \node at ([yshift=-14pt] N1-3) {$\Large\bm{\vdots}$};
  \node at (N1-1) {$\mathbf{x}^{\text{SK}}_{1}$};
  \node at (N1-2) {$\mathbf{x}^{\text{SK}}_{d_{\text{SK}}}$};
  \node at (N1-3) {$\mathbf{\phi}_{1}$};
  \node at (N1-4) {$\mathbf{\phi}_{d_{\mathbf{\phi}}}$};
  \node at (N4-1) {$\mathbf{y}^{\text{SK}}$};

  \draw[decorate,decoration={brace,amplitude=10pt,mirror}]
    ([xshift=-0.3cm] N1-1.north west) -- ([xshift=-0.3cm] N1-2.south west) 
    node[midway, left=0.5cm, anchor=east, align=center] {observable features\\ (simulated)};

  \draw[decorate,decoration={brace,amplitude=10pt,mirror}]
    ([xshift=-0.3cm] N1-3.north west) -- ([xshift=-0.3cm] N1-4.south west) 
    node[midway, left=0.5cm, anchor=east, align=center] {latent parameters\\ (simulated)};

  \draw[decorate,decoration={brace,amplitude=5pt,raise=15pt}] ([xshift=-5pt] N2-1 |- N2-1) -- ([xshift=5pt] N3-1 |- N3-1) node[midway,yshift=18pt,above] {hidden layers (deep)};

\end{tikzpicture}

\caption{SPSKR by an expensive-to-solve target system. The inputs, i.e., the observable features $\mathbf{X}^{\text{SK}}$ and the latent parameters $\mathbf{\phi}$, are still determined by theories, as in PSKRs. But the structured-knowledge function $g_{\mathbf{\phi}}(\mathbf{X}^{\text{SK}})$, in this case, is approximated by a pre-trained DSNN.}
\label{fig:semi_param_format_repre}

\end{figure}
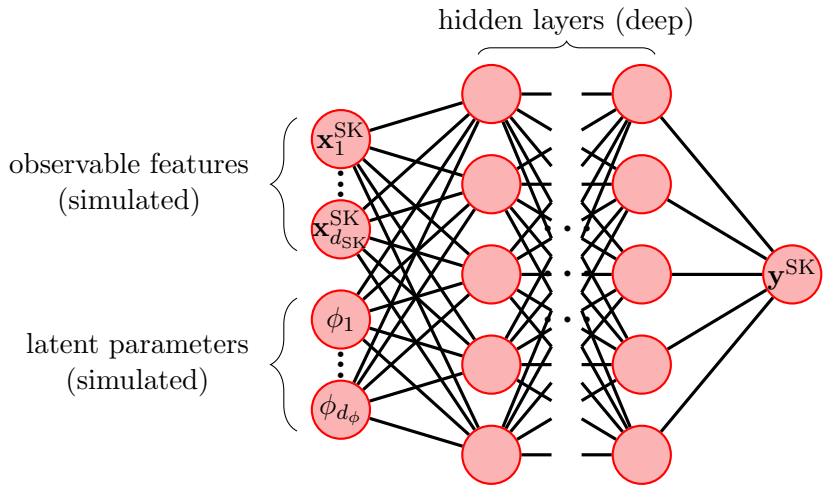

\subsubsection{Non-Parametric Structured-Knowledge Representations}

PSKRs and SPSKRs are characterized by the theory-driven inputs $\mathbf{X}^{\text{SK}}$ and a fixed-sized, low-dimensional vector of $\mathbf{\phi}$. This sparsity assumption on the latent structural parameters by the established theories is mainly for the sake of tractability, as the low-dimensional parameters are easier to estimate. However, it can be the case that theories are constructed in a way that they take an unspecified number of latent parameters in $\mathbf{\phi}$, for greater generality and fewer assumptions. In such high-dimensional settings, the deep surrogate approach becomes infeasible, as the required simulated data space $\mathcal{D}_{\text{SR}}$ expands exponentially with the dimension of $d_{\phi}$. We provide two examples below to illustrate how these high-dimensional problems can be tackled by SKINNs, by using non-parametric structured-knowledge representations (NPSKRs) that are not prescribed by a fixed number of latent structural parameters, and can therefore easily accommodate high-dimensional (or even ultra-high-dimensional) specifications.

\paragraph{Probabilistic distributional structured knowledge.} Theories may provide functional forms such that they require a probabilistic distribution governing the underlying variables within the functions. Such probability distributions are usually unknown, but the probabilities themself can be treated as high-dimensional latent parameters. These unknown non-parametric distributions frequently appear in theories as the expectations for the underlying variables, e.g., $\mathbb{E}[z]$, or some known functions of them, e.g., $\mathbb{E}[u(z,\mathbf{X}^{\text{SK}})]$. 
The underlying variable $z$ determines the output of a theoretical model through the function of its probability distribution.

A typical example in finance is the principle of no-arbitrage, which dictates that an asset's price must equal its discounted expected payoff under a risk-neutral probability measure \(\mathbb{Q}\). This principle does not prescribe a specific parametric form for the underlying asset dynamics, but instead requires an expectation from an unknown distribution:
\begin{equation}\label{eq:mopa}
    \mathbf{y}^{\text{SK}} = e^{-r(T-t)}\mathbb{E}^{\mathbb{Q}}[u(z,\mathbf{X}^{\text{SK}})\vert\mathcal{F}_{t}],
\end{equation}
where $u(z,\mathbf{X}^{\text{SK}})$ is the terminal payoff of the asset at time $T$ (e.g., $u(z,\mathbf{X}^{\text{SK}})=(S_{T}-K)^{+}$ for a European call option, where $S_{T}$ is the underlying variable), and $\mathbf{y}^{\text{SK}}$ denotes the the theory-driven output, i.e., the no-arbitrage price of the option at time $t$.

The expectation of an unknown non-parametric distribution in Equation~\eqref{eq:mopa} needs the probabilities $\mathbb{Q}(z)$ of potentially infinite states of nature. The structured-knowledge function in this case can be formulated as:
\begin{align}
    g_{\phi}(\mathbf{X}^{\text{SK}})&\equiv e^{-r(T-t)}\sum_{i=1}^{\infty}u(z^{[i]},\mathbf{X}^{[i],\text{SK}})\mathbb{Q}(z^{[i]}), \label{eq:mopa_inf_sum}
\end{align}
where $z=\{S_{T}^{[1]}, S_{T}^{[2]}, \cdots, S_{T}^{[\infty]}\}$ refer to the infinite states of the underlying variable $S_{T}$, and the learnable latent parameters $\mathbf{\phi}$ are defined as the vector of unknown risk-neutral probabilities associated with each state:
\begin{equation}
    \mathbf{\phi} = \{\mathbb{Q}(z^{[1]}), \mathbb{Q}(z^{[2]}), \cdots, \mathbb{Q}(z^{[\infty]})\}.
\end{equation}
To facilitate the integration of such NPSKRs in SKINNs, we use the empirical counterpart of Equation~\eqref{eq:mopa_inf_sum}:
\begin{equation}\label{eq:mopa_empirical}
    g_{\mathbf{\phi}}(\mathbf{X}^{\text{SK}}) := e^{-r(T-t)}\sum_{i=1}^{N_{z}} u(z^{[i]},\mathbf{X}^{[i],\text{SK}})\mathbb{Q}(z^{[i]}),
\end{equation}
where $N_{z}$ is a sufficiently large integer to approximate the infinite sum. The learnable latent parameters $\mathbf{\phi}=\{\mathbb{Q}(z^{[1]}), \mathbb{Q}(z^{[2]}), \cdots, \mathbb{Q}(z^{[N_{z}]})\}$ are constrained such that each of them are non-negative and $\sum_{i=1}^{N_{z}} \mathbb{Q}(z^{[i]})=1$, which can be enforced during training using a softmax function.

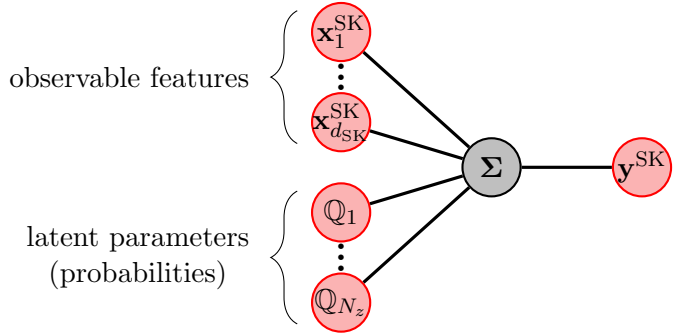
\begin{figure}[htbp]
\centering
\begin{tikzpicture}[x=2.0cm,y=1.2cm]
  \readlist\Nnod{4,1,1} 

  \foreachitem \N \in \Nnod{ 
    \foreach \i [evaluate={\x=\Ncnt; \y=\N/2-\i+0.5; \prev=int(\Ncnt-1);}] in {1,...,\N}{
      
      \ifnum\Ncnt=2
        \node[mynode, fill=lightgray, draw=black] (N\Ncnt-\i) at (\x,\y) {};
      \else
        \node[mynode, color=mylightred, draw=red] (N\Ncnt-\i) at (\x,\y) {};
      \fi

      \ifnum\Ncnt>1 
        \foreach \j in {1,...,\Nnod[\prev]}{ 
          \draw[very thick] (N\prev-\j) -- (N\Ncnt-\i); 
        }
      \fi 
    }
  }

  \node at ([yshift=-14pt] N1-1) {$\Large\bm{\vdots}$};
  \node at ([yshift=-14pt] N1-3) {$\Large\bm{\vdots}$};

  \node at (N1-1) {$\mathbf{x}^{\text{SK}}_{1}$};
  \node at (N1-2) {$\mathbf{x}^{\text{SK}}_{d_{\text{SK}}}$};
  \node at (N1-3) {$\mathbb{Q}_{1}$};
  \node at (N1-4) {$\mathbb{Q}_{N_{z}}$};

  \node at (N2-1) {$\bm{\Sigma}$};
  \node at (N3-1) {$\mathbf{y}^{\text{SK}}$};

  \draw[decorate,decoration={brace,amplitude=10pt,mirror}]
    ([xshift=-0.3cm] N1-1.north west) -- ([xshift=-0.3cm] N1-2.south west) 
    node[midway, left=0.5cm, anchor=east, align=left] {observable features};

  \draw[decorate,decoration={brace,amplitude=10pt,mirror}]
    ([xshift=-0.3cm] N1-3.north west) -- ([xshift=-0.3cm] N1-4.south west) 
    node[midway, left=0.5cm, anchor=east, align=center] {latent parameters \\ (probabilities)};

\end{tikzpicture}
\caption{NPSKR by an unknown distribution. The inputs include a high-dimensional vector of probabilities, which serves as the learnable latent parameters in SKINNs. The probabilities are passed to a softmax function to ensure non-negativity and the total probability of one unit.}
\label{fig:nonparam_format_repre_prob}
\end{figure}

The SKINNs framework in this case learns a high-dimensional vector of risk-neutral probabilities directly from data, guided solely by the foundational economic principle of no-arbitrage. We visualize such NPSKRs in Figure (\ref{fig:nonparam_format_repre_prob}).

\paragraph{Machine-learned structured knowledge.} SKINNs also provide solutions when there is no guiding knowledge from theories. At times, we observe the outputs from a DGP, rather than knowing the true specification of it. For example, independent of any assumed asset pricing models, investors still observe stock returns, the cross-section of option prices, and the term-structure of interest rates generated by the market. In these cases, we are unknown about the DGP, but we observe its realizations. We employ a (variational) auto-encoder (AE) to learn the structured knowledge, that is, the latent parameters are selected by the machine instead of by theory.

Given a series of observation vectors $\mathcal{D}_{\text{AE}}=\{\hat{\mathbf{y}}_{i}\}_{i=1}^{N_{\text{AE}}}$, probably with noise, which are realizations of an unknown DGP, we reconstruct them through training an AE. An AE can be viewed as a non-linear, NN counterpart of PCA, which is an unsupervised learning method \citep[see, for example, in][]{gu2019autoencoder,freire2023autoencoder}. We assume that the unknown DGP can be characterized by a set of arbitrary-dimensional latent parameters $\mathbf{\phi}^{\text{AE}}$. 
We use the encoder part of an AE to compress the observation vector $\hat{\mathbf{y}}$ into the latent parameter space, and use the decoder part to project the latent parameters back to the input observation vector. 
Therefore, the objective of training an AE is to minimize a reconstruction loss between the input samples and the AE-generated samples:
\begin{equation}
    \mathcal{L}_{\text{AE}}(\theta;\mathcal{D}_{\text{AE}}) = \sum_{i=1}^{N_{\text{AE}}} \left(f^{\text{AE}}_{\mathbf{\theta}}(\hat{\mathbf{y}}^{[i]})-\hat{\mathbf{y}}^{[i]}\right)^{2} \Big/ N_{\text{AE}}.
\end{equation}

The AE is also pre-trained offline, based on the dataset $\mathcal{D}_{AE}$ where it is believed to contain structured knowledge that resembles the DGP. The dataset $\mathcal{D}_{\text{AE}}$ can be obtained from expert simulations or collected directly from the real-world observations. Once the AE is trained, we use the decoder NN, $f^{\text{Dec}}_{\theta}(\mathbf{X}^{\text{SK}}, \mathbf{\phi}_{AE})$, to represent the NPSKR, that is, $g_{\mathbf{\phi}_{\text{AE}}}(\mathbf{X}^{\text{SK}})\equiv^{\text{Dec}}_{\mathbf{\theta}}(\mathbf{X}^{\text{SK}}, \mathbf{\phi}_{AE})$ where $\mathbf{\phi}_{\text{AE}}$ contains the learnable parameters but determined by a machine. 
We illustrate the AE-based NPSKRs in Figure (\ref{fig:non_param_format_repre_ae}).

\begin{figure}[htbp]
  \centering
  \begin{tikzpicture}[x=1.5cm, y=1.2cm] 
    \readlist\Nnod{3,4,3,2,3,4,3} 

    \foreachitem \N \in \Nnod{
      \foreach \i [evaluate={\x=\Ncnt; \y=\N/2-\i+0.5; \prev=int(\Ncnt-1);}] in {1,...,\N}{

        \ifnum\Ncnt=1 \def\isouter{1} \else \ifnum\Ncnt=7 \def\isouter{1} \else \def\isouter{0} \fi \fi
        
        \ifnum\isouter=1
          \ifnum\i=2
            \node (N\Ncnt-\i) at (\x,\y) {$\Large\bm{\vdots}$};
          \else
            \node[mynode, color=mylightred, draw=red] (N\Ncnt-\i) at (\x,\y) {};
          \fi
        \else
          \node[mynode, color=mylightred, draw=red] (N\Ncnt-\i) at (\x,\y) {};
        \fi

        \ifnum\Ncnt>1
          \foreach \j in {1,...,\Nnod[\prev]}{
            \draw[very thick] (N\prev-\j) -- (N\Ncnt-\i);
          }
        \fi
      }
    }

    \node[inner sep=0pt, yshift=3pt] at (4, 0) {$\Large\bm{\vdots}$};

    \node at (N1-1) {$\hat{\mathbf{y}}_{1}$};
    \node at (N1-3) {$\hat{\mathbf{y}}_{n}$};

    \node at (N7-1) {$\hat{\mathbf{y}}^{\text{SK}}_{1}$};
    \node at (N7-3) {$\hat{\mathbf{y}}^{\text{SK}}_{n}$};

    \node at (N4-1) {$\mathbf{\phi}_{\text{ae}}^{[1]}$};
    \node at (N4-2) {$\mathbf{\phi}_{\text{ae}}^{[d]}$};

    \draw[decorate, decoration={brace, amplitude=5pt, raise=37pt}] 
    (N1-1.center) -- ([xshift=-5pt] N4-1 |- N1-1) 
    node[midway, yshift=40.5pt, above] {encoder};

    \draw[decorate, decoration={brace, amplitude=5pt, raise=37pt}] 
    ([xshift=5pt] N4-1 |- N1-1) -- (N7-1 |- N1-1) 
    node[midway, yshift=40.5pt, above] {decoder};

    \draw[<-, thick, >=stealth] (4, -1.2) -- (4, -2.2) 
    node[below, align=center] {auto-encoded latent parameters};

  \end{tikzpicture}
  \caption{NPSKR by an AE. The inputs include a vector of realizations from an unknown DGP, with size $n$. The encoder compresses the inputs into a vector of latent parameters of size $d$ ($d\ll n$). The $d$-dimensional AE latent parameters are then passed to a decoder, which returns an $n$-dimensional vector that reconstructs the inputs.}
  \label{fig:non_param_format_repre_ae}
\end{figure}
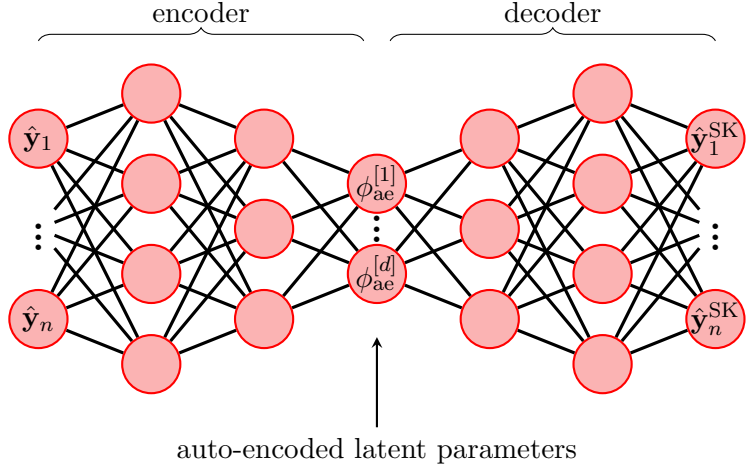

\subsection{The Composite Objective}\label{sec:formal_setup}

The SKINNs framework unifies the data-driven and the structured-knowledge components through a composite loss function. The data-driven component measures the discrepancy between the NN's predictions and the observed outcomes through the data-centric loss as in Equation (\ref{eq_data_centric_loss}).

The structured-knowledge component penalizes deviations between the NN's predictions and the structured-knowledge representation's output $g_\mathbf{\phi}$ via $\ell(\cdot,\cdot)$, e.g., a squared error:
\begin{align}
    \mathcal{L}_{\text{SK}}(\theta, \phi;\mathbf{X}^{\text{SK}}_{\text{Colloc}}) :&= \mathbb{E}\left[\ell(f_\theta(\mathbf{X}_{\text{Colloc}}), g_\phi(\mathbf{X}_{\text{Colloc}}^{\text{SK}}))\right] \\
    &= \frac{1}{N_{\text{Colloc}}}\sum_{i=1}^{N_{\text{Colloc}}}\left(f_{\theta}\left(\mathbf{X}^{[i]}_{\text{Colloc}}\right)-g_{\phi}(\mathbf{X}^{[i],\text{SK}}_{\text{Colloc}})\right)^{2}.
\end{align}
Crucially, this loss is evaluated on a set of collocation points, $\{\mathbf{X}_{\text{Colloc}}^{[i]}\}_{i=1}^{N_{\text{Colloc}}}$, which can be randomly sampled from the entire valid input domain of the NN. These points do not need to correspond to the observed data $\{\mathbf{X}_{\text{Obs}}^{[i]}\}_{i=1}^{N_{\text{Obs}}}$. This allows the framework to enforce theoretical consistency in regions of the feature space where observations may be sparse or absent, thereby promoting robust generalization.
Appendix~\ref{app:gen-stability}--\ref{app:gen-shift} provides formal generalization and target-risk bounds in a convex proxy, clarifying how structured regularization and collocation can improve robustness under distributional shifts.

The complete learning objective for a SKINN is the weighted sum of the data-centric and the structured-knowledge loss:
\begin{equation}\label{eq:total_loss}
    \mathcal{L}_{\text{Total}}(\mathbf{\theta}, \mathbf{\phi};\mathbf{X}_{\text{Obs}}\cup\mathbf{X}_{\text{Colloc}},\mathbf{y}_{\text{Obs}}) = \lambda_{\text{Data}}\underbrace{\mathcal{L}_{\text{Data}}(\mathbf{\theta}; \mathbf{X}_{\text{Obs}})}_{\text{Data fidelity}} + \lambda_{\text{SK}}\underbrace{\mathcal{L}_{\text{SK}}(\mathbf{\theta}, \mathbf{\phi}; \mathbf{X}_{\text{Colloc}})}_{\text{Theory consistency}}.
\end{equation}
The hyper-parameters $\lambda_{\text{data}}$ and $\lambda_{\text{SK}}$ control the trade-off between fitting the observed data and adhering to the embedded theoretical structure. Without loss of generality, we can normalize $\lambda_{\text{data}} = 1$ and denote $\lambda = \lambda_{\text{SK}}$, yielding:
\begin{align}
     \mathcal{L}_{\text{Total}}(\mathbf{\theta}, \mathbf{\phi};\cdots) &= \mathcal{L}_{\text{Data}}(\mathbf{\theta}; \mathbf{X}_{\text{Obs}}) + \lambda \mathcal{L}_{\text{SK}}(\mathbf{\theta}, \mathbf{\phi}; \mathbf{X}_{\text{Colloc}}) \\
                              &= \frac{1}{N_{\text{Obs}}}\sum_{i=1}^{N_{\text{Obs}}}\left(f_{\theta}(\mathbf{X}^{[i]}_{\text{Obs}}) -\hat{\mathbf{y}}_{\text{Obs}}^{[i]}\right)^{2} + \frac{\lambda}{N_{\text{Colloc}}}\sum_{i=1}^{N_{\text{Colloc}}}\left(f_{\theta}\left(\mathbf{X}^{[i]}_{\text{Colloc}}\right)-g_{\phi}(\mathbf{X}^{[i],\text{SK}}_{\text{Colloc}})\right)^{2}.
    \label{eq:normalized_objective}
\end{align}



\subsection{The Joint Learning of Both Parameter Sets}\label{sec:skinn_optim}

Our SKINNs are trained by jointly minimizing this composite loss with respect to both sets of parameters, e.g., the NN parameter and the structured-knowledge parameters:
\begin{equation}\label{eq:skinn_min}
    (\mathbf{\theta}^*, \mathbf{\phi}^*) =: \underset{\mathbf{\theta} \in \Theta, \mathbf{\phi} \in \Phi}{\arg\min}\,\mathcal{L}_{\text{Total}}(\mathbf{\theta}, \mathbf{\phi};\mathbf{X}_{\text{Obs}}\cup\mathbf{X}_{\text{Colloc}},\mathbf{y}_{\text{Obs}}).
\end{equation}

The optimization of Equation~\eqref{eq:skinn_min} is carried out using (stochastic) gradient-based methods at each iteration. While $\mathbf{\theta}$ can be astronomically-dimensional, $\mathbf{\phi}$ is usually relatively lower-dimensional and domain-interpretable (e.g.\ material constants in physics, preference or volatility parameters in economics, or control parameters in engineering). The key innovation of SKINNs is the simultaneous, gradient-based discovery of both parameter sets, where gradients of the composite loss with respect to $\mathbf{\theta}$ and $\mathbf{\phi}$ are computed via automatic differentiation, enabling the theoretical parameters to adapt dynamically to reconcile theory with empirical evidence so that the resulting function approximator respects domain-specific structure while maintaining flexibility. This joint optimization distinguishes SKINNs from approaches that pre-calibrate $\mathbf{\phi}$ separately or ignore the theoretical structure entirely. Figure~\ref{fig:skinn_architecture} illustrates the architecture of SKINNs.
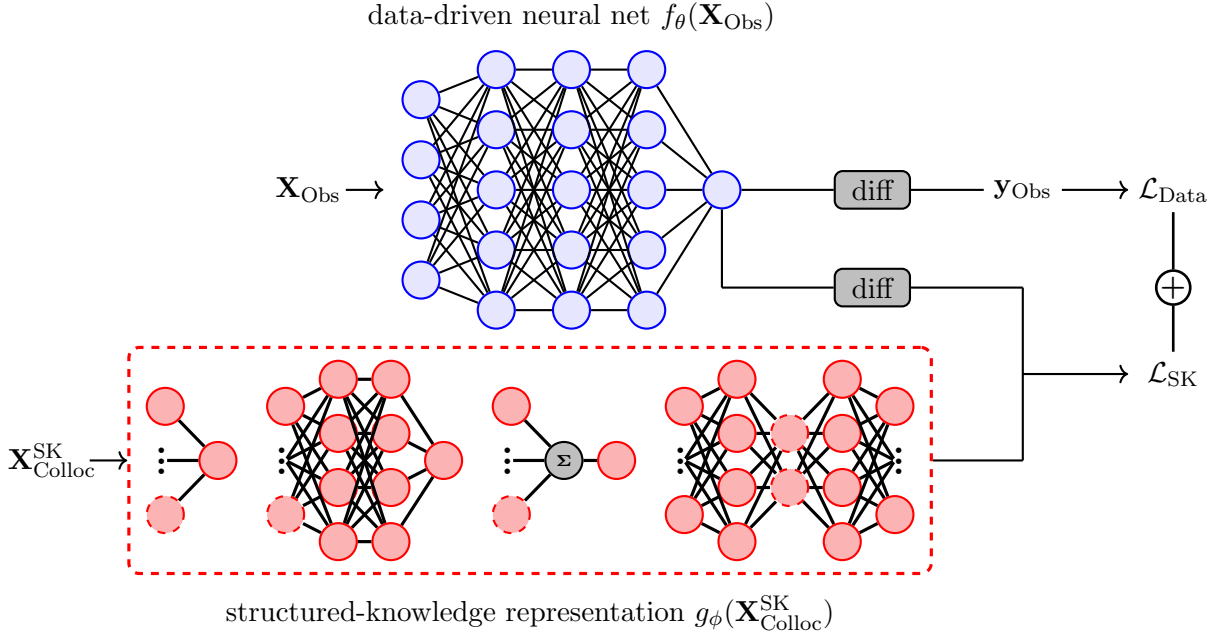
\begin{figure}[htbp]
\centering
\begin{tikzpicture}[
    x=1.0cm, y=0.8cm, 
    mynode/.style={circle, thick, minimum size=14pt, inner sep=0pt, font=\normalsize, draw=blue, fill=blue!10},
  ]

  \readlist\Nnod{4,5,5,5,1} 
  \foreachitem \N \in \Nnod{
    \foreach \i [evaluate={\x=\Ncnt; \y=\N/2-\i+0.5; \prev=int(\Ncnt-1);}] in {1,...,\N}{
      \node[mynode] (N\Ncnt-\i) at (\x,\y) {};
      \ifnum\Ncnt>1
        \foreach \j in {1,...,\Nnod[\prev]}{\draw[thick] (N\prev-\j) -- (N\Ncnt-\i);}
      \fi
    }
  }
  \node[
    draw=black,
    fill=lightgray,
    thick,
    rounded corners=3pt,
    minimum width=1.0cm,
    minimum height=0.5cm,
    inner sep=2pt,
    font=\normalsize
] at ([xshift=2cm]N5-1) (diff)
    {diff};

  \draw[thick,-] (N5-1) -- (diff);

  \node at ([xshift=2cm]diff) (yobs) {$\mathbf{y}_{\text{Obs}}$};

  \draw[thick,-] (diff) -- (yobs);

  \node at ([xshift=2cm]yobs) (data_loss) {$\mathcal{L}_{\text{Data}}$};

  \draw[thick,->] (yobs) -- (data_loss);

  \node[
    draw=black,
    fill=lightgray,
    thick,
    rounded corners=3pt,
    minimum width=1.0cm,
    minimum height=0.5cm,
    inner sep=2pt,
    font=\normalsize
] at ([yshift=-1.3cm]diff) (fin_diff)
    {diff};

  \draw[thick] (N5-1) -- ($ (N5-1) + (0,-1.3cm) $);

  \draw[thick] ($ (N5-1) + (0,-1.3cm) $) -- (fin_diff);

  \draw[thick] (fin_diff) -- ($ (yobs) + (0,-1.3cm) $);


  
  \begin{scope}[shift={(-1.3*2-0.5, -4.5)}]  

    \readlist\Fonenod{3,1}

    \foreachitem \N \in \Fonenod{
      \foreach \i [evaluate={\lx=\Ncnt*0.7; \ly=(\N/2 - \i + 0.5)*0.9;}] in {1,...,\N}{

        \ifnum\Ncnt=1
          \ifnum\i=2
            \node[inner sep=0pt, draw=none, fill=none] (B1-1-2) at (\lx,\ly) {};
            \node at ([xshift=-0.05cm,yshift=0.1cm]B1-1-2) {$\Large\bm{\vdots}$};
          \else\ifnum\i=3
            \node[mynode, color=mylightred, draw=red, dashed] (B1-1-3) at (\lx,\ly) {};
          \else
            \node[mynode, color=mylightred, draw=red] (B1-1-1) at (\lx,\ly) {};
          \fi\fi
        \else
          \node[mynode, color=mylightred, draw=red] (B1-\Ncnt-\i) at (\lx,\ly) {};
        \fi

      }
    }

    \foreach \i in {1,...,3}{
      \draw[very thick] (B1-1-\i) -- (B1-2-1);
    }


  \end{scope}

\begin{scope}[shift={(-0.5*2-0.5, -4.5)}]

  \readlist\FAnod{3,4,4,1}

  \foreachitem \N \in \FAnod{
    \foreach \i [evaluate={\lx=\Ncnt*0.7; \ly=(\N/2 - \i + 0.5)*0.9;}] in {1,...,\N}{

      \ifnum\Ncnt=1
        \ifnum\i=2
          \node[inner sep=0pt, draw=none, fill=none] (B2-1-2) at (\lx,\ly) {};
          \node at ([xshift=-0.05cm,yshift=0.1cm]B2-1-2) {$\Large\bm{\vdots}$};
        \else\ifnum\i=3
          \node[mynode, color=mylightred, draw=red, dashed] (B2-1-3) at (\lx,\ly) {};
        \else
          \node[mynode, color=mylightred, draw=red] (B2-1-1) at (\lx,\ly) {};
        \fi\fi
      \else\ifnum\Ncnt=4
        \ifnum\i=2
          \node[inner sep=0pt, draw=none, fill=none] (B2-4-2) at (\lx,\ly) {};
          \node at ([xshift=-0.05cm,yshift=0.1cm]B2-4-2) {$\Large\bm{\vdots}$};
        \else
          \node[mynode, color=mylightred, draw=red] (B2-4-\i) at (\lx,\ly) {};
        \fi
      \else
        \node[mynode, color=mylightred, draw=red] (B2-\Ncnt-\i) at (\lx,\ly) {};
      \fi\fi
    }
  }

  \foreach \i in {1,...,3}{ \foreach \j in {1,...,4}{ \draw[very thick] (B2-1-\i) -- (B2-2-\j); } }
  \foreach \i in {1,...,4}{ \foreach \j in {1,...,4}{ \draw[very thick] (B2-2-\i) -- (B2-3-\j); } }
  \foreach \i in {1,...,4}{ \draw[very thick] (B2-3-\i) -- (B2-4-1); }

\end{scope}

\begin{scope}[shift={(1*2-0.5, -4.5)}]

  \readlist\Fthreenod{3,1,1}

  \foreachitem \N \in \Fthreenod{
    \foreach \i [evaluate={\lx=\Ncnt*0.7; \ly=(\N/2 - \i + 0.5)*0.9;}] in {1,...,\N}{

      \ifnum\Ncnt=1
        \ifnum\i=2
          \node[inner sep=0pt, draw=none, fill=none] (B3-1-2) at (\lx,\ly) {};
          \node at ([xshift=-0.05cm,yshift=0.1cm]B3-1-2) {$\Large\bm{\vdots}$};
        \else\ifnum\i=3
          \node[mynode, color=mylightred, draw=red, dashed] (B3-1-3) at (\lx,\ly) {};
        \else
          \node[mynode, color=mylightred, draw=red] (B3-1-1) at (\lx,\ly) {};
        \fi\fi
      \else\ifnum\Ncnt=2
        \node[mynode, fill=lightgray, draw=black] (B3-\Ncnt-\i) at (\lx,\ly) {};
      \else
        \node[mynode, color=mylightred, draw=red] (B3-\Ncnt-\i) at (\lx,\ly) {};
      \fi\fi
    }
  }

  \foreach \i in {1,...,3}{ \draw[very thick] (B3-1-\i) -- (B3-2-1); }
  \draw[very thick] (B3-2-1) -- (B3-3-1);

  \node[font=\tiny] at (B3-2-1) {$\bm{\Sigma}$};

\end{scope}

\begin{scope}[shift={(2.15*2-0.5, -4.5)}]

  \readlist\FAnod{3,4,2,4,3}

  \foreachitem \N \in \FAnod{
    \foreach \i [evaluate={\lx=\Ncnt*0.7; \ly=(\N/2 - \i + 0.5)*0.9;}] in {1,...,\N}{

      \ifnum\Ncnt=1
        \ifnum\i=2
          \node[inner sep=0pt, draw=none, fill=none] (B4-1-2) at (\lx,\ly) {};
          \node at ([xshift=-0.05cm,yshift=0.1cm]B4-1-2) {$\Large\bm{\vdots}$};
        \else
          \node[mynode, color=mylightred, draw=red] (B4-1-\i) at (\lx,\ly) {};
        \fi
      \else\ifnum\Ncnt=3
        \node[mynode, color=mylightred, draw=red, dashed] (B4-3-\i) at (\lx,\ly) {};
      \else\ifnum\Ncnt=5
        \ifnum\i=2
          \node[inner sep=0pt, draw=none, fill=none] (B4-5-2) at (\lx,\ly) {};
          \node at ([xshift=+0.05cm,yshift=0.1cm]B4-5-2) {$\Large\bm{\vdots}$};
        \else
          \node[mynode, color=mylightred, draw=red] (B4-5-\i) at (\lx,\ly) {};
        \fi
      \else
        \node[mynode, color=mylightred, draw=red] (B4-\Ncnt-\i) at (\lx,\ly) {};
      \fi\fi\fi

    }
  }

  \foreach \i in {1,...,3}{ \foreach \j in {1,...,4}{ \draw[very thick] (B4-1-\i) -- (B4-2-\j); } }
  \foreach \i in {1,...,4}{ \foreach \j in {1,...,2}{ \draw[very thick] (B4-2-\i) -- (B4-3-\j); } }
  \foreach \i in {1,...,2}{ \foreach \j in {1,...,4}{ \draw[very thick] (B4-3-\i) -- (B4-4-\j); } }
  \foreach \i in {1,...,4}{ \foreach \j in {1,...,3}{ \draw[very thick] (B4-4-\i) -- (B4-5-\j); } }


\end{scope}

\node[
  draw, dashed, red, rounded corners, very thick,
  fit={
    ([yshift=0.3cm]B1-1-1.north west) 
    ([yshift=-0.3cm]B4-5-3.south east)
  },
  inner sep=0.3cm
] (allformats) {};

\draw[thick] (allformats.east) -- (yobs |- allformats.east);

\draw[thick] ($ (yobs) + (0,-1.3cm) $) -- (yobs |- allformats.east);

\draw[thick, ->] ($ (allformats.west) + (-0.5cm,0) $) -- (allformats.west);

\node at ($ (allformats.west) + (-1cm,0) $) (SK_inputs) {$\mathbf{X}^{\text{SK}}_{\text{Colloc}}$};

\draw[thick, ->] ($ (N2-3) + (-2.0cm,0) $) -- ($ (N2-3) + (-1.5cm,0) $);

\node at ($ (N2-3) + (-2.0cm,0) + (-0.5cm,0)$) (data_inputs) {$\mathbf{X}_{\text{Obs}}$};

\coordinate (top) at ($ (yobs) + (0,-1.3cm) $);
\coordinate (bottom) at (yobs |- allformats.east);

\coordinate (mid) at ($(top)!0.5!(bottom)$);

\draw[thick,->] (mid) -- (data_loss.west |- mid);

\node at (data_loss |- mid) (SK_loss) {$\mathcal{L}_{\text{SK}}$};

\node[draw, circle, minimum size=10pt, inner sep=0pt, line width=1pt] (composite) at (data_loss |- fin_diff) {$\bm{+}$};

\draw[line width=1pt] (data_loss) -- (composite);

\draw[line width=1pt] (SK_loss) -- (composite);

\node at ([yshift=0.7cm]N3-1) {data-driven neural net $f_{\mathbf{\theta}}(\mathbf{X}_{\text{Obs}})$};
\node at ([yshift=-0.5cm]allformats.south) {structured-knowledge representation $g_{\mathbf{\phi}}(\mathbf{X}^{\text{SK}}_{\text{Colloc}})$};




\end{tikzpicture}
\caption{The architecture of SKINNs. The NN on the top (blue circles) is the base function approximator. The mappings at the bottom (red circles) are structured-knowledge representations of different formats. All of them take observable features $\mathbf{X}^{\text{SK}}_{\text{Colloc}}$; the current epoch value of the learnable latent parameters as inputs, and output $g_{\phi}$.}
\label{fig:skinn_architecture}
\end{figure}




A natural concern is whether the structured-knowledge parameters $\phi$ can be correctly identified, jointly with the NN parameters. To this end, we provide two complementary results below. The first is a general profiling lemma showing that the two sets of parameters can be jointly identified, and the second provides a sufficient condition under which $\phi$ can be uniquely identified.

\begin{lemma}[Identification through profiling]\label{lem:profile-id}
Given the composite objective $\mathcal{L}(\mathbf{\theta},\mathbf{\phi})$ in Equation~\eqref{eq:normalized_objective}, define the profiled criterion
\begin{equation}\label{eq:profiled_criterion}
Q(\mathbf{\phi}) \equiv \inf_{\mathbf{\theta}\in\Theta}\, \mathcal{L}(\mathbf{\theta},\mathbf{\phi}).
\end{equation}
If $Q(\mathbf{\phi})$ has a unique minimizer $\mathbf{\phi}^{\ast}$ over $\Phi$, then $\mathbf{\phi}^{\ast}$ is identified as the unique structured-parameter value selected by the SKINNs objective. If, in addition, the minimizer in $\mathbf{\theta}$ at $\mathbf{\phi}^{\ast}$ is unique, then the joint minimizer $(\mathbf{\theta}^{\ast},\mathbf{\phi}^{\ast})$ is uniquely identified.
\end{lemma}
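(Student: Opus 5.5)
The argument is elementary: it only relates the joint minimization in Equation~\eqref{eq:skinn_min} to the two-stage minimization that defines $Q$ in Equation~\eqref{eq:profiled_criterion}. Throughout, I interpret ``identified'' as ``being the unique value selected by the population (or sample) SKINNs objective.'' I also take $\arg\min$ in Equation~\eqref{eq:skinn_min} to be non-empty, which is implicit in the paper's definition of $(\theta^\ast,\phi^\ast)$.

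\textbf{Step 1: joint minimizers project onto minimizers of $Q$.} Let $(\bar\theta,\bar\phi)$ be any joint minimizer of $\mathcal{L}$ over $\Theta\times\Phi$. For every $\phi\in\Phi$ and $\theta\in\Theta$ we have $\mathcal{L}(\bar\theta,\bar\phi)\le \mathcal{L}(\theta,\phi)$. Taking the infimum over $\theta$ gives $\mathcal{L}(\bar\theta,\bar\phi)\le Q(\phi)$. Since $Q(\bar\phi)\le \mathcal{L}(\bar\theta,\bar\phi)$ by definition of the infimum, it follows that
\[
Q(\bar\phi)\le \mathcal{L}(\bar\theta,\bar\phi)\le \inf_{\phi\in\Phi} Q(\phi),
\]
so $\bar\phi$ minimizes $Q$, and moreover $\inf_{\theta,\phi}\mathcal{L}=\inf_\phi Q(\phi)$. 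By the assumed uniqueness of the minimizer of $Q$, $\bar\phi=\phi^\ast$. Hence every joint minimizer carries the same structural component. This is the first claim: the SKINNs objective selects exactly $\phi^\ast$, whatever value the network component takes, even when the latter is not unique.

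\textbf{Step 2: uniqueness of the joint minimizer.} With $\bar\phi=\phi^\ast$ and $\mathcal{L}(\bar\theta,\phi^\ast)=\min\mathcal{L}=Q(\phi^\ast)$, any joint minimizer has $\bar\theta\in\arg\min_\theta \mathcal{L}(\theta,\phi^\ast)$. Conversely, if $\theta^\ast$ attains $Q(\phi^\ast)$, then $(\theta^\ast,\phi^\ast)$ is a joint minimizer. Under the additional hypothesis that this inner minimizer is unique, every joint minimizer equals $(\theta^\ast,\phi^\ast)$, which gives the second claim.

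\textbf{Main obstacle: attainment.} The only point requiring care is existence, because $Q$ is defined through an infimum that may not be attained in $\theta$. This matters because over-parameterized networks on a non-compact $\Theta$ may approach the infimum only along a divergent sequence. Step~1 is unaffected: it only uses the existence of a joint minimizer, and $Q(\phi^\ast)=\min\mathcal{L}$ then forces attainment at $\phi^\ast$. For Step~2, the hypothesis ``the minimizer in $\theta$ at $\phi^\ast$ is unique'' presupposes that one exists. If one prefers not to assume existence of a joint minimizer outright, I would guarantee it with a sufficient condition: compact $\Theta\times\Phi$ together with continuity of $f_\theta$ and $g_\phi$ in their parameters. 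Then $\mathcal{L}$ is continuous and attains its minimum by Weierstrass. I would remark that this is where the lemma's content ends: it relocates identification to the profiled criterion $Q$, whose unique minimization is a separate condition addressed by the next result.
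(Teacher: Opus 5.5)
Your proposal is correct and follows the same route as the paper. The paper's proof asserts ``by definition'' that $(\theta^\ast,\phi^\ast)$ is a joint minimizer if and only if $\phi^\ast$ minimizes $Q$ and $\theta^\ast\in\arg\min_\theta\mathcal{L}(\theta,\phi^\ast)$, and then applies the two uniqueness hypotheses; you prove both directions of that equivalence explicitly and spell out the attainment issue, which the paper leaves implicit in its compactness and continuity assumptions.
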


\begin{proof}
By definition, $(\theta^{\ast},\phi^{\ast})$ minimizes $L(\theta,\phi)$ over $\Theta\times\Phi$ if and only if $\phi^{\ast}$ minimizes the profiled criterion $Q(\phi)$ and $\theta^{\ast}\in\arg\min_{\theta}L(\theta,\phi^{\ast})$. Uniqueness of the minimizer of $Q(\phi)$ yields uniqueness of $\phi^{\ast}$; uniqueness of the minimizer in $\theta$ at $\phi^{\ast}$ yields uniqueness of the joint minimizer. \end{proof}

\begin{proposition}[A sufficient condition]\label{prop:closedform-id}
Consider the squared-error SKINNs objective in Equation~\eqref{eq:normalized_objective}. Assume:
(i) the function class $\{f_\theta\}$ is sufficiently rich so that minimizing over $\theta\in\Theta$ is equivalent to minimizing over all measurable functions $f:\mathcal{X}\to\mathbb{R}$ with finite second moments;
(ii) $\mathbf{X}_{\text{Obs}}$ and $\mathbf{X}_{\text{Colloc}}$ are random inputs with the same distribution, written $\mathbf{X}_{\text{Colloc}}\stackrel{d}{=}\mathbf{X}_{\text{Obs}}$, and $\mathbf{X}_{\text{Colloc}}^{\text{SK}}$ is a measurable function of $\mathbf{X}_{\text{Colloc}}$ (equivalently, $\mathbf{X}_{\text{Obs}}^{\text{SK}}$ is a measurable function of $\mathbf{X}_{\text{Obs}}$);
and (iii) $\mathbb{E}[\mathbf{y}^2]<\infty$ and $\mathbb{E}\!\left[g_{\phi}(\mathbf{X}_{\text{Colloc}}^{\text{SK}})^2\right]<\infty$ for all $\phi\in\Phi$.
Then, for each fixed $\phi$, the minimizer over $f_{\theta}$ exists and can be written pointwise in terms of $\mathbf{X}_{\text{Obs}}$ as
\begin{equation}\label{eq:f_star_phi}
f^{\ast}_{\theta}(\mathbf{X}_{\text{Obs}})
=
\frac{\mathbb{E}[\mathbf{y}\mid \mathbf{X}_{\text{Obs}}] + \lambda\, g_{\phi}(\mathbf{X}_{\text{Obs}}^{\text{SK}})}{1+\lambda}.
\end{equation}
Moreover, the profiled criterion $Q(\phi)\equiv \inf_{\theta\in\Theta}\mathcal{L}(\theta,\phi)$ satisfies, up to $\phi$-independent constants,
\begin{equation*}
Q(\phi)
=
\frac{\lambda}{1+\lambda}\,
\mathbb{E}\!\left[
\big(\mathbb{E}[\mathbf{y}\mid \mathbf{X}_{\text{Obs}}] - g_{\phi}(\mathbf{X}_{\text{Obs}}^{\text{SK}})\big)^2
\right]
\;+\; \text{const.}
\end{equation*}
Hence, if the mapping,
\begin{equation*}
\phi
\mapsto
\mathbb{E}\!\left[
\big(\mathbb{E}[\mathbf{y}\mid \mathbf{X}_{\text{Obs}}] - g_{\phi}(\mathbf{X}_{\text{Obs}}^{\text{SK}})\big)^2
\right],
\end{equation*}
has a unique minimizer over $\Phi$, then $\phi$ is identified.\footnote{In particular, the substitution of $\mathbf{X}_{\text{Obs}}$ for $\mathbf{X}_{\text{Colloc}}$ in all above formulas is purely notational: it uses $\mathbf{X}_{\text{Colloc}}\stackrel{d}{=}\mathbf{X}_{\text{Obs}}$ to express both expectations with a common conditioning variable.}
\end{proposition}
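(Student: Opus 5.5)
We work with the population version of the objective in Equation~\eqref{eq:normalized_objective}, with the two sample averages replaced by expectations. By assumption (ii), the collocation and observation inputs share a common law. Write $\mathbf{X}$ for this common input, $m(\mathbf{X})=\mathbb{E}[\mathbf{y}\mid\mathbf{X}]$, and $g(\mathbf{X})=g_\phi(\mathbf{X}^{\text{SK}})$. Since $\mathbf{X}^{\text{SK}}$ is a measurable function of $\mathbf{X}$, $g$ is $\sigma(\mathbf{X})$-measurable. By assumption (i), the infimum over $\theta$ becomes an infimum over $f\in L^2(P_{\mathbf{X}})$:
\begin{equation*}
\mathcal{L}(f,\phi)=\mathbb{E}\big[(f(\mathbf{X})-\mathbf{y})^2\big]+\lambda\,\mathbb{E}\big[(f(\mathbf{X})-g(\mathbf{X}))^2\big].
\end{equation*}

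The first step decomposes the data term. Because $\mathbf{y}-m(\mathbf{X})$ is orthogonal to every square-integrable function of $\mathbf{X}$, we have $\mathbb{E}[(f-\mathbf{y})^2]=\mathbb{E}[(f-m)^2]+\mathbb{E}[\mathrm{Var}(\mathbf{y}\mid\mathbf{X})]$. The second piece is finite by (iii) and does not depend on $\phi$. Assumption (iii) also guarantees that $m$ and $g$ lie in $L^2$, so every quantity below is finite.

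The second step minimizes pointwise. For each $x$, the map $a\mapsto(a-m(x))^2+\lambda(a-g(x))^2$ is strictly convex. Its unique minimizer is $a^*=(m(x)+\lambda g(x))/(1+\lambda)$, and its minimum value is $\frac{\lambda}{1+\lambda}(m(x)-g(x))^2$. This can be checked by direct substitution: $a^*-m=\lambda(g-m)/(1+\lambda)$ and $a^*-g=(m-g)/(1+\lambda)$, so the value is $[\lambda^2+\lambda](m-g)^2/(1+\lambda)^2$.

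We then check that $f^*=a^*(\mathbf{X})$ is measurable and lies in $L^2$, which holds because it is a linear combination of $m$ and $g$. Since $f^*$ attains the pointwise minimum everywhere, integrating gives $\mathcal{L}(f,\phi)\ge\mathcal{L}(f^*,\phi)$ for every admissible $f$. So the minimizer exists, is unique almost surely, and equals Equation~\eqref{eq:f_star_phi}. Consequently
\begin{equation*}
Q(\phi)=\frac{\lambda}{1+\lambda}\,\mathbb{E}\big[(m(\mathbf{X})-g(\mathbf{X}))^2\big]+\mathbb{E}[\mathrm{Var}(\mathbf{y}\mid\mathbf{X})],
\end{equation*}
where the last term is the $\phi$-independent constant.

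The final step concludes. Since $\lambda/(1+\lambda)>0$, the minimizers of $Q$ coincide with the minimizers of $\phi\mapsto\mathbb{E}[(m-g_\phi)^2]$. A unique minimizer of that map is therefore a unique minimizer of $Q$, and Lemma~\ref{lem:profile-id} yields identification of $\phi$. Uniqueness of $f^*$ almost surely also gives the joint statement at the level of functions.

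The main obstacle is rigor rather than computation. Assumption (ii) is needed to put both loss terms on a common probability space, so that the pointwise argument applies to a single integral. The infimum over the NN class $\Theta$ must also be replaced by the infimum over $L^2$, which is exactly what assumption (i) licenses. We also need to justify that $m$ is well-defined as an $L^2$ version; this holds because $\mathbb{E}[\mathbf{y}^2]<\infty$.
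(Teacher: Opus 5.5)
Your proof is correct and follows essentially the same route as the paper: use $\mathbf{X}_{\text{Colloc}}\stackrel{d}{=}\mathbf{X}_{\text{Obs}}$ to write both loss terms against a common input law, minimize the strictly convex quadratic pointwise to obtain $f^{\ast}=(\mathbb{E}[\mathbf{y}\mid\mathbf{X}]+\lambda g_{\phi})/(1+\lambda)$, and substitute back to get $Q(\phi)$ up to a $\phi$-independent constant. Your version is only more explicit: you split off $\mathbb{E}[\mathrm{Var}(\mathbf{y}\mid\mathbf{X})]$ first instead of conditioning, you name the constant, and you check that $f^{\ast}\in L^2$ and is a.s.\ unique.
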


\begin{proof}
Fix $\phi$. Under Assumptions (i)--(iii), the population objective in \eqref{eq:normalized_objective} can be written as
\begin{equation}\label{eq:obj_fixed_phi}
\mathbb{E}\!\left[\big(f_{\theta}(\mathbf{X}_{\text{Obs}})-\mathbf{y}\big)^2\right]
+
\lambda\,\mathbb{E}\!\left[\big(f_{\theta}(\mathbf{X}_{\text{Colloc}})-g_{\phi}(\mathbf{X}_{\text{Colloc}}^{\text{SK}})\big)^2\right],
\end{equation}
where $f_{\theta}$ ranges over measurable functions with finite second moments. Since $\mathbf{X}_{\text{Colloc}}\stackrel{d}{=}\mathbf{X}_{\text{Obs}}$, the second expectation equals
\begin{equation*}
\mathbb{E}\!\left[\big(f_{\theta}(\mathbf{X}_{\text{Colloc}})-g_{\phi}(\mathbf{X}_{\text{Colloc}}^{\text{SK}})\big)^2\right]
=
\mathbb{E}\!\left[\big(f_{\theta}(\mathbf{X}_{\text{Obs}})-g_{\phi}(\mathbf{X}_{\text{Obs}}^{\text{SK}})\big)^2\right],
\end{equation*}
so both terms can be expressed using the common conditioning variable $\mathbf{X}_{\text{Obs}}$. Conditioning on $\mathbf{X}_{\text{Obs}}$ and minimizing pointwise gives, for almost every $\mathbf{X}_{\text{Obs}}$,
\begin{equation*}
f^{\ast}_{\theta}(\mathbf{X}_{\text{Obs}})
=
\arg\min_{u\in\mathbb{R}}
\ \mathbb{E}\!\left[(u-\mathbf{y})^2\mid \mathbf{X}_{\text{Obs}}\right]
+
\lambda\big(u-g_{\phi}(\mathbf{X}_{\text{Obs}}^{\text{SK}})\big)^2,
\end{equation*}
which is a strictly convex quadratic in $u$. The first-order condition is
\begin{equation}\label{eq:foc_u}
2\big(u-\mathbb{E}[\mathbf{y}\mid \mathbf{X}_{\text{Obs}}]\big)
+
2\lambda\big(u-g_{\phi}(\mathbf{X}_{\text{Obs}}^{\text{SK}})\big)
=0.
\end{equation}
Solving \eqref{eq:foc_u} yields \eqref{eq:f_star_phi}. Substituting \eqref{eq:f_star_phi} back into \eqref{eq:obj_fixed_phi} and completing the square yields
\begin{equation*}
Q(\phi)
=
\frac{\lambda}{1+\lambda}\,
\mathbb{E}\!\left[
\big(\mathbb{E}[\mathbf{y}\mid \mathbf{X}_{\text{Obs}}] - g_{\phi}(\mathbf{X}_{\text{Obs}}^{\text{SK}})\big)^2
\right]
+\text{const.},
\end{equation*}
where the constant does not depend on $\phi$. Uniqueness of the minimizer of the displayed term implies identification of $\phi$.
\end{proof}

\subsection{Desirable Statistical Properties of SKINNs}\label{sec:inference}

By correctly identifying the astronomically-dimensional NN parameters and the structured-knowledge parameters, the proposed SKINNs framework goes beyond a black-box predictive model. Integrating the discovery process of the latent structural parameters from structured-knowledge representations, the SKINNs framework instead functions as a white box that is more interpretable. The SKINNs framework also presents desirable statistical properties, generalizing many classic econometric tools, e.g., GMM and semi-parametric models.


\paragraph{Consistency and asymptotic normality.}
The SKINNs estimator belongs to the broad class of M-estimators \citep{huber1967behavior, van1996weak, newey1994large}. Under standard regularity conditions (identification, compactness, continuity, and uniform convergence of the empirical loss; see Appendix~\ref{sec:assumption}), the jointly learned parameters converge in probability, $(\hat{\theta}_{N}, \hat{\phi}_{N}) \xrightarrow{p} (\theta^{*}, \phi^{*})$, as $N \to \infty$ (Theorem~\ref{thm:consistency}). Under additional smoothness and moment conditions, the estimator is asymptotically normal at the parametric rate (Theorem~\ref{thm:asymptotic_normality}):
\begin{equation}
\sqrt{N} \begin{pmatrix} \hat{\theta}_N - \theta^* \\ \hat{\phi}_N - \phi^* \end{pmatrix} \xrightarrow{d} \mathcal{N}(\mathbf{0}, V), \qquad V = H^{-1}\Xi\, H^{-1},
\end{equation}
where $H$ denotes the Hessian of the population loss and $\Xi$ the covariance of the score. The sandwich-form covariance \citep{huber1967behavior, white1980heteroskedasticity} is robust to potential misspecification of the structured-knowledge component $g_{\phi}$, so that standard errors and test statistics remain asymptotically valid even if the embedded theory is an imperfect description of reality. These results operate in a double-asymptotic regime where the number of collocation points $M_N$ grows proportionally with the sample size ($M_N/N \to c$ for finite $c > 0$), ensuring that discretization error from $\mathcal{L}_{\text{SK}}$ does not degrade the $O_p(N^{-1/2})$ convergence. In practice, both the Hessian and score covariance can be estimated via automatic differentiation and outer products of gradients, and block-matrix inversion allows extraction of the $q \times q$ covariance submatrix for $\phi$ without inverting the full $(p+q) \times (p+q)$ Hessian, making confidence intervals and Wald-type hypothesis tests for $\phi$ computationally feasible even when the NN contains millions of parameters. The full development, including practical covariance estimation and bootstrap alternatives, appears in Appendix~\ref{sec:general_theory}.

\paragraph{A GMM interpretation and the role of $\lambda$.}
For researchers grounded in the econometric tradition, the composite SKINNs objective admits a revealing reinterpretation as a regularized, overidentified GMM problem \citep{hansen1982large}. The detailed development in Appendix~\ref{sec:gmm_interpretation} recasts the two loss components as moment conditions and identifies $\lambda$ as an econometric weighting parameter: a large $\lambda$ reflects strong confidence in the embedded theory, while a small $\lambda$ prioritizes empirical fit. This tradeoff is formalized through a variance-minimizing choice within a restricted diagonal weighting class (Proposition~\ref{prop:lambda-restricted}) and a closed-form benchmark yielding $\lambda^{*} = \sigma_{\varepsilon}^{2}/\sigma_{\eta}^{2}$ in a stylized two-signal setting (Proposition~\ref{prop:lambda-closedform}).

\paragraph{Semi-parametric efficiency.}
The GMM formulation connects naturally to semi-parametric estimation theory. Under orthogonal moment conditions, where $f_{\theta}$ exerts only a second-order influence on the estimation of $\phi$, the SKINNs estimator achieves the semi-parametric efficiency bound for the given moment model (Theorem~\ref{thm:sp-eff}; Appendix~\ref{app:sp-eff}). This places SKINNs within the family of sieve GMM estimators \citep{ai2003efficient} and deep GMM methods \citep{bennett2019deep, farrell2021deep}, while distinguishing it through the explicit incorporation of a learnable structured-knowledge component.

\paragraph{Generalization and robustness under distributional shift.}
Out-of-sample behavior is of particular importance in finance, where distributional shifts between training and deployment periods are the norm. The stability-based analysis in Appendix~\ref{app:gen-stability} shows that the structured-knowledge penalty tightens the uniform stability bound on the expected generalization gap (Theorem~\ref{thm:stability}). A complementary target-risk decomposition separates the risk under a shifted target distribution into alignment between $f_{\theta}$ and $g_{\phi}$ (controlled by $\mathcal{L}_{\text{SK}}$) and portability of $g_{\phi}$ to the new environment (Theorem~\ref{thm:target-decomp}). When the collocation distribution matches the target marginal, the alignment term is directly minimized during training (Corollary~\ref{cor:collocation-upper}). These results formalize a compelling intuition: if the embedded theoretical model captures structural regularities, such as no-arbitrage conditions or equilibrium relations, that remain stable across regimes, the SKINNs estimator inherits this stability.

\paragraph{Surrogate differentiability and the curse of dimensionality.}
Joint gradient-based training requires that $g_{\phi}$ be first-order differentiable with respect to both $\mathbf{X}^{\text{SK}}$ and $\phi$. For PSKRs, this is immediate, but for SPSKRs and machine-learned NPSKRs, the question is more nuanced. Appendix~\ref{sec:differntial_curse_dimensionality} establishes that the NN architecture of the surrogate guarantees end-to-end differentiability through the composite SKINNs objective, and shows that deep surrogates circumvent the curse of dimensionality that afflicts direct PDEs/SDEs embedding by scaling polynomially rather than exponentially with the input dimension.

\paragraph{SKINNs as a unified framework.}
A distinctive strength of the framework is that its composite objective is not tied to a single methodological tradition. As detailed in Appendix~\ref{sec:alternative_perspectives}, several well-established paradigms emerge as special cases or limiting configurations of the same objective: functional GMM, Bayesian MAP estimation, transfer learning, physics-informed learning, and domain adaptation, all of which reside within the SKINNs framework, with the specific instantiation determined by the choice of $g_{\phi}$, the loss structure, and the regularization strength $\lambda$. This multiplicity of valid interpretations reflects a deeper architectural point: SKINNs provide a single composite objective from which diverse methodological traditions can be recovered, while the joint optimization of $(\theta, \phi)$ enables capabilities, such as dynamic latent parameter discovery and bidirectional theory-data reconciliation, that none of the individual paradigms provide in isolation.

\section{SKINNs Option Pricing Models}\label{sec_models}
The early data-driven option pricing models of \cite{hutchinson_nonparametric_1994,ait-sahalia_nonparametric_1998} failed to incorporate domain knowledge until the works by \cite{garcia2000pricing, dugas2000incorporating, dugas2009incorporating,zheng_incorporating_2021}. Recently, \cite{chen2023teaching} incorporates the BSM model via transfer learning; \cite{aboussalah_ai_2024} integrates the dynamic hedging principles similar to PINNs. These hybrid models rely on simplified or model-free constraints due to their deficient architectures. In this section, we review a series of structured-knowledge specifications that can be seamlessly embedded into SKINNs, leading to more robust and interpretable model reasoning.


\subsection{Option Pricing Structured Knowledge}\label{sec:option_pricing_structured_knowledge_representations}
\paragraph{The BSM representation.} The seminal BSM model requires only one-dimensional latent parameter $\sigma$. It assumes the asset price dynamics, $dS_{t} = r S_{t} dt + \sigma S_{t} dW_{t}$, where $W_{t}$ is a standard Brownian motion. The BSM model admits a closed-form solution, and hence a PSKR function $g_{\phi}$:
\begin{align}
    g^{\text{BS}}(\mathbf{X}^{\text{SK}};\mathbf{\phi}\in\{\sigma^{\star}\}) &= S_{t}\Phi\left(d_{1}(\;\sigma^{\star}\;)\right) - Ke^{-r(T-t)}\Phi\left(d_{2}(\;\sigma^{\star}\;)\right), \\
    d_{1}(\mathbf{\phi}\in\{\sigma^{\star}\}) &= \frac{1}{\;\sigma^{\star}\;\sqrt{T-t}}\left(\log(S_{t}/K) + (r+\frac{\;(\sigma^{\star})^{2}\;}{2})(T-t)\right), \\
    d_{2}(\mathbf{\phi}\in\{\sigma^{\star}\}) &= d_{1}(\;\sigma^{\star}\;) - \;\sigma^{\star}\;\sqrt{(T-t)},
\end{align}
where $\Phi(\cdot)$ is the cumulative distribution function of a standard Gaussian distribution, the asset price $S_{t}$, the strike price $K$, the risk-free rate $r$, and the maturity $T$ are given in the input features $\mathbf{X}^{\text{SK}}$. The risk-neutral volatility $\sigma^{\star}$ in this case serves as the one-dimensional learnable latent structural parameter in $g^{\text{BS}}_{\mathbf{\phi}}(\mathbf{X}^{\text{SK}})$.\footnote{Throughout this paper, We use $\star$ to indicate the learnable latent parameters in SKINNs, whose optimal solutions are indicated with *.}

\paragraph{The ABSM representation.} One can relax the constant volatility assumption in the BSM model by using the ABSM model \citep[][]{dumas1998implied}. The ABSM model assumes an ad-hoc linear form specification of the implied volatilities, allowing them to vary across strikes and maturities:
\begin{align}
    \mathbf{\sigma} = \mathbf{X}\mathbf{\alpha} + \mathbf{\varepsilon},\;\;\;\;\;\;\;&\\
    \text{where},\;\mathbf{X} = \left[\mathbf{m}, \mathbf{m}^{2}, \mathbf{T}, \mathbf{T}^{2}, \mathbf{mT}\right]&, \mathbf{\alpha} = \left[\alpha_{0}, \alpha_{1}, \alpha_{2}, \alpha_{3}, \alpha_{4}, \alpha_{5}\right]^{\top},
\end{align}
and $\mathbf{m}=\frac{\mathbf{K}}{\mathbf{S}}$ denotes a vector of moneyness; $\mathbf{T}$ denotes the vector of maturities; $\mathbf{X}$ is a matrix of covariates and $\alpha$ is the vector of coefficients.

By viewing the unknown parameters $\{\alpha_{0}^{\star},\alpha_{1}^{\star},\alpha_{2}^{\star},\alpha_{3}^{\star},\alpha_{4}^{\star},\alpha_{5}^{\star}\}$ as the learnable latent structural parameters, this gives rise to a PSKR function $g_{\phi}$, with the use of the BSM formula:
\small\begin{align}
     g^{\text{AHBS}}(&\mathbf{X}^{\text{SK}};\phi\in\{\alpha_{0}^{\star},\cdots,\alpha_{5}^{\star}\}) = S_{t}\Phi\left(d_{1}(\alpha_{0}^{\star},\cdots,\alpha_{5}^{\star})\right) - Ke^{-r(T-t)}\Phi\left(d_{2}(\alpha_{0}^{\star},\cdots,\alpha_{5}^{\star})\right), \\
                        d_{1}(\phi\in\{\alpha_{0}^{\star},\cdots,\alpha_{5}^{\star}\}) &= \frac{1}{(\alpha_{0}^{\star}+\alpha_{1}^{\star}m + \alpha_{2}^{\star}m^{2} + \alpha_{3}^{\star}(T-t) + \alpha_{4}^{\star}(T-t)^{2} +\alpha_{5}^{\star}m(T-t))\sqrt{(T-t)}} \nonumber\\
                        \times&\left(\log(S_{t}/K) + (r + \frac{(\alpha_{0}^{\star}+\alpha_{1}^{\star}m + \alpha_{2}^{\star}m^{2} + \alpha_{3}^{\star}(T-t) + \alpha_{4}^{\star}(T-t)^{2} +\alpha_{5}^{\star}m(T-t))^{2}}{2})(T-t)\right), \nonumber\\
                        d_{2}(\phi\in\{\alpha_{0}^{\star},\cdots,\alpha_{5}^{\star}\}) &= d_{1}(\alpha_{0}^{\star},\cdots,\alpha_{5}^{\star}) -(\alpha_{0}^{\star}+\alpha_{1}^{\star}m + \alpha_{2}^{\star}m^{2} + \alpha_{3}^{\star}(T-t) + \alpha_{4}^{\star}(T-t)^{2} +\alpha_{5}^{\star}m(T-t))\sqrt{(T-t)}. \nonumber
\end{align}\normalsize

\paragraph{The SABR representation.}
Another way of extending the BSM model is to use the stochastic alpha-beta-rho (SABR) volatility model. As introduced in \cite{hagan2002managing}, the SABR model assumes the following SDE system:
\begin{align}
    dF_{t} &= \alpha_{t}F_{t}^{\beta} dW_{t}^{1}, \label{eq_dynamic_sabr_F} \\
    d\alpha_{t} &= v_{t}\alpha_{t} dW_{t}^{2}, \label{eq_dynamic_sabr_alpha}\\
    \text{where}\;\;\langle dW^{1}, dW^{2}\rangle_{t}&=\rho_{t} dt,\;\;F_{0}=\hat{f},\;\;\alpha_{0}=\alpha\label{eq_dynamic_sabr_covar},
\end{align}
and $F_{t}=S_{t}e^{r(T-t)}$ denotes the forward price of the underlying asset; $\alpha_{t}$ denotes the volatility process for the asset return; $v_{t}$ is a time-dependent function of the volatility-of-volatility; $dW^{1}_{t}, dW^{2}_{t}$ are two correlated Brownian motions, where the correlation coefficient $\rho_{t}$ is another time-dependent function. Since the time-variant nature of the diffusion coefficient $v_{t}$ (the volatility-of-volatility) and the correlation $\rho_{t}$, the SABR model described by equation (\ref{eq_dynamic_sabr_F})-(\ref{eq_dynamic_sabr_covar}) is dynamic. \cite{osajima2007asymptotic} provides an approximation to its implied volatility surface:
\begin{align}
    \sigma_{\text{SABR}}(K,T,\hat{f}) &\approx \frac{1}{w}\left(1 + A_{1}(T)\log\left(\frac{K}{\hat{f}}\right) + A_{2}(T)\log^{2}\left(\frac{K}{\hat{f}}\right) + B(T)T\right), 
\end{align}
where $w:=\frac{\hat{f}^{1-\beta}}{\alpha}$, $A_{1}(T)$, $A_{2}(T)$, $B(T)$ are functions of the dynamic SABR latent structural parameters.\footnote{These functions are fully-determined by, and differentiable w.r.t. the latent parameters: \begin{align}A_{1}(T) &= \frac{\beta-1}{2} + \frac{\eta_{1}(T)w}{2}, \\ A_{2}(T) &= \frac{(1-\beta)^{2}}{12} + \frac{1-\beta-\eta_{1}(T)w}{4} + \frac{4v_{1}^{2}(T)+3\left(\eta_{2}^{2}(T) + 3\eta_{1}^{2}(T)\right)}{24}w^{2}, \\ B(T) &= \frac{1}{w^{2}}\left(\frac{(1-\beta)^{2}}{24} + \frac{w\beta\eta_{1}(T)}{4} + \frac{2v_{2}^{2}(T)-3\eta_{2}^{2}(T)w^{2}}{24}\right), \text{where,}\\ v_{1}^{2}(T) &= \frac{3}{T^{3}}\int_{0}^{T}(T-t)^{2}v^{2}_{t}dt, \label{eq_dynamic_sabr_v1sq}\\ v_{2}^{2}(T) &= \frac{6}{T^{3}}\int_{0}^{T}(T-t)tv^{2}_{t}dt, \label{eq_dynamic_sabr_v2sq}\\ \eta_{1}(T) &= \frac{2}{T^{2}}\int_{0}^{T}(T-t)v_{t}\rho_{t}dt, \text{and,}\label{eq_dynamic_sabr_eta1}\\ \eta_{2}(T) &= \frac{12}{T^{4}}\int_{0}^{T}\int_{0}^{t}\left(\int_{0}^{s}v_{u}\rho_{u}du\right)^{2}dsdt.\label{eq_dynamic_sabr_eta2}\end{align}} There are some special parametric specifications for the time-dependent functions $v_{t}$, $\rho_{t}$, such that $v_{t}>0, -1\leq\rho_{t}\leq1, \forall t\in[0, T]$, requiring different numbers of structural parameters.\footnote{For example, $v_{t}, \rho_{t}$ can be specified as two constants, $v_{t}=v_{0},\rho_{t}=\rho_{0}$ (two parameters, $v_{0},\rho_{0}$); as classical formulations, $v_{t}=v_{0}e^{-bt},\rho_{t}=\rho_{0}e^{-at}$ (four parameters, $v_{0},\rho_{0},a,b$); as piece-wise functions, $v_{t}=v_{0}, \rho_{t}=\rho_{0},\forall t\leq T^{*}$, and $v_{t}=v_{1}, \rho_{t}=\rho_{1},\forall t>T^{*}$ (five parameters, $v_{0},\rho_{0},v_{1},\rho_{1},T^{*}$); or as more general formulations, $v_{t}=(v_{0}+q_{v}t)e^{-bt}+d_{v}$, and $\rho_{t}=(\rho_{0}+q_{\rho}t)e^{-at}+d_{\rho}$ (eight parameters, $v_{0},\rho_{0},q_{v},q_{\rho},a,b,d_{v},d_{\rho}$). See the details of the dynamic SABR volatility models in \cite{hagan2002managing, gatheral2011volatility}.} Moreover, thanks to the scalability of SKINNs to high-dimensional estimation problems, one can even treat $v_{t}$ and $\rho_{t}$ themselves as the learnable parameters by SKINNs. 
In our implementation, we also consider $v_{t}=\{v_{t_{i}}\}_{i=1}^{360}$ and $\rho_{t}=\{\rho_{t_{i}}\}_{i=1}^{360}$ as 720 latent parameters in total, where $t_{i}=\frac{i}{360}, 1\leq i\leq360$.

By viewing $\{v^{\star}_{t_{i}}\}_{i=1}^{360}$, $\{\rho^{\star}_{t_{i}}\}_{i=1}^{360}$, $\alpha^{\star}$ and $\beta^{\star}$ as the learnable latent economic parameters, this gives rise to a PSKR function $g_{\phi}$, with the use of the BSM formula:
\begin{align}
    g^{\text{SABR}}(\mathbf{X}^{\text{SK}};\phi\in\{v^{\star}_{t_{i}}\}_{i=1}^{360}\cup&\{\rho^{\star}_{t_{i}}\}_{i=1}^{360}\cup\{\alpha^{\star}, \beta^{\star}\}) = S_{t}\Phi\left(d_{1}\left(\sigma_{\text{SABR}}(\;\{v^{\star}_{t_{i}}\}_{i=1}^{360}\cup\{\rho^{\star}_{t_{i}}\}_{i=1}^{360}\cup\{\alpha^{\star}, \beta^{\star}\}\;)\right)\right) \nonumber\\
    &\;\;\;\;\;- Ke^{-r(T-t)}\Phi\left(d_{2}\left(\sigma_{\text{SABR}}(\;\{v^{\star}_{t_{i}}\}_{i=1}^{360}\cup\{\rho^{\star}_{t_{i}}\}_{i=1}^{360}\cup\{\alpha^{\star}, \beta^{\star}\}\;)\right)\right),
\end{align}
in which specification, the learnable latent economic parameter vector $\phi$ becomes 722-dimensional.

\paragraph{The HSV representation.} The capability of SKINNs in dealing with highly non-linear structured-knowledge representations and high-dimensional latent parameters enables us to incorporate more sophisticated $g_{\phi}$, for example, the HSV model below:
\begin{align}
            dS_{t} &= rS_{t}\,dt + \sqrt{v_{t}}S_{t}\,dW^{1}_{t}, \label{eq:heston_sde_S}\\
            dv_{t} &= \kappa\left(v_{\theta}-v_{t}\right)\,dt + \sigma_{v}\sqrt{v_{t}}\,dW^{2}_{t}, \label{eq:heston_sde_v}\\
                \langle dW^{1},dW^{2}\rangle_{t} &= \rho dt, \label{eq:heston_sde_rho}
\end{align}
where $v_{t}$ is the instantaneous variance that is driven by a mean-reverting process. $\kappa$ is the mean-reverting speed of the variance process, $v_{\theta}$ is the long-term average of the variance, and $\sigma_{v}$ is the volatility of the variance. $\rho$ is the correlation of the Brownian motions from the two processes.

For such sophisticated theory-driven models, convenient closed-form solutions are unavailable, and numerical methods are needed. The HSV model belongs to the affine class of SDEs that are solvable with characteristic functions \citep{carr_option_1999, duffie2000transform}.\footnote{The log-return characteristic function of the Heston model is given as:
\begin{align}
        \psi_{T\vert t}(u;\phi) &= \exp\left[C_{T\vert t}\left(u;\phi\right)v_{\theta} + D_{T\vert t}\left(u;\phi\right)v_{0} + iu\log\left(Se^{r(T-t)}\right)\right], \label{eq_heston_ccf}
\end{align}
where $v_{0}$ is the initial instantaneous variance, and $C_{T\vert t}, D_{T\vert t}$ are functions of the HSV model latent structural parameters, known as:
\begin{align}
        C_{T\vert} &= \kappa\sigma_{v}^{-2}\left((\kappa-\rho\sigma_{v}ui-d)(T-t)-2\log\left(\frac{1-ge^{-d(T-t)}}{1-g}\right)\right), \\
        D_{T\vert} &= \sigma_{v}^{-2}\left(\kappa-\rho\sigma_{v}iu-d\right)\left(1-e^{-d(T-t)}\right)\Big/\left(1-ge^{-d(T-t)}\right), 
\end{align}where the function $d$ and $g$ are defined as:
\begin{align}
        d &= ((\rho\sigma_{v}ui-\kappa)^{2}-\sigma_{v}^{2}(-iu-u^{2}))^{1/2}, \\
        g &= (\kappa-\rho\sigma_{v}ui-d) \big/ (\kappa-\rho\sigma_{v}ui+d).
\end{align}}
Knowing the characteristic function $\psi_{T\vert t}(u;\phi)$, European option prices can be evaluated as a semi-closed-form expression using FFT techniques\citep{carr_option_1999}. 

We employ the COS method developed by \cite{fang_novel_2009} to make the Fourier inversion with the log-return characteristic function $\psi_{T\vert t}(u;\phi)$ more efficient, which gives rise to a PSKR function $g_{\phi}$:
\begin{align}\label{COS_Heston}
    g^{\text{HSV}}(\mathbf{X}^{\text{SK}};\phi\in\{v_{\theta}^{\star},v_{0}^{\star},\sigma_{v}^{\star},\rho^{\star},\kappa^{\star}\}) &= \frac{1}{2}e^{-r(T-t)}\mathfrak{Re}\left\{\psi_{T\vert t}\left(0;S,r,\;v_{\theta}^{\star},v_{0}^{\star},\sigma_{v}^{\star},\rho^{\star},\kappa^{\star}\;\right)\right\}V_{0}(K) \\
                            +e^{-r(T-t)}\sum_{w=1}^{N-1}&\mathfrak{Re}\left\{\psi_{T\vert t}\left(\frac{w\pi}{b-a};S,r,\;v_{\theta}^{\star},v_{0}^{\star},\sigma_{v}^{\star},\rho^{\star},\kappa^{\star}\;\right)e^{-iw\pi\frac{a}{b-a}}\right\}V_{w}\left(K\right),\nonumber
\end{align}
where $a, b, N$ are the algorithmic coefficients of the COS method that control the numerical precision, and $V_{w}(K)$ denotes the option payoff series coefficients, which can be known analytically for European options \citep[see details in][]{fang_novel_2009}.

\paragraph{The HSVJ representation.}
The HSV model can be extended to incorporate a jump process in the asset price dynamics. We consider an extension of the HSV model with a double-exponential jump process \citep{kou2002jump}:
\begin{align}
            dS_{t} &= rS_{t}\,dt + \sqrt{v_{t}}S_{t}\,dW^{1}_{t} + S_{t}(1-e^{J_{t}})dN_{t}, \\
            dv_{t} &= \kappa\left(v_{\theta}-v_{t}\right)\,dt + \sigma_{v}\sqrt{v_{t}}\,dW^{2}_{t},\\
                \langle dW^{1},dW^{2}\rangle_{t} &= \rho dt,
\end{align}
where the process $J_{t}$ represents the size of relative price jumps, which has a double-exponential probability density defined by:
\begin{equation}
    f_{J}(y)=p\eta_{1}e^{-\eta_{1}y}\mathbbm{1}_{\left\{y\ge0\right\}} + (1-p)\eta_{2}e^{-2\eta_{2}y}\mathbbm{1}_{\left\{y<0\right\}},
\end{equation}
where $p\in[0,1]$ is the probability of a positive jump in the asset return ($1-p$ is the probability of a negative jump), $\eta_{1}$ is the size of a positive jump, and $\eta_{2}$ is the size of a negative jump. $N_{t}$ is a Poisson process of rate $\lambda$ that controls the arrival time of jumps. Such an extension further adds four extra parameters to the HSV model.

The HSVJ model still lies within the affine class, and the log-return characteristic function is the product of the HSV model characteristic function, the equation (\ref{eq_heston_ccf}), and the characteristic function for the jump component:
\begin{equation}
    \psi^{J}_{T\vert t}(u;\phi\in\{p^{\star},\eta^{\star}_{1},\eta^{\star}_{2},\lambda^{\star}\}) = \exp\left[\;\lambda^{\star}\;(T-t)\left(\frac{\;p^{\star}\;\eta_{1}^{\star}}{\;\eta_{1}^{\star}\;-iu} + \frac{(1-\;p^{\star}\;)\;\eta_{2}^{\star}\;}{\;\eta_{2}^{\star}\;+iu}-1\right)\right].
\end{equation}
The compounted log-return characteristic function for this model becomes $\psi_{T\vert t}(u;\phi\in\{v_{\theta}^{\star}, v_{0}^{\star}, \sigma_{v}^{\star}, \rho^{\star}, \kappa^{\star}\})\times\psi_{T\vert t}^{J}(u;\phi\in\{p^{\star},\eta_{1}^{\star},\eta_{2}^{\star},\lambda^{\star}\})$, and we still rely on the COS method to evaluate the option prices given the characteristic function. This gives rise to a PSKR function $g_{\phi}$:
\begin{align}\label{COS_HSVKDEJ}
    g^{\text{HSVJ}}(\mathbf{X}^{\text{SK}};\phi&\in\{v_{\theta}^{\star},v_{0}^{\star},\sigma_{v}^{\star},\rho^{\star},\kappa^{\star},p^{\star},\eta_{1}^{\star},\eta_{2}^{\star},\lambda^{\star}\}) = \frac{1}{2}e^{-r(T-t)}\\
    &\times\mathfrak{Re}\left\{\psi_{T\vert t}\left(0;S,r,\;v_{\theta}^{\star},v_{0}^{\star},\sigma_{v}^{\star},\rho^{\star},\kappa^{\star}\;\right)\times\psi_{T\vert t}^{J}(0;\;p^{\star},\eta_{1}^{\star},\eta_{2}^{\star},\lambda^{\star}\;)\right\}V_{0}(K) \nonumber\\
                           +e^{-r(T-t)}\sum_{w=1}^{N-1}&\mathfrak{Re}\left\{\psi_{T\vert t}\left(\frac{w\pi}{b-a};S,r,\;v_{\theta}^{\star},v_{0}^{\star},\sigma_{v}^{\star},\rho^{\star},\kappa^{\star}\right)\psi_{T\vert t}^{J}\left(\frac{w\pi}{b-a};\;p^{\star},\eta_{1}^{\star},\eta_{2}^{\star},\lambda^{\star}\;\right)e^{-iw\pi\frac{a}{b-a}}\right\}V_{w}\left(K\right).\nonumber
\end{align}


\paragraph{The DSNN-HSV representation.} We draw 5 million instances of the HSV model inputs independently from a multivariate uniform distribution. 
For each of the drawn instances, we query the European call option price using the HSV SDEs, as described in Equation (\ref{eq:heston_sde_S})-(\ref{eq:heston_sde_rho}) with a Monte-Carlo simulation of 1,000 runs. A DSNN, $f^{\text{SR}}_{\mathbf{\theta}}$, is pre-trained based on the drawn inputs and the queried option prices. This gives rise to a SPSKR function $g_{\phi}$:
\begin{equation}
    g^{\text{DSNN-HSV}}(\mathbf{X}^{\text{SK}}, \theta^{\text{SR}};\phi\in\{v_{\theta}^{\star},v_{0}^{\star},\sigma_{v}^{\star},\rho^{\star},\kappa^{\star}\}) = f^{\text{SR}}_{\mathbf{\theta}}(\mathbf{X}^{\text{SK}}, v_{\theta}^{\star},v_{0}^{\star},\sigma_{v}^{\star},\rho^{\star},\kappa^{\star}),
\end{equation}
where the inputs $v_{\theta}^{\star},v_{0}^{\star},\sigma_{v}^{\star},\rho^{\star},\kappa^{\star}$ of the DSNN become the learnable latent structural parameters in this SKINN specification.

\paragraph{The DSNN-NASV representation.} We also consider an NASV SDE system, where there is no known characteristic function for log returns. This system is nested in the general three-factor jump diffusion setting in Section (2), Equation (1)-(3) from \cite{kaeck2012volatility}. In particular, we consider the following non-affine asset price dynamics:
\begin{align}
    dS_{t} &= rS_{t}dt + \sqrt{v_{t}}S_{t}dW^{1}_{t}, \label{eq:nasv_sde_S}\\
    dv_{t} &= \kappa(v_{\theta} - v_{t})dt + \sigma_{v}v_{t}^{\gamma_{v}}dW^{2}_{t}, \label{eq:nasv_sde_v}\\
    \langle dW^{1},dW^{2}\rangle_{t} &= \rho dt,
\end{align}
where when $\gamma_{v}=1/2$, the SDE system reduces to a FFT-solvable HSV model.

We draw 50 million instances of $\{\mathbf{x}^{\text{SK}}_{1}, \mathbf{x}^{\text{SK}}_{2}, \cdots, \mathbf{x}^{\text{SK}}_{d_{\text{SK}}}, v_{\theta}, v_{0}, \sigma_{v}, \rho, \kappa, \gamma_{v}\}$ in this case from a multivariate uniform distribution. 
For each of the drawn instances, we again query the European call option prices using the SDEs, Equations (\ref{eq:nasv_sde_S})-(\ref{eq:nasv_sde_v}), with a Monte-Carlo simulation of 1,000 runs. Since there are more inputs for the NASV model than the HSV model, it requires a larger number of instances to span the input space. Once the DSNN for this model is pre-trained, it gives rise to an SPSKR function $g_{\phi}$:
\begin{equation}
    g^{\text{DSNN-NASV}}(\mathbf{X}^{\text{SK}},\theta^{\text{SR}};\phi\in\{v_{\theta}^{\star},v_{0}^{\star},\sigma_{v}^{\star},\rho^{\star},\kappa^{\star},\gamma_{v}^{\star}\}) = f^{\text{SR}}_{\mathbf{\theta}}(\mathbf{X}^{\text{SK}}, \;v_{\theta}^{\star},v_{0}^{\star},\sigma_{v}^{\star},\rho^{\star},\kappa^{\star},\gamma_{v}^{\star}\;),
\end{equation}
where the inputs $v_{\theta}^{\star},v_{0}^{\star},\sigma_{v}^{\star},\rho^{\star},\kappa^{\star},\gamma_{v}^{\star}$ of the DSNN become the learnable latent structural parameters in this SKINN specification.


\paragraph{The MOPA representation.} The non-parametric MOPA to construct structured knowledge is to utilize Equation (\ref{eq:mopa_empirical}). The probabilities $\mathbb{Q}(z_{1}), \mathbb{Q}(z_{2}), \cdots, \mathbb{Q}(z_{N_{z}})$ govern the conditional expectation of the underlying asset terminal price at one specific terminal time $T$. In practice, an option pricing model is estimated with multiple maturities $T_{h}, 1\leq h\leq s, \tau_{h}=T_{h}-t$. Therefore, we design a risk-neutral probability matrix:
\begin{equation}
        \mathbb{Q}(S_{T})_{s\times q} = \begin{bmatrix}
                                            \mathbb{Q}(S_{T_{1}}^{[1]}) & \mathbb{Q}(S_{T_{1}}^{[2]}) & \cdots & \mathbb{Q}(S_{T_{1}}^{[q]}) \\
                                            \mathbb{Q}(S_{T_{2}}^{[1]}) & \mathbb{Q}(S_{T_{2}}^{[2]}) & \cdots & \mathbb{Q}(S_{T_{2}}^{[q]}) \\
                                            \vdots & \vdots  & \ddots & \vdots \\
                                            \mathbb{Q}(S_{T_{s}}^{[1]}) & \mathbb{Q}(S_{T_{s}}^{[2]}) & \cdots & \mathbb{Q}(S_{T_{s}}^{[q]})
                                \end{bmatrix},
\end{equation}
where $S^{[i]}_{T_{h}}, 1\leq i\leq q, 1\leq h\leq s$ is the $i$-th state of nature for the asset price at time $T_{h}$. In our implementation, we assume that different maturities share a common possible states of terminal asset price $\{0.5\times S_{t}, \cdots, 1.5\times S_{t}\}$ which is an array with 200 equally-spaced elements, ranging from the half, to the one and a half, of the current asset price $S_{t}$. We assume 10 time-to-maturities $\{0.1, 0.2, \cdots, 1.0\}$. 
The risk-neutral probability matrix $\mathbb{Q}(S_{T})_{s\times q}$ thus contains 2,000 learnable latent parameters.

For maturity $T_{h}$, there are $N_{h}$ options with different strike prices $K_{1}, K_{2}, \cdots, K_{N_{h}}$, in ascending order. The terminal payoffs, considering all the states of terminal asset prices, are known analytically as:
\begin{equation}
    \mathbf{V}_{T_{h}} = \begin{bmatrix}
                            \left(S_{T_{h}}^{[1]}-K_{1}\right)^{+} & \left(S_{T_{h}}^{[2]}-K_{1}\right)^{+} & \cdots & \left(S_{T_{h}}^{[q]}-K_{1}\right)^{+} \\
                            \left(S_{T_{h}}^{[1]}-K_{2}\right)^{+} & \left(S_{T_{h}}^{[2]}-K_{2}\right)^{+} & \cdots & \left(S_{T_{h}}^{[q]}-K_{2}\right)^{+} \\
                            \vdots & \vdots & \ddots & \vdots \\
                            \left(S_{T_{h}}^{[1]}-K_{N_{h}}\right)^{+} & \left(S_{T_{h}}^{[2]}-K_{N_{h}}\right)^{+} & \cdots &  \left(S_{T_{h}}^{[q]}-K_{N_{h}}\right)^{+} \\
                    \end{bmatrix},
\end{equation}
for example, for European call options. We stack the terminal payoff matrix for each maturity as a block-diagonal matrix:
\begin{equation}
    (\mathbf{V}_{T})_{N\times sq} = \begin{bmatrix}
                    \mathbf{V}_{T_{1}} & \mathbf{0} & \mathbf{0} & \mathbf{0}\\
                    \mathbf{0} & \mathbf{V}_{T_{2}} & \mathbf{0} & \mathbf{0}\\
                    \mathbf{0} & \mathbf{0} & \ddots & \mathbf{0} \\
                    \mathbf{0} & \mathbf{0} & \mathbf{0} & \mathbf{V}_{T_{s}}
            \end{bmatrix}.
\end{equation}
This leads to a NPSKR function $g_{\phi}$ with unknown underlying distributions:
\begin{equation}
    g^{\text{MOPA}}(\mathbf{X}^{\text{SK}}; \mathbf{\phi}\in\{\mathbb{Q}_{T_{h}}^{\star,[j]}\}_{1\leq h\leq s, 1\leq j\leq q}) = D_{t}(T)\times\mathbf{V}_{T}\times\text{flatten}\left(\;\mathbb{Q}^{\star}(S_{T})\;\right),
\end{equation}
where the probabilities in the vector $\phi$ are learnable and determine the distributions non-parametrically, $D_{t}(T)$ is a vector of discount factors. We flatten the matrix $\mathbb{Q}^{\star}(S_{T})_{s\times q}$ to match the shape for the multiplication. The matrix $\mathbb{Q}^{\star}(S_{T})_{s\times q}$ is passed to a row-wise softmax operation at each training iteration of a SKINN, to ensure the regularity conditions of probability distributions.

\paragraph{The AE-derived representation.} A distinctive feature of SKINNs is that it allows to embeds the structured knowledge that is entirely derived from data by a machine. In this case, economic models with known structures, e.g., the BSM and the HSV model, are not overly relied upon. Instead, the structures, including the latent parameters $\phi_{\text{AE}}$ and the structured-knowledge function $g_{\phi_{\text{AE}}}(\mathbf{X}^{\text{SK}})$, are learned from the target data, $\mathcal{D}_{\text{AE}}=\hat{\mathbf{y}}^{\text{SK}}$, with AEs.

The target data $\mathcal{D}_{\text{AE}}$ can be directly from the real-world observations, or from simulated data based on expert knowledge. We choose to simulate observations from a theoretical model (e.g., the BSM model), or from a basket of theoretical models (e.g., the mixture of BSM, HSV, and HSVJ models), plus some deliberate random noise. The simulated $\mathcal{D}_{\text{AE}}$ then cannot be represented by any existing model, as the DGP is unknown. One can customize one's way to form $\mathcal{D}_{\text{AE}}$, or simply use the market observations. In our implementation, there are 200 options defined on $\{(m_{i}, T_{i})\}_{i=1}^{200}$, from each cross-section we simulate.\footnote{We simulate option prices on a 2D mesh-grid with 20 moneyness from an equally-spaced array $[0, \cdots, 2.0]$, and 10 time-to-maturities from another equally-spaced array $[0,\cdots,1]$.} The prices then become the 200-dimensional inputs of an AE. We simulate hundreds thousand such inputs. The middle layer of the AE, a.k.a. the bottleneck, can be of an arbitrary dimension, but should be much lower-dimensional than the input, with the assumption that the latent parameters should have a sparse structure.

The observable features $\mathbf{X}^{\text{SK}}$ are not required by an AE. The decoder reconstructs $\hat{\mathbf{y}}^{\text{SK}}$ in the same order as the input, given some values for the unknown latent parameters $\phi_{\text{AE}}$.\footnote{One can also choose to incorporate $\mathbf{X}^{\text{SK}}$ into the bottleneck, and switch the grid-based evaluation of the decoder to the point-wise evaluation \citep[see, for example, in][]{bergeron2021variational}.} One can use the entire AE as the structured-knowledge function $g_{\phi}$, in which case the encoder receives the option price predictions by the NN component of a SKINN on the mesh grids, $\{m_{i},T_{i}\}_{i=1}^{200}$, and compresses them to $\phi_{\text{AE}}$. The decoder then returns the structured-knowledge outputs. This unrolls the learning of $\phi_{\text{AE}}$ to the forward- and backward-pass of the AE.\footnote{By unrolling the learning of $\phi_{\text{AE}}$, there is no need to initialize and jointly train any extra learnable latent structural parameters than the nuisance NN parameters in the NN component of the SKINN.} Alternatively, one can use only the decoder as the structured-knowledge function, by treating $\phi_{\text{AE}}$ as the learnable latent structural parameters. This therefore provides a NPSKR function $g_{\phi}$ learned by a machine:
\begin{align}
    g^{\text{AE}}(\mathbf{X}^{\text{SK}}) &= (f^{\text{Enc}}_{\theta}\circ f^{\text{Dec}}_{\theta})(\hat{\mathbf{y}}^{\text{SK}}),\;\;\text{or, alternatively,}\\ 
    g^{\text{AE}}(\mathbf{X}^{\text{SK}};\phi\in\{\phi_{\text{AE}}^{\star,[1]}, \phi_{\text{AE}}^{\star,[2]},\cdots,\phi_{\text{AE}}^{\star,[\phi_{d}]}\}) &= f^{\text{Dec}}_{\theta}(\mathbf{X}^{\text{SK}},\;\phi_{\text{AE}}^{\star,[1]}, \phi_{\text{AE}}^{\star,[2]},\cdots,\phi_{\text{AE}}^{\star,[\phi_{d}]}\;).
\end{align}

\subsection{Benchmark Neural Network Models}
While SKINNs allow to embed structured knowledge through a spectrum of representations, the existing hybrid models in the literature are less flexible. They only manage to incorporate static, model-free constraints, and are prohibitive to sophisticated, higher-dimensional structures. In this section, we revise some popular hybrid models. These models serve as the benchmark for the option pricing SKINNs in our later empirical findings section.

\subsubsection{Neural Networks with Model-free Constraints}
The model-free shape prior, such as the monotonicity and convexity of option prices surfaces with respect to strike prices (or, equivalently, moneyness) and maturities, can be enforced to a NN using derivative regularizations \citep[see, for example,][]{dugas2000incorporating,dugas2009incorporating,ackerer2020deep,chataigner2020deep}. Specifically, for European call options, the following penalizations are used: (i) strike price monotonicity, i.e., ${\partial_{K} f_{\theta}}(\mathbf{X})\leq 0 \rightarrow \lVert max({\partial_{K} f_{\theta}(\mathbf{X})}, 0)\rVert_{2}$; (ii) strike price convexity, i.e., ${\partial^{2}_{K} f_{\theta}}(\mathbf{X})\leq 0 \rightarrow \lVert max({\partial^{2}_{K} f_{\theta}}(\mathbf{X}), 0)\rVert_{2}$; (iii) maturity monotonicity, i.e., ${\partial_{\tau} f_{\theta}}(\mathbf{X})\leq 0 \rightarrow \lVert max({\partial_{\tau} f_{\theta}}(\mathbf{X}), 0)\rVert_{2}$,
which are summed together as a shape-derivative regularization term:
\begin{equation}\label{der_loss}
\mathcal{L}_{\text{ShapeDer}}\left(\theta;\mathbf{X}\right):= \lVert max({\partial_{K} f_{\theta}}(\mathbf{X}), 0)\rVert_{2} + \lVert min({\partial^{2}_{K}f_{\theta}}(\mathbf{X}), 0)\rVert_{2} + \lVert min({\partial_{\tau} f_{\theta}}(\mathbf{X}),0)\rVert_{2}.
\end{equation}

We refer to the NNs trained with the regularization (\ref{der_loss}) as the NN+Shape. The partial differential terms are effectively computed using automatic differentiation, similar to PINNs. Hence, NN+Shape inherently suffers from pathological gradient dynamics. Such gradient pathologies can be addressed by casting the partial differentiations to a quadratic programming operation \citep[see,][]{cohen2020detecting}, but according to our unreported tests, we find no significant evidence that this can remedy the deficiencies of model-free constraints.

\subsubsection{Neural Networks with PDE Constraints}
PINNs incorporate domain knowledge into NNs through a differentiable PDE regularization term, which is a non-linear partial differential operation of the NNs:
\begin{align}
    r(\theta; t, \mathbf{X}, \phi_{\text{PDE}}) &= \partial_{t}f_{\theta}(t, \mathbf{X}) + \mathfrak{N}_{\mathbf{X}}[f_{\theta}(t, \mathbf{X}); \phi_{\text{PDE}}],\\
    \mathcal{L}_{\text{PDE}}(\theta; t, \mathbf{X}, \phi_{\text{PDE}}) &= \lVert r(\theta; t, \mathbf{X}, \phi_{\text{PDE}}) \rVert_{2},
\end{align}
where the first term in $r(\theta; t, \mathbf{X}, \phi_{\text{PDE}})$ is the NN derivative against the time coordinate $t$, and the second term is the NN derivatives against the space coordinate $\mathbf{X}$. The non-linear partial differential operation is specified and parameterized by a PDE provided by theories, with $\phi_{\text{PDE}}$ being the PDE parameters. A PINN typically aims to find the data-driven solution to the PDE by minimizing the PDE loss $\mathcal{L}_{\text{PDE}}$, given that the PDE parameters $\phi_{\text{PDE}}$ are known \citep[see,][]{raissi2017physics}. This approach is also known as the forward-problem PINN, as $\phi_{\text{PDE}}$ is determined. In the financial context, it can be hard to obtain an accurate estimation of $\phi_{\text{PDE}}$ ex ante, and one has to learn it from the market data by using the inverse-problem PINN \citep[see,][]{raissi_physics_2017}, which poses a greater challenge due to the pathological PINN architecture.

Option pricing models can be formulated as PDEs. Consider the BSM that admits a PDE formulation:
\begin{equation}
    -\partial_{T} V + rS\partial_{S} V + \frac{1}{2}\sigma^{2}S^{2}\partial_{SS} V - rV = 0,
\end{equation}
A PINN $f^{\text{PINN}}_{\theta}(t,\mathbf{X})$ is trained to approximate the PDE solution $V(t, \mathbf{X})$. The first and foremost challenge is the vanishing gradient pathologies due to the deep chain-rule computation when back-propagating through $\mathcal{L}_{\text{PDE}}$, which leads to problematic gradient dynamics as documented in \cite{wang2021understanding,wang_when_2022}. We illustrate these pathologies in Figure (\ref{PINN_grads_dist}), with some simulated data. On the other hand, the gradient dynamics of a SKINN with everything else equivalent do not suffer from this problem, as shown in Figure (\ref{FINN_grads_dist}).
\begin{figure}[htbp]
    \begin{subfigure}[htbp]{\textwidth}
        \centering
        \includegraphics[width=\textwidth]{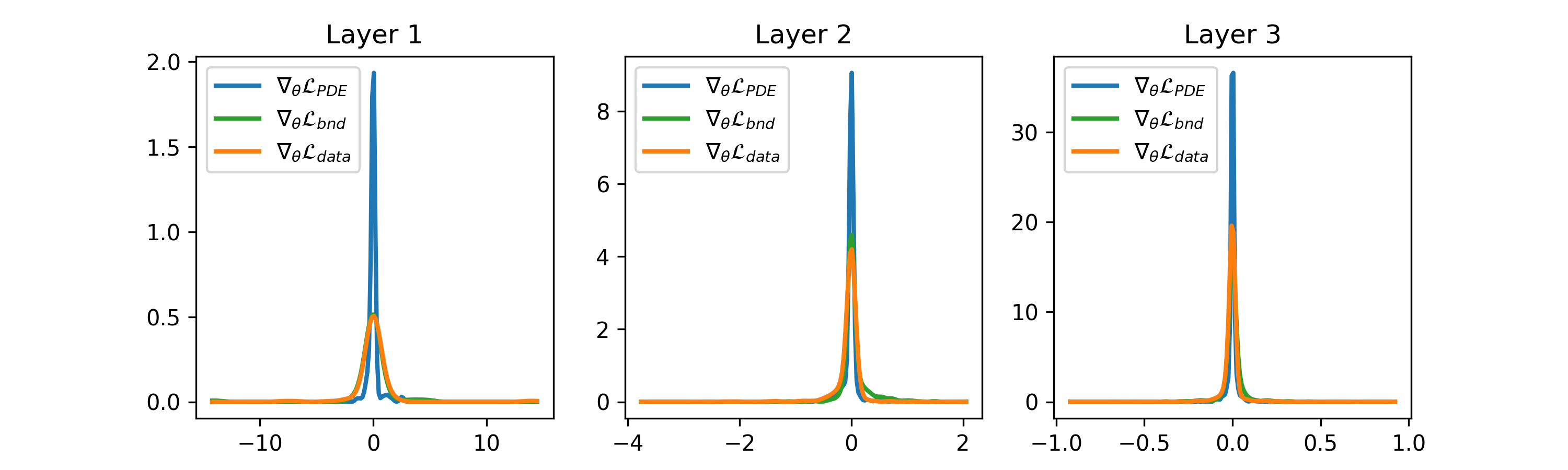}
        \caption{Loss gradients with respect to the NN weights.}
    \end{subfigure}
    
    \begin{subfigure}[htbp]{\textwidth}
        \centering
        \includegraphics[width=\textwidth]{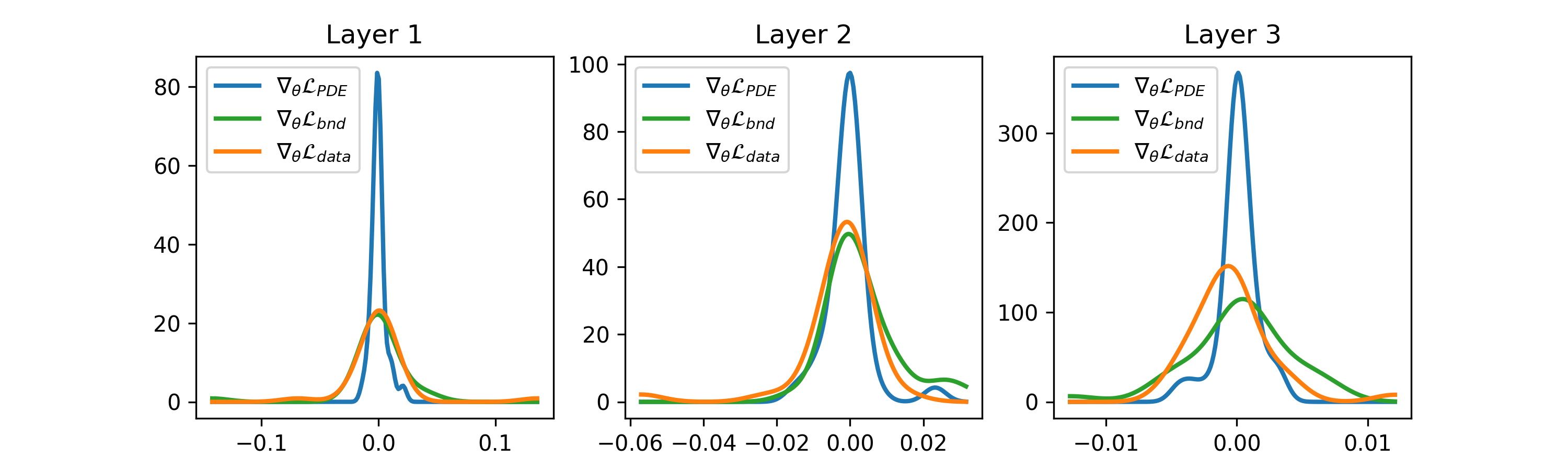}
        \caption{Loss gradients with respect to the NN biases.}
    \end{subfigure}
    \caption{The probability density of the loss gradients with respect to the NN weights and biases, for each loss component in a PINN (with inverse problem approach), including a boundary loss. The PINN is trained, for 5,000 iterations, using a cross-section of simulated European call option prices with noise, according to the BSM model.}
    \label{PINN_grads_dist}
\end{figure}

\begin{figure}[htbp]
    \begin{subfigure}[htbp]{\textwidth}
        \centering
        \includegraphics[width=\textwidth]{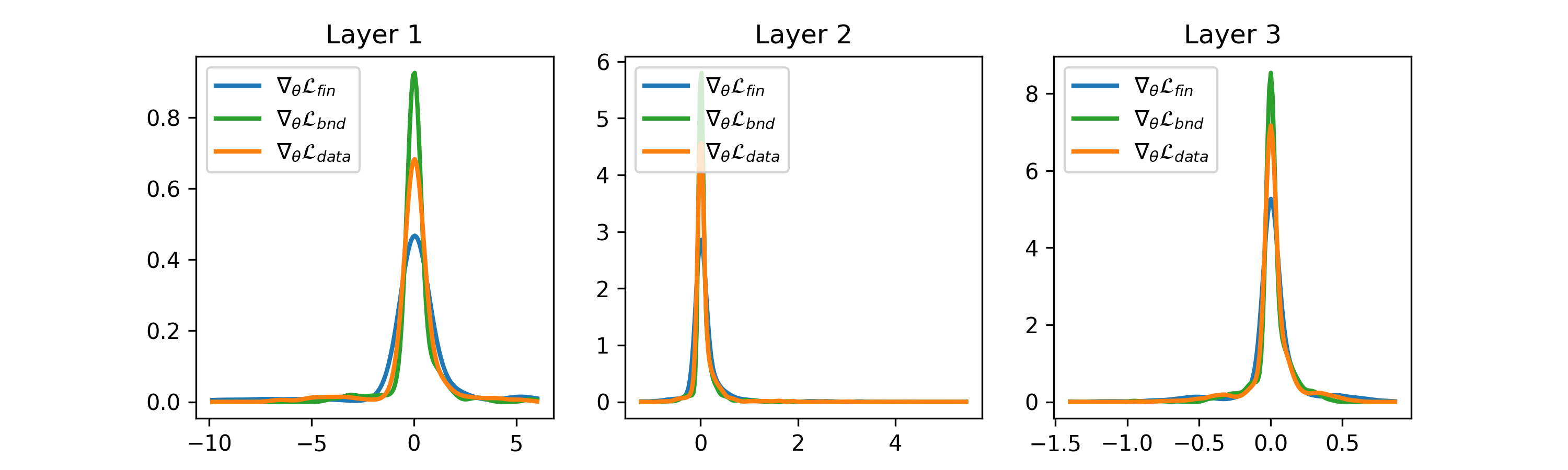}
        \caption{Gradients with respect to the NN weights.} 
    \end{subfigure}
    
    \begin{subfigure}[htbp]{\textwidth}
        \centering
    \includegraphics[width=\textwidth]{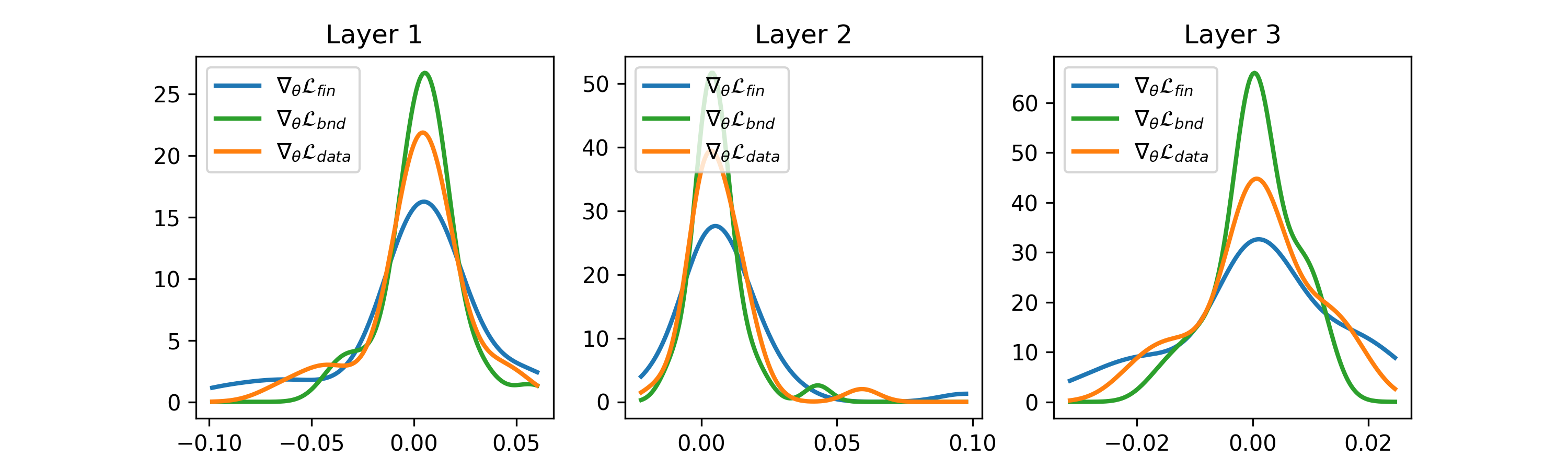}
        \caption{Gradients with respect to the NN biases.}  
    \end{subfigure}
    \caption{The probability density of the loss gradients with respect to the NN weights and biases, for each loss component in a SKINN, including a boundary loss. The SKINN is trained, for 5,000 iterations, using a cross-section of simulated European call option prices with noise, with the PSKR of the BSM model.}
    \label{FINN_grads_dist}
\end{figure}

One other challenge to PINNs for financial tasks is that PINNs can not handle the PDEs where there are many unobservable differentiation variables. A practical example is the HSV PDE, which is formulated as:
\begin{align}
    -&\partial_{T}V+\frac{1}{2}vS^{2}\partial_{SS}V+\rho\sigma vS\partial_{Sv}V \nonumber\\
    &+\frac{1}{2}\sigma^{2}v\partial_{vv}V+rS\partial_{S}V + \left[\kappa\left(\theta-v\left(t\right)\right)-\lambda\left(S,v,t\right)\right]\partial_{v}V -rV = 0.
\end{align}
The instantaneous variance $v$ is not observable, and it is impossible to evaluate the partial differential terms $\partial_{Sv}f^{\text{PINN}}_{\theta}(t, \mathbf{X}), \partial_{vv}f^{\text{PINN}}_{\theta}(t, \mathbf{X}), \partial_{v}f^{\text{PINN}}_{\theta}(t, \mathbf{X})$ from the HSV PDE with market data. There are many more such PDEs in finance that are prohibitive for PINNs to learn, especially when the system is multivariate with high-dimensional latent structural parameters.

\subsubsection{Transfer Learning Neural Networks}
Transfer learning models attempt to incorporate the knowledge learned from a source domain into a related task with a different dataset, also called a target domain. The knowledge from the source domain is expected to improve the generalizability of the model in the task domain. \cite{chen2023teaching} apply this method to the option pricing problem. Given a structural option pricing model $g(\mathbf{X}; \phi)$, transfer learning pre-trains a deep surrogate model that can decently approximate $g(\mathbf{X}; \phi)$ in a first step. This step is the same as the construction of DSNN-based SPSKRs for SKINNs. In a second step, the knowledge of a DSNN is transferred to learn from the target domain dataset by slightly updating its parameters.

Such a transfer learning method still faces some limitations. Since the NNs need both the observable features $\mathbf{X}^{\text{SK}}$ and the latent parameters $\phi$ to calculate an output, a separate estimation of $\phi$ for the target domain is needed before the target domain learning. This makes it infeasible when the surrogate model requires high-dimensional latent parameters. In addition, there is no clear fine-tuning routine for how many NN parameters to update. Updating the parameters from all layers may result in catastrophic forgetting, but only updating the parameters from certain layers before the final output layer sacrifices expressivity.

\begin{table}[htbp]
    \centering
    \caption{The full list of the NN models we consider in this paper. Only the PINNs, the TLNNs, and the SKINNs introduce the structural model as prior knowledge from option pricing to the NN. Others introduce either the boundary constraints or the model-free shape constraints to the NNs.}
    \label{nn_model_list}
    \resizebox{\textwidth}{!}{
    \begin{tabular}{@{}cccc@{}}
    \toprule
    \textbf{Model} & \textbf{Structured Knowledge}\tablefootnote{``No'' in the column structured knowledge means that there is no prior knowledge; ``Bnd'' means that the boundary conditions are incorporated; ``Shape'' means that the model-free shape constraints are incorporated; and ``AE-BSM'' means the machine-learned AE-based NPSKR structured-knowledge function from the BSM model with noise; ``AE-MIX'' means such a structured-knowledge function from the mixture of models with noise.} & \makecell{\textbf{$\text{dim}(\phi)$} \\ \textbf{(learnable/total)}} & \textbf{Activation function} \\
    \midrule
    \multicolumn{4}{c}{\textbf{Benchmark NNs}} \\
    \midrule
                
                NN & No & 0/0 & ReLU \\
                NN+Bnd & Bnd & 0/0 & ReLU \\
                NN+Shape
                 & Shape & 0/0 & SiLU \\
                PINN+BSM & BSM & 1/1 & SiLU\\
                TLNN+HSV & HSV & 0/5 & ReLU \\

    \midrule
    \multicolumn{4}{c}{\textbf{(PSKR) SKINNs}}\\
    \midrule

                SKINN+BSM & BSM & 1/1 & ReLU \\
                SKINN+HSV & HSV & 5/5 & ReLU \\
                SKINN+ABSM & ABSM & 6/6 & ReLU \\
                SKINN+HSVJ & HSVJ & 9/9 & ReLU \\
                SKINN+SABR & SABR & 722/722 & ReLU \\

    \midrule
    \multicolumn{4}{c}{\textbf{(SPSKR) SKINNs}} \\
    \midrule
                SKINN+DSNN-HSV & DSNN-HSV & 5/5 & ReLU \\
                SKINN+DSNN-NASV & DSNN-NASV & 6/6 & ReLU \\

    \midrule
    \multicolumn{4}{c}{\textbf{(NPSKR) SKINNs}} \\
    \midrule
                SKINN+AE-BSM & AE-BSM & 2/2 & ReLU \\
                SKINN+AE-MIX & AE-MIX & 50/50 & ReLU \\
                SKINN+MOPA & MOPA & 2,000/2,000 & ReLU \\

    \bottomrule
    \end{tabular}
        }
\end{table}

Table (\ref{nn_model_list}) lists all the NN models for option pricing that we consider in this paper. The upper panel includes all the benchmark NN models that we compare our SKINNs against. The lower panel includes 10 variants of SKINN with different specifications of structured-knowledge representations, for which we have elaborated their constructions in Section (\ref{sec:option_pricing_structured_knowledge_representations}).

\section{Empirical Findings}\label{sec_empirical}
We implement our proposed SKINNs framework to learn from the S\&P 500 index options with structured-knowledge representations, over a long time period, from 1996 to 2022, using daily transaction quotes. This allows SKINNs to be tested through several major recessions, which include the dot-com bubble, the 2007-2009 global financial crisis, and the COVID-19 crisis. We document statistically significant improvements of SKINNs in both the pricing and hedging capabilities, compared with benchmark models, especially in the out-of-sample data.

\subsection{Data}
We use the daily transaction quotes of the S\&P 500 index options from OptionMetrics. The sample period, to be specific, starts from 04 January 1996 and ends on 31 December 2022. Instead of using the standardized option dataset from OptionMetrics, where options are recorded on a regular meshgrid of strike prices and maturities, and are smoothed by a proprietary kernel smoothing algorithm, we use the raw option dataset. Though there are missing values and erroneous prices in the raw option dataset, which challenge data-driven models, it reflects the true market and also stress-tests the models.

We consider only European call options in this paper for convenience, but all models can be easily adapted to European put options. Additionally, for each transaction day, we include only the options that expire after 7 calendar days but before 365 calendar days, as these options are more liquid and are expected to contain more useful information than others.


\subsection{Configurations, Training and Testing Schedule}\label{sec:network_configs_training_schedule}
To ensure a fair comparison across different models, we apply the same configuration for the networks. All networks have a fully-connected, feed-forward architecture, with 3 hidden layers and 32 neurons in each hidden layer, except for those pre-trained specific DSNNs that serve as structured-knowledge functions. To ensure that all networks start with the same parameters before training, we initialize them with a fixed random seed. For the models that do not require partial differentiation operations, we use ReLU as the activation function; and for others, e.g., NN+Shape, PINN, we use SiLU, i.e., the sigmoid linear unit, $h(x)=\frac{x}{1+\exp(-x)}$, as the activation function, which, according to \cite{chen2023teaching}, helps to alleviate the pathological gradient problem.

\begin{figure}[htbp]
\caption{The forward rolling model training schedule. The first training period covers the daily option panels from 04 Jan 1996 to 29 Mar 1996. The corresponding shorter test horizon (Test 1) covers options from 01 Apr 1996 to 30 Apr 1996, and the longer test horizon (Test 2) covers options from 01 May 1996 to 31 May 1996. For each of the periods from period 2 to period 317, all the Train, Test 1, and Test 2 are rolled one month forward based on the preceding period.}
\centering
\begin{tikzpicture}[x=0.6cm, y=0.8cm] 
\tikzstyle{every node}=[font=\normalsize]

\draw [fill={rgb,255:red,240; green,161; blue,71}, line width=1.0pt, rounded corners=4] (4,6) rectangle (10,5.2) node[pos=.5] {Train};
\draw [fill={rgb,255:red,65; green,153; blue,236}, line width=1.0pt, rounded corners=4, dashed] (10,6) rectangle (12,5.2) node[pos=.5] {Test 1};
\draw [fill={rgb,255:red,65; green,153; blue,236}, line width=1.0pt, rounded corners=4, dashed] (12,6) rectangle (14,5.2) node[pos=.5] {Test 2};

\draw [fill={rgb,255:red,240; green,161; blue,71}, line width=1.0pt, rounded corners=4] (6,5) rectangle (12,4.2) node[pos=.5] {Train};
\draw [fill={rgb,255:red,65; green,153; blue,236}, line width=1.0pt, rounded corners=4, dashed] (12,5) rectangle (14,4.2) node[pos=.5] {Test 1};
\draw [fill={rgb,255:red,65; green,153; blue,236}, line width=1.0pt, rounded corners=4, dashed] (14,5) rectangle (16,4.2) node[pos=.5] {Test 2};

\draw [fill={rgb,255:red,240; green,161; blue,71}, line width=1.0pt, rounded corners=4] (8,4) rectangle (14,3.2) node[pos=.5] {Train};
\draw [fill={rgb,255:red,65; green,153; blue,236}, line width=1.0pt, rounded corners=4, dashed] (14,4) rectangle (16,3.2) node[pos=.5] {Test 1};
\draw [fill={rgb,255:red,65; green,153; blue,236}, line width=1.0pt, rounded corners=4, dashed] (16,4) rectangle (18,3.2) node[pos=.5] {Test 2};

\node at (13, 2.7) {\Large\textbf{\vdots}};
\node at (1, 2.7) {\Large\textbf{\vdots}};

\draw [fill={rgb,255:red,240; green,161; blue,71}, line width=1.0pt, rounded corners=4] (17,2.2) rectangle (23,1.4) node[pos=.5] {Train};
\draw [fill={rgb,255:red,65; green,153; blue,236}, line width=1.0pt, rounded corners=4, dashed] (23,2.2) rectangle (25,1.4) node[pos=.5] {Test 1};
\draw [fill={rgb,255:red,65; green,153; blue,236}, line width=1.0pt, rounded corners=4, dashed] (25,2.2) rectangle (27,1.4) node[pos=.5] {Test 2};

\draw [line width=0.8pt, ->, >=stealth] (0,0.5) -- (28,0.5);
\node [font=\small\bfseries] at (14,0) {Time};

\node [anchor=west, font=\small\bfseries] at (0,5.6) {Period 1};
\node [anchor=west, font=\small\bfseries] at (0,4.6) {Period 2};
\node [anchor=west, font=\small\bfseries] at (0,3.6) {Period 3};
\node [anchor=west, font=\small\bfseries] at (0,1.8) {Period 317};

\end{tikzpicture}
\label{fig:training_schedule}
\end{figure}
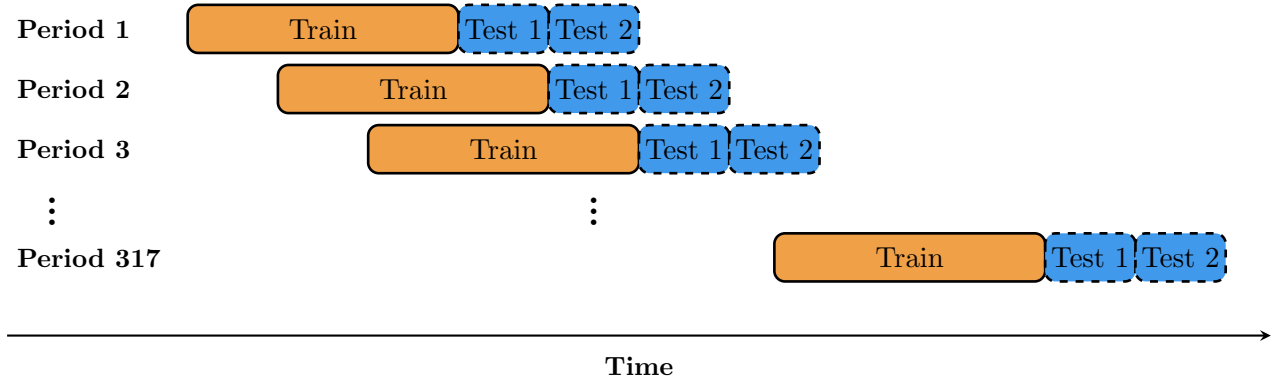

Starting from 04 January 1996, we train models using a panel of three-month options data, on a forward rolling basis with a one-month rolling window. That is, the starting date of a training period is always one month later than the starting date of its preceding training period. Once the models are trained using a three-month panel of European call options of the S\&P 500 index, we test their out-of-sample pricing and hedging performance using the options in the following two consecutive months. We refer to the first testing months as the shorter prediction horizons, and the second testing months as the longer prediction horizons. Options within the longer prediction horizons are more difficult to price, as the DGP in these periods can be changed drastically relative to the corresponding training periods. In total, there are 317 model training and testing periods (317 iterations of shorter and longer prediction horizons) from our option dataset. Figure (\ref{fig:training_schedule}) illustrates the training and testing schedule with the forward rolling basis.

\subsection{Out-of-Sample Model Performance}\label{sec:oos_performance_dm_test}
We evaluate the out-of-sample option pricing and hedging capability using the \cite{diebold2002comparing} test. 
This test is a statistical test to evaluate the out-of-sample predictive accuracy between two forecasting models, and is adopted by \cite{gu2020empirical} to evaluate the machine learning based asset pricing models. We denote $e^{1}_{j}$ and $e^{2}_{j}$ the out-of-sample pricing or hedging error of two candidate models, Model 1 and Model 2, respectively, for the period $j$, $1\leq j\leq 317$. The Diebold-Mariano test statistically compares the out-of-sample model error differences $d_{j}=\{e^{1}_{j}-e^{2}_{j}\}_{j=1}^{317}$ with a zero series. Since we aim to test whether Model 1 provides a statistically smaller out-of-sample error than Model 2, we use the one-sided test with the null hypothesis that $d_{j}$ is distributed with an expectation greater than zero. For robustness check, we also perform the non-parametric \cite{wilcoxon1945individual} signed-rank test, for which we report the results in Appendix (\ref{sec_oos_wilcoxon}). 

\subsubsection{Option Pricing Performance}
While there is a wide range of error metrics to measure the option pricing accuracy of a model, e.g., mean squared error, median absolute error, mean absolute percentage error, median absolute percentage error \citep[see, e.g.,][]{ruf2019neural}, we use the root mean squared error (RMSE) throughout this paper. The choice of different error metrics makes an insignificant difference.

\paragraph{Shorter prediction horizons.} We first consider the option pricing performance in the shorter prediction horizons. Table (\ref{tab:oos_pricing_dm_shorter}) reports Diebold-Mariano test statistics for pairwise option pricing error comparisons of a column model versus a row model. A negative Diebold-Mariano test statistic indicates that the column model outperforms (has a lower RMSE than) the row model, and a positive test statistic otherwise. 


Without any prior domain knowledge from the option pricing theory, NN in fact provides decent out-of-sample pricing performance, in the shorter prediction horizons. From Table (\ref{tab:oos_pricing_dm_shorter}), it outperforms NN+Shape and PINN+BSM, two models that require non-linear differentiation operations and hence suffer from the gradient pathologies, and also statistically outperforms the classical structural option pricing models, including the BSM, the ABSM, and the HSV model.

It is not surprising that a completely data-driven NN statistically outperforms completely theory-driven structural models. This is actually the advantage of NNs, as the over-parameterization and the high non-linear nature allow for capturing the predictive option data patterns that are not easily incorporated by rigid structural models, especially within the shorter prediction horizons in which data matters more. Adding the boundary conditions further improves the out-of-sample pricing performance of NN.

However, both NN and NN+Bnd are outperformed by SKINN+HSV, SKINN+HSVJ, as well as SKINN+MOPA, and SKINN+AE-BSM. This implies that SKINNs with more sophisticated $g_{\phi}$ tend to benefit the out-of-sample option pricing performance, when there is a slight shift between the training and the evaluation dataset. 
One explanation is that, in this case, the patterns uncovered by data-driven models already overlap the patterns that are prescribed in the relatively simple structured-knowledge representations $g_{\phi}$, and hence only more sophisticated $g_{\phi}$ can provide marginal improvements.



\paragraph{Longer prediction horizons.} We then consider the option pricing performance in the longer prediction horizons. In this case, the performance of the data-driven models decay significantly, as it becomes difficult for them to learn the generalizable predictive information, as the patterns in the training dataset can be shifted significantly in the evaluation dataset. Table (\ref{tab:oos_pricing_dm_longer}) reports pairwise test statistics from Diebold-Mariano tests. Again, NN+Shape and PINN+BSM are outperformed by other models due to their gradient pathologies. In the longer horizons, NN fails to outperform structural models. Particularly, it statistically underperforms the BSM and the ABSM model. This confirms that the option price patterns uncovered by data-driven models become less effective for the out-of-sample options in the longer prediction horizons. Adding the boundary conditions still helps to improve the performance of NN, but it can not outperform structural models at this time. In this longer prediction horizon case, all variants of SKINNs statistically outperform NN and NN+Bnd, at the 5\% significance level at least. This indicates that, by embedding the structured knowledge borrowed from theories, SKINNs offer significantly improved generalizability, especially when there is a considerable shift in the data patterns.

\begin{table}[p]

\centering
\caption{This table reports pairwise Diebold-Mariano test statistics comparing the out-of-sample option pricing performance, measured in terms of RMSE, of different models. We compare 8 variants of SKINNs with different specifications of structured-knowledge representations, with 4 classical structural models, and 4 benchmark neural networks. The out-of-sample option pricing performance is evaluated in the 317 shorter prediction horizons. We square all the pricing errors to penalize large errors.}
\label{tab:oos_pricing_dm_shorter}

    \resizebox{\textwidth}{!}{
    \begin{tabular}{@{}ccccccccccccc@{}}
    \toprule
    \textbf{Model}&\multicolumn{4}{c}{\textbf{Structural models}} & \multicolumn{4}{c}{\textbf{Benchmark NNs}} \\
        \cmidrule(lr){2-5}\cmidrule(lr){6-9}
    \textbf{Panel (A)} & \multirow{2}{*}{BSM} & \multirow{2}{*}{ABSM} & \multirow{2}{*}{HSV} & \multirow{2}{*}{HSVJ} & \multirow{2}{*}{NN} & NN & NN & PINN\\ 
     &  &  &  &  &  & +Bnd & +Shape & +BSM\\ 
    \midrule
    BSM & -- & 1.99** & -2.16** & -9.24*** & -1.72** & -2.42*** & 0.16 & 0.21 \\ 
    ABSM & -- & -- & -2.89*** & -8.91*** & -1.88** & -2.58*** & -0.07 & -0.02 \\ 
    HSV & -- & -- & -- & -12.95*** & -1.49* & -2.22** & 0.59 & 0.64 \\ 
    HSVJ & -- & -- & -- & -- & -0.61 & -1.35* & 1.81** & 1.85** \\ 
    NN & -- & -- & -- & -- & -- & -2.41*** & 3.01*** & 2.79*** \\ 
    NN+Bnd & -- & -- & -- & -- & -- & -- & 4.49*** & 4.26*** \\ 
    NN+Shape & -- & -- & -- & -- & -- & -- & -- & 0.13 \\ 
    PINN+BSM & -- & -- & -- & -- & -- & -- & -- & -- \\

    \midrule
    \textbf{Model} & \multicolumn{4}{c}{\textbf{PSKR}} & \multicolumn{2}{c}{\textbf{SPSKR}} & \multicolumn{2}{c}{\textbf{NPSKR}} \\
    \cmidrule(lr){2-5}\cmidrule(lr){6-7}\cmidrule(lr){8-9}
    \textbf{Panel (B)} & SKINN & SKINN & SKINN & SKINN & SKINN & SKINN & SKINN & SKINN \\
                       & +BSM & +ABSM & +HSV & +HSVJ & +DSNN-HSV & +DSNN-NASV & +MOPA & +AE-BSM \\
    \midrule
    BSM & -4.45*** & -4.79*** & -4.58*** & -5.27*** & -3.47*** & -4.20*** & -4.18*** & -2.98***\\ 
    ABSM & -4.67*** & -5.03*** & -4.80*** & -5.51*** & -3.68*** & -4.43*** & -4.38*** & -3.17***\\ 
    HSV & -4.40*** & -4.76*** & -4.64*** & -5.44*** & -3.40*** & -4.20*** & -4.18*** & -2.84***\\ 
    HSVJ & -2.80*** & -2.94*** & -3.09*** & -3.66*** & -1.98** & -2.54*** & -2.97*** & -1.73**\\
    NN & -1.27 & -1.16 & -1.65** & -1.68** & -0.88 & -0.99 & -2.42*** & -1.79**\\ 
    NN+Bnd & -0.24 & -0.11 & -0.57 & -0.63 & 0.15 & 0.01 & -1.16 & -0.06 \\ 
    NN+Shape & -3.92*** & -4.10*** & -4.53*** & -4.62*** & -3.30*** & -3.66*** & -4.70*** & -4.34***\\ 
    PINN+BSM & -3.76*** & -3.88*** & -4.26*** & -4.37*** & -3.16*** & -3.48*** & -4.46*** & -4.03***\\ 
    
    \bottomrule
    \end{tabular}
    }

\end{table}

\begin{table}[p]
\centering
\caption{This table reports pairwise Diebold-Mariano test statistics comparing the out-of-sample option pricing performance, measured in terms of RMSE, of different models. We compare 8 variants of SKINNs with different specifications of structured-knowledge representations, with 4 classical structural models, and 4 benchmark neural networks. The out-of-sample option pricing performance is evaluated in the 317 longer prediction horizons. We square all the pricing errors to penalize large errors.}
\label{tab:oos_pricing_dm_longer}

    \resizebox{\textwidth}{!}{
    \begin{tabular}{@{}ccccccccccccc@{}}
    \toprule
    \textbf{Model}&\multicolumn{4}{c}{\textbf{Structural models}} & \multicolumn{4}{c}{\textbf{Benchmark NNs}} \\
        \cmidrule(lr){2-5}\cmidrule(lr){6-9}
    \textbf{Panel (A)} & \multirow{2}{*}{BSM} & \multirow{2}{*}{ABSM} & \multirow{2}{*}{HSV} & \multirow{2}{*}{HSVJ} & \multirow{2}{*}{NN} & NN & NN & PINN\\ 
     &  &  &  &  &  & +Bnd & +Shape & +BSM\\ 
    \midrule
    BSM & -- & 2.49*** & 0.62 & -5.23*** & 2.67*** & 1.36* & 2.51*** & 2.54*** \\ 
    ABSM & -- & -- & -0.57 & -5.83*** & 2.56*** & 1.26 & 2.40*** & 2.42*** \\ 
    HSV & -- & -- & -- & -12.95*** & 2.63*** & 1.33* & 2.45*** & 2.47*** \\ 
    HSVJ & -- & -- & -- & -- & 3.09*** & 1.74** & 2.94*** & 2.97*** \\ 
    NN & -- & -- & -- & -- & -- & -2.51*** & -0.44 & -0.38 \\ 
    NN+Bnd & -- & -- & -- & -- & -- & -- & 1.30* & 1.14 \\ 
    NN+Shape & -- & -- & -- & -- & -- & -- & -- & 0.01 \\ 
    PINN+BSM & -- & -- & -- & -- & -- & -- & -- & -- \\

    \midrule
    \textbf{Model} & \multicolumn{4}{c}{\textbf{PSKR}} & \multicolumn{2}{c}{\textbf{SPSKR}} & \multicolumn{2}{c}{\textbf{NPSKR}} \\
    \cmidrule(lr){2-5}\cmidrule(lr){6-7}\cmidrule(lr){8-9}
    \textbf{Panel (B)} & SKINN & SKINN & SKINN & SKINN & SKINN & SKINN & SKINN & SKINN \\
                       & +BSM & +ABSM & +HSV & +HSVJ & +DSNN-HSV & +DSNN-NASV & +MOPA & +AE-BSM \\
    \midrule
    BSM          & -3.61*** & -3.61*** & -3.06*** & -3.56*** & -2.09** & -2.78*** & -2.53*** & 0.04    \\ 
    ABSM        & -3.85*** & -3.87*** & -3.30*** & -3.80*** & -2.31** & -3.01*** & -2.74*** & -0.10   \\ 
    HSV         & -3.89*** & -3.91*** & -3.37*** & -3.89*** & -2.33** & -3.08*** & -2.74*** & -0.03   \\ 
    HSVJ     & -2.69*** & -2.63*** & -2.23** & -2.69*** & -1.26    & -1.88** & -1.76** & 0.56    \\ 
    NN          & -3.98*** & -3.95*** & -3.83*** & -4.02*** & -3.51*** & -3.70*** & -3.83*** & -3.61***\\ 
    NN+Bnd      & -2.72*** & -2.69*** & -2.63*** & -2.77*** & -2.28** & -2.44*** & -2.61*** & -2.54***\\ 
    NN+Shape       & -3.55*** & -3.57*** & -3.44*** & -3.66*** & -3.06*** & -3.27*** & -3.43*** & -3.05***\\ 
    PINN+BSM & -3.50*** & -3.53*** & -3.42*** & -3.62*** & -3.02*** & -3.23*** & -3.41*** & -2.82***\\ 
    
    \bottomrule
    \end{tabular}
    }

\end{table}

\subsubsection{Option Hedging Performance}
Another important purpose of option pricing models is hedging. For structural models, hedging is straightforward, as the option Greeks, e.g., Delta, can be effectively calculated using semi-closed or closed-form solutions. We use all our models to Delta-hedge the short positions in the S\&P 500 call options, as an evaluation of their hedging performance. We rely on automatic differentiation to derive the Delta ratios for the NN-based models. Specifically, we use the relation according to the homogeneous-of-degree-one property:
\begin{equation}
    \Delta_{\text{NN}}:=\frac{\partial f_{\theta}(\mathbf{X})}{\partial m}\frac{K}{S^{2}}.
\end{equation}

To evaluate the out-of-sample Delta-hedging performance, we initialize a Delta-hedged portfolio on the trading date $t_{j}$ within the evaluation horizon, by entering the following positions:
\begin{align}
    \Pi^{(i)}_{\text{Stock}}(t_{j}) &= S_{t_{j}}\Delta^{(i)}(t_{j}), \\
    \Pi^{(i)}_{\text{Call}}(t_{j}) &= -C^{(i)}_{t_{j}}, \\
    \Pi^{(i)}_{\text{Bond}}(t_{j}) &= -\left(\Pi^{(i)}_{\text{Stock}}(t_{j}) + \Pi_{\text{Call}}^{(i)}(t_{j})\right),
\end{align}
where $1\leq j\leq m$, and $m$ is the total number of trading days in the prediction horizon; $1\leq i\leq n_{j}$, and $n_{j}$ is the total number of options from day $t_{j}$; $\Delta^{(i)}(t_{j})$ is the Delta, estimated by a model, for the option $i$ at day $t_{j}$; $\Pi^{(i)}_{\text{Stock}}(t_{j})$ is the value of the underlying asset position; $\Pi^{(i)}_{\text{Bond}}(t_{j})$ is the value of the zero-coupon bond position; $\Pi^{(i)}_{\text{Call}}(t_{j})$ is the value of the short call option position that we aim to hedge. For all $t_{j}, 1\leq j\leq m$, the initial Delta-hedged portfolio perfectly hedges, as $\sum_{i=1}^{n_{j}}\Pi^{(i)}_{\text{Stock}}(t_{j}) + \Pi^{(i)}_{\text{Call}}(t_{j}) + \Pi^{(i)}_{\text{Bond}}(t_{j})=0$ by construction. We then evaluate the Delta-hedging performance on the next day of the hedged portfolio construction, $t_{j}+1=t_{j+1}, 1\leq j\leq m$. We calculate the next-day Delta-hedged portfolio value by:
\begin{align}
    \Pi^{(i)}_{\text{Stock}}(t_{j}+1) &= S_{t_{j+1}}\Delta^{(i)}(t_{j}), \\
    \Pi^{(i)}_{\text{Call}}(t_{j}+1) &= -C^{(i)}_{t_{j+1}}, \\
    \Pi^{(i)}_{\text{Bond}}(t_{j}+1) &= \Pi^{(i)}_{\text{Bond}}(t_{j})e^{r\times\frac{1}{252}}.
\end{align}
Considering all the options we hedge on $t_{j}$, the overall Delta-hedge portfolio has the value:
\begin{equation}
    \Pi(t_{j}+1) = \frac{1}{n_{j}}\sum_{i=1}^{n_{j}}\Pi^{(i)}_{\text{Stock}}(t_{j} + 1) + \Pi^{(i)}_{\text{Call}}(t_{j+1}) + \Pi^{(i)}_{\text{Bond}}(t_{j} + 1),
\end{equation}
on all the hedging performance evaluation date $t_{j}+1, 1\leq j\leq m$. For different models, we plug in the corresponding Delta hedge ratio $\Delta_{\text{Model}}(t_{j})$ to construct and assess the hedged portfolios.

Ideally, a Delta-hedged portfolio should hedge the risk of the underlying asset price fluctuation, and hence the next-day Delta-hedged portfolio is expected to have a zero value, i.e., $\Pi(t_{j}+1) = 0$. In reality, however, option pricing models have residual risk, and $\Pi(t_{j}+1)$ can deviate from zero. To compare the out-of-sample Delta-hedging performance of each model, we define the following hedging error metric:
\begin{equation}\label{hedging_error}
    \text{HE}^{\text{Model}} = \frac{1}{m}\sum_{j=1}^{m}\left\lvert\Pi(t_{j}+1)\right\rvert.
\end{equation}
We compare the out-of-sample hedging accuracy of models using Diebold-Mariano test with a series of hedging error differences $d_{j}=\{e^{1}_{j} - e^{2}_{j}; e^{1}_{j}=\text{HE}^{1}_{j}, e^{2}_{j}=\text{HE}^{2}_{j}\}_{j=1}^{317}$.



\paragraph{Shorter prediction horizons.} Table (\ref{tab:oos_hedging_dm_shorter}) reports the pairwise Diebold-Mariano test statistics for the out-of-sample hedging performance comparisons, considering the shorter prediction horizons. Interestingly, in this test, NN is generally outperformed by classical structural models significantly. At this time, NN+Bnd fails to provide any improvements by adding the boundary conditions. All SKINNs significantly outperform all the benchmark NN models in terms of hedging purpose, thanks to the embedding of the structured-knowledge representations. We observe that classical structural models are robust to the change of the evaluation objective, i.e., estimating them by minimizing the pricing errors insignificantly influences their hedging capability. However, the over-parameterization nature of NNs makes them disadvantaged when changing the evaluation objective, as the weights and biases are optimized by minimizing the empirical option pricing errors, which can be distant from the optimal weights and biases by minimizing the empirical hedging errors. Though disadvantaged, unlike the benchmark NNs, SKINNs do not underperform the structural models. For SKINN+BSM and SKINN+ABSM, their hedging performance can even statistically outperform the structural models in the shorter prediction horizons.

\begin{table}[p]
\centering
\caption{This table reports pairwise Diebold-Mariano test statistics comparing the out-of-sample option hedging performance, measured in terms of MHE, of different models. We compare 8 variants of SKINNs with different specifications of structured-knowledge representations, with 4 classical structural models, and 4 benchmark neural networks. The out-of-sample option hedging performance is evaluated in the 317 shorter prediction horizons. We square all the hedging errors to penalize large errors.}
\label{tab:oos_hedging_dm_shorter}

    \resizebox{\textwidth}{!}{
    \begin{tabular}{@{}ccccccccccccc@{}}
    \toprule
    \textbf{Model}&\multicolumn{4}{c}{\textbf{Structural models}} & \multicolumn{4}{c}{\textbf{Benchmark NNs}} \\
        \cmidrule(lr){2-5}\cmidrule(lr){6-9}
    \textbf{Panel (A)} & \multirow{2}{*}{BSM} & \multirow{2}{*}{ABSM} & \multirow{2}{*}{HSV} & \multirow{2}{*}{HSVJ} & \multirow{2}{*}{NN} & NN & NN & PINN\\ 
     &  &  &  &  &  & +Bnd & +Shape & +BSM\\ 
    \midrule
    BSM          & -- & -3.62*** & 6.63*** & 6.56*** & 2.64*** & 2.82*** & 2.99*** & 2.48*** \\ 
    ABSM        & -- & --       & 6.92*** & 6.83*** & 2.90*** & 3.19*** & 3.17*** & 2.66*** \\ 
    HSV         & -- & --       & --      & -1.31* & -1.65** & -2.59*** & -0.12   & -0.31   \\ 
    HSVJ     & -- & --       & --      & --      & -1.55* & -2.49*** & 0.01    & -0.21   \\ 
    NN          & -- & --       & --      & --      & --      & -1.15    & 2.25** & 1.11    \\ 
    NN+Bnd      & -- & --       & --      & --      & --      & --       & 2.50*** & 1.57* \\ 
    NN+Shape       & -- & --       & --      & --      & --      & --       & --      & -0.31   \\ 
    PINN+BSM & -- & --       & --      & --      & --      & --       & --      & --      \\

    \midrule
    \textbf{Model} & \multicolumn{4}{c}{\textbf{PSKR}} & \multicolumn{2}{c}{\textbf{SPSKR}} & \multicolumn{2}{c}{\textbf{NPSKR}} \\
    \cmidrule(lr){2-5}\cmidrule(lr){6-7}\cmidrule(lr){8-9}
    \textbf{Panel (B)} & SKINN & SKINN & SKINN & SKINN & SKINN & SKINN & SKINN & SKINN \\
                       & +BSM & +ABSM & +HSV & +HSVJ & +DSNN-HSV & +DSNN-NASV & +MOPA & +AE-BSM \\
    \midrule
    BSM          & -6.58*** & -6.34*** & -0.16    & -0.71    & 0.70     & 0.79     & 0.07     & 0.68     \\ 
    ABSM        & -5.59*** & -5.57*** & 1.11     & 0.63     & 1.75** & 1.80** & 1.44* & 1.48* \\ 
    HSV         & -7.01*** & -6.90*** & -7.32*** & -6.78*** & -6.26*** & -6.51*** & -6.69*** & -5.68*** \\ 
    HSVJ     & -6.88*** & -6.77*** & -7.11*** & -6.56*** & -6.06*** & -6.43*** & -6.66*** & -5.61*** \\ 
    NN          & -3.94*** & -4.02*** & -2.74*** & -2.78*** & -2.25** & -2.34*** & -2.80*** & -3.14*** \\ 
    NN+Bnd      & -4.44*** & -4.58*** & -2.87*** & -2.94*** & -2.33** & -2.47*** & -2.96*** & -3.59*** \\ 
    NN+Shape       & -3.86*** & -3.89*** & -3.11*** & -3.10*** & -2.70*** & -2.77*** & -3.03*** & -3.34*** \\ 
    PINN+BSM & -3.23*** & -3.29*** & -2.41*** & -2.41*** & -2.22** & -2.34*** & -2.53*** & -2.46*** \\ 
    
    \bottomrule
    \end{tabular}
    }

\end{table}

\begin{table}[p]
\centering
\caption{This table reports pairwise Diebold-Mariano test statistics comparing the out-of-sample option hedging performance, measured in terms of MHE, of different models. We compare 8 variants of SKINNs with different specifications of structured-knowledge representations, with 4 classical structural models, and 4 benchmark neural networks. The out-of-sample option hedging performance is evaluated in the 317 longer prediction horizons. We square all the hedging errors to penalize large errors.}
\label{tab:oos_hedging_dm_longer}

    \resizebox{\textwidth}{!}{
    \begin{tabular}{@{}ccccccccccccc@{}}
    \toprule
    \textbf{Model}&\multicolumn{4}{c}{\textbf{Structural models}} & \multicolumn{4}{c}{\textbf{Benchmark NNs}} \\
        \cmidrule(lr){2-5}\cmidrule(lr){6-9}
    \textbf{Panel (A)} & \multirow{2}{*}{BSM} & \multirow{2}{*}{ABSM} & \multirow{2}{*}{HSV} & \multirow{2}{*}{HSVJ} & \multirow{2}{*}{NN} & NN & NN & PINN\\ 
     &  &  &  &  &  & +Bnd & +Shape & +BSM\\ 
    \midrule
    BSM          & -- & -4.30*** & 7.69*** & 7.67*** & 3.44*** & 3.48*** & 3.52*** & 3.20*** \\ 
    ABSM        & -- & --       & 7.83*** & 7.81*** & 3.68*** & 3.83*** & 3.72*** & 3.43*** \\ 
    HSV         & -- & --       & --      & -1.42* & 0.25    & -1.15   & 1.11    & 0.20    \\ 
    HSVJ     & -- & --       & --      & --      & 0.39    & -0.99   & 1.21    & 0.32    \\ 
    NN          & -- & --       & --      & --      & --      & -2.12** & 1.40* & -0.05   \\ 
    NN+Bnd      & -- & --       & --      & --      & --      & --      & 2.76*** & 1.47* \\ 
    NN+Shape       & -- & --       & --      & --      & --      & --      & --      & -1.94** \\ 
    PINN+BSM & -- & --       & --      & --      & --      & --      & --      & --      \\

    \midrule
    \textbf{Model} & \multicolumn{4}{c}{\textbf{PSKR}} & \multicolumn{2}{c}{\textbf{SPSKR}} & \multicolumn{2}{c}{\textbf{NPSKR}} \\
    \cmidrule(lr){2-5}\cmidrule(lr){6-7}\cmidrule(lr){8-9}
    \textbf{Panel (B)} & SKINN & SKINN & SKINN & SKINN & SKINN & SKINN & SKINN & SKINN \\
                       & +BSM & +ABSM & +HSV & +HSVJ & +DSNN-HSV & +DSNN-NASV & +MOPA & +AE-BSM \\
    \midrule
    BSM          & -5.07*** & -6.33*** & 0.84     & 0.35     & 0.72     & 1.45* & 0.26     & 0.68     \\ 
    ABSM        & -3.66*** & -4.96*** & 1.78** & 1.55* & 1.45* & 2.01** & 1.25     & 1.35* \\ 
    HSV         & -8.32*** & -8.12*** & -8.42*** & -8.11*** & -6.37*** & -5.63*** & -6.69*** & -5.64*** \\ 
    HSVJ     & -8.26*** & -8.03*** & -8.34*** & -8.07*** & -6.18*** & -5.58*** & -6.47*** & -5.57*** \\ 
    NN          & -4.71*** & -4.59*** & -3.44*** & -3.63*** & -3.41*** & -2.89*** & -3.58*** & -3.83*** \\ 
    NN+Bnd      & -5.07*** & -5.12*** & -3.40*** & -3.71*** & -3.14*** & -2.56*** & -3.50*** & -4.07*** \\ 
    NN+Shape       & -4.41*** & -4.46*** & -3.49*** & -3.67*** & -3.31*** & -3.03*** & -3.60*** & -3.88*** \\ 
    PINN+BSM & -4.32*** & -4.29*** & -3.01*** & -3.23*** & -2.94*** & -2.45*** & -3.29*** & -3.34*** \\ 
    
    \bottomrule
    \end{tabular}
    }

\end{table}

\paragraph{Longer prediction horizons.} We continue to test whether the hedging performance improvements by SKINNs are robust to longer prediction horizons. Table (\ref{tab:oos_hedging_dm_longer}) reports the pairwise Diebold-Mariano test statistics in this case. Similar to the result for the shorter prediction horizons, all SKINNs continue to statistically outperform all the benchmark NNs. The benchmark NNs statistically underperform the classical structural models, as is expected for the longer prediction horizons, where the data-driven patterns are less effective. Again, in the longer prediction horizons, SKINN+BSM and SKINN+ABSM not only outperform the benchmark NNs but also outperform the structural models significantly.

\subsection{SKINNs Versus Transfer Learning Models}
In this section, we compare our SKINNs against the transfer learning neural networks (TLNNs) in more detail. We choose the HSV model as the domain knowledge in this comparison because of its high dimensionality and complexity in the latent parameters. \cite{chen2023teaching}, however, employs a very deep NN to incorporate the one-parameter BSM model. To facilitate a fair comparison, we match the network size of the TLNNs and SKINNs by using 6 hidden layers with 64 neurons each layer for both. By not going that deep, we do not suffer from the vanishing gradient problem as encountered in \cite{chen2023teaching}. Having said that, a sufficiently deep NN is compulsory for TLNNs, as they demand a large capacity to memorize the source domain information, which differentiates them from SKINNs, whose NN components can flexibly depend on the needs by empirical data only.

We first train a deep surrogate model to learn completely from the theoretical HSV model with simulation. We then transfer the deep surrogate model to the market data. In \cite{chen2023teaching}, they only fine-tune the deep surrogate model on the market data with a very small learning rate and very few epochs, based on the assumption that the market data has a DGP close to the theoretical model. We find this is, nevertheless, not the case empirically. Data patterns can contradict the theoretical structure of the HSV model significantly. We therefore train a series of TLNN+HSV with different numbers of frozen (or trainable) layers and epochs. Table (\ref{tab:TLNN_configs}) lists all the TLNN+HSV with different frozen layers and epochs, along with the deep surrogate HSV model (i.e., DSNN+HSV) and our SKINN+HSV, and their respective training configurations. TLNNs require latent parameters in their inputs, and one has to provide the estimations of latent parameters before learning from the target domain. This is the limitation of the transfer learning approach when applied to high-dimensional settings, as the separate calibration of the latent parameters becomes inaccurate. Although it is unfair to SKINN+HSV, we feed the SKINN-learned latent parameters as the TLNN+HSV inputs, as they present improved numerical stability.

\begin{table}[htbp]
\centering
\resizebox{0.9\textwidth}{!}{
\begin{tabular}{ccccccc}
        \toprule
                \textbf{Model} & Frozen/Trainable layers & Learning rate & Epochs & Inputs & $\phi$ \\
        \midrule
             DSNN+HSV & 6/0 & 2E-4 & 500 & $\mathbf{X}, \phi$ & Simulated \\
             TLNN+HSV1 & 0/6 & 6E-6 & 6 & $\mathbf{X}, \phi$ & SKINN-learned\\
             TLNN+HSV2 & 4/2 & 6E-6 & 6 & $\mathbf{X}, \phi$ & SKINN-learned\\
             TLNN+HSV3 & 4/2 & 6E-6 & 100 & $\mathbf{X}, \phi$ & SKINN-learned\\
             TLNN+HSV4 & 4/2 & 6E-6 & 500 & $\mathbf{X}, \phi$ & SKINN-learned\\
             SKINN+HSV & 0/3 & 1E-3 & 500 & $\mathbf{X}$ & --\\
        \bottomrule
        \end{tabular}
        }
        \captionof{table}{This table summarizes the network configurations applied to train TLNNs and SKINN+HSV. We employ a smaller network size in order to allow TLNNs (6 hidden layers, 64 neurons each) to be comparable to our SKINNs (3 hidden layers, 32 neurons each). The different number of frozen layers for TLNNs corresponds to the different compliance with the theory.}
        \label{tab:TLNN_configs}
\end{table}

Table (\ref{tab:TLNN_SKINN_DM}) reports the pairwise Deibold-Mariano test statistics, which compare the out-of-sample option pricing RMSE of the different TLNN+HSV configurations with SKINN+HSV. For the TLNN+HSV models, allowing more trainable layers, or sufficiently training the model on the market data with more epochs, generally can improve the performance. This implies that there exists a considerable gap between the source domain, i.e., a theoretical option pricing model, and the target domain, i.e., the market option prices. SKINN+HSV statistically outperforms all the TLNN+HSV models with different configurations. The out-of-sample option pricing performance of each model is evaluated using the options in the longer prediction horizons.

\begin{table}[htbp]
        \centering
        \resizebox{0.85\textwidth}{!}{
        \begin{tabular}{ccccccc}
        \toprule
                \multirow{2}{*}{\textbf{Model}} & DSNN & TLNN & TLNN & TLNN & TLNN & SKINN \\
                      & +HSV & +HSV1 & +HSV2 & +HSV3 & +HSV4 & +HSV          \\
        \midrule
                DSNN+HSV & -- & -10.48*** & -10.88*** & -10.48*** & -11.39*** & -13.87**** \\
                TLNN+HSV1 & -- & -- & 9.12*** & -10.37*** & -13.71*** & -11.30*** \\
                TLNN+HSV2 & -- & -- & -- & -10.59*** & -13.70*** & -11.60***\\
                TLNN+HSV3 & -- & -- & -- & -- & -22.46*** & -6.63***\\
                TLNN+HSV4 & -- & -- & -- & -- & -- & -2.80***\\
                SKINN+HSV & -- & -- & -- & -- & -- & -- \\
        \bottomrule
        \end{tabular}
        }
        \caption{The pair-wise Diebold-Mariano test statistics, which compare the out-of-sample option pricing errors (measured by RMSE) across all TLNNs and SKINN+HSV. The out-of-sample option pricing errors are evaluated using the options within the 317 periods of longer prediction horizons. A negative number indicates the column model outperforms the row model, and vice versa.}
        \label{tab:TLNN_SKINN_DM}
\end{table}

\begin{figure}
    \centering


        

        \includegraphics[width=\textwidth]{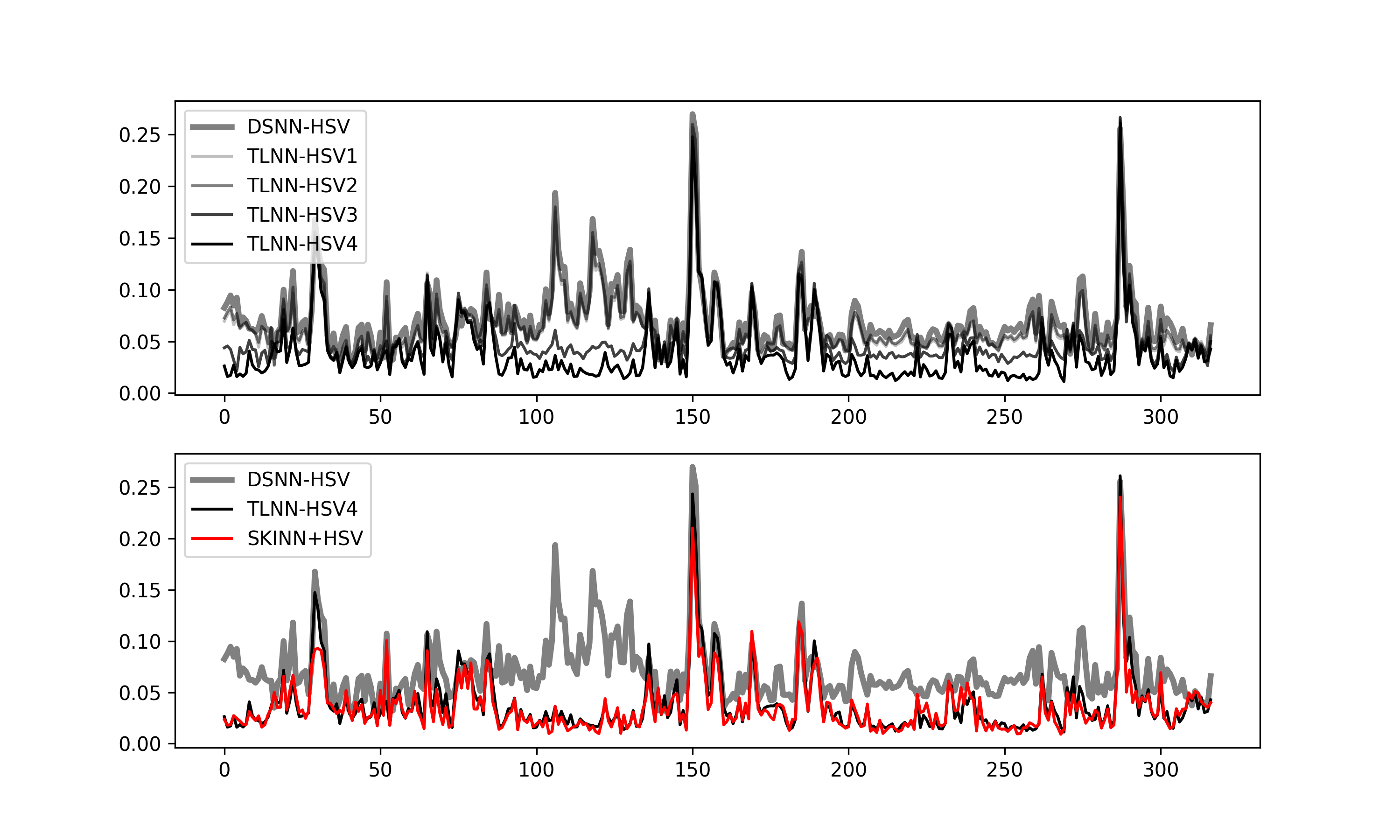}
        
        \captionof{figure}{The out-of-sample option pricing performance of TLNNs and SKINN+HSV, over the 317 periods of longer prediction horizons. Among all the TLNNs, TLNN+HSV4 performs the best.}
        \label{fig:finn_hsv_v1_vs_tlnn_hsv}
        
    
\end{figure}

\section{Economic Interpretations of SKINNs}\label{sec_interpret}
Our empirical study has shown that, by embedding the structured-knowledge from the option pricing domain, SKINNs can provide statistically superior option pricing capability, compared with both the existing NN option pricing models and the classical structural models, especially in the longer prediction horizons. Although we do not train SKINNs to hedge options, their hedging capability can outperform the structural models and is significantly better than the benchmark NNs. In this section, we dissect the reasons for the superior performance of SKINNs.

\subsection{Model Performances and Market Volatilities}
It is important to understand under what market conditions SKINNs are able to provide significant marginal improvements for option pricing, as there is no one model that is suitable for all market conditions. We treat NN as the baseline, and for other models, we calculate the difference of their out-of-sample option pricing error from that of NN:
\begin{equation}
    \Delta\text{RMSE}^{\text{Model}}(t_{j}) := \text{RMSE}^{\text{Model}}(t_{j}) - \text{RMSE}^{\text{NN}}(t_{j}).
\end{equation}
The smaller the $\Delta\text{RMSE}^{\text{Model}}(t_{j})$ is, the model outperforms more in out-of-sample compared with NN, and vice versa. We investigate the $\Delta\text{RMSE}^{\text{Model}}(t_{j})$ of each model in both the shorter and the longer prediction horizons, but our main focus is the latter, as these horizons challenge data-driven models more due to domain shift.

We use the averaged VIX index over a testing period as the proxy of the market condition, which is defined as:
\begin{equation}
    \text{AvgVIX}(t_{j}) := \frac{1}{N_{t_{j}}}\sum_{i=1}^{N_{t_{j}}} \frac{\text{VIX}_{i}}{100},
\end{equation}
where $N_{t_{j}}$ is the number of trading days in the testing period $t_{j}$, $1\leq j\leq 317$, and $\text{VIX}_{i}$ is the daily close price of the VIX index. $\text{AvgVIX}(t_{j})$ measures the volatility of the market during the period $t_{j}$. A higher $\text{AvgVIX}(t_{j})$ is usually associated with the feared sentiment, high market uncertainty, and the tendency of mispricing, which means more noise in the observed prices. We regress the out-of-sample option pricing RMSE differences from the testing periods against the averaged VIX index over these periods:
\begin{equation}\label{reg_avgvix}
    \Delta\text{RMSE}^{\text{Model}} = \beta_{0} + \beta_{1}\text{AvgVIX} + \varepsilon.
\end{equation}
Since the marginal improvements of a model against NN, $\Delta \text{RMSE}^{\text{Model}}$, is measured from the same testing periods as the market condition indicator, $\text{AvgVIX}$, the regression in Equation (\ref{reg_avgvix}) tests how the market volatility impacts the out-of-sample option pricing performance of a model, relative to an NN.

\begin{table}[htbp]

\caption{We run the regression (\ref{reg_avgvix}) for each model. A statistically significant negative $\beta_{1}$ implies that the model performs better compared with a plain vanilla neural network when the market is more volatile. The results are based on all 317 test periods of the shorter prediction horizons.}
\label{tab:dRMSE_pricing_shorter}

\resizebox{\textwidth}{!}{
\begin{tabular}{@{}cccccccccc@{}}
\toprule


\textbf{Model} & \multirow{2}{*}{BSM} & \multirow{2}{*}{ABSM} & \multirow{2}{*}{HSV} & \multirow{2}{*}{HSVJ} & NN  & -- & -- & -- \\
\textbf{Panel (A)} &    &      &    & & +Bnd\\
\midrule
$\beta_{1}$ & 0.0843\sym{***} & 0.0891\sym{***} & 0.0563\sym{***} & 0.0720\sym{***} & -0.0116\sym{*} & -- & -- & -- \\
 & (0.0141) & (0.0140) & (0.0146) & (0.0143) & (0.0069)  & -- & -- & -- \\
$\beta_{0}$ & -0.0134\sym{***} & -0.0141\sym{***} & -0.0082\sym{**} & -0.0151\sym{***} & 0.0005  & -- & -- & -- \\
 & (0.0031) & (0.0031) & (0.0032) & (0.0032) & (0.0015)  & -- & -- & -- \\

\midrule
Obs & 317 & 317 & 317 & 317 & 317  & -- & -- & -- \\
Adj. R\sym{2} & 0.0994 & 0.1111 & 0.0423 & 0.0712 & 0.0058  & -- & -- & -- \\

\midrule

\textbf{Model} & SKINN & SKINN & SKINN & SKINN & SKINN & SKINN & SKINN & SKINN  \\
\textbf{Panel (B)} & +BSM & +ABSM & +HSV & +HSVJ & +DSNN-HSV & DSNN-NASV & +MOPA & +AE-BSM \\
\midrule

$\beta_{1}$ & 0.0133 & 0.0264\sym{**} & 0.0118 & 0.0107 & -0.0372\sym{***} & 0.0021 & -0.0083 & -0.0195\sym{**}\\
       & (0.0128) & (0.0128) & (0.0130) & (0.0133) & (0.0137) & (0.0131) & (0.0116) & (0.0094)\\
$\beta_{0}$ & -0.0039 & -0.0069\sym{**} & -0.0051\sym{*} & -0.0051\sym{*} & 0.0087\sym{***} & -0.0014 & -0.0013 & 0.0041\sym{*}\\
         & (0.0028) & (0.0028) & (0.0029) & (0.0029) & (0.0030) & (0.0029) & (0.0025) & (0.0021)\\

\midrule
Obs & 317 & 317 & 317 & 317 & 317 & 317 & 317 & 317\\
Adj. R\sym{2} & 0.0003 & 0.0102 & -0.0005 & -0.0011 & 0.0199 & -0.0031 & -0.0015 & 0.0104\\

\bottomrule
\end{tabular}
}

\end{table}

Table (\ref{tab:dRMSE_pricing_shorter}) reports the regression results for 4 structural models, NN+Bnd, and 8 SKINN variants. We do not consider NN+Shape and PINN+BSM in this test due to their gradient pathologies. When considering the shorter prediction horizons as the testing periods, NN+Bnd and SKINNs perform closely, as it is hard for all of them to provide large and significant option pricing improvements against NN. NN+Bnd provides a tiny marginal improvement when the market is volatile, which is significant at only the 10\% significance level. SKINN+AE-BSM, in this case, provides a slightly larger improvement, which is significant at the 5\% significance level. In these shorter horizons, structural models, however, present significantly larger errors than NN when the market in the testing period has a higher volatility.

\begin{table}[htbp]

\caption{We run the regression (\ref{reg_avgvix}) for each model. A statistically significant negative $\beta_{1}$ implies that the model performs better compared with a plain vanilla neural network when the market is more volatile. The results are based on all 317 test periods of the longer prediction horizons.}
\label{tab:dRMSE_pricing_longer}

\resizebox{\textwidth}{!}{
\begin{tabular}{@{}lcccccccc@{}}
\toprule

\textbf{Model} & \multirow{2}{*}{BSM} & \multirow{2}{*}{ABSM} & \multirow{2}{*}{HSV} & \multirow{2}{*}{HSVJ} & NN & -- & -- & -- \\
\textbf{Panel (A)} & & & & & +Bnd & & & \\
\midrule
$\beta_{1}$ & -0.0317 & -0.0274 & -0.0483\sym{**} & -0.0340 & -0.0303\sym{***} & -- & -- & -- \\
 & (0.0214) & (0.0214) & (0.0215) & (0.0216) & (0.0096) & -- & -- & -- \\
$\beta_{0}$ & 0.0019 & 0.0013 & 0.0053 & -0.0008 & 0.0027 & -- & -- & -- \\
 & (0.0047) & (0.0047) & (0.0047) & (0.0048) & (0.0021) & -- & -- & -- \\

\midrule
Obs & 317 & 317 & 317 & 317 & 317 & -- & -- & -- \\
Adj. R\sym{2} & 0.0038 & 0.0020 & 0.0127 & 0.0047 & 0.0273 & -- & -- & -- \\

\midrule

\textbf{Model} & SKINN & SKINN & SKINN & SKINN & SKINN & SKINN & SKINN & SKINN \\
\textbf{Panel (B)} & +BSM & +ABSM & +HSV & +HSVJ & +DSNN-HSV & +DSNN-NASV & +MOPA & +AE-BSM \\
\midrule

$\beta_{1}$ & -0.1000\sym{***} & -0.0868\sym{***} & -0.0944\sym{***} & -0.0948\sym{***} & -0.1369\sym{***} & -0.1031\sym{***} & -0.0923\sym{***} & -0.0944\sym{***} \\
 & (0.0205) & (0.0201) & (0.0199) & (0.0196) & (0.0195) & (0.0202) & (0.0177) & (0.0132) \\
$\beta_{0}$ & 0.0119\sym{***} & 0.0091\sym{**} & 0.0102\sym{**} & 0.0101\sym{**} & 0.0227\sym{***} & 0.0135\sym{***} & 0.0106\sym{***} & 0.0155\sym{***} \\
 & (0.0045) & (0.0044) & (0.0044) & (0.0043) & (0.0043) & (0.0045) & (0.0039) & (0.0029) \\

\midrule
Obs & 317 & 317 & 317 & 317 & 317 & 317 & 317 & 317 \\
Adj. R\sym{2} & 0.0671 & 0.0528 & 0.0634 & 0.0661 & 0.1329 & 0.0733 & 0.0764 & 0.1374 \\

\bottomrule
\end{tabular}
}

\end{table}

Table (\ref{tab:dRMSE_pricing_longer}) reports the results for the longer prediction horizons. At this time, all the coefficients of AvgVIX for NN+Bnd, and SKINNs are significantly negative at 1\% level. This aligns with our finding from Table (\ref{tab:oos_pricing_dm_longer}) that SKINNs, including NN+Bnd, are capable of improving the out-of-sample pricing accuracy, compared with NN. The negative coefficients indicate that NN+Bnd and SKINNs provide marginal improvements against NN when the market is more volatile. Such countercyclical option pricing performance is economically desirable, as volatile market conditions, associated with noisier option price patterns, usually deteriorate plain data-driven as well as structural option pricing models. Structural models can not provide significantly lower out-of-sample pricing errors, except that the HSV model offers limited improvement. Though statistically significant, the magnitude of the marginal improvement during high volatility market conditions provided by NN+Bnd is small as well, which is around 0.03. Meanwhile, the marginal improvement magnitude of all SKINN variants is 3--4 times larger than that of NN+Bnd. Interestingly, for SKINN+AE-BSM, it does not statistically outperform the structural models in the longer prediction horizons, according to the Diebold-Mariano test; however, according to the regression, it offers significantly larger marginal improvement than the structural models.

We then perform zoom-in analysis for the marginal improvements of models against NN in the longer prediction horizons, where the option price patterns shift significantly from the patterns in the model estimation periods. We inspect whether the magnitude of the marginal option pricing improvement of SKINNs in volatile market conditions differs by the level of volatility itself. We divide the 317 longer horizon testing periods into three groups: low volatility periods, medium volatility periods, and high volatility periods.\footnote{Low volatility periods, medium volatility periods, and high volatility periods contain the periods for which the AvgVIX is below the 20\% percentile, from the 20\% to the 80\% percentile, and greater than the 80\% percentile of the AvgVIX over all periods, respectively.}

Table (\ref{tab:dRMSE_pricing_longer_lowvol}) reports the regression results for the low volatility periods of the longer prediction horizons. In these periods, all models can not be statistically differentiated from NN, as all coefficients of AvgVIX are statistically insignificant. Except that SKINN+DSNN-HSV displays a significantly large improvement, which is driven by outliers in this small sample. This is expected, since the option price patterns from low volatility market conditions are relatively easier to learn, due to less mispricing and less noise in prices.

\begin{table}[htbp]

\caption{We run the regression (\ref{reg_avgvix}) for each model. A statistically significant negative $\beta_{1}$ implies that the model performs better compared with a plain vanilla neural network when the market is more volatile. The results are based on the 64 low volatility testing periods in the longer prediction horizons, where $\text{AvgVIX}(t)$ is below the 20\% quantile of AvgVIX across all the longer horizon testing periods.}
\label{tab:dRMSE_pricing_longer_lowvol}

\resizebox{\textwidth}{!}{
\begin{tabular}{@{}lcccccccc@{}}
\toprule

\textbf{Model} & \multirow{2}{*}{BSM} & \multirow{2}{*}{ABSM} & \multirow{2}{*}{HSV} & \multirow{2}{*}{HSVJ} & NN & -- & -- & -- \\
\textbf{Panel (A)} & & & & & +Bnd & & & \\
\midrule
$\beta_{1}$ & -0.1979 & -0.1172 & -0.1354 & -0.1288 & -0.0346 & -- & -- & -- \\
 & (0.1657) & (0.1644) & (0.1646) & (0.1671) & (0.1192) & -- & -- & -- \\
$\beta_{0}$ & 0.0230 & 0.0125 & 0.0167 & 0.0119 & 0.0024 & -- & -- & -- \\
 & (0.0205) & (0.0204) & (0.0204) & (0.0207) & (0.0148) & -- & -- & -- \\

\midrule
Obs & 64 & 64 & 64 & 64 & 64 & -- & -- & -- \\
Adj. R$^{2}$ & 0.0067 & -0.0079 & -0.0052 & -0.0065 & -0.0148 & -- & -- & -- \\

\midrule

\textbf{Model} & SKINN & SKINN & SKINN & SKINN & SKINN & SKINN & SKINN & SKINN \\
\textbf{Panel (B)} & +BSM & +ABSM & +HSV & +HSVJ & +DSNN-HSV & +DSNN-NASV & +MOPA & +AE-BSM \\
\midrule

$\beta_{1}$ & -0.2303 & -0.1690 & -0.0628 & 0.0021 & -0.6827\sym{***} & -0.1875 & -0.1741 & -0.1343 \\
 & (0.1504) & (0.1528) & (0.1614) & (0.1836) & (0.1828) & (0.1524) & (0.1270) & (0.1275) \\
$\beta_{0}$ & 0.0252 & 0.0162 & 0.0037 & -0.0035 & 0.0912\sym{***} & 0.0221 & 0.0175 & 0.0167 \\
 & (0.0186) & (0.0189) & (0.0200) & (0.0227) & (0.0226) & (0.0189) & (0.0157) & (0.0158) \\

\midrule
Obs & 64 & 64 & 64 & 64 & 64 & 64 & 64 & 64 \\
Adj. R$^{2}$ & 0.0209 & 0.0035 & -0.0137 & -0.0161 & 0.1705 & 0.0081 & 0.0138 & 0.0018 \\

\bottomrule
\end{tabular}
}

\end{table}

Table (\ref{tab:dRMSE_pricing_longer_mediumvol}) reports the regression results for the medium volatility market conditions. SKINNs, together with NN+Bnd, provide significant marginal option pricing improvement against NN when the AvgVIX increases, across the medium volatility testing periods. Though statistically significant, the marginal improvement magnitude of NN+Bnd is much smaller than that of SKINNs. SKINN+DSNN-NASV, in this case, provides the largest marginal improvement in magnitude for the NN-based models.

\begin{table}[htbp]

\caption{We run the regression (\ref{reg_avgvix}) for each model. A statistically significant negative $\beta_{1}$ implies that the model performs better compared with a plain vanilla neural network when the market is more volatile. The results are based on the 189 medium volatility testing periods in the longer prediction horizons, where $\text{AvgVIX}(t)$ is between the 20\% quantile and the 80\% quantile of AvgVIX across all the longer horizon testing periods.}
\label{tab:dRMSE_pricing_longer_mediumvol}

\resizebox{\textwidth}{!}{
\begin{tabular}{@{}lcccccccc@{}}
\toprule

\textbf{Model} & \multirow{2}{*}{BSM} & \multirow{2}{*}{ABSM} & \multirow{2}{*}{HSV} & \multirow{2}{*}{HSVJ} & NN & -- & -- & -- \\
\textbf{Panel (A)} & & & & & +Bnd & & & \\
\midrule
$\beta_{1}$ & -0.0708 & -0.0630 & -0.1299\sym{**} & -0.0974\sym{*} & -0.0686\sym{**} & -- & -- & -- \\
 & (0.0525) & (0.0525) & (0.0524) & (0.0532) & (0.0275) & -- & -- & -- \\
$\beta_{0}$ & 0.0086 & 0.0075 & 0.0202\sym{**} & 0.0104 & 0.0098\sym{*} & -- & -- & -- \\
 & (0.0102) & (0.0103) & (0.0102) & (0.0104) & (0.0054) & -- & -- & -- \\

\midrule
Obs & 189 & 189 & 189 & 189 & 189 & -- & -- & -- \\
Adj. R\sym{2} & 0.0043 & 0.0023 & 0.0266 & 0.0124 & 0.0270 & -- & -- & -- \\

\midrule

\textbf{Model} & SKINN & SKINN & SKINN & SKINN & SKINN & SKINN & SKINN & SKINN \\
\textbf{Panel (B)} & +BSM & +ABSM & +HSV & +HSVJ & +DSNN-HSV & +DSNN-NASV & +MOPA & +AE-BSM \\
\midrule

$\beta_{1}$ & -0.1027\sym{**} & -0.0992\sym{**} & -0.1062\sym{**} & -0.1114\sym{**} & -0.1295\sym{***} & -0.1303\sym{***} & -0.0758\sym{*} & -0.1298\sym{***} \\
 & (0.0471) & (0.0464) & (0.0473) & (0.0458) & (0.0449) & (0.0466) & (0.0420) & (0.0323) \\
$\beta_{0}$ & 0.0130 & 0.0120 & 0.0126 & 0.0131 & 0.0202\sym{**} & 0.0188\sym{**} & 0.0080 & 0.0222\sym{***} \\
 & (0.0092) & (0.0090) & (0.0092) & (0.0089) & (0.0088) & (0.0091) & (0.0082) & (0.0063) \\

\midrule
Obs & 189 & 189 & 189 & 189 & 189 & 189 & 189 & 189 \\
Adj. R\sym{2} & 0.0195 & 0.0187 & 0.0211 & 0.0255 & 0.0374 & 0.0350 & 0.0119 & 0.0747 \\

\bottomrule
\end{tabular}
}

\end{table}

Table (\ref{tab:dRMSE_pricing_longer_highvol}) reports the regression results for the high volatility market conditions. Within these most volatile market periods, only SKINNs still survive to provide statistically significant marginal option pricing improvements against NN when the AvgVIX increases, at 10\% level (SKINN+BSM, SKINN+ABSM, and SKINN+DSNN-NASV); at 5\% level (SKINN+HSV, SKINN+HSVJ, as well as SKINN+DSNN-HSV, and SKINN+MOPA); and at 1\% level (SKINN+AE-BSM). More importantly, the marginal improvements offered by SKINNs in this case are the largest among all volatility groups. SKINN+AE-BSM provides the greatest marginal improvement for the option pricing performance when the market volatility is even higher in an already volatile condition. 

\begin{table}[htbp]

\caption{We run the regression (\ref{reg_avgvix}) for each model. A statistically significant negative $\beta_{1}$ implies that the model performs better compared with a plain vanilla neural network when the market is more volatile. The results are based on the 64 high volatility testing periods in the longer prediction horizons, where $\text{AvgVIX}(t)$ is above the 80\% quantile of AvgVIX across all the longer horizon testing periods.}
\label{tab:dRMSE_pricing_longer_highvol}

\resizebox{\textwidth}{!}{
\begin{tabular}{@{}lcccccccc@{}}
\toprule

\textbf{Model} & \multirow{2}{*}{BSM} & \multirow{2}{*}{ABSM} & \multirow{2}{*}{HSV} & \multirow{2}{*}{HSVJ} & NN & -- & -- & -- \\
\textbf{Panel (A)} & & & & & +Bnd & & & \\
\midrule
$\beta_{1}$ & -0.0364 & -0.0416 & -0.0345 & -0.0272 & -0.0466 & -- & -- & -- \\
 & (0.0783) & (0.0787) & (0.0788) & (0.0791) & (0.0287) & -- & -- & -- \\
$\beta_{0}$ & 0.0051 & 0.0077 & 0.0025 & -0.0013 & 0.0094 & -- & -- & -- \\
 & (0.0262) & (0.0264) & (0.0264) & (0.0265) & (0.0096) & -- & -- & -- \\

\midrule
Obs & 64 & 64 & 64 & 64 & 64 & -- & -- & -- \\
Adj. R\sym{2} & -0.0126 & -0.0116 & -0.0130 & -0.0142 & 0.0255 & -- & -- & -- \\

\midrule

\textbf{Model} & SKINN & SKINN & SKINN & SKINN & SKINN & SKINN & SKINN & SKINN \\
\textbf{Panel (B)} & +BSM & +ABSM & +HSV & +HSVJ & +DSNN-HSV & +DSNN-NASV & +MOPA & +AE-BSM \\
\midrule

$\beta_{1}$ & -0.1464\sym{*} & -0.1372\sym{*} & -0.1516\sym{**} & -0.1474\sym{**} & -0.1555\sym{**} & -0.1373\sym{*} & -0.1526\sym{**} & -0.1825\sym{***} \\
 & (0.0780) & (0.0761) & (0.0739) & (0.0724) & (0.0720) & (0.0767) & (0.0660) & (0.0453) \\
$\beta_{0}$ & 0.0282 & 0.0270 & 0.0308 & 0.0295 & 0.0307 & 0.0263 & 0.0315 & 0.0478\sym{***} \\
 & (0.0261) & (0.0255) & (0.0248) & (0.0243) & (0.0241) & (0.0257) & (0.0221) & (0.0152) \\

\midrule
Obs & 64 & 64 & 64 & 64 & 64 & 64 & 64 & 64 \\
Adj. R\sym{2} & 0.0385 & 0.0344 & 0.0484 & 0.0475 & 0.0549 & 0.0337 & 0.0645 & 0.1946 \\

\bottomrule
\end{tabular}
}

\end{table}

\subsection{The Estimator Side of SKINNs}

The SKINNs framework not only regularizes the NN component but also functions as an estimator for the latent parameters in complicated and high-dimensional models, converging to a GMM estimator under certain conditions. In this section, we proceed to examine the econometric estimator side of SKINNs. Different from the sequential approach adopted by \cite{chen2023teaching} in transfer learning models, which requires a separate non-convex and non-linear calibration, SKINNs learn $\phi$ simultaneously with the NN component, during the same online training process. We demonstrate that the learned $\phi$ by SKINNs possesses strong economic interpretations, rather than uninterpretable nuisance parameters, particularly in high-dimensional settings.

\subsubsection{The Accuracy of the SKINN-learned Structural Parameters}
Figure (\ref{fig:SKINNs_structural_g_phi}) illustrates the time-series of the learned latent structural parameters from the SKINNs embedded with PSKRs, over the 317 model training periods. We compare the SKINN-learned $\phi$ with the conventionally calibrated ones. The dynamics of most of the SKINN-learned latent structural parameters evolve with obvious economic regimes, signalling that those quantities carry economic information. For the BSM case, the SKINN-learned $\sigma$ closely tracks the calibrated implied volatility. For more sophisticated higher-dimensional cases, the SKINN-learned $\phi$ can diverge from the conventionally calibrated counterpart. In general, the SKINN-learned $\phi$ shows smoother evolution over time. 

The latent structural parameter vector learned from SKINN+MOPA, which uses an NPSKR of an unknown distribution, is very high-dimensional (2,000-dimensional). It is impossible to visualize the evolution of such an ultra-high-dimensional vector. As these parameters can be interpreted as the risk-neutral probabilities, we instead visualize the distributions that consist of these learned probabilities. Figure (\ref{fig:SKINN_MOPA_example}) is an example of the learned distributions from a selected training period.

According to \cite{chen2021deep}, if one option pricing model is well-specified, then we should expect small variations in the model parameters across time. We calculate the $l_{2}$-norm of the difference between the latent parameters $\phi_{t+1}$ and $\phi_{t}$, $1\leq t\leq 316$, from two consecutive periods. Figure (\ref{fig:hidden_econ_states_time_variation_distribution}) shows the histograms of the periodic parameter variations (the $l_{2}$-norms) across our 317 training periods. For the more sophisticated specifications, e.g., the HSV and the HSVJ ones, the SKINN-learned $\phi$ presents smaller variations than the counterparts that are directly calibrated from market prices. This confirms the improved stability of the SKINN-learned structural parameters.

\subsubsection{The Economic Inference of the SKINN-learned Structural Parameters}
The latent structural parameters learned with SKINNs are expected to carry some economic information if they are not nuisance parameters. We examine this by regressing the AvgVIX over the shorter prediction horizons against the calibrated $\phi$ and the SKINN-learned $\phi$ separately, from all training periods. Specifically, we run the regression:
\begin{equation}
    \text{AvgVIX}_{t+1} = \beta_{0} + \beta_{1}\phi^{(d)}_{t} + \varepsilon, \;\;\; 1\leq t\leq317,\;d\in\{1, 2, 3, \cdots d_{\phi}\},
\end{equation}
We only include one latent parameter at a time to examine its individual predictive power for the market volatility in the subsequent months. 

Table (\ref{tab:sturctural_model_phi_vix}) reports the results of such univariate regressions by using the calibrated $\phi$ from structural models. Table (\ref{tab:parametric_skinns_phi_vix}) reports the results for the learned $\phi$ from SKINNs. Most of the SKINN-learned latent parameters carry significant predictive information for the market volatility (AvgVIX) in the subsequent months, as most of their regression coefficients are significantly non-zero. Additionally, for more than half of the parameters, the SKINN-learned one has a relatively higher adjusted R\sym{2} than the calibrated one.

\begin{table}[htbp]
    \centering
    \caption{The predictive information contained in each of the parameters calibrated according to the standard procedure, for the lagged averaged AvgVIX over one month. $\{\sigma\}$ is from the BSM model, $\{a_{1}, a_{2}, a_{3}, a_{4}, a_{5}, a_{6}\}$ is from the ABSM model, $\{\bar{v}, v_{0}, \sigma_{v}, \rho, \kappa\}$ is from the HSV model, and $\{\bar{v}, v_{0}, \sigma_{v}, \rho, \kappa, \lambda, \eta_{0}, \eta_{1}, p_{up}\}$ is from the HSVJ model.}
    \label{tab:sturctural_model_phi_vix}

    \resizebox{\textwidth}{!}{

    \begin{tabular}{@{}ccccccccccccc@{}}
    \toprule
    \textbf{BSM} & $\sigma$ & -- & -- & -- & -- & --  \\

    \midrule
    $\beta_{1}$ & 1.1181*** & -- & -- & -- & -- & -- \\
           & (18.781) & -- & -- & -- & -- & -- \\

    $\beta_{0}$ & 0.0255** & -- & -- & -- & -- & -- \\
                 & (2.543) & -- & -- & -- & -- & -- \\
    \midrule
    Obs & 317 & -- & -- & -- & -- & -- \\
    Adj.\ $R^{2}$ & 52.7\% & -- & -- & -- & -- & -- \\

    \midrule

    \textbf{ABSM} & $a_{1}$ & $a_{2}$ & $a_{3}$ & $a_{4}$ & $a_{5}$ & $a_{6}$ \\
    \midrule

    $\beta_{1}$ & 0.7555\sym{***} & -0.1997 & -0.2349 & -1.4114\sym{***} & -2.4120\sym{***} & -3.1940\sym{***} \\
 & (0.0521) & (0.2302) & (0.1733) & (0.3630) & (0.9176) & (0.9057) \\
$\beta_{0}$ & 0.0944\sym{***} & 0.2053\sym{***} & 0.2061\sym{***} & 0.2159\sym{***} & 0.2084\sym{***} & 0.2090\sym{***} \\
 & (0.0084) & (0.0046) & (0.0046) & (0.0053) & (0.0047) & (0.0046) \\
    
    \midrule
    Obs & 317 & 317 & 317 & 317 & 317 & 317 \\
    Adj. R\sym{2} & 39.85\% & -0.08\% & 0.26\% & 4.28\% & 1.84\% & 3.49\% \\

    \midrule
    \textbf{HSV} & $\bar{v}$ & $v_{0}$ & $\sigma_{v}$ & $\rho$ & $\kappa$ & --\\
    \midrule

    $\beta_{1}$ & -0.0106 & 1.3240*** & -0.0024*** & -0.0412*** & 0.0022 & --\\
              & (-0.250) & (13.915) & (-3.321) & (-3.038) & (1.314) & --\\

    $\beta_{0}$ & 0.2063*** & 0.1502*** & 0.2156*** & 0.1826*** & 0.2000*** & --\\
    & (26.263) & (28.394) & (39.002) & (21.413) & (34.945) & --\\
    \midrule
    Obs & 317 & 317 & 317 & 317 & 317 & --\\
    Adj. R\sym{2} & -0.3\% & 37.9\% & 3.1\% & 2.5\% & 0.2\% & --\\

    \midrule
    \textbf{HSVJ} & $\bar{v}$ & $v_{0}$ & $\sigma_{v}$ & $\rho$ & $\kappa$ & $\lambda$ \\
    \midrule

    $\beta_{1}$ & 1.1575\sym{***} & 1.7827\sym{***} & 0.0546\sym{***} & 0.1118\sym{***} & -0.0022\sym{**} & 0.0257\sym{***} \\
     & (0.2342) & (0.1088) & (0.0077) & (0.0224) & (0.0009) & (0.0044) \\
    $\beta_{0}$ & 0.1737\sym{***} & 0.1502\sym{***} & 0.1784\sym{***} & 0.2933\sym{***} & 0.2183\sym{***} & 0.1967\sym{***}\\
     & (0.0076) & (0.0047) & (0.0056) & (0.0183) & (0.0069) & (0.0045) \\
    
    \midrule
    Obs & 317 & 317 & 317 & 317 & 317 & 317\\
    Adj. R\sym{2} & 6.90\% & 45.83\% & 13.38\% & 7.03\% & 1.75\% & 9.51\% \\

    \midrule
    \textbf{HSVJ Cont'd} & $\eta_{0}$ & $\eta_{1}$ & $p_{up}$ & -- & -- & --\\
    \midrule

    $\beta_{1}$ & -0.0001 & -0.0000 & -0.0099 & -- & -- & --\\
     & (0.0001) & (0.0003) & (0.0119) & -- & -- & -- \\
    $\beta_{0}$ & 0.2053\sym{***} & 0.2049\sym{***} & 0.2076\sym{***} & -- & -- & --\\
     & (0.0046) & (0.0049) & (0.0057) & -- & -- & --\\
    
    \midrule
    Obs &317 & 317 & 317 & -- & -- & --\\
    Adj. R\sym{2} & -0.16\% & -0.31\% & -0.10\% & -- & -- & --\\
        
    \bottomrule
    \end{tabular}
    }

\end{table}

\begin{table}[htbp]
    \centering
    \caption{The predictive information contained in each of the parameters learned from the SKINNs, for the lagged averaged AvgVIX over one month. $\{\sigma^{*}\}$ is from $g_{\phi}^{\text{BSM}}$, $\{a_{1}^{*}, a_{2}^{*}, a_{3}^{*}, a_{4}^{*}, a_{5}^{*}, a_{6}^{*}\}$ is from $g_{\phi}^{\text{ABSM}}$, $\{\bar{v}^{*}, v_{0}^{*}, \sigma_{v}^{*}, \rho^{*}, \kappa^{*}\}$ is from $g_{\phi}^{\text{HSV}}$, and $\{\bar{v}^{*}, v_{0}^{*}, \sigma_{v}^{*}, \rho^{*}, \kappa^{*}, \lambda^{*}, \eta_{0}^{*}, \eta_{1}^{*}, p_{up}^{*}\}$ is from $g_{\phi}^{\text{HSVJ}}$.}
    \label{tab:parametric_skinns_phi_vix}

    \resizebox{\textwidth}{!}{

    \begin{tabular}{@{}ccccccccccccc@{}}
    \toprule
    
    \textbf{BSM} & $\sigma^{*}$ & -- & -- & -- & -- & -- \\
    \midrule

    $\beta_{1}$ & 1.0896*** & -- & -- & -- & -- & --\\
                & (17.329) & -- & -- & -- & -- & -- \\

    $\beta_{0}$ & 0.0323*** & -- & -- & -- & -- & -- \\
                & (3.087) & -- & -- & -- & -- & -- \\
    \midrule
    Obs & 317 & -- & -- & -- & -- & -- \\
    Adj.\ $R^{2}$ & 48.6\% & -- & -- & -- & -- & -- \\

    \midrule

    \textbf{ABSM} & $a^{*}_{1}$ & $a^{*}_{2}$ & $a^{*}_{3}$ & $a^{*}_{4}$ & $a^{*}_{5}$ & $a^{*}_{6}$ \\
    \midrule

    $\beta_{1}$ & 1.0456*** & -13.3473* & -25.9413* & -1.8508*** & -3.0981 & -32.0793** \\ 
                & (17.481) & (-1.656) & (-1.916) & (-5.932) & (-1.453) & (-2.572) \\

    $\beta_{0}$ & 0.0458*** & 0.2109*** & 0.2096*** & 0.2256*** & 0.2071*** & 0.2108*** \\
                & (4.750)   & (36.071)  & (40.597)  & (40.700)  & (43.138)  & (41.685) \\
    \midrule
    Obs & 317 & 317 & 317 & 317 & 317 & 317 \\
    Adj.\ $R^{2}$ & 49.1\% & 0.5\% & 0.8\% & 9.8\% & 0.4\% & 1.7\% \\

    \midrule
    \textbf{HSV} & $\bar{v}^{*}$ & $v_{0}^{*}$ & $\sigma_{v}^{*}$ & $\rho^{*}$ & $\kappa^{*}$ & -- \\
    \midrule

    $\beta_{1}$ & 0.1047 & 1.8552*** & 0.1744*** & 0.0424* & 0.0441*** & --\\
                & (1.578) & (15.838) & (10.453) & (1.881) & (3.878)& -- \\

    $\beta_{0}$ & 0.1938*** & 0.1478*** & 0.1510*** & 0.2348*** & 0.1889*** & --\\
                & (23.568)  & (30.009)  & (23.446)  & (14.112)  & (31.475) & --\\
    \midrule
    Obs & 317 & 317 & 317 & 317 & 317 & --\\
    Adj.\ $R^{2}$ & 0.5\% & 44.2\% & 25.5\% & 0.8\% & 4.3\% & --\\

    \midrule
    \textbf{HSVJ} & $\bar{v}^{*}$ & $v_{0}^{*}$ & $\sigma_{v}^{*}$ & $\rho^{*}$ & $\kappa^{*}$ & $\lambda^{*}$ \\
    \midrule

    $\beta_{1}$ & 0.1999\sym{***} & 1.8239\sym{***} & 0.1990\sym{***} & 0.0323\sym{*} & 0.0356\sym{**} & 0.4998 \\
     & (0.0585) & (0.1151) & (0.0144) & (0.0183) & (0.0151) & (0.4753) \\
    $\beta_{0}$ & 0.1793\sym{***} & 0.1491\sym{***} & 0.1452\sym{***} & 0.2261\sym{***} & 0.1963\sym{***} & 0.2022\sym{***} \\
     & (0.0086) & (0.0049) & (0.0056) & (0.0130) & (0.0057) & (0.0051) \\
    
    \midrule
    Obs & 317 & 317 & 317 & 317 & 317 & 317 \\
    Adj. R\sym{2} & 3.27\% & 44.19\% & 37.57\% & 0.66\% & 1.43\% & 0.03\% \\

    \midrule
    \textbf{HSVJ Cont'd} & $\eta_{0}^{*}$ & $\eta_{1}^{*}$ & $p_{up}^{*}$ & -- & -- & --\\
    \midrule

    $\beta_{1}$ & -0.0701 & 0.0274\sym{***} & -0.1316\sym{***} & -- & -- & --\\
     & (0.0469) & (0.0067) & (0.0234) & -- & -- & --\\
    $\beta_{0}$ & 0.2195\sym{***} & 0.1803\sym{***} & 0.2374\sym{***} & -- & -- & --\\
     & (0.0109) & (0.0074) & (0.0072) & -- & -- & -- \\
    
    \midrule
    Obs & 317 & 317 & 317 & -- & -- & --\\
    Adj. R\sym{2} & 0.39\% & 4.77\% & 8.83\% & -- & -- & --\\
        
    \bottomrule
    \end{tabular}
    }
\end{table}


For the case of SKINN-MOPA, we translate the 2,000-dimensional learned risk-neutral probabilities to the variances over 10 tenors, then we examine the economic information of the variances over the 10 tenors, via the following regression:
\begin{equation}\label{eq:econ_info_reg_SKINN_MOPA}
    \text{AvgVIX}_{t+1} = \beta_{0} + \beta_{1}\mathbb{V}\text{ar}[\phi^{(T_{k})}_{t}] + \varepsilon, \;\;\; 1\leq t\leq317,\;k\in\{1, 2, 3, \cdots 10\},
\end{equation}
where $T_{k}$ denotes the tenor $k$, and $\phi^{(T_{k})}_{t}$ represents the 200-dimensional learned risk-neutral probabilities for this tenor, for each model training period $t$.

Figure (\ref{fig:SKINN_MOPA_Var_vs_VIX}) shows the translated risk-neutral variances over the 317 model training periods, and the corresponding AvgVIX over the subsequent months. Table (\ref{tab:skinn_mopa_phi_vix}) reports the regression results. Even in this ultra-high-dimensional setting, the learned latent parameters still carry substantial predictive economic information.

\begin{figure}[htbp]
    \centering
    \includegraphics[width=\textwidth]{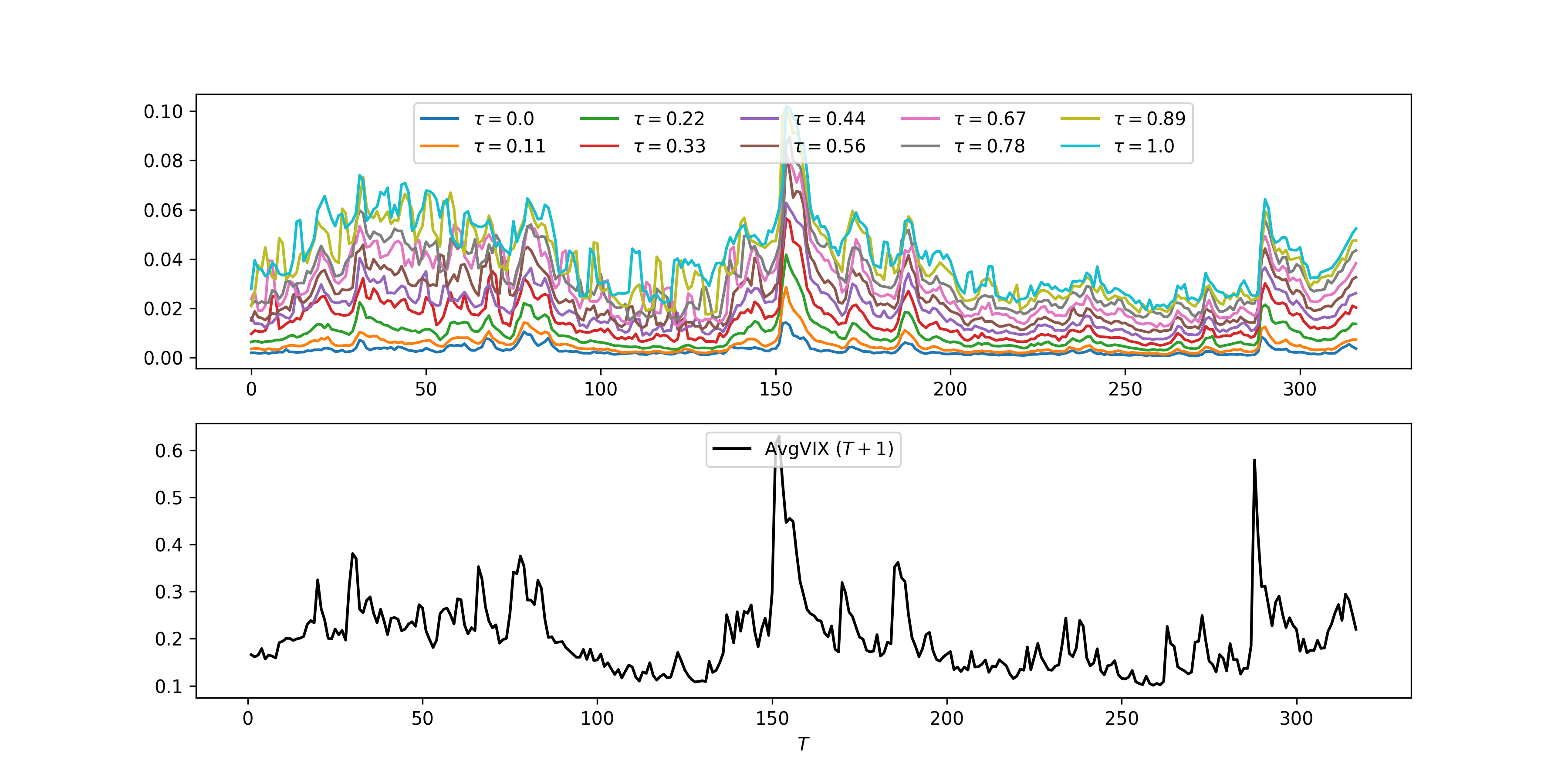}
    \caption{The risk-neutral variances over 10 different tenors. Each risk-neutral variance $\mathbb{V}\text{ar}[\phi^{(T_{k})}_{t}]$ is translated from the learned risk-neutral probabilities by SKINN+MOPA for the $k$-th tenor, $1\leq k\leq10$. The upper panel illustrates the learned risk-neutral variances, and the lower panel illustrates the AvgVIX over the subsequent months after the SKINN+MOPA training periods.}
    \label{fig:SKINN_MOPA_Var_vs_VIX}
\end{figure}
\begin{table}[htbp]

    \centering
    \caption{The predictive information contained in the risk-neural variances converted from the latent parameters (i.e., the 2,000 risk-neutral probabilities over 10 tenors) learned from SKINN+MOPA, for the lagged averaged VIX over one month. $\mathbb{V}\text{ar}[\phi^{(T_{k})}_{t}]$ is the risk-neutral variance converted according to the learned risk-neutral probabilities for the $k$-th tenor.}
    \label{tab:skinn_mopa_phi_vix}

    \resizebox{0.8\textwidth}{!}{

    \begin{tabular}{@{}cccccc@{}}
    \toprule
    \textbf{MOPA} & $\mathbb{V}\text{ar}[\phi^{(T_{1})}]$ & $\mathbb{V}\text{ar}[\phi^{(T_{2})}]$ & $\mathbb{V}\text{ar}[\phi^{(T_{3})}]$ & $\mathbb{V}\text{ar}[\phi^{(T_{4})}]$ & $\mathbb{V}\text{ar}[\phi^{(T_{5})}]$ \\
    \midrule

    $\beta_{1}$ & 26.2949\sym{***} & 16.3890\sym{***} & 10.2331\sym{***} & 6.6412\sym{***} & 5.9800\sym{***} \\
                & (1.6483) & (0.9353) & (0.5712) & (0.4056) & (0.3548)   \\

    $\beta_{0}$ & 0.1292\sym{***} & 0.1217\sym{***} & 0.1098\sym{***} & 0.1041\sym{***} & 0.0884\sym{***} \\
                & (0.0058) & (0.0057) & (0.0062) & (0.0070) & (0.0076) \\
    \midrule
    Obs & 317        & 317        & 317        & 317        & 317        \\
    Adj.\ R\sym{2} & 44.5\%     & 49.2\%     & 50.3\%     & 45.8\%     & 47.3\%     \\

    \midrule

    \textbf{MOPA Cont'd} & $\mathbb{V}\text{ar}[\phi^{(T_{6})}]$ & $\mathbb{V}\text{ar}[\phi^{(T_{7})}]$ & $\mathbb{V}\text{ar}[\phi^{(T_{8})}]$ & $\mathbb{V}\text{ar}[\phi^{(T_{9})}]$ & $\mathbb{V}\text{ar}[\phi^{(T_{10})}]$ \\
    \midrule

    $\beta_{1}$ & 4.7931\sym{***} & 4.0436\sym{***} & 4.1399\sym{***} & 3.4236\sym{***} & 3.3714\sym{***} \\
                & (0.2970) & (0.2599) & (0.2573) & (0.2226) & (0.2269) \\

    $\beta_{0}$ & 0.0889\sym{***} & 0.0849\sym{***} & 0.0694\sym{***} & 0.0720\sym{***} & 0.0629\sym{***} \\
                & (0.0079) & (0.0084) & (0.0090) & (0.0093) & (0.0101) \\
    \midrule
    Obs & 317       & 317        & 317        & 317        & 317        \\
    Adj.\ R\sym{2} & 45.1\%    & 43.3\%     & 44.9\%     & 42.7\%     & 41.0\%     \\
        
    \bottomrule
    \end{tabular}
    }

\end{table}

\begin{figure}[htbp]
    \centering

    \begin{subfigure}[t]{\textwidth}
        \centering
        \includegraphics[width=0.8\textwidth]{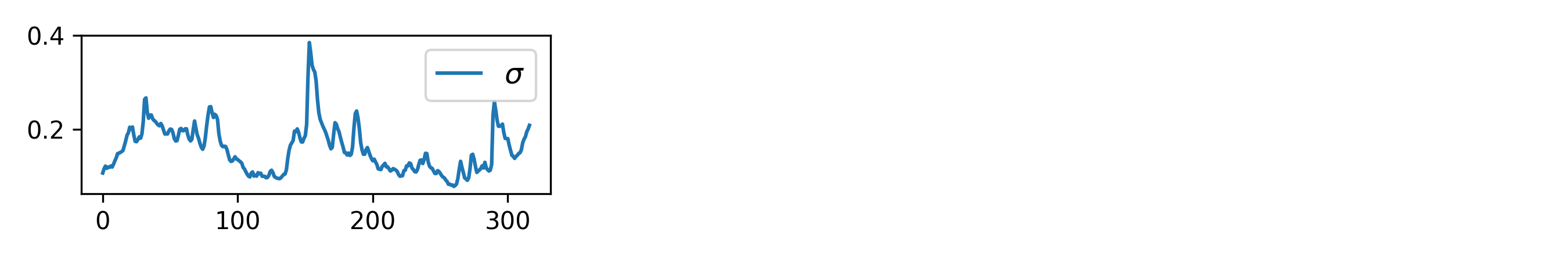}
        \caption{\footnotesize SKINN+BSM}
    \end{subfigure}

    \begin{subfigure}[t]{\textwidth}
        \centering
        \includegraphics[width=0.8\textwidth]{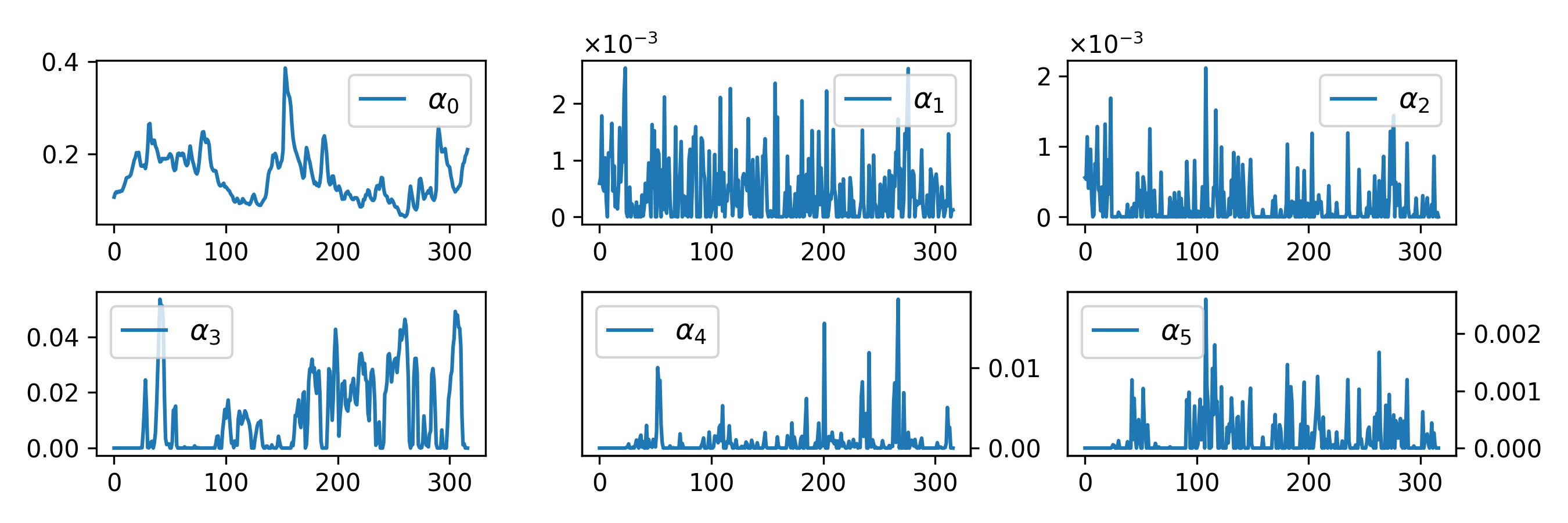}
        \caption{\footnotesize SKINN+ABSM}
    \end{subfigure}

    \begin{subfigure}[t]{\textwidth}\label{fig:skinn_hsv_hes}
        \centering
        \includegraphics[width=0.8\textwidth]{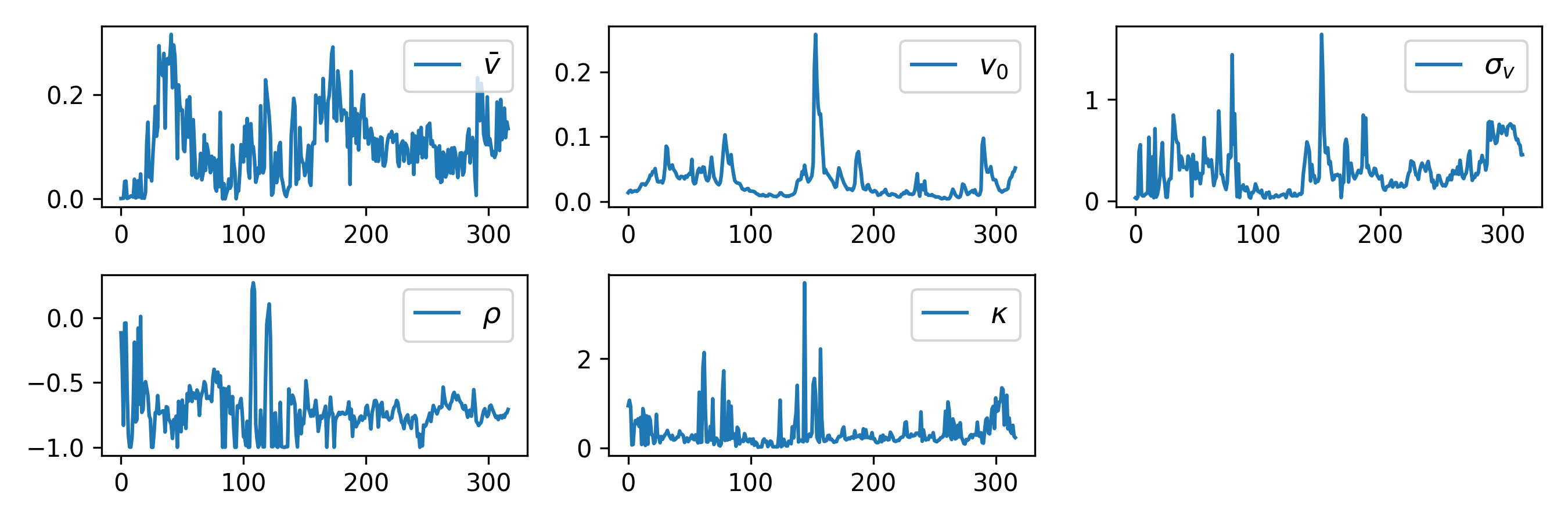}
        \caption{\footnotesize SKINN+HSV}
    \end{subfigure}

    \begin{subfigure}[t]{\textwidth}
        \centering
        \includegraphics[width=0.8\textwidth]{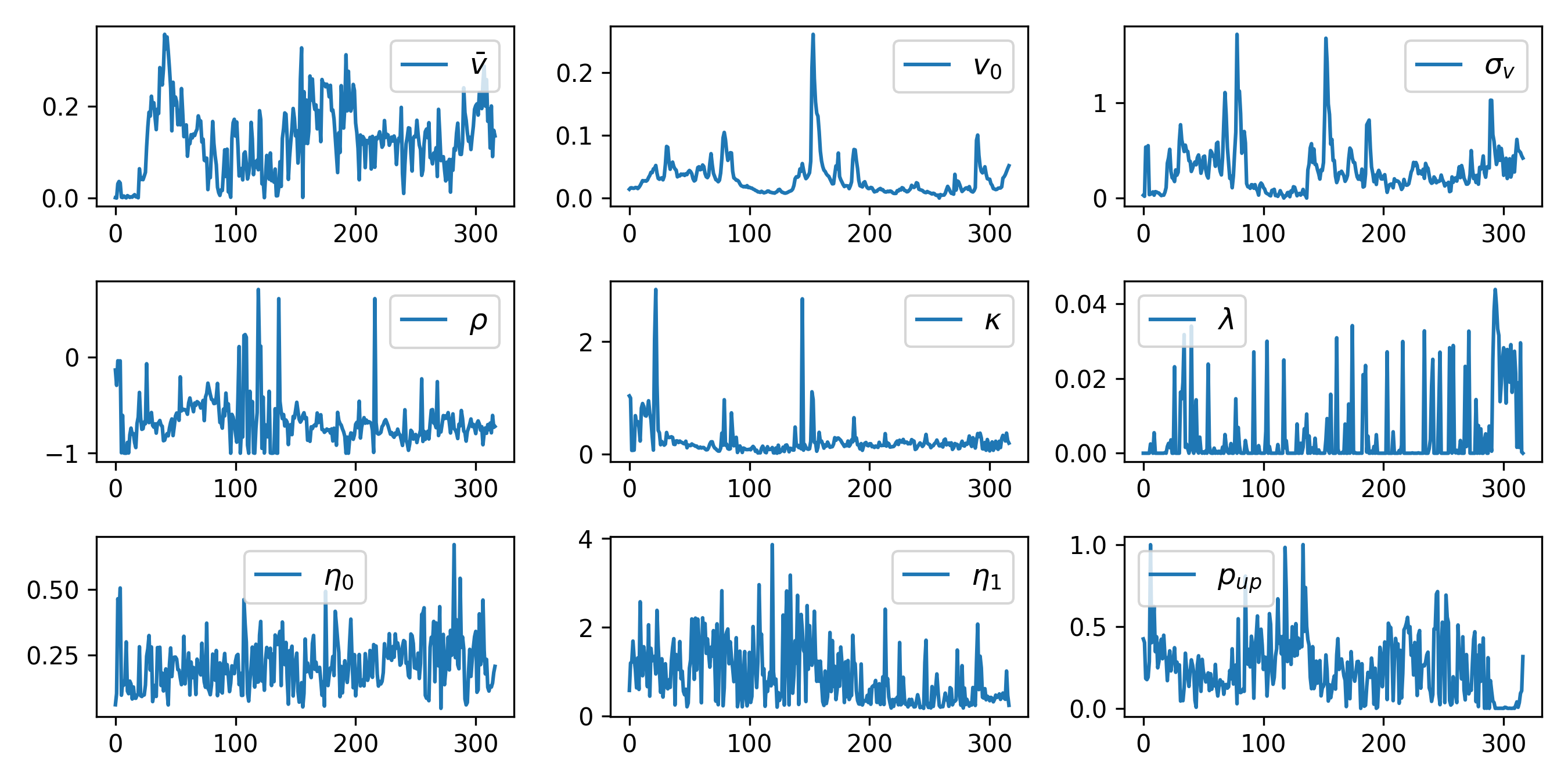}
        \caption{\footnotesize SKINN+HSVJ}
    \end{subfigure}

    \caption{The latent structural parameters learned by SKINNs with PSKRs.}
    \label{fig:SKINNs_structural_g_phi}

\end{figure}


\begin{figure}[htbp]
    \centering
    
    \begin{subfigure}{\textwidth}
        \centering
        \includegraphics[width=\textwidth]{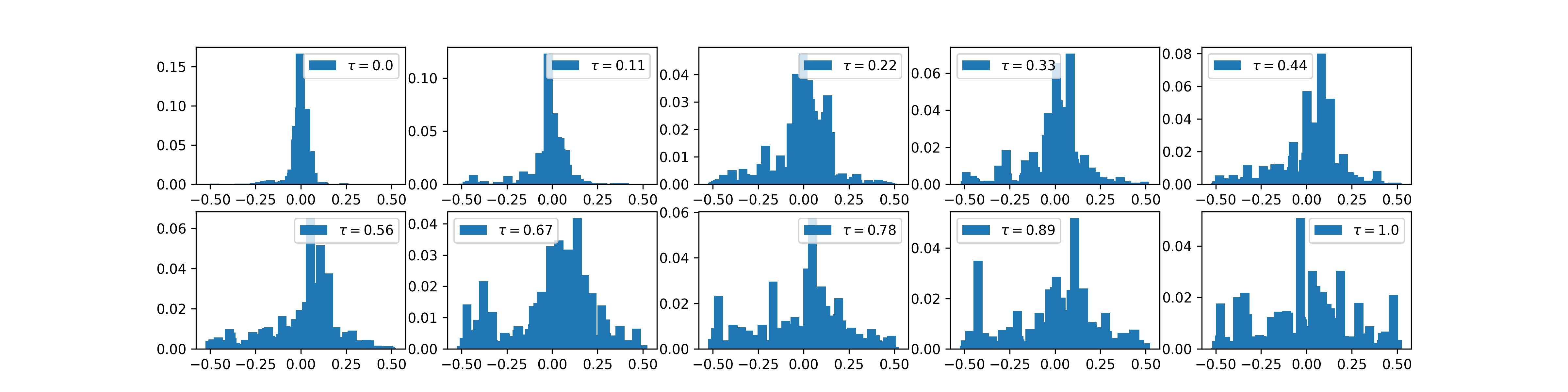}
        \caption{Train period id: 152 (01 Aug 2008 to 31 Oct 2008).}  
    \end{subfigure}\hfill
    
    \begin{subfigure}{\textwidth}
        \centering
        \includegraphics[width=\textwidth]{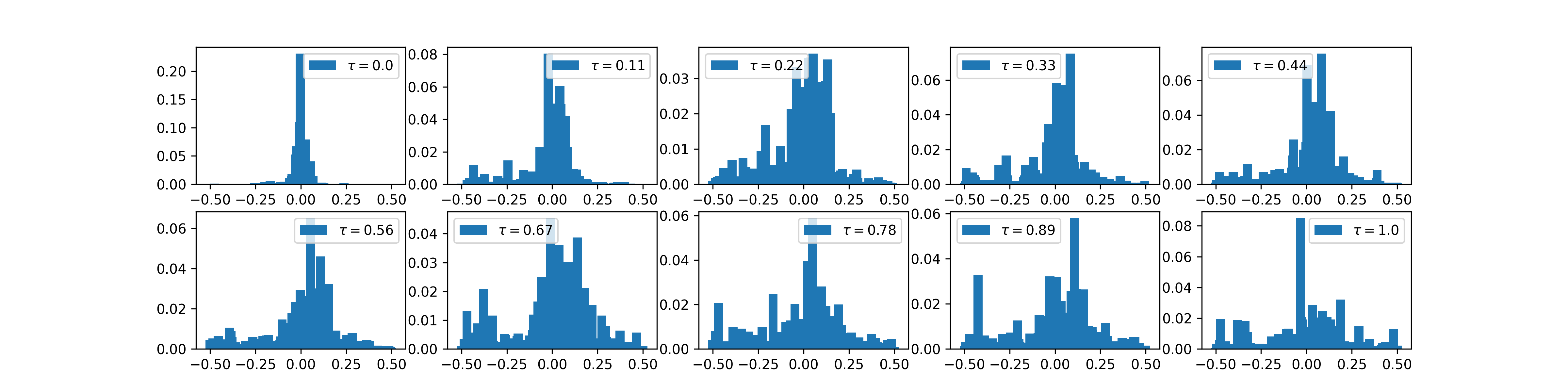}
        \caption{Train period id: 290 (03 Feb 2020 to 30 Apr 2020).}  
    \end{subfigure}
    
    \caption{The latent structural parameters (risk-neutral probabilities) learned by SKINN+MOPA, which uses an NPSKR with an unknown distribution. We select two training periods to demonstrate the learned probabilities, from the 2007-2009 global financial crisis and the COVID-19 crisis, respectively.}
    \label{fig:SKINN_MOPA_example}
\end{figure}

\begin{figure}[htbp]
    \centering
    
    \begin{subfigure}{\textwidth}
        \centering
        \includegraphics[width=0.9\textwidth]{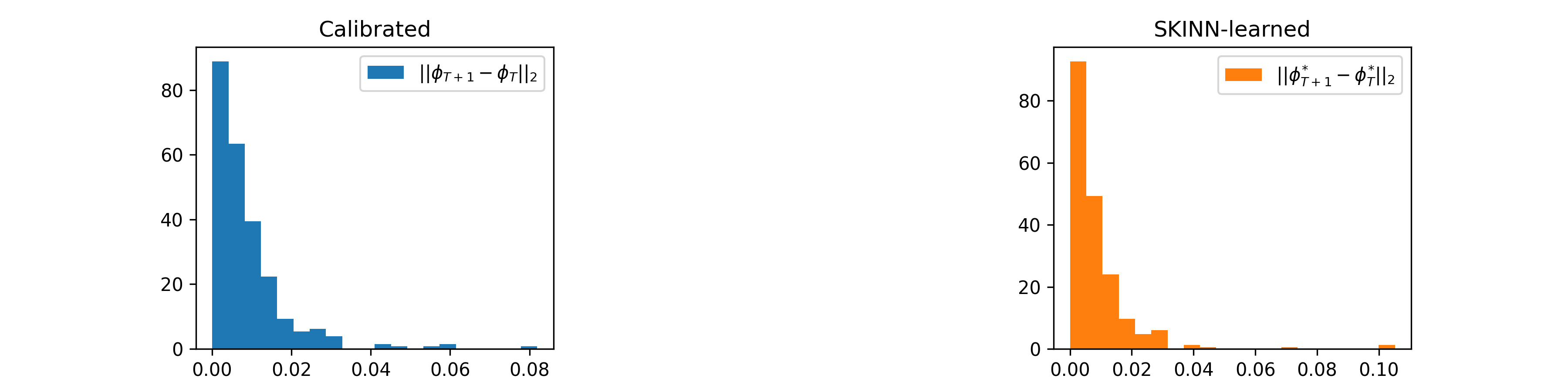}
        \caption{SKINN+BSM}  
    \end{subfigure}
    
    \begin{subfigure}{\textwidth}
        \centering
        \includegraphics[width=0.9\textwidth]{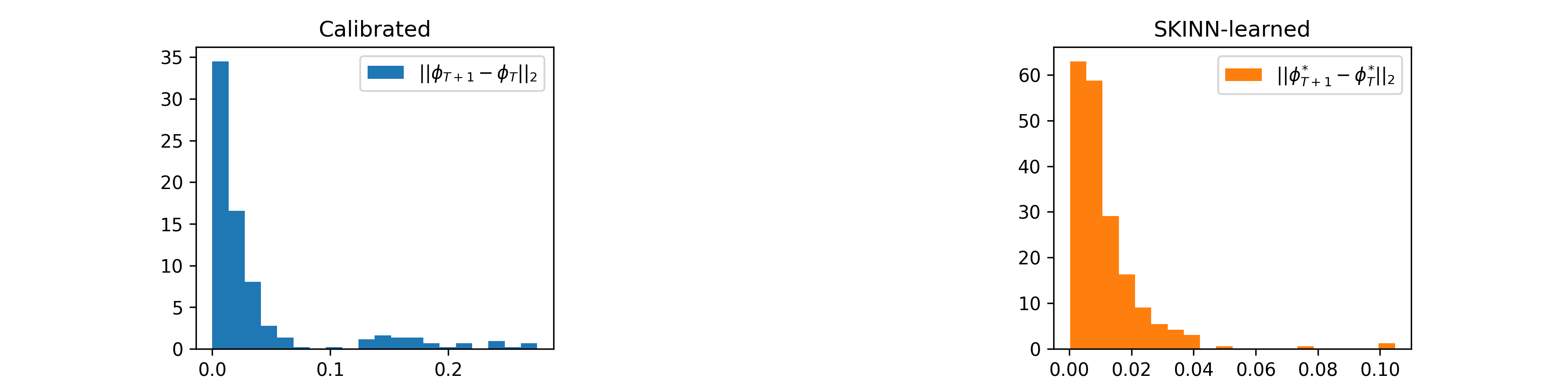}
        \caption{SKINN+ABSM}  
    \end{subfigure}

    \begin{subfigure}{\textwidth}
        \centering
        \includegraphics[width=0.9\textwidth]{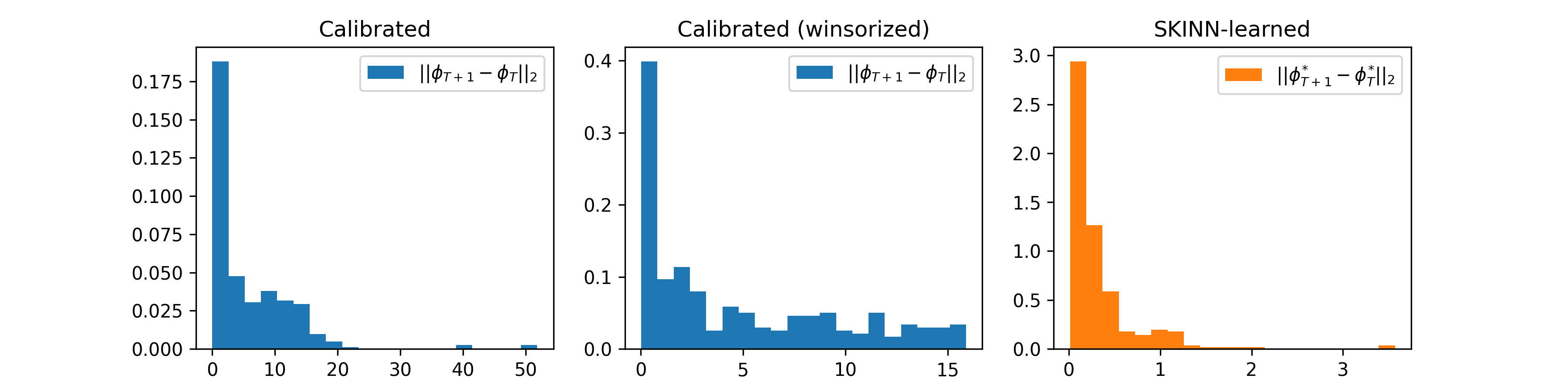}
        \caption{SKINN+HSV}  
    \end{subfigure}

    \begin{subfigure}{\textwidth}
        \centering
        \includegraphics[width=0.9\textwidth]{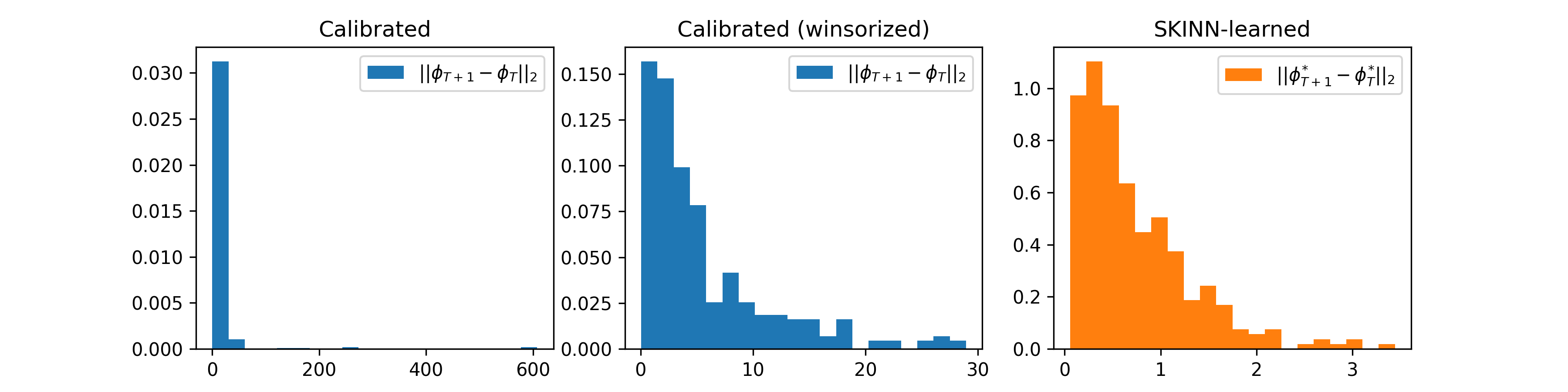}
        \caption{SKINN+HSVJ}  
    \end{subfigure}

    \caption{The histograms of the variation of the SKINN-learned (orange) and the calibrated (blue) latent structural parameters, between two consecutive training periods over time. The SKINN-learned latent structural parameters exhibit a higher density around zero, indicating higher numerical stability compared to those from the conventional calibration.}
    \label{fig:hidden_econ_states_time_variation_distribution}
    
\end{figure}

Lastly, we examine the learned latent parameters from the SKINNs with SPSKRs and the NPSKR with an AE. Figure (\ref{fig:SKINNs_nn_g_phi}) illustrates the evolution of their learned latent parameters over time. Table (\ref{tab:skinn_dsnn_ae_phi_vix}) reports the regression results for these learned parameters. Though the structured-knowledge function in this case is represented by NNs, instead of parametric expressions, our SKINNs framework can still identify the unknown latent parameters successfully. Interestingly, the two DSNNs (DSNN-HSV, DSNN-NASV) and the machine-derived proxy (AE-BSM) are imperfect approximations for the option price DGP, but all the latent parameters learned from them are statistically significant and economically informative, with considerably high adjusted R\sym{2}, which even surpasses the latent parameters from PSKRs.

\begin{figure}[htbp]
    \centering

    \begin{subfigure}[t]{\textwidth}
        \centering
        \includegraphics[width=0.8\textwidth]{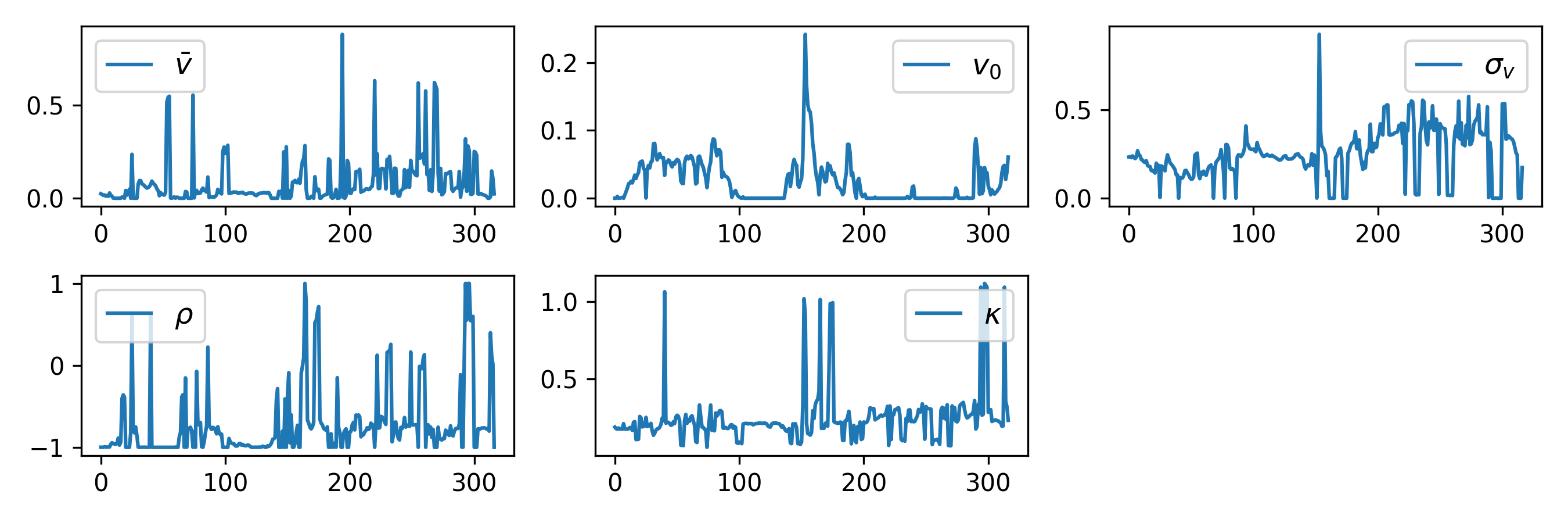}
        \caption{\footnotesize SKINN+DSNN-HSV}
    \end{subfigure}
    \hfill

    \begin{subfigure}[t]{\textwidth}
        \centering
        \includegraphics[width=0.8\textwidth]{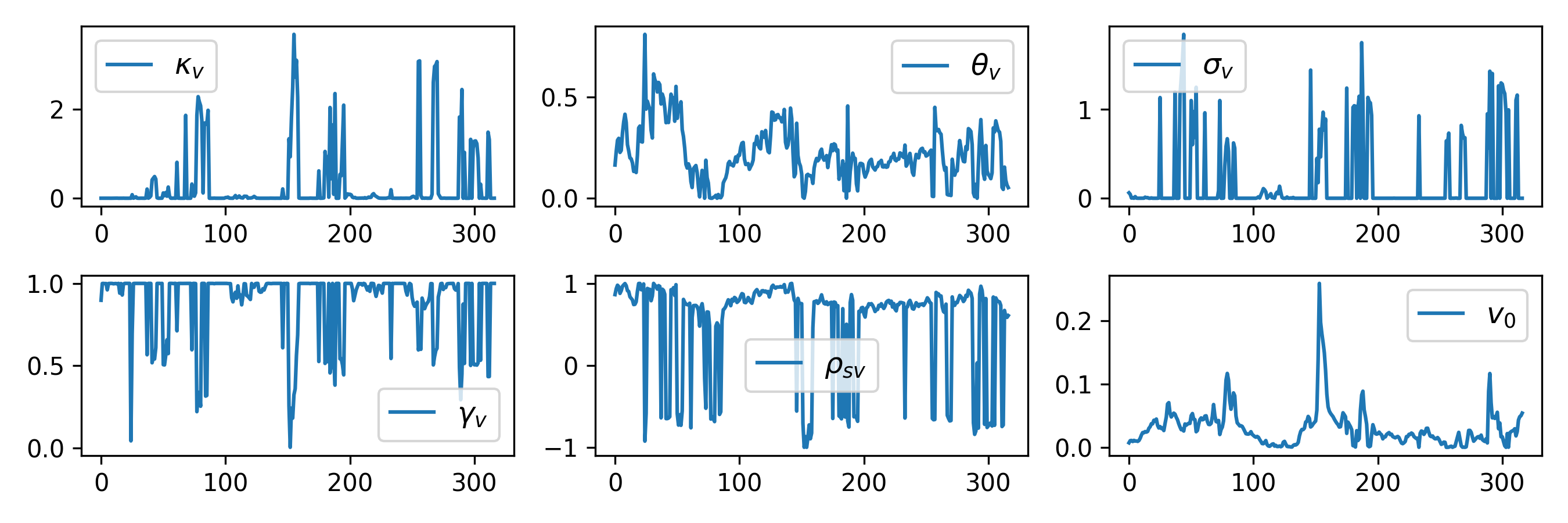}
        \caption{\footnotesize SKINN+DSNN-NASV}
    \end{subfigure}

    \begin{subfigure}[t]{\textwidth}\label{fig:skinn_hsv_hes}
        \centering
        \includegraphics[width=0.8\textwidth]{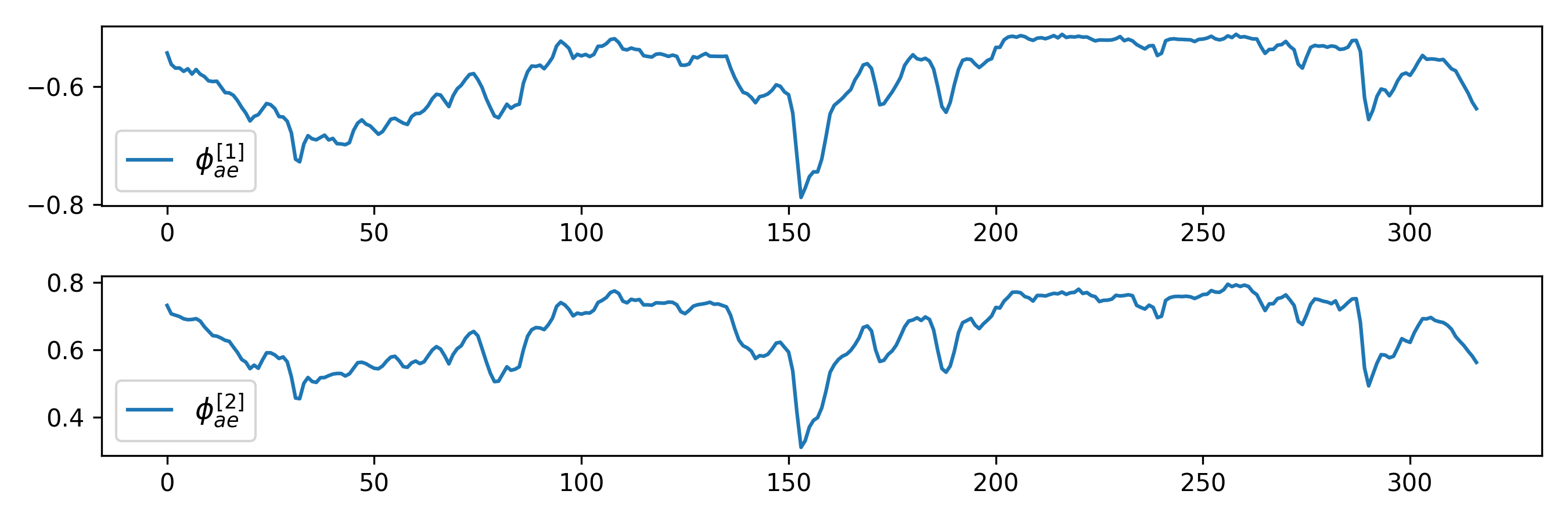}
        \caption{\footnotesize SKINN+AE-BSM}
    \end{subfigure}

    \caption{The latent structural parameters learned from the SKINNs with two SPSKRs and one NPSKR, all of which employ an NN to proxy the structured-knowledge function $g_{\phi}$. Specifically, the SKINNs are SKINN+DSNN-HSV, SKINN+DSNN-HSVJ, and SKINN+AE-BSM.}
    \label{fig:SKINNs_nn_g_phi}

\end{figure}

\begin{table}[p]
    \centering
    \caption{The predictive information contained in the latent parameters learned from SKINN+DSNN-HSV, SKINN+DSNN-NASV, and SKINN+AE-BSM, for the lagged averaged AvgVIX over one month. $\{\bar{v}, v_{0}, \sigma_{v}, \rho, \kappa\}$ is from $g_{\phi}^{\text{DSNN-HSV}}$, $\{\kappa_{v}, \theta_{v}, \sigma_{v}, \gamma_{v}, \rho_{sv}, v_{0}\}$ is from $g_{\phi}^{\text{DSNN-NASV}}$, and $\{\phi_{\text{AE}}^{(1)}, \phi_{\text{AE}}^{(2)}\}$ is from $g_{\phi}^{\text{AE-BSM}}$.}
    \label{tab:skinn_dsnn_ae_phi_vix}
    \resizebox{\textwidth}{!}{

    \begin{tabular}{@{}ccccccccccccc@{}}
    \toprule
    \textbf{DSNN-HSV} & $\bar{v}$ & $v_{0}$ & $\sigma_{v}$ & $\rho$ & $\kappa$ & -- \\
    \midrule

    $\beta_{1}$ & -7.8261\sym{**} & 167.5138\sym{***} & -12.7938\sym{***} & 1.9625\sym{*} & 6.7324\sym{***} & --\\
     & (3.6186) & (10.7461) & (3.1635) & (1.0485) & (2.5814) \\
    
    $\beta_{0}$ & 21.1113\sym{***} & 16.3179\sym{***} & 23.6611\sym{***} & 21.8788\sym{***} & 18.8241\sym{***} & --\\
     & (0.5378) & (0.4312) & (0.9039) & (0.8771) & (0.7726) & --\\
        \midrule
        Obs & 317 & 317 & 317 & 317 & 317 & --\\
    Adj.\ R\sym{2} & 1.15\% & 43.37\% & 4.63\% & 0.79\% & 1.80\% & -- \\

    \midrule

    \textbf{DSNN-NASV} & $\kappa_{v}$ & $\theta_{v}$ & $\sigma_{v}$ & $\gamma_{v}$ & $\rho_{sv}$ & $v_{0}$\\
    \midrule

    $\beta_{1}$ & 3.4702\sym{***} & -8.2771\sym{**} & 4.1704\sym{***} & -12.8520\sym{***} & -4.3081\sym{***} & 165.5858\sym{***} \\
     & (0.6080) & (3.3347) & (1.0834) & (1.9883) & (0.7073) & (11.3337) \\
    $\beta_{0}$ & 19.4509\sym{***} & 22.2949\sym{***} & 19.6744\sym{***} & 31.8214\sym{***} & 22.6236\sym{***} & 15.4147\sym{***} \\
     & (0.4653) & (0.8610) & (0.4870) & (1.8069) & (0.5546) & (0.4911) \\

        \midrule
        Obs & 317 & 317 & 317 & 317 & 317 & 317 \\
    Adj.\ R\sym{2} & 9.08\% & 1.61\% & 4.19\% & 11.43\% & 10.25\% & 40.20\% \\

    \midrule

    \textbf{AE-BSM} & $\phi_{\text{AE}}^{(1)}$ & $\phi_{\text{AE}}^{(2)}$ & -- & -- & -- & --\\
    \midrule

    $\beta_{1}$ & -89.4417\sym{***} & -59.3616\sym{***} & -- & -- & -- & --\\
     & (5.9282) & (3.4113) & -- & -- & -- & --\\
    $\beta_{0}$ & -31.5612\sym{***} & 59.4865\sym{***} & -- & -- & -- & --\\
     & (3.4657) & (2.2653) & -- & -- & -- & --\\
    
    \midrule
    Obs & 317 & 317 & -- & -- & -- & --\\
    Adj.\ R\sym{2} & 41.77\% & 48.85\% & -- & -- & -- & --\\

    \bottomrule
    \end{tabular}
    }
\end{table}

\section{An Extended Application: SKINNs Asset Pricing Models}\label{sec_skinns_ap}
Besides option pricing tasks, SKINNs can be conveniently extended to many others. Generally speaking, the SKINNs framework is capable of solving any two-stage estimation problems. Traditionally, these two-stage problems require a properly-trained prediction device at the first stage to predict the target variables, which are passed as the parameters of the estimation problems at the second stage. Such a ``predict-then-optimize'' paradigm can yield suboptimal solutions, as the prediction device at the first stage receives no information about the utility of the downstream problem and can easily overfit to the provided training data. This problem is particularly acute in asset pricing tasks, where asset managers typically solve downstream mean-variance optimizations based on the learned asset pricing models from only data to manage their portfolios. Methods such as ``SPO+'' loss and differentiable optimization layers are proposed to force the learning process of the predictive device to be aware of the downstream estimation/optimization utility \citep[see,][]{amos2017optnet,agrawal2019differentiable,elmachtoub2022smart,wang2026machine}. These methods require iterative solving of the optimization problem for the optimal decision variables during training, which is computationally expensive, especially when the decision variables become high-dimensional. The SKINNs framework nests them by casting the decision variables to solve as the learnable latent parameters $\phi$, in addition to the nuisance NN parameters.

To see how the SKINNs framework integrates downstream problems, we inform an NN asset pricing model with mean-variance optimization:
\begin{align}
    \mathbf{w}^{*}:=\underset{\mathbf{w}}{\arg\min}&\;\;-\mathbf{w}^{\top}\hat{\mathbf{r}} + \eta\,\mathbf{w}^{\top}\bm{\Sigma}\,\mathbf{w} \label{eq_mv_objective}\\
                s.t &\;\;\mathbf{w}^{\top}\mathbf{1} = \mathbf{1}, \label{eq_mv_cons_equality}\\
                    \;\;&\mathbf{l}\leq\mathbf{w}\leq\mathbf{u}, \label{eq_mv_cons_inequality}
\end{align}
where $\mathbf{w}$ is the weight vector of each asset; $\hat{\mathbf{r}}=f_{\theta}(\mathbf{X})$ denotes the return predictions by NN; $\mathbf{l},\mathbf{u}$ are the minimum and maximum weight vectors, respectively; $\eta$ denotes the scalar of investor's risk aversion; $\bm{\Sigma}$ denotes the covariance matrix calculated using historical returns. We cast the mean-variance objective in Equation (\ref{eq_mv_objective}) as a structured-knowledge loss:
\begin{equation}
    \mathcal{L}_{\text{SK}}\left(\theta,\phi\in\{\mathbf{w}^{\star}\};\mathbf{X}\right) = -\;(\mathbf{w}^{\star})^{\top}\;f_{\theta}(\mathbf{X}) + \eta\;(\mathbf{w}^{\star})^{\top}\;\bm{\Sigma}\;\mathbf{w}^{\star}.
\end{equation}
The equality and inequality constraints in Equation (\ref{eq_mv_cons_equality})-(\ref{eq_mv_cons_inequality}) can be enforced in SKINNs using clamping functions.

We randomly select 50 stocks from the S\&P 500 constituent stocks. We employ their daily stock data from 2010 to 2023. The asset pricing models are learned using the SKINN framework with one-year data, and tested over the subsequent one-month data. The training and testing procedure is rolled-forward with a one-month window, generating 143 training and testing periods in total. Since our goal is to demonstrate how the asset pricing SKINNs encoded with mean-variance optimizations improve upon plain-vanilla NN, rather than identifying the correct return predictors, we choose the standardized OHLC prices as the daily stock characteristics.\footnote{OHLC represents the open price, the highest price, the lowest price, and the close price, respectively.} At each trading day within the testing periods, we sort 50 stocks into deciles in descending order of the predicted returns by SKINNs. A long-short portfolio can be constructed by buying the top-decile and selling the bottom-decile. We follow \cite{wang2026machine} to set the minimum weight $\mathbf{l}=\mathbf{0}$ and the maximum weight $\mathbf{u}=0.2\times\mathbf{1}$, which reflects the long-only and equal-weight constraints that asset managers often face.

\begin{table}[htbp]
    \centering
    \caption{The comparison of the SKINN asset pricing models and the plain-vanilla NN asset pricing models. All portfolio performance metrics are evaluated with the out-of-sample stock data in the testing periods. We train asset pricing models using 5 different architectures (1-5 hidden layers with 32 neurons). The results are averaged across 20 different random initializations of an NN. Group H denotes the top-decile portfolio, and Group L denotes the bottom-decile.}
    \label{tab:skinn_ap_models}
    \begin{tabular}{@{}lcccccccccccc@{}}
    \toprule
    \textbf{Model} & & \multicolumn{3}{c}{\textbf{NN}} & & \multicolumn{3}{c}{\textbf{SKINN ($\eta=0$)}} & & \multicolumn{3}{c}{\textbf{SKINN ($\eta=10$)}} \\
    \cmidrule(lr){3-5} \cmidrule(lr){7-9} \cmidrule(lr){11-13}
    & Group & Avg \% & SD \% & SR & & Avg \% & SD \% & SR & & Avg \% & SD \% & SR \\
    \midrule
    \multirow{2}{*}{1-32} & H & 60.73 & 130.07 & 0.47 & & 68.22 & 109.78 & 0.62 & & 70.83 & 102.91 & 0.69 \\
                          & L & 59.29 & 136.54 & 0.43 & & 54.78 & 133.41 & 0.41 & & 57.77 & 134.48 & 0.43 \\
    \midrule
    \multirow{2}{*}{2-32} & H & 57.01 & 127.73 & 0.45 & & 71.86 & 119.78 & 0.60 & & 65.32 & 99.71  & 0.66 \\
                          & L & 72.12 & 145.66 & 0.50 & & 70.98 & 141.31 & 0.50 & & 57.19 & 140.62 & 0.41 \\
    \midrule
    \multirow{2}{*}{3-32} & H & 56.57 & 127.98 & 0.44 & & 68.94 & 122.60 & 0.56 & & 59.45 & 100.10 & 0.59 \\
                          & L & 76.98 & 132.41 & 0.58 & & 71.67 & 132.81 & 0.54 & & 60.34 & 134.04 & 0.45 \\
    \midrule
    \multirow{2}{*}{4-32} & H & 57.71 & 124.46 & 0.46 & & 67.11 & 118.09 & 0.57 & & 61.96 & 97.56  & 0.64 \\
                          & L & 65.51 & 136.01 & 0.48 & & 68.98 & 133.11 & 0.52 & & 54.35 & 134.98 & 0.40 \\
    \midrule
    \multirow{2}{*}{5-32} & H & 56.54 & 125.43 & 0.45 & & 70.35 & 122.43 & 0.57 & & 65.48 & 100.00 & 0.65 \\
                          & L & 77.84 & 134.57 & 0.58 & & 73.08 & 132.93 & 0.55 & & 70.90 & 135.16 & 0.52 \\
    \bottomrule
    \end{tabular}
\end{table}

Table (\ref{tab:skinn_ap_models}) reports the performance of the learned asset pricing models. We focus on Group H (top-decile portfolios), as in real-world trading, the top-decile stocks with positive weights are preferred by asset managers since they are within the feasible region even when there is a short-selling restriction. By embedding the mean-variance optimization through the structured-knowledge loss $\mathcal{L}_{\text{SK}}(\theta,\phi\in\{\mathbf{w}^{\star}\};\mathbf{X})$, SKINNs are capable of guiding the learning of the asset pricing model with the utility-based information from the downstream mean-variance optimization. Plain-vanilla NNs do not receive such utility-based information during the training loop, and hence do not particularly optimize the top-decile portfolios formed according to their out-of-sample return predictions. We consider two scenarios: (i) $\eta=0$, the risk-insensitive case where the optimization in Equation (\ref{eq_mv_objective}) only guides NN to favour stocks with high predicted return; (ii) $\eta=10$, the strongly risk-averse case where the optimization guides NN to favour stocks with high predicted return but low risk. Both SKINNs demonstrate consistently improved annualized average daily return, as well as the Sharpe ratio, on the Group H across all 5 network architectures, compared with NNs. More importantly, by incorporating a higher risk-aversion coefficient, $\eta=10$, SKINNs successfully reduce the return volatilities of Group H further, yielding the highest Sharpe ratios.

\section{Conclusion}\label{sec_conclude}
Despite recent successes of AI and machine learning algorithms in economics, they often lack the transparency, interpretability, and clarity on economic intuition and mechanism that economists often require of conventional econometric and theory models. We introduce a Structured-Knowledge-Informed Neural Networks (SKINNs) framework for embedding existing domain knowledge, in flexible formats (analytical models, deep surrogate models, and general implicit/non-parametric models), into deep-learning-based data-driven analyses. Improving upon and going beyond existing methods of encoding prior knowledge into neural networks, including transfer learning and PINNs, SKINNs serve as a general tool for scientific machine learning, or AI for science or social science, especially when the latent parameters for structured-knowledge are high-dimensional and hard to estimate or interpret using conventional methods. 

On the theoretical side, we provide a unified econometric foundation for SKINNs. Beyond consistency and asymptotic normality, we establish identification of structural parameters under joint flexibility, derive generalization and target-risk bounds under distributional shift in a convex proxy, and characterize the regularization parameter as a restricted-optimal weighting choice within a GMM interpretation. Under orthogonal moment conditions, the structural parameter estimator achieves the optimal asymptotic variance relative to the imposed moment restrictions. These results demonstrate that integrating structured knowledge into neural networks is not merely a heuristic regularization device, but defines a statistically well-behaved estimator with formal inferential guarantees.

For illustration, we apply SKINNs to option pricing, a setting featuring structured theoretical, conceptual, or evidence-based knowledge in a wide variety of formats. SKINNs statistically improve out-of-sample pricing performance and hedging capability for the S\&P 500 index options, compared with the plain-vanilla neural network, neural network with the model-free constraints, standard structural models, and even transfer learning informed by the Heston model. The outperformance of SKINNs is greater when pricing options in periods more distant from the training sample period and in highly volatile markets, where a pure data-driven approach tends to overfit or ill-adjust for distributional shifts. We also find that the learned latent parameters for structured-knowledge representations are not merely nuisance parameters; they carry economic meaning with improved stability over time, over the estimations from standard calibration procedures. Moreover, the dimensionality of the latent parameters can efficiently scale under SKINNs, something prohibitive in other approaches, such as transfer learning.

\newpage

\begin{appendices}

    \section{The Wilcoxon Signed-Rank Test}\label{sec_oos_wilcoxon}

Besides the \cite{diebold2002comparing} test results we report in Section (\ref{sec:oos_performance_dm_test}), we also perform the \cite{wilcoxon1945individual} signed-rank test to compare the performance difference in pricing and hedging among the considered models, for the robustness check. While the Diebold-Mariano test adjusts for the possible autocorrelation in the series of error differentials $d_{j}=\{e^{1}_{j}-e^{2}_{j}\}_{j=1}^{317}$, the non-parametric Wilcoxon signed-rank test does not account for this. We report the results of the Wilcoxon signed-rank test in this section. The results of using two tests are close. Since the test statistic of the Wilcoxon signed-rank test is always a positive integer, we place the asterisks on the right-top of a number to indicate that the column model outperforms the row model, and on the left-top of a number to indicate that the row model outperforms the column model. Table (\ref{tab:oos_pricing_wilcoxon_shorter}) and Table (\ref{tab:oos_pricing_wilcoxon_longer}) report the results for the out-of-sample pricing performance comparisons. Table (\ref{tab:oos_hedging_wilcoxon_shorter}) and Table (\ref{tab:oos_hedging_wilcoxon_longer}) report the results for the out-of-sample hedging performance comparisons.

\begin{table}[htbp]

\caption{The \cite{wilcoxon1945individual} test results for the out-of-sample option pricing accuracy comparisons in shorter prediction horizons. We square all the pricing errors to penalize large errors.}
\label{tab:oos_pricing_wilcoxon_shorter}

    \resizebox{\textwidth}{!}{
    \begin{tabular}{@{}ccccccccccccc@{}}
    \toprule
    \textbf{Model}&\multicolumn{4}{c}{\textbf{Structural models}} & \multicolumn{4}{c}{\textbf{Benchmark NNs}} \\
        \cmidrule(lr){2-5}\cmidrule(lr){6-9}
    \textbf{Panel (A)} & \multirow{2}{*}{BSM} & \multirow{2}{*}{ABSM} & \multirow{2}{*}{HSV} & \multirow{2}{*}{HSVJ} & \multirow{2}{*}{NN} & NN & NN & PINN\\ 
     &  &  &  &  &  & +Bnd & +Shape & +BSM\\ 
    \midrule
    BSM          & -- & 22.48** & 23.43   & 5.68*** & 17.71*** & 14.13*** & 25.18    & 25.34    \\ 
    ABSM        & -- & --      & 22.58* & 5.20*** & 17.15*** & 13.82*** & 24.53    & 25.12    \\ 
    HSV         & -- & --      & --      & 2.94*** & 19.13*** & 16.19*** & 25.89    & 26.53    \\ 
    HSVJ     & -- & --      & --      & --      & 25.74    & 23.54    & 35.30*** & 35.12*** \\ 
    NN          & -- & --      & --      & --      & --       & 20.75*** & 35.23*** & 35.50*** \\ 
    NN+Bnd      & -- & --      & --      & --      & --       & --       & 39.29*** & 38.64*** \\ 
    NN+Shape       & -- & --      & --      & --      & --       & --       & --       & 26.41    \\ 
    PINN+BSM & -- & --      & --      & --      & --       & --       & --       & --       \\

    \midrule
    \textbf{Model} & \multicolumn{4}{c}{\textbf{PSKR}} & \multicolumn{2}{c}{\textbf{SPSKR}} & \multicolumn{2}{c}{\textbf{NPSKR}} \\
    \cmidrule(lr){2-5}\cmidrule(lr){6-7}\cmidrule(lr){8-9}
    \textbf{Panel (B)} & SKINN & SKINN & SKINN & SKINN & SKINN & SKINN & SKINN & SKINN \\
                       & +BSM & +ABSM & +HSV & +HSVJ & +DSNN-HSV & +DSNN-NASV & +MOPA & +AE-BSM \\
    \midrule
    BSM          & 8.60*** & 7.35*** & 8.40*** & 7.25*** & 18.79*** & 11.51*** & 11.18*** & 16.19*** \\ 
    ABSM        & 9.21*** & 7.53*** & 8.26*** & 7.21*** & 18.72*** & 11.91*** & 10.92*** & 16.16*** \\ 
    HSV         & 13.13*** & 12.08*** & 10.07*** & 8.68*** & 19.84*** & 14.02*** & 12.35*** & 17.76*** \\ 
    HSVJ     & 24.71    & 23.36    & 20.17*** & 18.58*** & 30.37*** & 25.80    & 21.24*** & 29.03*** \\ 
    NN          & 25.90    & 24.46    & 22.80* & 21.48** & 28.91** & 26.28    & 21.60** & 28.38** \\ 
    NN+Bnd      & 28.98** & 27.45* & 23.50    & 22.81* & 31.38*** & 28.40** & 22.96* & 32.26*** \\ 
    NN+Shape       & 14.64*** & 13.50*** & 10.88*** & 10.56*** & 19.76*** & 14.02*** & 10.96*** & 15.74*** \\ 
    PINN+BSM & 14.40*** & 12.80*** & 11.27*** & 10.15*** & 19.13*** & 14.32*** & 9.91*** & 15.99*** \\ 
    
    \bottomrule
    \end{tabular}
    }
    
\end{table}

\begin{table}[htbp]

\caption{The \cite{wilcoxon1945individual} test results for the out-of-sample option pricing accuracy comparisons in longer prediction horizons. We square all the pricing errors to penalize large errors.}
\label{tab:oos_pricing_wilcoxon_longer}

    \resizebox{\textwidth}{!}{
    \begin{tabular}{@{}ccccccccccccc@{}}
    \toprule
    \textbf{Model}&\multicolumn{4}{c}{\textbf{Structural models}} & \multicolumn{4}{c}{\textbf{Benchmark NNs}} \\
        \cmidrule(lr){2-5}\cmidrule(lr){6-9}
    \textbf{Panel (A)} & \multirow{2}{*}{BSM} & \multirow{2}{*}{ABSM} & \multirow{2}{*}{HSV} & \multirow{2}{*}{HSVJ} & \multirow{2}{*}{NN} & NN & NN & PINN\\ 
     &  &  &  &  &  & +Bnd & +Shape & +BSM\\ 
    \midrule
    BSM          & -- & 25.23 & 25.75 & 9.68*** & 27.71* & 22.11** & 27.52* & 28.20** \\ 
    ABSM        & -- & --    & 24.24 & 8.48*** & 27.44* & 21.81** & 27.11    & 27.78* \\ 
    HSV         & -- & --    & --    & 3.79*** & 27.75* & 22.40** & 27.20    & 28.15** \\ 
    HSVJ     & -- & --    & --    & --      & 32.41*** & 27.46* & 33.16*** & 34.08*** \\ 
    NN          & -- & --    & --    & --      & --        & 19.16*** & 26.05    & 26.34    \\ 
    NN+Bnd      & -- & --    & --    & --      & --        & --       & 31.56*** & 31.65*** \\ 
    NN+Shape       & -- & --    & --    & --      & --        & --       & --       & 27.27    \\ 
    PINN+BSM & -- & --    & --    & --      & --        & --       & --       & --       \\

    \midrule
    \textbf{Model} & \multicolumn{4}{c}{\textbf{PSKR}} & \multicolumn{2}{c}{\textbf{SPSKR}} & \multicolumn{2}{c}{\textbf{NPSKR}} \\
    \cmidrule(lr){2-5}\cmidrule(lr){6-7}\cmidrule(lr){8-9}
    \textbf{Panel (B)} & SKINN & SKINN & SKINN & SKINN & SKINN & SKINN & SKINN & SKINN \\
                       & +BSM & +ABSM & +HSV & +HSVJ & +DSNN-HSV & +DSNN-NASV & +MOPA & +AE-BSM \\
    \midrule
    BSM          & 15.94*** & 16.10*** & 15.62*** & 15.30*** & 23.95     & 19.26*** & 18.37*** & 25.17    \\ 
    ABSM        & 16.02*** & 15.78*** & 15.32*** & 14.97*** & 23.22     & 19.05*** & 17.93*** & 24.55    \\ 
    HSV         & 16.26*** & 16.23*** & 15.72*** & 15.54*** & 23.85     & 18.93*** & 18.27*** & 25.39    \\ 
    HSVJ     & 25.28    & 25.00    & 23.89    & 23.23    & 30.64*** & 27.94** & 24.84    & 32.38*** \\ 
    NN          & 19.65*** & 18.28*** & 17.07*** & 16.21*** & 21.68** & 19.72*** & 15.96*** & 22.69* \\ 
    NN+Bnd      & 22.23** & 22.03** & 19.91*** & 18.16*** & 25.81     & 22.69* & 19.03*** & 27.66* \\ 
    NN+Shape       & 15.96*** & 15.66*** & 13.88*** & 13.90*** & 19.95*** & 16.67*** & 13.87*** & 19.55*** \\ 
    PINN+BSM & 14.43*** & 14.56*** & 13.48*** & 12.91*** & 19.24*** & 16.02*** & 12.88*** & 19.35*** \\ 
    
    \bottomrule
    \end{tabular}
    }
    
\end{table}

\begin{table}[htbp]
\caption{The \cite{wilcoxon1945individual} test results for the out-of-sample option hedging accuracy comparisons in shorter prediction horizons. We square all the hedging errors to penalize large errors.}
\label{tab:oos_hedging_wilcoxon_shorter}

    \resizebox{\textwidth}{!}{
    \begin{tabular}{@{}ccccccccccccc@{}}
    \toprule
    \textbf{Model}&\multicolumn{4}{c}{\textbf{Structural models}} & \multicolumn{4}{c}{\textbf{Benchmark NNs}} \\
        \cmidrule(lr){2-5}\cmidrule(lr){6-9}
    \textbf{Panel (A)} & \multirow{2}{*}{BSM} & \multirow{2}{*}{ABSM} & \multirow{2}{*}{HSV} & \multirow{2}{*}{HSVJ} & \multirow{2}{*}{NN} & NN & NN & PINN\\ 
     &  &  &  &  &  & +Bnd & +Shape & +BSM\\ 
    \midrule
    BSM          & -- & 5.16*** & 46.50*** & 47.13*** & 22.27** & 21.40*** & 25.08    & 24.21    \\ 
    ABSM        & -- & --      & 46.79*** & 47.31*** & 23.79    & 23.39    & 27.21    & 25.96    \\ 
    HSV         & -- & --      & --       & 20.19** & 9.80*** & 7.81*** & 12.32*** & 10.56*** \\ 
    HSVJ     & -- & --      & --       & --       & 9.59*** & 8.12*** & 12.60*** & 10.81*** \\ 
    NN          & -- & --      & --       & --       & --       & 24.41    & 30.24*** & 27.59* \\ 
    NN+Bnd      & -- & --      & --       & --       & --       & --       & 32.07*** & 28.85** \\ 
    NN+Shape       & -- & --      & --       & --       & --       & --       & --       & 18.98*** \\ 
    PINN+BSM & -- & --      & --       & --       & --       & --       & --       & --       \\

    \midrule
    \textbf{Model} & \multicolumn{4}{c}{\textbf{PSKR}} & \multicolumn{2}{c}{\textbf{SPSKR}} & \multicolumn{2}{c}{\textbf{NPSKR}} \\
    \cmidrule(lr){2-5}\cmidrule(lr){6-7}\cmidrule(lr){8-9}
    \textbf{Panel (B)} & SKINN & SKINN & SKINN & SKINN & SKINN & SKINN & SKINN & SKINN \\
                       & +BSM & +ABSM & +HSV & +HSVJ & +DSNN-HSV & +DSNN-NASV & +MOPA & +AE-BSM \\
    \midrule
    BSM          & 4.84*** & 4.66*** & 15.47*** & 14.85*** & 16.36*** & 16.69*** & 17.03*** & 14.50*** \\ 
    ABSM        & 7.49*** & 6.30*** & 18.35*** & 18.52*** & 19.29*** & 20.05*** & 19.88*** & 17.28*** \\ 
    HSV         & 1.41*** & 1.41*** & 2.42*** & 2.35*** & 3.63*** & 3.62*** & 2.90*** & 3.78*** \\ 
    HSVJ     & 1.35*** & 1.33*** & 2.42*** & 2.14*** & 3.98*** & 3.81*** & 3.06*** & 3.67*** \\ 
    NN          & 11.61*** & 11.50*** & 20.50*** & 19.68*** & 20.98*** & 19.82*** & 21.25*** & 18.91*** \\ 
    NN+Bnd      & 9.61*** & 9.27*** & 20.46*** & 19.87*** & 20.21*** & 21.12*** & 21.07*** & 17.88*** \\ 
    NN+Shape       & 6.92*** & 7.12*** & 15.35*** & 14.76*** & 16.16*** & 16.99*** & 15.39*** & 14.01*** \\ 
    PINN+BSM & 9.39*** & 9.29*** & 17.70*** & 17.80*** & 18.38*** & 19.21*** & 16.76*** & 16.61*** \\ 
    
    \bottomrule
    \end{tabular}
    }
    
\end{table}

\begin{table}[htbp]
\caption{The \cite{wilcoxon1945individual} test results for the out-of-sample option hedging accuracy comparisons in longer prediction horizons. We square all the hedging errors to penalize large errors.}
\label{tab:oos_hedging_wilcoxon_longer}

    \resizebox{\textwidth}{!}{
    \begin{tabular}{@{}ccccccccccccc@{}}
    \toprule
    \textbf{Model}&\multicolumn{4}{c}{\textbf{Structural models}} & \multicolumn{4}{c}{\textbf{Benchmark NNs}} \\
        \cmidrule(lr){2-5}\cmidrule(lr){6-9}
    \textbf{Panel (A)} & \multirow{2}{*}{BSM} & \multirow{2}{*}{ABSM} & \multirow{2}{*}{HSV} & \multirow{2}{*}{HSVJ} & \multirow{2}{*}{NN} & NN & NN & PINN\\ 
     &  &  &  &  &  & +Bnd & +Shape & +BSM\\ 
    \midrule
    BSM          & -- & 6.19*** & 46.34*** & 46.83*** & 22.52* & 21.78** & 24.95    & 24.75    \\ 
    ABSM        & -- & --      & 46.59*** & 47.12*** & 23.76    & 23.64    & 27.32* & 26.56    \\ 
    HSV         & -- & --      & --       & 20.55** & 12.36*** & 10.91*** & 14.20*** & 12.84*** \\ 
    HSVJ     & -- & --      & --       & --       & 12.51*** & 11.17*** & 14.49*** & 13.24*** \\ 
    NN          & -- & --      & --       & --       & --       & 23.63    & 29.38*** & 28.98** \\ 
    NN+Bnd      & -- & --      & --       & --       & --       & --       & 32.50*** & 31.19*** \\ 
    NN+Shape       & -- & --      & --       & --       & --       & --       & --       & 20.48*** \\ 
    PINN+BSM & -- & --      & --       & --       & --       & --       & --       & --       \\

    \midrule
    \textbf{Model} & \multicolumn{4}{c}{\textbf{PSKR}} & \multicolumn{2}{c}{\textbf{SPSKR}} & \multicolumn{2}{c}{\textbf{NPSKR}} \\
    \cmidrule(lr){2-5}\cmidrule(lr){6-7}\cmidrule(lr){8-9}
    \textbf{Panel (B)} & SKINN & SKINN & SKINN & SKINN & SKINN & SKINN & SKINN & SKINN \\
                       & +BSM & +ABSM & +HSV & +HSVJ & +DSNN-HSV & +DSNN-NASV & +MOPA & +AE-BSM \\
    \midrule
    BSM          & 7.08*** & 6.07*** & 16.40*** & 16.85*** & 18.39*** & 17.97*** & 18.96*** & 17.60*** \\ 
    ABSM        & 9.79*** & 8.27*** & 19.07*** & 19.67*** & 20.70*** & 20.71*** & 21.65** & 20.35*** \\ 
    HSV         & 2.23*** & 2.19*** & 3.46*** & 3.12*** & 4.77*** & 4.77*** & 5.70*** & 5.62*** \\ 
    HSVJ     & 2.25*** & 2.18*** & 3.78*** & 3.12*** & 5.02*** & 5.00*** & 5.98*** & 5.93*** \\ 
    NN          & 12.25*** & 12.10*** & 20.85*** & 20.02*** & 21.87** & 21.18*** & 20.05*** & 20.37*** \\ 
    NN+Bnd      & 9.94*** & 9.11*** & 21.32*** & 20.13*** & 22.27** & 22.03** & 20.45*** & 19.93*** \\ 
    NN+Shape       & 7.82*** & 7.62*** & 15.41*** & 15.19*** & 16.14*** & 16.59*** & 15.83*** & 15.74*** \\ 
    PINN+BSM & 8.84*** & 9.29*** & 17.80*** & 17.35*** & 18.23*** & 17.83*** & 16.86*** & 16.30*** \\ 
    
    \bottomrule
    \end{tabular}
    }
    
\end{table}
    \clearpage

    \section{High-dimensional SKINNs Additional Results}\label{sec_more_high_dim_sk_specs}

In this section, we present the option pricing performance of two high-dimensional SKINNs—SKINN+SABR and SKINN+AE-MIX, which are in our model list of Table (\ref{nn_model_list}). These two SKINNs utilize PSKR and NPSKR, respectively, and both are characterized by high latent parameter dimensionality (722-dimensional and 50-dimensional). SKINN+AE-MIX employs a machine-derived NPSKR that combines the BSM, HSV, and HSVJ models.

\begin{table}[htbp]

\centering
\caption{The Diebold-Mariano test of the out-of-sample option pricing performance, measured in terms of RMSE, of the two SKINNs. We square all the pricing errors to penalize large errors.}
\label{}

    \resizebox{0.9\textwidth}{!}{
    \begin{tabular}{@{}lcccc@{}}
    \toprule
     & \multicolumn{2}{c}{\textbf{Shorter Prediction Horizons}} & \multicolumn{2}{c}{\textbf{Longer Prediction Horizons}} \\
    \cmidrule(lr){2-3} \cmidrule(l){4-5}
    \textbf{Model} & SKINN+SABR & SKINN+AE-MIX & SKINN+SABR & SKINN+AE-MIX \\
    \midrule
    BSM          & -1.22  & -2.74*** & -1.59* & 0.51     \\ 
    ABSM        & -1.36* & -2.93*** & -1.81** & 0.37     \\ 
    HSV         & -0.96  & -2.58*** & -1.76* & 0.45     \\ 
    HSVJ     & -0.17  & -1.51* & -0.81     & 1.03     \\ 
    NN          & 0.30   & -1.65** & -3.68*** & -3.23*** \\ 
    NN+Bnd      & 0.88   & 0.27    & -2.53*** & -1.99** \\ 
    NN+Shape       & -1.35* & -4.20*** & -3.33*** & -2.71*** \\ 
    PINN+BSM & -1.37* & -3.77*** & -3.24*** & -2.50*** \\
    \bottomrule
    \end{tabular}
    }

\end{table}

\begin{table}[htbp]
\centering
\caption{The marginal outperformance of SKINN against the plain-vanilla NN given a one-unit increase in AvgVIX. We consider different volatility regimes from the 317 longer prediction horizons.}
\resizebox{0.9\textwidth}{!}{
\begin{tabular}{lcccccc}
\toprule
 & \multicolumn{2}{c}{\textbf{All}} & & \multicolumn{2}{c}{\textbf{Low Volatility}} \\
\cmidrule(lr){2-3} \cmidrule(lr){5-6}
\textbf{Model} & SKINN+SABR & SKINN+AE-MIX & & SKINN+SABR & SKINN+AE-MIX \\
\midrule
AvgVIX      & -0.0835*** & -0.0821*** & & -0.2368    & -0.0765    \\
            & (0.019)    & (0.012)    & & (0.161)    & (0.118)    \\
Constant    & 0.0096** & 0.0139*** & & 0.0259     & 0.0097     \\
            & (0.004)    & (0.003)    & & (0.020)    & (0.015)    \\
\midrule
Obs         & 317        & 317        & & 64         & 64         \\
Adj. R\sym{2}  & 5.2\%      & 12.0\%     & & 1.8\%      & -0.9\%     \\
\midrule
 & \multicolumn{2}{c}{\textbf{Medium Volatility}} & & \multicolumn{2}{c}{\textbf{High Volatility}} \\
\cmidrule(lr){2-3} \cmidrule(lr){5-6}
\textbf{Model} & SKINN+SABR & SKINN+AE-MIX & & SKINN+SABR & SKINN+AE-MIX \\
\midrule
AvgVIX      & -0.1242*** & -0.0955*** & & -0.1712** & -0.1666*** \\
            & (0.047)    & (0.031)    & & (0.070)    & (0.042)    \\
Constant    & 0.0167* & 0.0166*** & & 0.0425* & 0.0444*** \\
            & (0.009)    & (0.006)    & & (0.024)    & (0.014)    \\
\midrule
Obs         & 189        & 189        & & 64         & 64         \\
Adj. R\sym{2}  & 3.1\%      & 4.3\%      & & 7.3\%      & 19.3\%     \\
\bottomrule
\end{tabular}
}
\end{table}
    
    \section{Theoretical Foundations of SKINNs}\label{sec_theoretical}

We begin by formally stating the SKINNs optimization problem. Consider a probability space $(\Omega, \mathcal{F}, \mathbb{P})$ on which all random variables are defined. Let $(\mathbf{X}, \mathbf{y})$ denote a random vector where $\mathbf{X} \in \mathcal{X} \subseteq \mathbb{R}^d$ represents observable input features and $\mathbf{y} \in \mathbb{R}$ represents the target variable. The data-driven component is a deep neural network $f_\theta: \mathcal{X} \to \mathbb{R}$, parameterized by $\theta \in \Theta \subset \mathbb{R}^{p}$, where $\Theta$ is a compact parameter space and $p$ represents the potentially high-dimensional network architecture (all weights and biases across layers). The structured-knowledge component is a function $g_\phi: \mathcal{X}^{\text{SK}} \to \mathbb{R}$, where $\mathcal{X}^{\text{SK}} \subseteq \mathcal{X}$ denotes the subspace of theoretically relevant inputs and $\phi \in \Phi \subset \mathbb{R}^{q}$ represents the latent parameters of the embedded economic model, with $\Phi$ being a compact parameter space.

\subsection{Regularity Assumptions}\label{sec:assumption}

The following assumptions underpin the consistency and asymptotic normality results established in Sections~\ref{sec:general_theory}--\ref{sec:infer_practical}. They are standard in the M-estimation literature \citep{newey1994large, van1996weak} and are satisfied in most practical implementations of the SKINNs framework. When the structured-knowledge component $g_{\phi}$ is misspecified, the minimizer $\phi^{*}$ should be interpreted as a pseudo-true parameter, that is, the value that minimizes the population objective $\mathcal{L}(\theta,\phi)$; in this case, the asymptotic results below characterize convergence to this pseudo-true value, as is standard in M-estimation under misspecification \citep{white1982maximum}.

\begin{assumption}[Identification]
\label{assump:identification}
The composite objective $\mathcal{L}(\theta, \phi)$ has a unique minimizer $(\theta^*, \phi^*)$ in the interior of $\Theta \times \Phi$. Moreover, the minimum is well-separated: for any $\epsilon > 0$, there exists $\delta > 0$ such that 
\[
\inf_{\|(\theta, \phi) - (\theta^*, \phi^*)\| \geq \epsilon} \mathcal{L}(\theta, \phi) > \mathcal{L}(\theta^*, \phi^*) + \delta.
\]
\end{assumption}

Lemma~\ref{lem:profile-id} and Proposition~\ref{prop:closedform-id} in Section~\ref{sec:skinn_optim} provide verifiable primitive conditions under which this assumption holds. In particular, Proposition~\ref{prop:closedform-id} shows that, under squared-error loss with a sufficiently rich function class, identification of $\phi$ reduces to uniqueness of the minimizer of $\mathbb{E}\!\left[\left(\mathbb{E}[\mathbf{y}\mid\mathbf{X}_{\text{obs}}] - g_{\phi}(\mathbf{X}_{\text{obs}}^{\text{SK}})\right)^2\right]$ over $\Phi$, a condition that can be checked on a case-by-case basis for specific structured-knowledge representations $g_{\phi}$.

\begin{assumption}[Compactness]
\label{assump:compactness}
The parameter spaces $\Theta \subset \mathbb{R}^{p}$ and $\Phi \subset \mathbb{R}^{q}$ are compact.
\end{assumption}

Compactness is a technical condition that ensures the existence of minimizers and facilitates uniform convergence arguments. In practice, it can be enforced through bounded parameter constraints or weight clipping. We note that the population identification analysis in Proposition~\ref{prop:closedform-id} works with an unrestricted function class to characterize the profiled objective, whereas the finite-sample asymptotic theory here requires compactness to control the complexity of the parameter space. This two-stage reasoning---population identification under richness, then finite-sample convergence under compactness---is standard in the sieve estimation literature \citep{chen2007large}.

\begin{assumption}[Continuity]
\label{assump:continuity}
The loss function $\ell(f, \mathbf{y})$ is continuous in $f$ for all $\mathbf{y}$, and the neural network $f_\theta(\mathbf{X})$ and structured-knowledge function $g_\phi(\mathbf{X}^{\text{SK}})$ are continuous in their parameters for all $\mathbf{X}$ and $\mathbf{X}^{\text{SK}}$.
\end{assumption}

This ensures that small perturbations in $(\theta, \phi)$ induce only small changes in the objective, as required by standard extremum-estimator convergence arguments \citep{newey1994large}. Neural networks with continuous activation functions (e.g., $\mathrm{ReLU}, \mathrm{sigmoid}, \mathrm{tanh}$) satisfy this condition automatically. Throughout the paper, we use squared-error loss unless stated otherwise.

\begin{assumption}[Collocation Point Growth]
\label{assump:collocation_growth}
The number of collocation points $M_N$ is a deterministic sequence satisfying $M_N \to \infty$ as $N \to \infty$. For the asymptotic normality result (Theorem~\ref{thm:asymptotic_normality}), we further require proportional growth:
\begin{equation*}
M_N / N \to c \quad \text{for some constant } c \in (0, \infty).
\end{equation*}
\end{assumption}

The first condition ensures that the empirical approximation of the structured-knowledge loss $\mathcal{L}_{\text{SK}}$ converges to its population counterpart, which suffices for consistency (Theorem~\ref{thm:consistency}). The proportional growth condition is stronger and ensures that discretization error from the finite collocation grid diminishes at the same rate as sampling error, preserving the parametric $N^{-1/2}$ convergence rate. If $M_N$ remained fixed as $N$ grows, persistent discretization bias would degrade the convergence rate below $N^{-1/2}$. Conversely, if $M_N$ grew too rapidly relative to $N$ (for example, $M_N = N^2$), computational costs would become prohibitive without improving the statistical rate. The proportional growth condition strikes this balance. In the sieve estimation literature \citep{chen2007large}, analogous conditions govern the growth of basis functions with sample size. In practice, researchers typically set $M_N = N$ (reusing observed inputs as collocation points) or $M_N = cN$ for a small constant $c \in [1, 5]$.

\begin{assumption}[Uniform Convergence]
\label{assump:uniform}
As $N \to \infty$ with $M_N$ satisfying Assumption~\ref{assump:collocation_growth}, the empirical loss $\hat{\mathcal{L}}_{N,M_N}(\theta, \phi)$ converges uniformly to the population loss $\mathcal{L}(\theta, \phi)$ over $\Theta \times \Phi$:
\[
\sup_{(\theta, \phi) \in \Theta \times \Phi} \left|\hat{\mathcal{L}}_{N,M_N}(\theta, \phi) - \mathcal{L}(\theta, \phi)\right| \xrightarrow{p} 0.
\]
\end{assumption}

Here $\hat{\mathcal{L}}_{N,M_N}$ denotes the sample analog of the composite objective in Equation~\eqref{eq:normalized_objective}, with the data loss averaged over $N$ observations and the structured-knowledge loss averaged over $M_N$ collocation points. This uniform law of large numbers requires that the function class $\{(\theta,\phi) \mapsto \ell(f_\theta(\mathbf{X}), \mathbf{y}) + \lambda\,\ell(f_\theta(\mathbf{X}_{\text{grid}}), g_\phi(\mathbf{X}_{\text{grid}}^{\text{SK}}))\}$ is not too complex. For neural networks with a fixed architecture (finite $p$), bounded activation functions, and compact $\Theta \times \Phi$, this condition is satisfied by standard empirical process theory \citep{van1996weak, vapnik1971uniform}.

\begin{assumption}[Asymptotic Normality of the Score]
\label{assump:clt}
The score function 
$$s(\mathbf{X}, \mathbf{y}; \theta, \phi) = \nabla_{(\theta, \phi)} \left[\ell(f_\theta(\mathbf{X}), \mathbf{y}) + \lambda\, \ell(f_\theta(\tilde{\mathbf{X}}), g_\phi(\tilde{\mathbf{X}}^{\text{SK}}))\right]
$$
satisfies a central limit theorem at the true parameter values:
\begin{equation*}
    \frac{1}{\sqrt{N}} \sum_{i=1}^{N} s(\mathbf{X}_i, \mathbf{y}_i; \theta^*, \phi^*) \xrightarrow{d} \mathcal{N}\!\left(\mathbf{0},\, \Xi\right),
\end{equation*}
where $\Xi = \mathbb{E}\!\left[s(\mathbf{X}, \mathbf{y}; \theta^*, \phi^*)\, s(\mathbf{X}, \mathbf{y}; \theta^*, \phi^*)^\top\right]$ is the score covariance matrix, assumed to be finite and positive definite.
\end{assumption}

This is a standard condition for M-estimator inference \citep{newey1994large, van1996weak}. It holds when the score has finite second moments and the observations are independent and identically distributed, or more generally when the score forms a Donsker class.


\begin{assumption}[Second-Order Differentiability]
\label{assump:hessian}
The objective $\mathcal{L}(\theta, \phi)$ is twice continuously differentiable in a neighborhood of $(\theta^*, \phi^*)$, and the Hessian matrix
\[
H = \nabla^2_{(\theta,\phi)}\, \mathcal{L}(\theta^*, \phi^*)
\]
exists and is positive definite.
\end{assumption}

This assumption requires, in particular, that the structured-knowledge function $g_\phi(\mathbf{X}^{\text{SK}})$ is twice differentiable with respect to $\phi$. For parametric representations with smooth closed-form expressions (e.g., the Black--Scholes formula), this is immediate. For semi-parametric representations based on deep surrogate neural networks, twice differentiability is inherited from the smoothness of the surrogate's activation functions, as discussed in Section~\ref{sec:differntial_curse_dimensionality}. Positive definiteness of $H$ ensures that $(\theta^*, \phi^*)$ is a strict local minimum and that the sandwich covariance $V = H^{-1}\Xi H^{-1}$ is well-defined.

\subsection{Statistical Properties}\label{sec:general_theory}

The SKINNs estimator $(\hat{\theta}_{N}, \hat{\phi}_{N})$ belongs to the class of $M$-estimators, a broad family of estimators defined as minimizers of an empirical criterion function \citep{huber1967behavior, van1996weak}. This class encompasses maximum likelihood, least squares, and method of moments estimators as special cases, and admits a mature asymptotic toolkit that we exploit here. Our presentation follows the standard approach in the econometric and statistical literature \citep{newey1994large, van1996weak}, establishing consistency and asymptotic normality under the regularity conditions stated in Appendix~\ref{sec:assumption}. Importantly, these results are domain-agnostic: they apply whether SKINNs are used to learn physical laws from experimental data, infer economic parameters from market observations, or discover structural regularities in other scientific domains.

\subsubsection{Consistency}

Consistency ensures that, as the sample size grows, the jointly estimated parameters converge in probability to the population minimizers of the composite objective in Equation~\eqref{eq:normalized_objective}.

\begin{theorem}[Consistency]
\label{thm:consistency}
Suppose Assumptions~\ref{assump:identification}--\ref{assump:uniform} hold, and $M_N \to \infty$ as $N \to \infty$. Then the SKINNs estimator is consistent:
\begin{equation*}
(\hat{\theta}_N, \hat{\phi}_N) \xrightarrow{p} (\theta^*, \phi^*) \quad \text{as } N \to \infty.
\end{equation*}
\end{theorem}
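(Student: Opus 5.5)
The proof follows the standard extremum-estimator consistency argument (Newey and McFadden, Theorem 2.1), adapted to the joint parameter $(\theta,\phi)$ ranging over the compact set $\Theta\times\Phi$. Write $\vartheta=(\theta,\phi)$ and $\vartheta^{*}=(\theta^{*},\phi^{*})$. Let $\hat{\vartheta}_N=(\hat\theta_N,\hat\phi_N)$ be a minimizer, or an approximate minimizer up to an $o_p(1)$ optimization error, of $\hat{\mathcal{L}}_{N,M_N}$. Such a minimizer exists because $\hat{\mathcal{L}}_{N,M_N}$ is a finite average of continuous functions of $\vartheta$ under Assumption~\ref{assump:continuity}, and $\Theta\times\Phi$ is compact by Assumption~\ref{assump:compactness}. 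The condition $M_N\to\infty$ is used only through Assumption~\ref{assump:uniform}. That assumption already delivers the uniform law of large numbers for the composite criterion, with the data part averaged over $N$ points and the collocation part over $M_N$ points.

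The core step is a sandwich inequality. Set $\Delta_N:=\sup_{\vartheta}|\hat{\mathcal{L}}_{N,M_N}(\vartheta)-\mathcal{L}(\vartheta)|$, which is $o_p(1)$ by Assumption~\ref{assump:uniform}. The argument chains three bounds:
\begin{equation*}
\mathcal{L}(\hat\vartheta_N)\le \hat{\mathcal{L}}_{N,M_N}(\hat\vartheta_N)+\Delta_N\le \hat{\mathcal{L}}_{N,M_N}(\vartheta^{*})+\Delta_N+o_p(1)\le \mathcal{L}(\vartheta^{*})+2\Delta_N+o_p(1).
\end{equation*}
The first and third bounds are uniform convergence, and the middle one is (approximate) minimality of $\hat\vartheta_N$. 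It follows that $\mathcal{L}(\hat\vartheta_N)-\mathcal{L}(\vartheta^{*})=o_p(1)$.

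Next I convert this into convergence of the parameter using the well-separatedness in Assumption~\ref{assump:identification}. Fix $\epsilon>0$ and take the corresponding $\delta>0$. On the event $\{\|\hat\vartheta_N-\vartheta^{*}\|\ge\epsilon\}$, Assumption~\ref{assump:identification} gives $\mathcal{L}(\hat\vartheta_N)>\mathcal{L}(\vartheta^{*})+\delta$. Hence
\begin{equation*}
\mathbb{P}(\|\hat\vartheta_N-\vartheta^{*}\|\ge\epsilon)\le\mathbb{P}(\mathcal{L}(\hat\vartheta_N)-\mathcal{L}(\vartheta^{*})>\delta)\to0,
\end{equation*}
and since $\epsilon$ was arbitrary, $(\hat\theta_N,\hat\phi_N)\xrightarrow{p}(\theta^{*},\phi^{*})$.

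The argument itself is short; the substantive content sits in the assumptions. The main point needing care is that $\hat{\mathcal{L}}_{N,M_N}$ is indexed by two sample sizes. I will make explicit that, because $M_N$ is a deterministic function of $N$, it is a single sequence in $N$, so Assumption~\ref{assump:uniform} applies directly; this is exactly where $M_N\to\infty$ enters. A secondary point is that the neural-network parameters are generally not identified, because of permutation and scaling symmetries. The proof therefore relies entirely on the uniqueness and well-separatedness imposed in Assumption~\ref{assump:identification}, which Lemma~\ref{lem:profile-id} and Proposition~\ref{prop:closedform-id} justify at the level of $\phi$. If only $\phi^{*}$ were unique, the same argument applied to the profiled criterion $Q(\phi)$ would yield consistency of $\hat\phi_N$ alone.
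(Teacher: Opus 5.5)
Your proof is correct and follows the same route as the paper: the standard extremum-estimator argument of Newey and McFadden (Theorem 2.1), driven by Assumption~\ref{assump:uniform} and the well-separated minimum in Assumption~\ref{assump:identification}. The only difference is in the last step. You turn $\mathcal{L}(\hat\vartheta_N)-\mathcal{L}(\vartheta^{*})=o_p(1)$ into parameter consistency directly via well-separation, whereas the paper finishes with a compactness/subsequence argument phrased for deterministic sequences; your version is the cleaner way to state it for convergence in probability.
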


\begin{proof}
The argument follows the standard route for extremum estimators \citep[Theorem~2.1]{newey1994large}. By Assumption~\ref{assump:uniform}, the empirical loss surface $\hat{\mathcal{L}}_{N,M_N}$ converges uniformly in probability to the population loss $\mathcal{L}$ over the compact set $\Theta \times \Phi$ (Assumption~\ref{assump:compactness}). By continuity of $\mathcal{L}$ in $(\theta, \phi)$ (Assumption~\ref{assump:continuity}), for any $\epsilon > 0$,
\begin{equation*}
\sup_{(\theta,\phi) \in \Theta \times \Phi} \left|\hat{\mathcal{L}}_{N,M_N}(\theta,\phi) - \mathcal{L}(\theta,\phi)\right| \xrightarrow{p} 0
\end{equation*}
implies that any sequence of approximate minimizers of $\hat{\mathcal{L}}_{N,M_N}$ must eventually enter every neighborhood of the set of minimizers of $\mathcal{L}$. The well-separation condition in Assumption~\ref{assump:identification} guarantees that the minimizer $(\theta^*, \phi^*)$ is unique, so every convergent subsequence of $(\hat{\theta}_N, \hat{\phi}_N)$ converges to $(\theta^*, \phi^*)$. Since $\Theta \times \Phi$ is compact, every subsequence has a further convergent subsequence, and uniqueness of the limit yields convergence of the full sequence. The growth condition $M_N \to \infty$ in Assumption~\ref{assump:collocation_growth} ensures that the collocation-based approximation of the structured-knowledge loss $\mathcal{L}_{\text{SK}}$ contributes to the uniform convergence of the composite objective.
\end{proof}

Under misspecification of the structured-knowledge component $g_\phi$, the limiting parameters should be interpreted as pseudo-true values in the sense of \citet{white1982maximum}: the parameters that minimize the population objective $\mathcal{L}(\theta, \phi)$ even though $g_\phi$ may not coincide with the true data-generating process. The consistency guarantee then ensures convergence to these pseudo-true values. This applies universally across domains. In physics, $\phi$ might represent material constants; in economics, $\phi$ could be risk preferences or volatility parameters. In either case, Theorem~\ref{thm:consistency} guarantees that given sufficient data, the learned parameters approach their population-optimal values, despite the high-dimensional and potentially nonconvex nature of the neural network optimization landscape.


\subsubsection{Asymptotic Normality and Convergence Rates}

We now characterize the sampling distribution of the estimation error, enabling the construction of confidence intervals and hypothesis tests for the learned parameters. Define the score function as in Assumption~\ref{assump:clt}:
\begin{equation}\label{eq:score_def}
s(\mathbf{X}, \mathbf{y}; \theta, \phi) = \nabla_{(\theta, \phi)} \left[\ell(f_\theta(\mathbf{X}), \mathbf{y}) + \lambda\, \ell(f_\theta(\tilde{\mathbf{X}}), g_\phi(\tilde{\mathbf{X}}^{\text{SK}}))\right],
\end{equation}
and let
\begin{equation}\label{eq:Xi_H_def}
\Xi = \mathbb{E}\!\left[s(\mathbf{X}, \mathbf{y}; \theta^*, \phi^*)\, s(\mathbf{X}, \mathbf{y}; \theta^*, \phi^*)^\top\right], \qquad H = \nabla^2_{(\theta,\phi)}\, \mathcal{L}(\theta^*, \phi^*)
\end{equation}
denote the score covariance and the Hessian of the population loss, respectively.

\begin{theorem}[Asymptotic Normality]
\label{thm:asymptotic_normality}
Suppose Assumptions~\ref{assump:identification}--\ref{assump:hessian} hold and $M_N/N \to c$ for some $c \in (0, \infty)$. Then the SKINNs estimator is asymptotically normal at the parametric rate:
\begin{equation}\label{eq:asymp_normal}
\sqrt{N} \begin{pmatrix} \hat{\theta}_N - \theta^* \\ \hat{\phi}_N - \phi^* \end{pmatrix} \xrightarrow{d} \mathcal{N}(\mathbf{0}, V), \qquad V = H^{-1} \Xi\, H^{-1}.
\end{equation}
\end{theorem}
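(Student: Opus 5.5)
The plan is the standard Taylor-expansion argument for M-estimators \citep[Theorem~3.1]{newey1994large}, applied to the joint parameter $\vartheta=(\theta,\phi)$. Write $\hat{\vartheta}_N=(\hat{\theta}_N,\hat{\phi}_N)$ and $\vartheta^*=(\theta^*,\phi^*)$.

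\emph{Step 1: interior first-order condition.} Theorem~\ref{thm:consistency} gives $\hat{\vartheta}_N\xrightarrow{p}\vartheta^*$, and Assumption~\ref{assump:identification} places $\vartheta^*$ in the interior of $\Theta\times\Phi$. So with probability approaching one, $\hat{\vartheta}_N$ is an interior minimizer and
\begin{equation*}
\nabla_{\vartheta}\hat{\mathcal{L}}_{N,M_N}(\hat{\vartheta}_N)=o_p(N^{-1/2}).
\end{equation*}
Exact equality holds if $\hat{\vartheta}_N$ is an exact minimizer. Using only an approximate first-order condition keeps the argument valid for optimization error of order $o_p(N^{-1/2})$.

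\emph{Step 2: mean-value expansion.} Expand the gradient around $\vartheta^*$:
\begin{equation*}
0=\nabla_{\vartheta}\hat{\mathcal{L}}_{N,M_N}(\vartheta^*)+\nabla^2_{\vartheta}\hat{\mathcal{L}}_{N,M_N}(\bar{\vartheta}_N)\,(\hat{\vartheta}_N-\vartheta^*)+o_p(N^{-1/2}),
\end{equation*}
where $\bar{\vartheta}_N$ lies between $\hat{\vartheta}_N$ and $\vartheta^*$, taken row by row. This is justified by Assumption~\ref{assump:hessian}. Rearranging gives
\begin{equation*}
\sqrt{N}(\hat{\vartheta}_N-\vartheta^*)=-\big[\nabla^2\hat{\mathcal{L}}_{N,M_N}(\bar{\vartheta}_N)\big]^{-1}\sqrt{N}\,\nabla\hat{\mathcal{L}}_{N,M_N}(\vartheta^*)+o_p(1).
\end{equation*}

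\emph{Step 3: the Hessian term.} Show $\nabla^2\hat{\mathcal{L}}_{N,M_N}(\bar{\vartheta}_N)\xrightarrow{p}H$. Apply a uniform law of large numbers to the second derivatives of the data and collocation losses on a neighborhood of $\vartheta^*$, in the same way Assumption~\ref{assump:uniform} is used for the loss itself. Combine this with $\bar{\vartheta}_N\xrightarrow{p}\vartheta^*$ and continuity of $H$. Positive definiteness of $H$ then makes the inverse well defined with probability approaching one, and the continuous mapping theorem gives convergence of the inverse to $H^{-1}$.

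\emph{Step 4: the score term, which is the main obstacle.} This step is delicate because the empirical gradient combines two averages of different sizes:
\begin{equation*}
\nabla\hat{\mathcal{L}}_{N,M_N}(\vartheta^*)=\frac{1}{N}\sum_{i=1}^{N}\nabla\ell\big(f_{\theta^*}(\mathbf{X}_i),\mathbf{y}_i\big)+\frac{\lambda}{M_N}\sum_{j=1}^{M_N}\nabla\ell\big(f_{\theta^*}(\tilde{\mathbf{X}}_j),g_{\phi^*}(\tilde{\mathbf{X}}^{\text{SK}}_j)\big).
\end{equation*}
Population optimality of $\vartheta^*$ makes the expectation of the sum zero. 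I would follow Assumption~\ref{assump:clt} and pair the observations with collocation draws so that the $i$-th summand is the combined score $s(\mathbf{X}_i,\mathbf{y}_i;\vartheta^*)$ with $\tilde{\mathbf{X}}=\tilde{\mathbf{X}}_i$. Assumption~\ref{assump:clt} then gives $\sqrt{N}\,\nabla\hat{\mathcal{L}}_{N,M_N}(\vartheta^*)\xrightarrow{d}\mathcal{N}(0,\Xi)$.

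The role of $M_N/N\to c$ is to control the remainder when $M_N\neq N$. That remainder is the gap between the collocation average over $M_N$ points and its paired version over $N$ points. Since both averages are mean zero with finite variance, the gap is $O_p(M_N^{-1/2}+N^{-1/2})=O_p(N^{-1/2})$. It would be absorbed into $\Xi$, which should then be read as the covariance of the combined score. Because the two components are independent, that covariance carries the collocation block scaled by $1/c$. I expect this bookkeeping, reconciling $\Xi$ as defined in Assumption~\ref{assump:clt} with the two-sample structure, to be the step needing the most care; the other steps are routine.

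\emph{Step 5: conclusion.} Slutsky's theorem applied to Steps 3 and 4 yields
\begin{equation*}
\sqrt{N}(\hat{\vartheta}_N-\vartheta^*)\xrightarrow{d}\mathcal{N}\big(0,H^{-1}\Xi H^{-1}\big),
\end{equation*}
using the symmetry of $H$. This is \eqref{eq:asymp_normal}.
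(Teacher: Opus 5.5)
Your proposal is correct and follows the same route as the paper's proof: the first-order condition at the interior minimizer, a mean-value expansion of the empirical gradient, convergence of the empirical Hessian to $H$, the score CLT of Assumption~\ref{assump:clt}, and Slutsky's theorem. Your Step~4 is more careful than the paper's proof, which simply writes the gradient as one average of $N$ paired scores. Your conclusion is right: with independent collocation draws, the collocation block of the limiting score covariance is scaled by $1/c$, so $\Xi=\mathbb{E}[ss^\top]$ is in general exact only for $c=1$. That conclusion follows most cleanly from two independent CLTs rather than from pairing. Likewise, the uniform law of large numbers for second derivatives that you invoke in Step~3 is an extra condition beyond the stated assumptions, which the paper's proof also uses implicitly.
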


\begin{proof}
Since $(\hat{\theta}_N, \hat{\phi}_N)$ minimizes $\hat{\mathcal{L}}_{N,M_N}$, the first-order condition gives
\begin{equation*}
\nabla_{(\theta,\phi)} \hat{\mathcal{L}}_{N,M_N}(\hat{\theta}_N, \hat{\phi}_N) = \mathbf{0}.
\end{equation*}
A mean-value expansion of this condition around the true parameter $(\theta^*, \phi^*)$ yields
\begin{equation}\label{eq:taylor_foc}
\mathbf{0} = \frac{1}{N}\sum_{i=1}^{N} s(\mathbf{X}_i, \mathbf{y}_i; \theta^*, \phi^*) + \nabla^2_{(\theta,\phi)} \hat{\mathcal{L}}_{N,M_N}(\bar{\theta}_N, \bar{\phi}_N) \begin{pmatrix} \hat{\theta}_N - \theta^* \\ \hat{\phi}_N - \phi^* \end{pmatrix},
\end{equation}
where $(\bar{\theta}_N, \bar{\phi}_N)$ lies on the line segment between $(\hat{\theta}_N, \hat{\phi}_N)$ and $(\theta^*, \phi^*)$. By consistency (Theorem~\ref{thm:consistency}), $(\bar{\theta}_N, \bar{\phi}_N) \xrightarrow{p} (\theta^*, \phi^*)$. Combined with the uniform convergence in Assumption~\ref{assump:uniform} and the smoothness in Assumption~\ref{assump:hessian}, the empirical Hessian converges in probability:
\begin{equation*}
\nabla^2_{(\theta,\phi)} \hat{\mathcal{L}}_{N,M_N}(\bar{\theta}_N, \bar{\phi}_N) \xrightarrow{p} H.
\end{equation*}
Rearranging Equation~\eqref{eq:taylor_foc} gives
\begin{equation*}
\sqrt{N} \begin{pmatrix} \hat{\theta}_N - \theta^* \\ \hat{\phi}_N - \phi^* \end{pmatrix} = -\left[\nabla^2_{(\theta,\phi)} \hat{\mathcal{L}}_{N,M_N}(\bar{\theta}_N, \bar{\phi}_N)\right]^{-1} \frac{1}{\sqrt{N}}\sum_{i=1}^{N} s(\mathbf{X}_i, \mathbf{y}_i; \theta^*, \phi^*).
\end{equation*}
By Assumption~\ref{assump:clt}, the normalized score converges in distribution to $\mathcal{N}(\mathbf{0}, \Xi)$. Slutsky's theorem, together with the positive definiteness of $H$ (Assumption~\ref{assump:hessian}) guaranteeing invertibility, yields
\begin{equation*}
\sqrt{N} \begin{pmatrix} \hat{\theta}_N - \theta^* \\ \hat{\phi}_N - \phi^* \end{pmatrix} \xrightarrow{d} \mathcal{N}(\mathbf{0},\, H^{-1}\Xi\, H^{-1}).
\end{equation*}
The proportional growth condition $M_N / N \to c \in (0, \infty)$ in Assumption~\ref{assump:collocation_growth} ensures that the collocation-based approximation of the structured-knowledge loss introduces no additional bias that would degrade the $O_p(N^{-1/2})$ convergence rate: both the data-loss component and the structured-knowledge-loss component of the score contribute to $\Xi$ at the same order.
\end{proof}


Several features of this result merit emphasis. First, the asymptotic covariance $V = H^{-1}\Xi\, H^{-1}$ takes the sandwich form of \citet{huber1967behavior} and \citet{white1980heteroskedasticity}. This structure is robust to potential misspecification of the structured-knowledge component $g_\phi$: even if the embedded theory is an imperfect description of the true data-generating process, the sandwich covariance provides asymptotically valid inference. Second, the convergence rate is $O_p(N^{-1/2})$, the standard parametric rate. This holds despite the potentially very high dimensionality of the neural network parameters $\theta$. The structured-knowledge component acts as implicit regularization, effectively reducing the complexity of the estimation problem. Third, asymptotic normality enables the construction of confidence intervals and hypothesis tests. For a scalar component $\phi_j$ of the structural parameters, an approximate $(1 - \alpha)$ confidence interval takes the form
\begin{equation}\label{eq:ci_phi}
\hat{\phi}_{j,N} \pm z_{\alpha/2} \sqrt{\frac{V_{jj}}{N}},
\end{equation}
where $V_{jj}$ is the $(j,j)$-th element of $V$ and $z_{\alpha/2}$ is the standard normal critical value.

In Section~\ref{app:sp-eff}, we refine these results for the structural parameters under orthogonal moment conditions, establishing semiparametric efficiency within the given moment model. We note that the formal results above are derived under a fixed neural network architecture, where the dimension $p$ of $\theta$ remains constant as the sample size $N$ increases. This aligns with standard $M$-estimator theory and reflects common empirical practice, where a network architecture is selected and held fixed during training. Extending these results to the growing-width or growing-depth regime, in which the network architecture scales with $N$ as in sieve estimation theory \citep{chen2007large}, would require additional regularity conditions on the rate of growth and is left for future work. Even under a fixed architecture, modern neural networks with millions of parameters provide substantial approximation capacity for practical applications.

\subsubsection{Inference and Practical Considerations}\label{sec:infer_practical}

Implementation of the asymptotic results in Theorems~\ref{thm:consistency}--\ref{thm:asymptotic_normality} requires consistent estimators of the Hessian $H$ and the score covariance $\Xi$. The Hessian can be estimated via the empirical analog
\begin{equation}\label{eq:H_hat}
\hat{H}_N = \frac{1}{N} \sum_{i=1}^{N} \nabla^2_{(\theta,\phi)}\, \hat{\mathcal{L}}_{N,M_N}(\hat{\theta}_N, \hat{\phi}_N;\, \mathbf{X}_i, \mathbf{y}_i),
\end{equation}
computed efficiently through automatic differentiation. The score covariance is estimated by the outer product of gradients:
\begin{equation}\label{eq:Xi_hat}
\hat{\Xi}_N = \frac{1}{N} \sum_{i=1}^{N} s(\mathbf{X}_i, \mathbf{y}_i;\, \hat{\theta}_N, \hat{\phi}_N)\, s(\mathbf{X}_i, \mathbf{y}_i;\, \hat{\theta}_N, \hat{\phi}_N)^\top.
\end{equation}
The sandwich covariance estimator $\hat{V}_N = \hat{H}_N^{-1}\, \hat{\Xi}_N\, \hat{H}_N^{-1}$ is then consistent for $V$ under the conditions of Theorem~\ref{thm:asymptotic_normality}. Because scientific conclusions typically center on the low-dimensional structural parameters $\phi$ rather than the high-dimensional nuisance weights $\theta$, block-matrix inversion can be used to extract the $q \times q$ covariance submatrix for $\phi$ without computing or inverting the full $(p+q) \times (p+q)$ Hessian. This makes confidence intervals as in Equation~\eqref{eq:ci_phi} and standard Wald-type hypothesis tests for $\phi$ computationally feasible even when the neural network contains millions of parameters. As a finite-sample alternative, nonparametric bootstrap resampling of $\mathcal{D}_N$ with re-optimization of the SKINNs objective can be used to estimate the sampling distribution of $(\hat{\theta}_N, \hat{\phi}_N)$ without relying on asymptotic approximations.

\subsection{The GMM Interpretation for Economic Applications}
\label{sec:gmm_interpretation}

The statistical properties established in Section~\ref{sec:general_theory} apply universally across scientific domains. For economic and financial applications, however, the SKINNs framework admits a natural reinterpretation through the lens of the generalized method of moments \citep[GMM;][]{hansen1982large}. This perspective clarifies the economic meaning of the composite objective, connects SKINNs to a rich body of established econometric theory, and provides practical guidance for selecting the regularization parameter $\lambda$. We emphasize that GMM is one of several valid theoretical lenses on SKINNs; Bayesian, variational, and machine learning perspectives are developed in Section~\ref{sec:alternative_perspectives}.

\subsubsection{SKINNs as Regularized Overidentified GMM}

Define the vector-valued moment function
\begin{equation}\label{eq:moment_function}
\mathbf{m}(\mathbf{X}, \mathbf{y}; \theta, \phi) = \begin{bmatrix} 
f_\theta(\mathbf{X}) - \mathbf{y} \\ 
f_\theta(\tilde{\mathbf{X}}) - g_\phi(\tilde{\mathbf{X}}^{\text{SK}}) 
\end{bmatrix},
\end{equation}
where the first component is the prediction error on observed data and the second is the deviation between the neural network and the structured-knowledge function evaluated at collocation points. Under correct specification, both moment conditions are satisfied in expectation:
\begin{equation}\label{eq:moment_conditions}
\mathbb{E}[\mathbf{m}(\mathbf{X}, \mathbf{y}; \theta^*, \phi^*)] = \mathbf{0}.
\end{equation}
The SKINNs objective can then be written as a quadratic form in these moments:
\begin{equation}\label{eq:gmm_objective}
\mathcal{L}(\theta, \phi) = \mathbb{E}\!\left[\mathbf{m}(\mathbf{X}, \mathbf{y}; \theta, \phi)^\top \mathbf{W}\, \mathbf{m}(\mathbf{X}, \mathbf{y}; \theta, \phi)\right],
\end{equation}
with the diagonal weighting matrix $\mathbf{W} = \operatorname{diag}(1, \lambda)$. The empirical counterpart is
\begin{equation}\label{eq:empirical_gmm}
\hat{\mathcal{L}}_{N,M_N}(\theta, \phi) = \mathbf{m}_N(\theta, \phi)^\top \mathbf{W}\, \mathbf{m}_N(\theta, \phi),
\end{equation}
where $\mathbf{m}_N(\theta, \phi)$ is the sample analog of the moment vector, and the estimator $(\hat{\theta}_N, \hat{\phi}_N)$ minimizes this weighted quadratic form, exactly as in standard GMM.

The system is overidentified in the sense that two conceptually distinct sources of information, observed outcomes and theoretical structure, each supply a moment condition for the joint estimation of $(\theta, \phi)$. SKINNs differ from classical GMM in three respects. First, $\theta$ is high-dimensional and approximates an infinite-dimensional function, whereas classical GMM typically involves finite-dimensional parameters. Second, $\mathbf{W}$ is not the efficiency-optimal weight (the inverse of the moment covariance) but a user-specified diagonal matrix encoding a regularization choice. Third, the second moment condition enforces alignment between $f_\theta$ and $g_\phi$ over a potentially broader input domain than the observed data, acting as a soft structural constraint that promotes generalization to regions where observations may be sparse.

This formulation clarifies what SKINNs achieves from an economic standpoint. The first moment condition ensures empirical accuracy; the second ensures theoretical plausibility. The weighting parameter $\lambda$ governs the relative importance of these two objectives: a large $\lambda$ reflects strong confidence in the embedded theory, while a small $\lambda$ prioritizes empirical fit.

\subsubsection{The Role of $\lambda$ as Econometric Weighting}

In classical GMM, efficiency is achieved by setting the weighting matrix to the inverse of the moment covariance:
\begin{equation}\label{eq:optimal_weight}
\mathbf{W}_{\text{optimal}} = \left(\mathbb{E}\!\left[\mathbf{m}(\mathbf{X}, \mathbf{y}; \theta^*, \phi^*)\, \mathbf{m}(\mathbf{X}, \mathbf{y}; \theta^*, \phi^*)^\top\right]\right)^{-1}.
\end{equation}
SKINNs instead use $\mathbf{W} = \operatorname{diag}(1, \lambda)$, which is generally not optimal in the Hansen-Singleton sense. This deviation is deliberate: when the model is complex (high-dimensional $\theta$) and data are noisy, imposing structure through regularization can improve finite-sample performance even at the cost of asymptotic efficiency, a principle well established in the high-dimensional econometrics literature \citep{belloni2014high}. The parameter $\lambda$ governs a bias-variance tradeoff: increasing $\lambda$ reduces variance by constraining the function space but may introduce bias if $g_\phi$ is misspecified.

Even within the restricted diagonal class $\mathbf{W}(\lambda) = \operatorname{diag}(I, \lambda I)$, one can choose $\lambda$ to optimize a well-defined criterion.

\begin{proposition}[Restricted-optimal $\lambda$]\label{prop:lambda-restricted}
Let $V_\phi(\lambda)$ denote the asymptotic covariance matrix of $\hat{\phi}$ implied by the sandwich formula under weighting $\mathbf{W}(\lambda) = \operatorname{diag}(I, \lambda I)$, assuming the regularity conditions for asymptotic normality hold. Define
\begin{equation}\label{eq:lambda_restricted}
\lambda^* \in \arg\min_{\lambda > 0}\; \mathrm{tr}\!\left(V_\phi(\lambda)\right).
\end{equation}
Then $\lambda^*$ is the variance-minimizing weight within the restricted diagonal class.
\end{proposition}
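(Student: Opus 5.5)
The statement is essentially definitional, so the plan is to (a) make $V_\phi(\lambda)$ precise as a function of $\lambda$, (b) show the minimization problem in \eqref{eq:lambda_restricted} is well posed, and (c) read off the optimality claim.

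\textbf{Step (a): express $V_\phi(\lambda)$.} Fix $\lambda>0$. By Theorem~\ref{thm:asymptotic_normality}, the estimator obtained under $\mathbf{W}(\lambda)$ has asymptotic covariance $V(\lambda)=H(\lambda)^{-1}\Xi(\lambda)H(\lambda)^{-1}$. Here $H(\lambda)$ and $\Xi(\lambda)$ are the Hessian and score covariance of the objective in \eqref{eq:gmm_objective}, evaluated at the pseudo-true point $(\theta^*(\lambda),\phi^*(\lambda))$.

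Partition $H(\lambda)$ into its $(\theta,\theta)$, $(\theta,\phi)$ and $(\phi,\phi)$ blocks. By the block-inversion formula, the $\phi$-rows of $H(\lambda)^{-1}$ equal
\[
S(\lambda)^{-1}\bigl[-H_{\phi\theta}H_{\theta\theta}^{-1},\ I\bigr],
\qquad
S(\lambda)=H_{\phi\phi}-H_{\phi\theta}H_{\theta\theta}^{-1}H_{\theta\phi}.
\]
This gives $V_\phi(\lambda)=S(\lambda)^{-1}A(\lambda)\,\Xi(\lambda)\,A(\lambda)^\top S(\lambda)^{-1}$ with $A(\lambda)=\bigl[-H_{\phi\theta}H_{\theta\theta}^{-1},\ I\bigr]$. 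This is a well-defined positive semidefinite $q\times q$ matrix for every admissible $\lambda$, because Assumption~\ref{assump:hessian} makes $H(\lambda)$ positive definite.

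\textbf{Step (b): existence of a minimizer.} Let $\Lambda\subseteq(0,\infty)$ be the set of $\lambda$ for which the regularity conditions hold. The map $\lambda\mapsto\mathrm{tr}\,V_\phi(\lambda)$ is bounded below by $0$, so its infimum over $\Lambda$ exists. To show the infimum is attained, I would establish continuity of $\lambda\mapsto(\theta^*(\lambda),\phi^*(\lambda))$ in two steps. First, Berge's maximum theorem applies because the objective is jointly continuous in $(\theta,\phi,\lambda)$ and $\Theta\times\Phi$ is compact. Second, uniqueness of the minimizer (Assumption~\ref{assump:identification}) upgrades the upper hemicontinuous argmin correspondence to a continuous function. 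Continuity then passes through $H$, $\Xi$ and the matrix inverse by Assumption~\ref{assump:hessian} and dominated convergence.

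With continuity in hand, a minimizer exists on any compact subinterval $[\underline\lambda,\bar\lambda]\subset\Lambda$. On the whole half-line I would state the claim with the understanding implicit in ``$\lambda^*\in\arg\min$'': whenever the argmin set is nonempty, any element of it qualifies.

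\textbf{Step (c): optimality.} By definition of $\arg\min$, $\mathrm{tr}\,V_\phi(\lambda^*)\le \mathrm{tr}\,V_\phi(\lambda)$ for every $\lambda>0$. The trace equals the sum of the asymptotic variances of the components $\hat\phi_j$. Hence no other member of $\{\mathbf{W}(\lambda)\}$ yields a smaller total asymptotic variance, which is exactly the meaning of ``variance-minimizing within the restricted diagonal class.''

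I would add a remark that this ranking is by trace only, not in the Loewner order. It need not coincide with Hansen's optimum in \eqref{eq:optimal_weight}, since that weighting matrix lies outside the diagonal class in general.

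\textbf{Main obstacle.} The only genuinely delicate point is the continuity and attainment argument in Step (b). The pseudo-true value $\phi^*(\lambda)$ moves with $\lambda$ under misspecification, so $V_\phi$ is not an explicit function of $\lambda$. If continuity is doubtful, I would fall back on stating the result conditionally on nonemptiness of the argmin, and note that the trace is lower semicontinuous whenever $H(\lambda)$ stays uniformly positive definite.
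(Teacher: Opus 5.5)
Your proposal is correct and matches the paper. The paper's entire proof is your Step (c): $\lambda^*$ is variance-minimizing within $\{\mathbf{W}(\lambda)\}$ by the definition of $\arg\min$. Steps (a) and (b) are extra well-posedness material that the statement does not need, because it already presupposes a nonempty argmin. Your closing lower-semicontinuity remark is not justified as stated, but nothing in the argument depends on it.
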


\begin{proof}
This follows directly from the definition of $\lambda^*$ as the minimizer of the stated criterion over the admissible set $\lambda > 0$.
\end{proof}

The next result provides a transparent closed-form expression for $\lambda$ in a stylized setting, clarifying its interpretation as a relative precision ratio.

\begin{proposition}[Closed-form $\lambda$ under two noisy signals]\label{prop:lambda-closedform}
Assume the target function $f_0(\mathbf{X})$ is observed through two conditionally unbiased noisy signals:
\begin{equation}\label{eq:two_signals}
\mathbf{y} = f_0(\mathbf{X}) + \varepsilon, \qquad g_{\phi_0}(\mathbf{X}^{\text{SK}}) = f_0(\mathbf{X}) + \eta,
\end{equation}
where $\mathbb{E}[\varepsilon \mid \mathbf{X}] = 0$, $\mathbb{E}[\eta \mid \mathbf{X}] = 0$, $\varepsilon \perp\!\!\!\perp \eta \mid \mathbf{X}$, and $\mathrm{Var}(\varepsilon \mid \mathbf{X}) = \sigma_\varepsilon^2$, $\mathrm{Var}(\eta \mid \mathbf{X}) = \sigma_\eta^2$ are constants. Consider estimating $f_0$ by minimizing the SKINNs objective over all measurable $f$ with finite second moment:
\begin{equation}\label{eq:stylized_skinns}
\mathbb{E}\!\left[(f(\mathbf{X}) - \mathbf{y})^2\right] + \lambda\, \mathbb{E}\!\left[(f(\mathbf{X}) - g_{\phi_0}(\mathbf{X}^{\text{SK}}))^2\right].
\end{equation}
Then the minimizer is
\begin{equation}\label{eq:f_star_stylized}
f^*(\mathbf{X}) = \frac{\mathbf{y} + \lambda\, g_{\phi_0}(\mathbf{X}^{\text{SK}})}{1 + \lambda},
\end{equation}
and the mean squared error $\mathbb{E}\!\left[(f^*(\mathbf{X}) - f_0(\mathbf{X}))^2\right]$ is minimized at
\begin{equation}\label{eq:lambda_star}
\lambda^* = \frac{\sigma_\varepsilon^2}{\sigma_\eta^2}.
\end{equation}
\end{proposition}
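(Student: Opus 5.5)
The plan is to follow the same two-step template as the proof of Proposition~\ref{prop:closedform-id}. First, characterize the minimizer of \eqref{eq:stylized_skinns} pointwise as a precision-weighted average. Second, substitute the signal decomposition \eqref{eq:two_signals} and optimize the resulting scalar mean squared error over $\lambda$.

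\textbf{Step 1: the minimizer.} For a realization of the two signals $(\mathbf{y}, g_{\phi_0}(\mathbf{X}^{\text{SK}}))$ at a given $\mathbf{X}$, the integrand $(u-\mathbf{y})^2 + \lambda\,(u-g_{\phi_0}(\mathbf{X}^{\text{SK}}))^2$ is a strictly convex quadratic in $u\in\mathbb{R}$. Its first-order condition is exactly the analogue of \eqref{eq:foc_u}, and solving it gives \eqref{eq:f_star_stylized}. Finite second moments of $\mathbf{y}$ and $g_{\phi_0}$ guarantee that this choice is admissible and that minimizing the integrand pointwise also minimizes the expectation.

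\textbf{Main obstacle: the information set.} I expect the main difficulty to be making precise which information $f$ is allowed to depend on. If $f$ may depend on $\mathbf{X}$ only, conditioning on $\mathbf{X}$ as in Proposition~\ref{prop:closedform-id} gives $f^*=(\mathbb{E}[\mathbf{y}\mid\mathbf{X}]+\lambda\,\mathbb{E}[g_{\phi_0}\mid\mathbf{X}])/(1+\lambda)=f_0$ for every $\lambda$. In that case the MSE is identically zero and $\lambda$ is irrelevant. The statement is only meaningful when $f^*$ is the combination of the realized signals, for example as the per-observation (finite-sample) fitted value, which is what \eqref{eq:f_star_stylized} literally writes. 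I would state this interpretation explicitly: the noise in each signal is not averaged out inside the estimator, and the estimator is an affine combination of the two noisy signals.

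\textbf{Step 2: optimizing over $\lambda$.} Using \eqref{eq:two_signals}, the estimation error is
\[
f^*(\mathbf{X})-f_0(\mathbf{X})=\frac{\varepsilon+\lambda\eta}{1+\lambda}.
\]
Conditional on $\mathbf{X}$, the cross term vanishes by conditional independence and the zero conditional means, so
\[
\mathbb{E}\big[(f^*-f_0)^2\big]=\frac{\sigma_\varepsilon^2+\lambda^2\sigma_\eta^2}{(1+\lambda)^2},
\]
which is constant in $\mathbf{X}$ and hence equal to the unconditional MSE. Differentiating in $\lambda$ gives a numerator proportional to $\lambda\sigma_\eta^2-\sigma_\varepsilon^2$, up to the positive factor $2/(1+\lambda)^3$. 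The derivative is therefore negative for $\lambda<\sigma_\varepsilon^2/\sigma_\eta^2$ and positive above it. This yields the unique minimizer \eqref{eq:lambda_star}.

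The result has a natural reading as inverse-variance weighting: the weight on $g_{\phi_0}$ relative to $\mathbf{y}$ equals $\sigma_\varepsilon^{-2}$ divided by $\sigma_\eta^{-2}$ inverted, i.e. the relative precision of the theory signal.
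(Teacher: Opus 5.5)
Your proposal is correct and follows essentially the paper's route: pointwise minimization of the quadratic integrand to obtain $(\mathbf{y}+\lambda g_{\phi_0}(\mathbf{X}^{\text{SK}}))/(1+\lambda)$, followed by the conditional MSE $(\sigma_\varepsilon^2+\lambda^2\sigma_\eta^2)/(1+\lambda)^2$ and its first-order condition. Two of your refinements improve on the paper: your information-set caveat is well founded (over functions of $\mathbf{X}$ alone the minimizer is $f_0$ for every $\lambda$, so the paper's pointwise step tacitly lets $f$ depend on the realized signals), and your sign analysis of the derivative gives a valid global argument, whereas the paper's claim that the MSE is strictly convex in $\lambda$ fails for large $\lambda$, since its second derivative has the sign of $\sigma_\eta^2+3\sigma_\varepsilon^2-2\lambda\sigma_\eta^2$.
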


\begin{proof}
Given $\phi_0$, the pointwise minimizer under squared loss is $f^*(\mathbf{X}) = (\mathbf{y} + \lambda\, g_{\phi_0}(\mathbf{X}^{\text{SK}}))/(1 + \lambda)$. Substituting the signal model yields
\begin{equation*}
f^*(\mathbf{X}) - f_0(\mathbf{X}) = \frac{\varepsilon + \lambda\, \eta}{1 + \lambda}.
\end{equation*}
Using conditional independence and constant conditional variances,
\begin{equation*}
\mathbb{E}\!\left[(f^*(\mathbf{X}) - f_0(\mathbf{X}))^2 \mid \mathbf{X}\right] = \frac{\sigma_\varepsilon^2 + \lambda^2 \sigma_\eta^2}{(1 + \lambda)^2}.
\end{equation*}
Differentiating with respect to $\lambda$ and setting the result to zero gives
\begin{equation*}
\frac{d}{d\lambda}\left(\frac{\sigma_\varepsilon^2 + \lambda^2 \sigma_\eta^2}{(1 + \lambda)^2}\right) = 0 \quad \Longleftrightarrow \quad \lambda^* = \frac{\sigma_\varepsilon^2}{\sigma_\eta^2},
\end{equation*}
and the second-order condition is satisfied since the objective is strictly convex in $\lambda$ for $\sigma_\eta^2 > 0$.
\end{proof}

Proposition~\ref{prop:lambda-closedform} formalizes the interpretation of $\lambda$ as a relative precision (inverse-noise-variance) weight: the structured-knowledge signal receives more weight when the data are noisy ($\sigma_\varepsilon^2$ large) and less weight when the theory is imprecise ($\sigma_\eta^2$ large). In general applications, Proposition~\ref{prop:lambda-restricted} motivates choosing $\lambda$ to optimize a variance or mean squared error criterion within the restricted diagonal class.

\subsection{Semiparametric Efficiency}\label{app:sp-eff}

The GMM formulation in Section~\ref{sec:gmm_interpretation} connects SKINNs to the literature on semiparametric estimation and penalized moment-based methods. In the semiparametric framework \citep{bickel1993efficient}, the structural parameters $\phi$ are finite-dimensional and economically interpretable, while the neural network parameters $\theta$ approximate an infinite-dimensional nuisance function. SKINNs can thus be viewed as a neural network implementation of sieve GMM \citep{ai2003efficient}, where the universal approximation properties of deep networks \citep{cybenko1989approximation, hornik1989multilayer} ensure that $f_\theta$ can approximate any continuous function as the network capacity grows. The structured-knowledge component $g_\phi$ plays a role analogous to an instrument in classical GMM: it provides additional identifying information that refines the estimation of $\phi$ beyond what pure data fitting would achieve. Recent work on deep GMM \citep{bennett2019deep, farrell2021deep} has shown that neural networks can learn efficient instruments and conditional expectations in moment-based frameworks. SKINNs extend these ideas by incorporating an explicit structured-knowledge component with learnable parameters, enabling joint discovery of both the flexible function and the economically meaningful structural parameters.

The use of $\lambda$ to balance moment conditions also connects to the literature on regularized GMM. \citet{carrasco2007linear} and \citet{antoine2009efficient} study GMM with regularization penalties to stabilize estimation under weak instruments or ill-conditioned moment covariance matrices. While their focus is on numerical stability, the underlying principle is shared: judiciously penalizing certain dimensions of the objective can improve finite-sample performance. SKINNs extend this idea to infinite-dimensional function approximation with neural networks.

A natural question is whether the SKINNs estimator achieves any efficiency optimality for the structural parameters $\phi$. We show below that under orthogonal moment conditions, in which the infinite-dimensional nuisance parameter exerts only a second-order influence on the estimation of $\phi$, the SKINNs estimator attains the semiparametric efficiency bound for the given moment model.

\subsubsection{Setup and Assumptions}

Let $\mathbf{Z} = (\mathbf{X}, \mathbf{y})$ denote the generic observation and let the finite-dimensional parameter of interest be $\phi \in \mathbb{R}^q$.\footnote{We use $\mathbf{Z}$ for the generic observation in this subsection to avoid conflict with the GMM weighting matrix $\mathbf{W}$ in Section~\ref{sec:gmm_interpretation}.} Let $f$ denote an infinite-dimensional nuisance object taking values in a function space $\mathcal{F}$, with true value $f_0$. In the SKINNs context, $f$ corresponds to the neural network approximator $f_\theta$ and $f_0$ to its population limit. Suppose the model is characterized by a vector of moment restrictions
\begin{equation}\label{eq:moment_restrictions}
\mathbb{E}\!\left[m(\mathbf{Z};\, \phi^*,\, f_0)\right] = \mathbf{0}, \qquad m(\cdot;\, \phi,\, f) \in \mathbb{R}^k,\quad k \ge q,
\end{equation}
where $\phi^*$ denotes the true (or pseudo-true) parameter value.\footnote{We use $\Omega$ for the moment covariance matrix in this subsection, following the convention in the semiparametric efficiency literature. This is distinct from the score covariance $\Xi$ defined in Section~\ref{sec:general_theory}. Under correct specification of the moment model, $\Omega$ coincides with the relevant block of $\Xi$.} Let $\hat{f}$ be a first-stage estimator of $f_0$ and define the feasible optimal-weight GMM estimator
\begin{equation}\label{eq:feasible_gmm}
\hat{\phi} \in \arg\min_{\phi \in \Phi}\; \hat{g}(\phi)^\top \hat{\Omega}^{-1} \hat{g}(\phi), \qquad \hat{g}(\phi) \equiv \frac{1}{N} \sum_{i=1}^{N} m(\mathbf{Z}_i;\, \phi,\, \hat{f}),
\end{equation}
where $\hat{\Omega}$ is a consistent estimator of $\Omega \equiv \mathbb{E}\!\left[m(\mathbf{Z};\, \phi^*,\, f_0)\, m(\mathbf{Z};\, \phi^*,\, f_0)^\top\right]$.

\begin{assumption}\label{ass:sp}
\textup{(i)} \textbf{(Smoothness)} $m(\mathbf{Z};\, \phi,\, f)$ is continuously differentiable in $\phi$ in a neighborhood of $(\phi^*, f_0)$, and the Jacobian
\begin{equation}\label{eq:jacobian_G}
G \equiv \frac{\partial}{\partial \phi^\top} \mathbb{E}\!\left[m(\mathbf{Z};\, \phi,\, f_0)\right]\bigg|_{\phi = \phi^*}
\end{equation}
exists and has full column rank $q$.

\textup{(ii)} \textbf{(Orthogonality)} For every direction $h$ in the tangent set of $\mathcal{F}$ at $f_0$,
\begin{equation*}
\left.\frac{d}{dt}\, \mathbb{E}\!\left[m(\mathbf{Z};\, \phi^*,\, f_0 + t\, h)\right]\right|_{t=0} = \mathbf{0}.
\end{equation*}

\textup{(iii)} \textbf{(Rates)} $\|\hat{f} - f_0\| = o_p(N^{-1/4})$ under a norm such that the remainder bounds in \textup{(iv)} hold.

\textup{(iv)} \textbf{(Remainder control)} Uniformly over $\phi$ in a neighborhood of $\phi^*$,
\begin{equation*}
\left\|\frac{1}{N} \sum_{i=1}^{N} \left(m(\mathbf{Z}_i;\, \phi,\, \hat{f}) - m(\mathbf{Z}_i;\, \phi,\, f_0)\right) - \mathbb{E}\!\left[m(\mathbf{Z};\, \phi,\, \hat{f}) - m(\mathbf{Z};\, \phi,\, f_0)\right]\right\| = o_p(N^{-1/2}),
\end{equation*}
and the population bias admits a second-order bound:
\begin{equation*}
\left\|\mathbb{E}\!\left[m(\mathbf{Z};\, \phi^*,\, \hat{f}) - m(\mathbf{Z};\, \phi^*,\, f_0)\right]\right\| \le C\, \|\hat{f} - f_0\|^2
\end{equation*}
for some constant $C$ with probability approaching one.
\end{assumption}

Assumption~\ref{ass:sp}(ii) is the key condition. Orthogonality ensures that estimation error in the first-stage nuisance parameter $\hat{f}$ has only a second-order effect on the estimation of $\phi$, so that the infinite-dimensional estimation problem does not degrade the $N^{-1/2}$ convergence rate for the finite-dimensional parameter of interest. This condition is the semiparametric analog of the Neyman orthogonality condition used in the debiased machine learning literature \citep{Chernozhukov2018}. Assumption~\ref{ass:sp}(iii) requires that the first-stage estimator converges at a rate faster than $N^{-1/4}$, a condition satisfied by neural network sieves under standard smoothness conditions on the target function \citep{chen2007large, farrell2021deep}.

\subsubsection{Efficiency Result}

\begin{theorem}[Asymptotic linearity and semiparametric efficiency]\label{thm:sp-eff}
Under Assumption~\ref{ass:sp} and consistency of $\hat{\Omega}$,
\begin{equation}\label{eq:sp_eff_normality}
\sqrt{N}(\hat{\phi} - \phi^*) \;\Rightarrow\; \mathcal{N}\!\left(\mathbf{0},\, V^*\right), \qquad V^* \equiv (G^\top \Omega^{-1} G)^{-1}.
\end{equation}
Moreover, among regular asymptotically linear estimators based on the moment restriction $\mathbb{E}\!\left[m(\mathbf{Z};\, \phi,\, f_0)\right] = \mathbf{0}$, the covariance $V^*$ is the minimal achievable asymptotic variance (the optimal-GMM bound), so $\hat{\phi}$ is semiparametrically efficient for $\phi$ relative to this moment model.
\end{theorem}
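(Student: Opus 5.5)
The argument follows the standard two-step GMM route with an estimated nuisance. The idea is to show that plugging in $\hat f$ changes the sample moment only by $o_p(N^{-1/2})$. Once that is done, $\hat\phi$ behaves like the infeasible optimal-GMM estimator with $f_0$ known. Consistency of $\hat\phi$ for $\phi^*$ is taken as given; it is implicit in the neighborhood-uniform statements of Assumption~\ref{ass:sp}, or can be obtained as in Theorem~\ref{thm:consistency}.

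\emph{Step 1 (nuisance replacement).} For $\phi$ in a shrinking neighborhood of $\phi^*$, decompose
\begin{equation*}
\hat g(\phi) = \frac1N\sum_i m(\mathbf{Z}_i;\phi,f_0) + \Big\{\tfrac1N\sum_i [m(\mathbf{Z}_i;\phi,\hat f)-m(\mathbf{Z}_i;\phi,f_0)] - \mathbb{E}[\cdot]\Big\} + \mathbb{E}\big[m(\mathbf{Z};\phi,\hat f)-m(\mathbf{Z};\phi,f_0)\big].
\end{equation*}
The middle term is $o_p(N^{-1/2})$ uniformly by the first part of Assumption~\ref{ass:sp}(iv). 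At $\phi^*$, the last term is bounded by $C\|\hat f-f_0\|^2 = o_p(N^{-1/2})$, using (iii) and the second-order bound in (iv). That second-order bound is exactly where orthogonality (ii) enters, since it kills the first-order pathwise derivative. Hence
\begin{equation*}
\sqrt N\,\hat g(\phi^*) = N^{-1/2}\sum_i m(\mathbf{Z}_i;\phi^*,f_0) + o_p(1) \Rightarrow \mathcal N(0,\Omega)
\end{equation*}
by the CLT, using the i.i.d.\ sampling and finite $\Omega$.

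\emph{Step 2 (linearization).} From the first-order condition $\hat G(\hat\phi)^\top\hat\Omega^{-1}\hat g(\hat\phi)=0$ we expand $\hat g(\hat\phi)=\hat g(\phi^*)+\bar G(\hat\phi-\phi^*)$. Here $\hat G,\bar G\to_p G$ by continuous differentiability in (i), consistency of $\hat\phi$, and a ULLN for the Jacobian. This gives
\begin{equation*}
\sqrt N(\hat\phi-\phi^*) = -(G^\top\Omega^{-1}G)^{-1}G^\top\Omega^{-1}\sqrt N\hat g(\phi^*)+o_p(1),
\end{equation*}
where full column rank of $G$ and consistency of $\hat\Omega$ ensure the invertibility needed. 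The sandwich then collapses to $(G^\top\Omega^{-1}G)^{-1}=V^*$, establishing \eqref{eq:sp_eff_normality} and asymptotic linearity with influence function $-(G^\top\Omega^{-1}G)^{-1}G^\top\Omega^{-1}m(\mathbf{Z};\phi^*,f_0)$.

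\emph{Step 3 (optimality).} Any regular asymptotically linear estimator based on the moment restriction has influence function $-(A G)^{-1}A\,m$ for some $q\times k$ matrix $A$ with $AG$ invertible, at least after orthogonal projection of the nuisance direction, which (ii) makes trivial. Its variance is $(AG)^{-1}A\Omega A^\top(AG)^{-\top}$. The standard Hansen argument shows this dominates $V^*$ in the psd order. Equivalently, the difference equals $B\Omega B^\top$ with $B=(AG)^{-1}A-(G^\top\Omega^{-1}G)^{-1}G^\top\Omega^{-1}$, because the cross terms vanish since $BG=0$.

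\textbf{Main obstacle.} I expect Step 1's uniformity to be the main obstacle. Assumption (iv) gives the second-order bias bound only at $\phi^*$, yet the expansion in Step 2 also needs the $\hat f$-induced perturbation of the Jacobian to vanish. I would handle this by evaluating the bias term at $\hat\phi$ via a further mean-value step in $\phi$, arguing that $\partial_\phi\mathbb{E}[m(\cdot;\phi,\hat f)]\to G$ by continuity. If needed, I would strengthen (i) to joint continuity of the Jacobian in $(\phi,f)$.
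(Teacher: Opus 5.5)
Your proposal is correct and follows essentially the same route as the paper. It uses the same three-way decomposition of $\hat g(\phi)$ into the infeasible sample moment, the stochastic-equicontinuity remainder controlled by Assumption~\ref{ass:sp}(iv), and the orthogonality-driven bias bounded by $C\|\hat f-f_0\|^2=o_p(N^{-1/2})$, then the first-order-condition linearization, the CLT, and a Hansen-type psd ordering for efficiency. Your efficiency step (arbitrary $q\times k$ matrices $A$, with the variance difference equal to $B\Omega B^\top$ because $BG=0$) is a slightly more general and more explicit version of the paper's argument over positive definite weights $\mathbf{W}$, which is the case $A=G^\top\mathbf{W}$; and the subtlety you flag, that $\hat G$ depends on $\hat f$ so its convergence to $G$ needs more than (i), is one the paper's Step~3 passes over by appealing to ``standard law-of-large-numbers arguments.''
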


\begin{proof}
\textit{Step 1 (Expansion of sample moments).} Write
\begin{equation}\label{eq:ghat_expansion}
\hat{g}(\phi) = \frac{1}{N} \sum_{i=1}^{N} m(\mathbf{Z}_i;\, \phi,\, f_0) + A_N(\phi) + B_N(\phi),
\end{equation}
where
\begin{equation*}
A_N(\phi) \equiv \frac{1}{N} \sum_{i=1}^{N} \left(m(\mathbf{Z}_i;\, \phi,\, \hat{f}) - m(\mathbf{Z}_i;\, \phi,\, f_0)\right) - \mathbb{E}\!\left[m(\mathbf{Z};\, \phi,\, \hat{f}) - m(\mathbf{Z};\, \phi,\, f_0)\right]
\end{equation*}
is the stochastic equicontinuity remainder, and
\begin{equation*}
B_N(\phi) \equiv \mathbb{E}\!\left[m(\mathbf{Z};\, \phi,\, \hat{f}) - m(\mathbf{Z};\, \phi,\, f_0)\right]
\end{equation*}
is the population bias. By Assumption~\ref{ass:sp}(iv), uniformly near $\phi^*$, $A_N(\phi) = o_p(N^{-1/2})$.

\textit{Step 2 (Orthogonality eliminates the first-stage bias).} Applying Assumption~\ref{ass:sp}(ii) and the second-order remainder bound in Assumption~\ref{ass:sp}(iv) at $\phi = \phi^*$:
\begin{equation*}
\|B_N(\phi^*)\| = \left\|\mathbb{E}\!\left[m(\mathbf{Z};\, \phi^*,\, \hat{f}) - m(\mathbf{Z};\, \phi^*,\, f_0)\right]\right\| \le C\, \|\hat{f} - f_0\|^2 = o_p(N^{-1/2}),
\end{equation*}
where the final equality uses $\|\hat{f} - f_0\| = o_p(N^{-1/4})$ from Assumption~\ref{ass:sp}(iii).

\textit{Step 3 (Taylor expansion in $\phi$).} Expand $\hat{g}(\phi)$ around $\phi^*$:
\begin{equation*}
\hat{g}(\phi) = \hat{g}(\phi^*) + \hat{G}(\bar{\phi})\, (\phi - \phi^*),
\end{equation*}
where $\bar{\phi}$ lies between $\phi$ and $\phi^*$, and $\hat{G}(\phi) \equiv \frac{\partial}{\partial \phi^\top} \hat{g}(\phi)$. By Assumption~\ref{ass:sp}(i) and standard law-of-large-numbers arguments, $\hat{G}(\bar{\phi}) \xrightarrow{p} G$ uniformly in a neighborhood of $\phi^*$.

\textit{Step 4 (First-order condition).} The GMM objective is $J_N(\phi) = \hat{g}(\phi)^\top \hat{\Omega}^{-1} \hat{g}(\phi)$. The first-order condition at $\hat{\phi}$ is
\begin{equation*}
\mathbf{0} = \hat{G}(\hat{\phi})^\top \hat{\Omega}^{-1} \hat{g}(\hat{\phi}).
\end{equation*}
Substituting the Taylor expansion from Step 3 and rearranging:
\begin{equation}\label{eq:sp_linear_rep}
\sqrt{N}(\hat{\phi} - \phi^*) = -\left(\hat{G}(\hat{\phi})^\top \hat{\Omega}^{-1} \hat{G}(\bar{\phi})\right)^{-1} \hat{G}(\hat{\phi})^\top \hat{\Omega}^{-1} \sqrt{N}\, \hat{g}(\phi^*).
\end{equation}
By consistency, $\hat{\Omega} \xrightarrow{p} \Omega$ and $\hat{G}(\cdot) \xrightarrow{p} G$, so the matrix prefactor converges to $(G^\top \Omega^{-1} G)^{-1} G^\top \Omega^{-1}$, which is well-defined by the full-rank condition in Assumption~\ref{ass:sp}(i).

\textit{Step 5 (CLT for the leading term).} Combining Steps 1 and 2,
\begin{equation*}
\sqrt{N}\, \hat{g}(\phi^*) = \frac{1}{\sqrt{N}} \sum_{i=1}^{N} m(\mathbf{Z}_i;\, \phi^*,\, f_0) + o_p(1).
\end{equation*}
By the multivariate central limit theorem, the leading term converges in distribution to $\mathcal{N}(\mathbf{0}, \Omega)$. Applying the continuous mapping theorem to Equation~\eqref{eq:sp_linear_rep} yields
\begin{equation*}
\sqrt{N}(\hat{\phi} - \phi^*) \;\Rightarrow\; \mathcal{N}\!\left(\mathbf{0},\, (G^\top \Omega^{-1} G)^{-1}\right).
\end{equation*}

\textit{Step 6 (Efficiency).} For the moment condition $\mathbb{E}\!\left[m(\mathbf{Z};\, \phi,\, f_0)\right] = \mathbf{0}$ with Jacobian $G$ and moment covariance $\Omega$, the class of regular GMM estimators indexed by positive definite weighting matrices $\mathbf{W}$ has asymptotic variance
\begin{equation*}
V(\mathbf{W}) = (G^\top \mathbf{W} G)^{-1} G^\top \mathbf{W}\, \Omega\, \mathbf{W}\, G\, (G^\top \mathbf{W} G)^{-1}.
\end{equation*}
A standard positive-semidefinite ordering argument shows that $V(\mathbf{W}) - V^*$ is positive semidefinite for all positive definite $\mathbf{W}$, with equality attained at $\mathbf{W} = \Omega^{-1}$. Hence $V^* = (G^\top \Omega^{-1} G)^{-1}$ is the minimal asymptotic variance achievable by regular GMM estimators based on these moments, establishing semiparametric efficiency.
\end{proof}

Theorem~\ref{thm:sp-eff} places SKINNs within the family of sieve GMM estimators \citep{ai2003efficient} and deep GMM methods \citep{bennett2019deep, farrell2021deep}, while distinguishing it through the explicit incorporation of a learnable structured-knowledge component. The orthogonality condition ensures that the flexibility of the neural network approximator does not inflate the asymptotic variance of the structural parameter estimates, a property that is particularly valuable in high-dimensional settings where the nuisance parameter space is vast relative to the sample size.

\subsection{Generalization Bounds and Distributional Robustness}\label{app:stability_shift}

Out-of-sample performance is a central concern in finance, where distributional shifts between training and deployment periods are common. This subsection provides two complementary formal results. The first (Theorem~\ref{thm:stability}) shows that structured regularization tightens the uniform stability bound on the expected generalization gap, formalized in a tractable convex proxy. The second (Theorem~\ref{thm:target-decomp}) decomposes the risk under a shifted target distribution into an alignment term controlled by the structured-knowledge loss and a portability term measuring how well $g_\phi$ transfers to the new environment. Together, these results formalize a compelling intuition: if the embedded theoretical model captures structural regularities that remain stable across regimes, the SKINNs estimator inherits this stability.

\subsubsection{Generalization via Uniform Stability}\label{app:gen-stability}

We analyze the generalization properties of structured regularization in a convex proxy that admits sharp stability bounds. Consider a linear-in-parameters approximation $f_\theta(\mathbf{X}) = \theta^\top \mathbf{X}$ with $\theta \in \mathbb{R}^p$, and assume bounded features $\|\mathbf{X}\| \le B_x$ almost surely.\footnote{The restriction to a linear model is made to obtain an explicit stability bound. The qualitative insight, that structured regularization tightens the generalization gap by increasing the strong-convexity parameter of the objective, extends to nonlinear models under appropriate conditions \citep[see, e.g.,][]{hardt2016train}.} Let $S = \{(\mathbf{X}_i, \mathbf{y}_i)\}_{i=1}^N$ denote the observed sample and let $\tilde{S} = \{\tilde{\mathbf{X}}_j\}_{j=1}^M$ be a set of collocation points drawn independently from a (possibly different) distribution on the input domain. Fix $\phi$ and write $g_j \equiv g_\phi(\tilde{\mathbf{X}}_j^{\text{SK}})$.

Define the empirical SKINNs objective with an explicit ridge term ($\rho > 0$):
\begin{equation}\label{eq:stability_objective}
\hat{\mathcal{L}}_S(\theta) \equiv \frac{1}{N} \sum_{i=1}^{N} (\theta^\top \mathbf{X}_i - \mathbf{y}_i)^2 + \lambda \frac{1}{M} \sum_{j=1}^{M} (\theta^\top \tilde{\mathbf{X}}_j - g_j)^2 + \rho\, \|\theta\|^2,
\end{equation}
and let $\hat{\theta}(S)$ denote its unique minimizer. Define the population risk and the empirical risk as
\begin{equation*}
R(\theta) \equiv \mathbb{E}\!\left[(\theta^\top \mathbf{X} - \mathbf{y})^2\right], \qquad \hat{R}_S(\theta) \equiv \frac{1}{N} \sum_{i=1}^{N} (\theta^\top \mathbf{X}_i - \mathbf{y}_i)^2.
\end{equation*}

\begin{theorem}[Uniform stability and generalization bound]\label{thm:stability}
Assume $\|\mathbf{X}\| \le B_x$ and $\|\tilde{\mathbf{X}}\| \le B_x$ almost surely, and $|\mathbf{y}| \le B_y$ almost surely. Let $\mu \equiv \rho$, so that $\hat{\mathcal{L}}_S$ is at least $\mu$-strongly convex in $\theta$. Then $\hat{\theta}(\cdot)$ is uniformly stable: for any two samples $S, S'$ that differ in exactly one observation, and for any $(\mathbf{X}, \mathbf{y})$ with $\|\mathbf{X}\| \le B_x$ and $|\mathbf{y}| \le B_y$,
\begin{equation}\label{eq:stability_pointwise}
\left|\ell(\hat{\theta}(S);\, \mathbf{X}, \mathbf{y}) - \ell(\hat{\theta}(S');\, \mathbf{X}, \mathbf{y})\right| \le \frac{4\, B_x^2\, (B_x \|\hat{\theta}(S)\| + B_y)}{\mu\, N},
\end{equation}
where $\ell(\theta;\, \mathbf{X}, \mathbf{y}) \equiv (\theta^\top \mathbf{X} - \mathbf{y})^2$. Consequently, the expected generalization gap satisfies
\begin{equation}\label{eq:gen_gap}
\mathbb{E}\!\left[R(\hat{\theta}(S)) - \hat{R}_S(\hat{\theta}(S))\right] \le \frac{4\, B_x^2}{\mu\, N}\, \mathbb{E}\!\left[B_x \|\hat{\theta}(S)\| + B_y\right].
\end{equation}
In particular, for fixed data and bounded $\mathbb{E}\|\hat{\theta}(S)\|$, increasing $\mu$ (equivalently, increasing $\rho$ or $\lambda$) tightens the bound.
\end{theorem}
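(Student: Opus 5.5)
I will follow the standard Bousquet--Elisseeff argument for strongly convex regularized ERM, applied to the objective in Equation~\eqref{eq:stability_objective}. The structural point is that the collocation term $\lambda M^{-1}\sum_j(\theta^\top\tilde{\mathbf{X}}_j-g_j)^2$ does not depend on $S$. It is convex and does not reduce the strong convexity, so the perturbation analysis only involves the data term.

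First, I record that $\hat{\mathcal{L}}_S$ is $2\mu$-strongly convex. Its Hessian is
\[
\frac{2}{N}\sum_i \mathbf{X}_i\mathbf{X}_i^\top+\frac{2\lambda}{M}\sum_j\tilde{\mathbf{X}}_j\tilde{\mathbf{X}}_j^\top+2\rho I\succeq 2\rho I,
\]
and I will use the weaker $\mu$ to match the stated constant. Let $S'$ differ from $S$ in observation $k$, and write $\hat\theta=\hat\theta(S)$ and $\hat\theta'=\hat\theta(S')$. Strong convexity at the two minimizers gives
\[
\hat{\mathcal{L}}_S(\hat\theta')-\hat{\mathcal{L}}_S(\hat\theta)\ge\tfrac{\mu}{2}\|\hat\theta'-\hat\theta\|^2,
\]
and symmetrically for $S'$. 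Adding the two inequalities, every term common to both objectives cancels, including the collocation and ridge terms. What remains is
\[
\mu\|\hat\theta-\hat\theta'\|^2\le \tfrac1N\big[\ell(\hat\theta';z_k)-\ell(\hat\theta;z_k)+\ell(\hat\theta;z_k')-\ell(\hat\theta';z_k')\big].
\]

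Next, I bound the right-hand side through Lipschitz continuity of $\ell$ in $\theta$. By the mean value theorem,
\[
|\ell(\theta;z)-\ell(\theta';z)|\le 2B_x\,\max(|\theta^\top\mathbf{X}-y|,|\theta'^\top\mathbf{X}-y|)\,\|\theta-\theta'\|.
\]
This yields $\|\hat\theta-\hat\theta'\|\le 4B_x(B_x\|\hat\theta\|+B_y)/(\mu N)$ up to how the max is handled. Plugging this back into the same Lipschitz inequality at an arbitrary test point $(\mathbf{X},\mathbf{y})$ gives Equation~\eqref{eq:stability_pointwise}, with one extra factor $2B_x(B_x\|\hat\theta\|+B_y)$ to be absorbed.

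I expect this to be the main obstacle. The local Lipschitz constant of the squared loss involves both $\|\hat\theta\|$ and $\|\hat\theta'\|$, whereas the stated bound carries only $B_x\|\hat\theta(S)\|+B_y$ linearly and with constant $4B_x^2$. I will try two routes to close it. The first is an a priori bound such as $\|\hat\theta\|^2\le \hat{\mathcal{L}}_S(0)/\rho\le (B_y^2+\lambda\max_j g_j^2)/\rho$, which bounds $\|\hat\theta'\|$ by the same quantity as $\|\hat\theta\|$. The second is to exploit the fact that the gradient of $\ell$ is affine in $\theta$ and Taylor-expand exactly: $\ell(\theta')-\ell(\theta)=2(\theta^\top\mathbf{X}-y)\mathbf{X}^\top(\theta'-\theta)+((\theta'-\theta)^\top\mathbf{X})^2$, with the quadratic remainder being of order $N^{-2}$. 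If neither gives the exact constant, I will state the constant as holding up to this higher-order term, or with $\|\hat\theta\|$ replaced by $\max(\|\hat\theta\|,\|\hat\theta'\|)$.

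Finally, the generalization gap follows from the usual symmetrization identity. Writing $S^{(i)}$ for $S$ with $z_i$ replaced by an independent copy $z_i'$, we have
\[
\mathbb{E}[R(\hat\theta(S))-\hat R_S(\hat\theta(S))]=\mathbb{E}\Big[\tfrac1N\textstyle\sum_i\big(\ell(\hat\theta(S^{(i)});z_i)-\ell(\hat\theta(S);z_i)\big)\Big].
\]
Each summand is bounded by the uniform stability constant, and taking expectations gives Equation~\eqref{eq:gen_gap}. The monotonicity remark is then immediate from the factor $1/\mu$. Strictly speaking, the $\lambda$ contribution to strong convexity is $2\lambda\lambda_{\min}(M^{-1}\sum_j\tilde{\mathbf{X}}_j\tilde{\mathbf{X}}_j^\top)$, so I would note that $\mu$ can be improved to $\rho+\lambda\lambda_{\min}(\cdot)$ when the collocation design is nondegenerate; this is what justifies "increasing $\lambda$".
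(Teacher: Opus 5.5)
The step you flag as the main obstacle is a genuine gap, and it cannot be closed, because both displayed inequalities in the statement are false as written. The loss difference is quadratic in the scale of the response, while the claimed right-hand side is only linear in it. Replacing $(\mathbf{y}_i,g_j,B_y)$ by $(c\mathbf{y}_i,cg_j,cB_y)$ sends $\hat\theta(S)\mapsto c\hat\theta(S)$, so the left side scales by $c^2$ and the right side only by $c$.

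A concrete counterexample: take $p=N=1$, $B_x=1$, $\rho=\mu=1$, $\tilde{\mathbf{X}}_j=0$, $g_j=0$, $S=\{(1,B_y)\}$, $S'=\{(1,-B_y)\}$. Then $\hat\theta(S)=B_y/2$ and $\hat\theta(S')=-B_y/2$. At the test point $(1,B_y)$ the loss difference is $2B_y^2$, while the claimed bound is $6B_y$, so the inequality fails for $B_y>3$. Likewise, with $x\equiv 1$ and $\mathbf{y}=\pm B_y$ equiprobable, the expected generalization gap is $B_y^2$ against a claimed $6B_y$.

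So neither of your routes can succeed. The a priori norm bound only replaces $\|\hat\theta(S')\|$ by a uniform quantity, and the exact expansion only isolates an $O(N^{-2})$ remainder. The leading term is still $8B_x^2(B_x\|\hat\theta(S)\|+B_y)^2/(\mu N)$, which exceeds the claim by a full factor $2(B_x\|\hat\theta(S)\|+B_y)$. Both of your fallbacks (``up to the higher-order term'', or with a $\max$) would therefore still assert a false linear bound.

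The paper's own Step 3 has the same defect: ``absorbing $\|\theta'\|$ into a conservative bound'' in fact discards the entire Lipschitz factor $B_x(\|\theta\|+\|\theta'\|)+2B_y$. The stated right-hand side is exactly $B_x$ times the parameter-sensitivity bound of Step 2. It is therefore a valid bound on the change in predictions $|\hat\theta(S)^\top\mathbf{X}-\hat\theta(S')^\top\mathbf{X}|$, not on the change in squared loss.

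What you can prove, and should state instead, is the quadratic version. For the parameter step, use the first-order argument as the paper does, rather than adding the two strong-convexity inequalities. With $\Delta=\hat{\mathcal{L}}_S-\hat{\mathcal{L}}_{S'}$ one has $\nabla\hat{\mathcal{L}}_{S'}(\hat\theta(S))=-\nabla\Delta(\hat\theta(S))$. Strong monotonicity of $\nabla\hat{\mathcal{L}}_{S'}$ then gives $\|\hat\theta(S)-\hat\theta(S')\|\le\delta\equiv 4B_x(B_x\|\hat\theta(S)\|+B_y)/(\mu N)$. This involves only $\|\hat\theta(S)\|$ and removes your $\max$.

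Next, use your exact identity $\ell(\theta')-\ell(\theta)=2rd+d^2$, with $r=\theta^\top\mathbf{X}-\mathbf{y}$ and $d=(\theta'-\theta)^\top\mathbf{X}$. It yields
\[
\bigl|\ell(\hat\theta(S);\mathbf{X},\mathbf{y})-\ell(\hat\theta(S');\mathbf{X},\mathbf{y})\bigr|\le\frac{8B_x^2(B_x\|\hat\theta(S)\|+B_y)^2}{\mu N}\Bigl(1+\frac{2B_x^2}{\mu N}\Bigr).
\]
Your symmetrization identity then gives the generalization bound with $\mathbb{E}[(B_x\|\hat\theta(S)\|+B_y)^2]$ in place of $\mathbb{E}[B_x\|\hat\theta(S)\|+B_y]$. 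Your a priori bound $\|\hat\theta\|^2\le(B_y^2+\lambda\max_j g_j^2)/\rho$ turns this into a genuinely uniform stability constant.

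Your closing remark is also correct and necessary. With $\mu\equiv\rho$ the denominator does not depend on $\lambda$, so the ``increasing $\lambda$ tightens the bound'' claim only follows if you take $\mu=\rho+\lambda\lambda_{\min}(M^{-1}\sum_j\tilde{\mathbf{X}}_j\tilde{\mathbf{X}}_j^\top)$ for a nondegenerate collocation design.
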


\begin{proof}
\textit{Step 1 (Strong convexity).} The ridge term $\rho\, \|\theta\|^2$ makes $\hat{\mathcal{L}}_S$ at least $\rho$-strongly convex. The structured-knowledge term with $\lambda > 0$ adds further convexity, so $\mu = \rho$ is a conservative lower bound.

\textit{Step 2 (Parameter sensitivity).} Let $S$ and $S'$ differ only in the $N$-th observation. Define
\begin{equation*}
\Delta(\theta) \equiv \hat{\mathcal{L}}_S(\theta) - \hat{\mathcal{L}}_{S'}(\theta) = \frac{1}{N}\left((\theta^\top \mathbf{X}_N - \mathbf{y}_N)^2 - (\theta^\top \mathbf{X}_N' - \mathbf{y}_N')^2\right).
\end{equation*}
By the optimality conditions for strongly convex objectives,
\begin{equation*}
\|\hat{\theta}(S) - \hat{\theta}(S')\| \le \frac{\|\nabla \Delta(\hat{\theta}(S))\|}{\mu}.
\end{equation*}
Computing the gradient,
\begin{equation*}
\nabla \Delta(\theta) = \frac{2}{N}\left((\theta^\top \mathbf{X}_N - \mathbf{y}_N)\, \mathbf{X}_N - (\theta^\top \mathbf{X}_N' - \mathbf{y}_N')\, \mathbf{X}_N'\right),
\end{equation*}
and applying the triangle inequality with the bounds $\|\mathbf{X}_N\|, \|\mathbf{X}_N'\| \le B_x$ and $|\mathbf{y}_N|, |\mathbf{y}_N'| \le B_y$ yields
\begin{equation*}
\|\hat{\theta}(S) - \hat{\theta}(S')\| \le \frac{4\, B_x\, (B_x \|\hat{\theta}(S)\| + B_y)}{\mu\, N}.
\end{equation*}

\textit{Step 3 (Loss sensitivity).} For any $(\mathbf{X}, \mathbf{y})$ and any $\theta, \theta'$,
\begin{equation*}
|\ell(\theta;\, \mathbf{X}, \mathbf{y}) - \ell(\theta';\, \mathbf{X}, \mathbf{y})| \le \left(B_x(\|\theta\| + \|\theta'\|) + 2\, B_y\right) \cdot B_x\, \|\theta - \theta'\|.
\end{equation*}
Substituting $\theta = \hat{\theta}(S)$, $\theta' = \hat{\theta}(S')$, and the bound from Step 2, and absorbing $\|\theta'\|$ into a conservative bound, yields Equation~\eqref{eq:stability_pointwise}.

\textit{Step 4 (Expected generalization gap).} Uniform stability with modulus $\beta_N$ implies $\mathbb{E}[R(\hat{\theta}(S)) - \hat{R}_S(\hat{\theta}(S))] \le \beta_N$ by the standard argument of \citet{bousquet2002stability}. Applying the bound from Step 3 yields Equation~\eqref{eq:gen_gap}.
\end{proof}

The bound in Equation~\eqref{eq:gen_gap} makes precise the regularization benefit of the structured-knowledge component: both $\rho$ (explicit ridge) and $\lambda$ (structured-knowledge penalty) contribute to the strong-convexity parameter $\mu$, tightening the generalization gap. In the absence of structured regularization ($\lambda = 0$), only the ridge term controls stability, and a larger ridge penalty is needed to achieve the same bound, at the cost of greater shrinkage bias. The structured-knowledge penalty achieves a similar stabilizing effect while directing the regularization toward theoretically meaningful regions of the parameter space.

\subsubsection{Target-Risk Decomposition Under Distribution Shift}\label{app:gen-shift}

We now consider the setting where the deployment (target) distribution differs from the training (source) distribution, as is typical in financial applications subject to regime changes. Let $P$ denote the training distribution of $(\mathbf{X}, \mathbf{y})$ and $Q$ a target distribution. For any measurable predictor $f$ and any $\phi$, define the squared risk under $Q$:
\begin{equation}\label{eq:target_risk}
R_Q(f) \equiv \mathbb{E}_Q\!\left[(f(\mathbf{X}) - \mathbf{y})^2\right].
\end{equation}

\begin{theorem}[Target-risk decomposition]\label{thm:target-decomp}
For any measurable $f$ and any $\phi$,
\begin{equation}\label{eq:target_decomp}
R_Q(f) \le 2\, \mathbb{E}_Q\!\left[(f(\mathbf{X}) - g_\phi(\mathbf{X}^{\text{SK}}))^2\right] + 2\, \mathbb{E}_Q\!\left[(g_\phi(\mathbf{X}^{\text{SK}}) - \mathbf{y})^2\right].
\end{equation}
The first term measures the alignment between $f$ and $g_\phi$ under the target distribution, and the second term measures the portability of $g_\phi$ to the target environment. If $g_\phi$ is portable (the second term is small), then controlling the alignment term controls target risk.
\end{theorem}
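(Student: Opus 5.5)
The plan is to apply the elementary inequality $(a+b)^2 \le 2a^2 + 2b^2$ pointwise and then take expectations under $Q$. Fix any measurable $f$ and any $\phi$. For each realization of $(\mathbf{X},\mathbf{y})$, insert and subtract $g_\phi(\mathbf{X}^{\text{SK}})$:
\begin{equation*}
f(\mathbf{X}) - \mathbf{y} = \big(f(\mathbf{X}) - g_\phi(\mathbf{X}^{\text{SK}})\big) + \big(g_\phi(\mathbf{X}^{\text{SK}}) - \mathbf{y}\big) \equiv a + b.
\end{equation*}
Here $\mathbf{X}^{\text{SK}}$ is the measurable function of $\mathbf{X}$ from Proposition~\ref{prop:closedform-id}(ii), so $g_\phi(\mathbf{X}^{\text{SK}})$ is a well-defined random variable on the same space.

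The inequality itself follows from $0 \le (a-b)^2$, i.e. $2ab \le a^2+b^2$. This gives $(a+b)^2 = a^2 + 2ab + b^2 \le 2a^2 + 2b^2$ for every $\omega$. Both sides are nonnegative measurable functions, so monotonicity of the integral under $Q$ yields
\begin{equation*}
\mathbb{E}_Q\!\left[(f(\mathbf{X})-\mathbf{y})^2\right] \le 2\,\mathbb{E}_Q\!\left[(f(\mathbf{X})-g_\phi(\mathbf{X}^{\text{SK}}))^2\right] + 2\,\mathbb{E}_Q\!\left[(g_\phi(\mathbf{X}^{\text{SK}})-\mathbf{y})^2\right].
\end{equation*}
By \eqref{eq:target_risk}, this is exactly \eqref{eq:target_decomp}. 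The inequality holds in $[0,\infty]$, so no integrability assumptions are needed: if either right-hand term is infinite, the bound is trivial.

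The interpretive claim then follows immediately. If the portability term $\mathbb{E}_Q[(g_\phi-\mathbf{y})^2]$ is at most some $\varepsilon$, then $R_Q(f) \le 2\,\mathbb{E}_Q[(f-g_\phi)^2] + 2\varepsilon$. The target risk is therefore controlled by the alignment term alone, which is the population analogue of $\mathcal{L}_{\text{SK}}$ when collocation points are drawn from the target marginal of $\mathbf{X}$.

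There is no real obstacle in this argument. The only point needing care is measurability and the possibility of infinite expectations, and working in the extended nonnegative reals handles both. If a sharper constant were wanted, the factor $2$ could be replaced by $(1+\kappa)$ and $(1+\kappa^{-1})$ using $2ab \le \kappa a^2 + \kappa^{-1}b^2$, but the stated form with $\kappa=1$ is what is claimed.
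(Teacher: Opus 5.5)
Your proposal is correct and is essentially the paper's proof. Like the paper, you add and subtract $g_\phi(\mathbf{X}^{\text{SK}})$, apply $(a+b)^2\le 2a^2+2b^2$ pointwise, and take expectations under $Q$. Your extra care about measurability and about allowing infinite expectations in $[0,\infty]$ is a harmless refinement the paper leaves implicit. For measurability you do not need hypothesis (ii) of Proposition~\ref{prop:closedform-id}: the appendix setup takes $\mathcal{X}^{\text{SK}}\subseteq\mathcal{X}$, so $\mathbf{X}^{\text{SK}}$ is already a measurable function of $\mathbf{X}$.
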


\begin{proof}
Write $f(\mathbf{X}) - \mathbf{y} = (f(\mathbf{X}) - g_\phi(\mathbf{X}^{\text{SK}})) + (g_\phi(\mathbf{X}^{\text{SK}}) - \mathbf{y})$. Applying the elementary inequality $(a + b)^2 \le 2a^2 + 2b^2$ and taking expectations under $Q$ yields Equation~\eqref{eq:target_decomp}.
\end{proof}

\begin{corollary}[Collocation design controls the alignment term]\label{cor:collocation-upper}
Suppose the collocation distribution used for $\mathbf{X}_{\text{Colloc}}$ equals the target marginal distribution of $\mathbf{X}$ under $Q$. Then the first term in Theorem~\ref{thm:target-decomp} coincides with the structured-knowledge loss:
\begin{equation*}
\mathbb{E}_Q\!\left[(f(\mathbf{X}) - g_\phi(\mathbf{X}^{\text{SK}}))^2\right] = \mathbb{E}_{\tilde{\mathbf{X}}}\!\left[(f(\mathbf{X}_{\text{Colloc}}) - g_\phi(\mathbf{X}_{\text{Colloc}}^{\text{SK}}))^2\right],
\end{equation*}
which is directly minimized during SKINNs training via $\mathcal{L}_{\text{SK}}$.
\end{corollary}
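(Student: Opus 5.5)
The corollary follows from Theorem~\ref{thm:target-decomp} once we observe that the alignment term depends on $Q$ only through the marginal law of $\mathbf{X}$. The plan is to rewrite that term as an integral against the $\mathbf{X}$-marginal, invoke the collocation assumption to swap the measure, and then identify the result with the population version of $\mathcal{L}_{\text{SK}}$.

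First, I write the alignment term as an integral. The integrand $(f(\mathbf{X}) - g_\phi(\mathbf{X}^{\text{SK}}))^2$ does not involve $\mathbf{y}$. As in Proposition~\ref{prop:closedform-id}(ii), $\mathbf{X}^{\text{SK}}$ is a measurable function of $\mathbf{X}$, say $\mathbf{X}^{\text{SK}} = \pi(\mathbf{X})$. So the integrand is $h(\mathbf{X})$ with $h \equiv (f - g_\phi\circ\pi)^2$, a nonnegative measurable function on $\mathcal{X}$. By the change-of-variables (law of the unconscious statistician) formula,
\begin{equation*}
\mathbb{E}_Q[h(\mathbf{X})] = \int_{\mathcal{X}} h \, dQ_{\mathbf{X}},
\end{equation*}
where $Q_{\mathbf{X}}$ is the marginal of $\mathbf{X}$ under $Q$.

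Second, I use the hypothesis that the collocation law equals $Q_{\mathbf{X}}$. This gives $\int h\,dQ_{\mathbf{X}} = \mathbb{E}_{\tilde{\mathbf{X}}}[h(\mathbf{X}_{\text{Colloc}})]$. Applying the same map $\pi$ to the collocation point gives $\mathbf{X}^{\text{SK}}_{\text{Colloc}} = \pi(\mathbf{X}_{\text{Colloc}})$, so the right-hand side is exactly the stated expression. Because $h \ge 0$, both sides are well defined in $[0,\infty]$ and no integrability condition is needed; the equality holds even when both are infinite. The result is then the population counterpart of $\mathcal{L}_{\text{SK}}$ in the composite objective~\eqref{eq:total_loss}. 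Under Assumption~\ref{assump:uniform}, with $M_N\to\infty$, the empirical collocation average converges to it, and this is what ``directly minimized during training'' means.

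The one step that needs care is the convention for $\mathbf{X}^{\text{SK}}$: the pairing between $\mathbf{X}$ and $\mathbf{X}^{\text{SK}}$ must be the same deterministic map under $Q$ and under the collocation design. I would state this explicitly, borrowing condition (ii) of Proposition~\ref{prop:closedform-id}. Everything else is a one-line measure-theoretic identity. Combining the identity with Theorem~\ref{thm:target-decomp} then gives $R_Q(f) \le 2\,\mathcal{L}_{\text{SK}}^{\text{pop}}(f,\phi) + 2\,\mathbb{E}_Q[(g_\phi - \mathbf{y})^2]$, which is the interpretation the corollary is meant to support.
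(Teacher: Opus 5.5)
Your proposal is correct and follows the same route as the paper, whose proof simply observes that equality of the collocation law and the $Q$-marginal of $\mathbf{X}$ makes the two expectations coincide. Your extra care (stating that $\mathbf{X}^{\text{SK}}$ is the same measurable coordinate-projection of $\mathbf{X}$ in both expressions, and that nonnegativity of the integrand removes any integrability concern) only makes explicit what the paper leaves implicit.
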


\begin{proof}
Under the stated condition, the marginal distribution of $\mathbf{X}$ under $Q$ equals the distribution of $\mathbf{X}_{\text{Colloc}}$, so the two expectations coincide.
\end{proof}

Theorem~\ref{thm:target-decomp} and Corollary~\ref{cor:collocation-upper} together formalize a key practical insight for financial applications. If the embedded theoretical model captures structural regularities, such as no-arbitrage conditions or equilibrium relations, that remain stable across market regimes, then $g_\phi$ is portable and the second term in Equation~\eqref{eq:target_decomp} remains small under distributional shift. The first term is directly controlled by the SKINNs training objective when the collocation distribution is chosen to reflect the anticipated target environment. This provides a principled mechanism for robustness: rather than relying solely on the i.i.d.\ assumption underlying standard generalization theory, SKINNs anchor predictions to theoretical structures whose validity transcends any particular data-generating regime.

\subsection{Differentiability and Curse of Dimensionality}\label{sec:differntial_curse_dimensionality}

The semi-parametric structured-knowledge representations introduced in Section~\ref{structured-knowledge component} require that the surrogate model $g_\phi(\mathbf{X}^{\text{SK}}) \equiv f^{\text{SR}}_{\theta}(\mathbf{X}^{\text{SK}}, \phi)$ be first-order differentiable with respect to both its observable inputs $\mathbf{X}^{\text{SK}}$ and the learnable latent parameters $\phi$. This subsection establishes that the neural network architecture of the surrogate guarantees this differentiability, derives the explicit gradient flow, and clarifies how deep surrogates circumvent the curse of dimensionality that afflicts direct PDE/SDE-based knowledge embedding.

\subsubsection{End-to-End Differentiability}

Once trained offline (as described in Section~\ref{structured-knowledge component}), the surrogate weights $\theta_s$ are frozen. The input to the surrogate is the concatenation of observable features and latent parameters:
\begin{equation}\label{eq:surrogate_input}
\mathbf{z}_0 = \begin{bmatrix} \mathbf{X}^{\text{SK}} \\ \phi \end{bmatrix}.
\end{equation}
The surrogate is a composition of differentiable affine transformations and smooth activation functions $\sigma(\cdot)$:
\begin{equation}\label{eq:surrogate_forward}
f^{\text{SR}}_{\theta}(\mathbf{z}_0) = \mathbf{W}_L\, \sigma(\cdots \sigma(\mathbf{W}_1 \mathbf{z}_0 + \mathbf{b}_1) \cdots) + \mathbf{b}_L.
\end{equation}
Since each layer is differentiable with respect to its input, the composite function is differentiable with respect to $\phi$ by the chain rule. The Jacobian of the surrogate output with respect to $\phi$ is the product of layer-wise Jacobians:
\begin{equation}\label{eq:surrogate_jacobian}
\mathbf{J}_s(\phi) \equiv \frac{\partial\, f^{\text{SR}}_{\theta}(\mathbf{X}^{\text{SK}}, \phi)}{\partial \phi} = \prod_{l=1}^{L} \operatorname{diag}\!\left(\sigma'_l(\mathbf{h}_l)\right) \mathbf{W}_l,
\end{equation}
where $\mathbf{h}_l$ is the pre-activation vector at layer $l$, and the product is taken in reverse order (from output to input layer). During SKINNs training, the gradient of the composite loss with respect to $\phi$ is computed via the vector-Jacobian product:
\begin{equation}\label{eq:grad_phi_surrogate}
\nabla_\phi\, \mathcal{L} = \mathbf{J}_s(\phi)^\top\, \nabla_{g_\phi}\, \mathcal{L},
\end{equation}
which is evaluated efficiently by automatic differentiation in a single backward pass through the frozen surrogate. This ensures that the asymptotic properties established in Section~\ref{sec:general_theory}, which require first-order differentiability of $g_\phi$ with respect to $\phi$, are satisfied by construction. The second-order differentiability required by Assumption~\ref{assump:hessian} likewise holds whenever the activation functions $\sigma(\cdot)$ are twice continuously differentiable (e.g., sigmoid, tanh, softplus, GELU), and holds almost everywhere for piecewise-linear activations such as ReLU.

\subsubsection{Circumventing the Curse of Dimensionality}

A key motivation for the surrogate approach is computational tractability when the structured knowledge is prescribed by high-dimensional SDEs or PDEs. For methods that embed PDE/SDE knowledge directly into the neural network (as in PINNs), the minimum network size required for convergence increases exponentially with both the PDE order and the dimensionality of the state variables \citep{gao2023gradient, song2024does}. This renders direct embedding infeasible for the rich multivariate SDE systems commonly encountered in finance, such as the stochastic volatility models in Section (2), Equation (1)-(3) form \cite{kaeck2012volatility}, which involve multiple correlated state variables, jump processes, and ten or more latent parameters.

The deep surrogate strategy decouples the SKINNs optimization from the dimensionality of the underlying SDE/PDE. The computational cost of generating the synthetic training data $\mathcal{D}_{\text{surrogate}}$ (via Monte Carlo simulation or finite-difference methods) is incurred once during the offline pre-training phase. Once trained, the surrogate $f^{\text{SR}}_{\theta}$ operates as a global function approximator whose parameter count scales polynomially, rather than exponentially, with the input dimension \citep{grohs2023proof}. Each evaluation of $g_\phi$ during SKINNs training requires only a forward pass through the frozen surrogate, at a cost independent of the complexity of the original SDE/PDE system. This ensures that the per-iteration cost of SKINNs training remains computationally tractable even when the embedded structured knowledge originates from high-dimensional theoretical models.

\subsection{SKINNs as a Unifying Framework}\label{sec:alternative_perspectives}

A distinctive strength of the SKINNs framework is that its composite objective in Equation~\eqref{eq:normalized_objective} is not tied to a single methodological tradition. The preceding sections have developed the $M$-estimator foundations (Section~\ref{sec:general_theory}), the GMM interpretation (Section~\ref{sec:gmm_interpretation}), and the semiparametric efficiency properties (Section~\ref{app:sp-eff}). In this section, we show that several well-established paradigms emerge as special cases or limiting configurations of the same composite objective, with the specific instantiation determined by the choice of $g_\phi$, the loss structure, and the regularization strength $\lambda$. This multiplicity of valid interpretations reflects a deeper architectural point: functional GMM, Bayesian MAP estimation, transfer learning, physics-informed learning, and domain adaptation all reside within the SKINNs framework. We develop each connection below.

\subsubsection{Bayesian MAP Estimation}

From a Bayesian perspective, the SKINNs objective corresponds to maximum a posteriori (MAP) estimation with a theory-informed prior. Assume independent Gaussian observation noise, $\mathbf{y}_i = f_\theta(\mathbf{X}_i) + \epsilon_i$ with $\epsilon_i \sim \mathcal{N}(0, \sigma^2)$, so that the log-likelihood is proportional to the negative data loss:
\begin{equation}\label{eq:log_likelihood}
\log p(\mathcal{D}_N \mid \theta, \phi) = -\frac{1}{2\sigma^2} \sum_{i=1}^{N} (f_\theta(\mathbf{X}_i) - \mathbf{y}_i)^2 + \text{const}.
\end{equation}
Specify the conditional prior $p(\theta \mid \phi)$ as encoding the belief that $f_\theta$ should be close to the theoretical model $g_\phi$:
\begin{equation}\label{eq:theory_prior}
\log p(\theta \mid \phi) \propto -\frac{\lambda}{2}\, \mathbb{E}\!\left[(f_\theta(\tilde{\mathbf{X}}) - g_\phi(\tilde{\mathbf{X}}^{\text{SK}}))^2\right],
\end{equation}
and take the prior on $\phi$ to be uninformative. The MAP estimator $(\hat{\theta}^{\text{MAP}}, \hat{\phi}^{\text{MAP}}) = \arg\max_{\theta,\phi}\, p(\theta, \phi \mid \mathcal{D}_N)$ then solves, after taking negative logarithms and normalizing:
\begin{equation}\label{eq:map_is_skinns}
(\hat{\theta}^{\text{MAP}}, \hat{\phi}^{\text{MAP}}) = \arg\min_{\theta, \phi} \left[\frac{1}{N} \sum_{i=1}^{N} (f_\theta(\mathbf{X}_i) - \mathbf{y}_i)^2 + \lambda\, \mathbb{E}\!\left[(f_\theta(\tilde{\mathbf{X}}) - g_\phi(\tilde{\mathbf{X}}^{\text{SK}}))^2\right]\right],
\end{equation}
which recovers the SKINNs objective in Equation~\eqref{eq:normalized_objective}. In this reading, the structured-knowledge loss $\mathcal{L}_{\text{SK}}$ acts as a negative log-prior that softly constrains $f_\theta$ to lie near $g_\phi$. The regularization parameter $\lambda$ is the prior precision: a large $\lambda$ reflects strong prior belief in the theory, while a small $\lambda$ reflects a diffuse prior. The joint optimization over $(\theta, \phi)$ corresponds to learning both the function and the hyperparameters of the prior center, which is a form of empirical Bayes estimation.

This connection places SKINNs within the tradition of Bayesian neural networks \citep{neal1996bayesian, mackay1992bayesian}, with a crucial distinction: the prior is not generic (e.g., weight decay) but theory-informed, centered at a scientifically meaningful function $g_\phi$ whose parameters are themselves learned from data. A fully Bayesian treatment would characterize the entire posterior $p(\theta, \phi \mid \mathcal{D}_N)$ via variational inference \citep{blundell2015weight} or Markov chain Monte Carlo, at substantially greater computational cost. The MAP point estimate provided by SKINNs offers a tractable compromise that captures the posterior mode while remaining computationally feasible.

\subsubsection{Physics-Informed Learning and Variational Principles}

The SKINNs objective admits a natural reading as a total energy functional:
\begin{equation}\label{eq:energy_functional}
E[f_\theta, g_\phi] = \underbrace{\mathbb{E}\!\left[(f_\theta(\mathbf{X}) - \mathbf{y})^2\right]}_{E_{\text{data}}} + \lambda \underbrace{\mathbb{E}\!\left[(f_\theta(\tilde{\mathbf{X}}) - g_\phi(\tilde{\mathbf{X}}^{\text{SK}}))^2\right]}_{E_{\text{theory}}},
\end{equation}
where $E_{\text{data}}$ measures empirical misfit and $E_{\text{theory}}$ penalizes deviations from theoretical predictions. Training via gradient descent approximates the continuous-time gradient flow
\begin{equation}\label{eq:gradient_flow}
\frac{d\theta}{dt} = -\nabla_\theta E[f_\theta, g_\phi], \qquad \frac{d\phi}{dt} = -\nabla_\phi E[f_\theta, g_\phi],
\end{equation}
analogous to overdamped Langevin dynamics in statistical physics, in which the system dissipates energy and settles into a local minimum representing a stable equilibrium \citep{welling2011bayesian}. Equivalently, the penalized formulation in Equation~\eqref{eq:normalized_objective} is dual to a constrained optimization problem
\begin{equation}\label{eq:constrained_opt}
\min_{\theta, \phi}\; E_{\text{data}}[f_\theta] \qquad \text{subject to} \qquad E_{\text{theory}}[f_\theta, g_\phi] \le \epsilon,
\end{equation}
where $\lambda$ plays the role of the Lagrange multiplier \citep{bertsekas2014constrained}.

The variational perspective clarifies a key distinction between SKINNs and Physics-Informed Neural Networks \citep[PINNs;][]{raissi2019physics}. PINNs enforce differential equations as soft constraints by penalizing PDE residuals:
\begin{equation}\label{eq:pinn_loss}
\mathcal{L}_{\text{PINN}} = \mathcal{L}_{\text{data}} + \lambda\, \mathbb{E}\!\left[\|\mathbb{D}[f_\theta] - 0\|^2\right],
\end{equation}
where $\mathbb{D}[\cdot]$ is a differential operator. Computing $\mathbb{D}[f_\theta]$ requires automatic differentiation of the neural network outputs with respect to its inputs, which can be numerically unstable for high-order derivatives and leads to well-documented spectral biases and gradient pathologies \citep{wang_understanding_2020, wang_when_2022}. SKINNs instead enforce consistency with the \textit{solution} of a theoretical model, $g_\phi(\mathbf{X}^{\text{SK}})$, rather than with a differential operator. This has two advantages. First, it avoids the numerical instabilities associated with differentiating neural networks with respect to inputs. Second, it allows $g_\phi$ to be non-analytical (e.g., a deep surrogate or an auto-encoder), whereas PINNs require explicit differential forms. When $g_\phi$ is set to the PDE residual operator and $\lambda \to \infty$, SKINNs recover the PINNs objective as a limiting case.

\subsubsection{Transfer Learning and Domain Adaptation}

SKINNs generalize two-stage transfer learning \citep{pan2009survey, pratt1992discriminability} in which a model pre-trained on a source task is fine-tuned on a target task. In transfer learning, the source model provides a fixed initialization; the target data then adjust the parameters, but the source model does not adapt. In SKINNs, the structured-knowledge component $g_\phi$ plays the role of the source model, but with two critical extensions: (i) $g_\phi$ provides guidance throughout training (not only at initialization), and (ii) its parameters $\phi$ are jointly updated with the neural network parameters $\theta$, enabling bidirectional knowledge transfer in a single optimization loop. When $\phi$ is fixed and $g_\phi$ serves only to initialize $f_\theta$ before fine-tuning on $\mathcal{D}_N$, SKINNs reduce to standard transfer learning.

The framework also connects to domain adaptation \citep{ben2010theory}, where the goal is to learn a function that generalizes across different data distributions. In SKINNs, the structured-knowledge loss enforces alignment between $f_\theta$ and $g_\phi$ over a potentially broad collocation domain $\tilde{\mathbf{X}}$, including regions where observed data are sparse. This promotes generalization to out-of-distribution inputs, as formalized by the target-risk decomposition in Theorem~\ref{thm:target-decomp}: if $g_\phi$ is portable across distributions, the alignment enforced by $\mathcal{L}_{\text{SK}}$ during training directly controls the risk under the shifted target distribution.

\subsubsection{Summary: Limiting Configurations}

Table~\ref{tab:limiting_configs} summarizes how several established paradigms emerge as special cases of the SKINNs composite objective.

\begin{table}[h]
\centering
\caption{SKINNs as a unifying framework: limiting configurations of the composite objective.}
\label{tab:limiting_configs}
\small
\begin{tabular}{lll}
\hline
\textbf{Paradigm} & \textbf{Configuration} & \textbf{Reference} \\
\hline
Plain neural network & $\lambda = 0$ & \citet{gu2020empirical} \\
PINNs & $g_\phi = \mathbb{D}[f_\theta]$, $\lambda \to \infty$ & \citet{raissi2019physics} \\
Transfer learning & $\phi$ fixed, two-stage training & \citet{pan2009survey} \\
Bayesian MAP estimation & $\mathcal{L}_{\text{SK}}$ as neg.\ log-prior & \citet{neal1996bayesian} \\
Regularized / functional GMM & $\mathcal{L}_{\text{data}}, \mathcal{L}_{\text{SK}}$ as moment conditions & \citet{hansen1982large} \\
Sieve GMM / Deep GMM & $f_\theta$ as neural sieve, optimal $\mathbf{W}$ & \citet{ai2003efficient} \\
Structured regularization & $\mathcal{L}_{\text{SK}}$ as domain-adaptive penalty & \citet{ben2010theory} \\
\hline
\end{tabular}
\end{table}

Each row corresponds to a specific choice of $g_\phi$, loss structure, regularization strength $\lambda$, or training protocol. The full SKINNs framework encompasses all of these configurations simultaneously, while its joint optimization of $(\theta, \phi)$ enables capabilities, such as dynamic latent parameter discovery and bidirectional theory-data reconciliation, that none of the individual paradigms provide in isolation.

\end{appendices}

\clearpage
\onehalfspacing
\setlength\bibsep{0pt}
\bibliographystyle{elsarticle-harv}
\bibliography{ref}

\clearpage
\onehalfspacing

\end{document}